\documentclass{article}

\usepackage{amsmath,amsfonts,bm,bbm}
















\def\1{\bm{1}}



\def\rz{{\textnormal{z}}}




\def\rmS{{\mathbf{S}}}

\def\ermA{{\textnormal{A}}}

\def\ermV{{\textnormal{V}}}




\DeclareMathAlphabet{\mathsfit}{\encodingdefault}{\sfdefault}{m}{sl}
\SetMathAlphabet{\mathsfit}{bold}{\encodingdefault}{\sfdefault}{bx}{n}

\def\gA{{\mathcal{A}}}

\def\gC{{\mathcal{C}}}
\def\gD{{\mathcal{D}}}
\def\gE{{\mathcal{E}}}
\def\gF{{\mathcal{F}}}

\def\gH{{\mathcal{H}}}
\def\gI{{\mathcal{I}}}
\def\gJ{{\mathcal{J}}}

\def\gN{{\mathcal{N}}}



\def\sP{{\mathbb{P}}}

\def\sR{{\mathbb{R}}}








\newcommand{\E}{\mathbb{E}}







\newcommand{\LRabs}[1]{\left|#1\right|}

\newcommand{\indicator}[1]{\mathbbm{1}\{#1\}}
\newcommand{\Indicator}[1]{\mathbbm{1}\{#1\}}

\newcommand{\Eqmark}[2]{\stackrel{(#1)}{#2}}

\newcommand{\LRs}[1]{\left(#1\right)}
\newcommand{\LRm}[1]{\left[#1\right]}
\newcommand{\LRl}[1]{\left\{#1\right\}}

\usepackage{amsmath,amssymb,amsthm,bbm,color}
\usepackage[hyperfootnotes=false]{hyperref}
\usepackage{geometry}  
\geometry{verbose,tmargin=1in,bmargin=1in,lmargin=1in,rmargin=1in}
\usepackage{tikz}
\usetikzlibrary{decorations.pathreplacing}
\usepackage{caption}
\captionsetup[figure]{font=small}
\usepackage{graphicx}
\usepackage{subcaption}
\usepackage{afterpage}
\usepackage[hang,flushmargin]{footmisc}
\usepackage{multirow}
\usepackage{microtype}
\usepackage{ragged2e}
\usepackage{soul}
\usepackage{natbib}
\usepackage{booktabs}       
\usepackage{algorithm}      
\usepackage{algorithmic}    
\bibliographystyle{plainnat}
\setcitestyle{authoryear}
\usepackage{dsfont}
\usepackage{caption}

\usepackage[utf8]{inputenc} 

\usepackage{amsthm}
\newtheorem{theorem}{Theorem}
\newtheorem{lemma}{Lemma}[section]

\newtheorem{assumption}{Assumption}
\newtheorem{proposition}{Proposition}
\newtheorem{remark}{Remark}[section]

\newtheorem{corollary}{Corollary}[section]
\newtheorem{definition}{Definition}

\def\FDR{\operatorname{FDR}}
\def\FCR{\operatorname{FCR}}

\def\FCP{\operatorname{FCP}}
\def\err{\operatorname{err}}
\newcommand{\independent}{\mathop{\perp \! \! \! \perp}}

\allowdisplaybreaks

\title{CAP: A General Algorithm for Online Selective Conformal Prediction with FCR Control}

\author{
Yajie Bao$^a$, Yuyang Huo$^a$, Haojie Ren$^b$\thanks{Corresponding Author: haojieren@sjtu.edu.cn; The authors are listed in alphabetical order.} and Changliang Zou$^a$\\
$^a$ School of Statistics and Data Science, Nankai University\\
 Tianjin, P.R. China\\
$^b$ School of Mathematical Sciences,  Shanghai Jiao Tong University \\ Shanghai, P.R. China
}


\begin{document}
\maketitle
\begin{abstract}
We study the problem of post-selection predictive inference in an online fashion.
To avoid devoting resources to unimportant units, a preliminary selection of the current individual before reporting its prediction interval is common and meaningful in online predictive tasks. Since the online selection causes a temporal multiplicity in the selected prediction intervals, it is important to control the real-time false coverage-statement rate (FCR) which measures the overall miscoverage level.  We develop a general framework named CAP (\textbf{C}alibration after \textbf{A}daptive \textbf{P}ick) that
performs an \emph{adaptive pick} rule on historical data to construct a calibration set if the current individual is selected and then outputs a conformal prediction interval for the unobserved label. We provide tractable procedures for constructing the calibration set for popular online selection rules. We proved that CAP can achieve an exact selection-conditional coverage guarantee in the finite-sample and distribution-free regimes. To account for the distribution shift in online data, we also embed CAP into some recent dynamic conformal prediction algorithms and show that the proposed method can deliver long-run FCR control.  
Numerical results on both synthetic and real data corroborate that CAP can effectively control FCR around the target level and yield more narrowed prediction intervals over existing baselines across various settings.

\medskip
\noindent {\it Keywords}:
Conformal inference, distribution-free, online prediction, selection-conditional coverage, selective inference
\end{abstract}

\section{Introduction}

{Conformal inference provides a powerful and flexible tool to quantify the uncertainty of ``black-box'' prediction models by issuing prediction intervals (PI) for {\emph{unlabeled data}} \citep{vovk1999machine,vovk2005algorithmic}. In many applications, it is unnecessary or inefficient to perform predictive inference on all unlabeled data due to collection and cost constraints. For example, in drug discovery, scientists aim to select promising drug-target pairs based on prediction values of binding affinity for further clinical trials \citep{dara2021machine}. Hence, a more feasible option is to perform predictive inference on only the selected individuals of interest, which is referred to as {\emph{Selective Predictive Inference}} \citep{bao2023selective}. 

Recently, several works \citep{bao2023selective,jin2024confidence,gazin2024selecting} have formally explored this area in offline settings. In applications of scientific discovery or industrial production, it is desirable to perform real-time selection or screening prior to predictive inference.
{As in the example of drug discovery, drug-target pairs often appear sequentially, requiring scientists to determine whether to retain the current pair for further investigation based on the predicted affinity values.}
In contrast to offline scenarios where individuals of interest can be selected simultaneously, online selection rules may change in real-time or be influenced by incoming data, leading to complicated impacts for downstream predictive inference. As a result, it becomes more challenging to guarantee the validity of online selected PIs.}

{This paper studies reliable selective conformal predictions in the online case.} Formally, suppose the feature-label pairs $\{(X_t, Y_t)\}_{t\geq 0} \subseteq \sR^d \times \sR$ are collected in a sequential and delayed fashion. At time $t$, one can observe the previous label $Y_{t-1}$ and the new feature $X_{t}$. Let $\Pi_t(\cdot): \sR^d \to \{0,1\}$ be a generic
online selection rule that may depend on previously observed data. To be specific, let $S_{t} = \Pi_{t}(X_{t})$ be the selection indicator or decision, and the task is to report the PI, $\gI_t(X_t)$, for the unobserved label $Y_{t}$ when $S_t = 1$.


{As \citet{benjamini2005false} highlighted, the selection process introduces multiplicity, and neglecting this multiplicity in the construction of selected parameters’ confidence intervals results in undesirable consequences. Similar issues also appear in online selective predictive inference.}
\citet{weinstein2020online} considered temporal multiplicity and extended the definition of false coverage-statement rate (FCR) proposed by \citet{benjamini2005false} to the online regime. For any online predictive procedure that returns PIs $\{\gI_t(X_t): S_t = 1\}_{t\geq 0}$, the corresponding FCR value and false coverage proportion (FCP) up to time $T$ are defined as
\begin{align*}
    \FCP(T) = \frac{\sum_{t=0}^T S_t\cdot \Indicator{Y_t \not\in \gI_t(X_t)}}{1\vee \sum_{j=0}^T S_j},\quad \FCR(T)=\E\{\FCP(T)\},
\end{align*}
where $a\vee b = \max\{a, b\}$ for any $a,b \in \sR$. To achieve real-time FCR control when constructing post-selection confidence intervals of parameters, \citet{weinstein2020online} proposed a novel approach named LORD-CI based on the building of marginal confidence intervals at a sequence of adjusted confidence levels $\{\alpha_t\}_{t\geq 0}$ such that $\sum_{t=0}^T \alpha_t/(1 \vee \sum_{j=0}^T S_j) \leq \alpha$ for any $T\geq 0$.  The LORD-CI is a general algorithm that can be readily applied to construct post-selection PIs. However, the resulting PI with level $(1-\alpha_t)$ tends to be overly wide {since it does not incorporate the selection event into calculating miscoverage probabilities when estimating FCR}.
In fact, it would be desirable to achieve the so-called \emph{selection-conditional coverage} (SCC) guarantee,
\begin{align}
    \sP\LRl{Y_t \in \gI_t(X_t) \mid S_t=1} \geq 1- \alpha,\quad \forall t\geq 0,\nonumber
\end{align}
which characterizes the coverage property of PI conditioning on the selection event and has been studied in \citet{bao2023selective} and \citet{jin2024confidence}. 

\subsection{Our approach: calibration after adaptive pick (CAP) on historical data}

This paper aims to develop a distribution-free framework to construct post-selection prediction intervals with selection-conditional coverage while successfully controlling real-time FCR around the target level. Our strategy is motivated by the idea of post-selection calibration in \citet{bao2023selective}, which proposed a selective conditional conformal prediction procedure (SCOP) in the offline scheme. They first apply a pick rule on independent labeled data with the \emph{identical} threshold used in the test set to obtain a selected calibration set, and then construct split conformal PIs by leveraging the empirical distribution of residuals in the selected calibration data. {If the threshold is invariant to the permutation of all data points in the labeled data set and test set, the selected test data is exchangeable with the selected calibration data, then SCOP can achieve both SCC guarantee and FCR control.} However, this assumption about the threshold may not be realistic in the online setting, where the selection rule $\Pi_t$ usually depends only on previously observed data. 


For online selective conformal prediction, we develop a more principled algorithm, named \textbf{C}alibration after \textbf{A}daptive \textbf{P}ick (CAP) on all available historical data. { Let $\gH_t$ be indices of historical labeled data at time $t$, and we call the data $\{(X_s,Y_s)\}_{s\in \gH_t}$ as the \emph{holdout set}.}
When $S_t = 1$, we firstly use a sequence of \emph{adaptive pick} rules $\{\Pi_{t,s}^{\rm{Ada}}(\cdot)\}_{s\in \gH_t}$ on historical data to select a \emph{calibration set} $\{(X_s,Y_s)\}_{s\in \widehat{\gC}_t}$ where $\widehat{\gC}_t =\{s\in \gH_t: \Pi_{t,s}^{\rm{Ada}}(X_s)=1\}$. The rule $\Pi_{t,s}^{\rm{Ada}}(\cdot)$ is constructed by integrating the information from the historical selection rules and $X_t$. {Those selected calibration points $(X_s,Y_s)$ in $\widehat{\gC}_t$ satisfy that each of them and the selected test point $(X_t,Y_t)$ are exchangeable conditioning on other data $\{(X_i,Y_i)\}_{i\neq s,t}$}. Then for a target FCR level $\alpha$, we report the following PI:
\begin{align}
    \gI_t^{\rm{CAP}}(X_t;\alpha) = \widehat{\mu}(X_t) \pm q_{\alpha}(\{R_i\}_{i\in \widehat{\gC}_t}),\nonumber
\end{align}
where $q_{\alpha}(\{R_i\}_{i\in \widehat{\gC}_t})$ denotes the $\lceil (1-\alpha)(|\widehat{\gC}_t|+1)\rceil$-st smallest value in $\{R_i\}_{i\in \widehat{\gC}_t}$.

\begin{figure}[tb]
    \centering
    \includegraphics[width=1\linewidth]{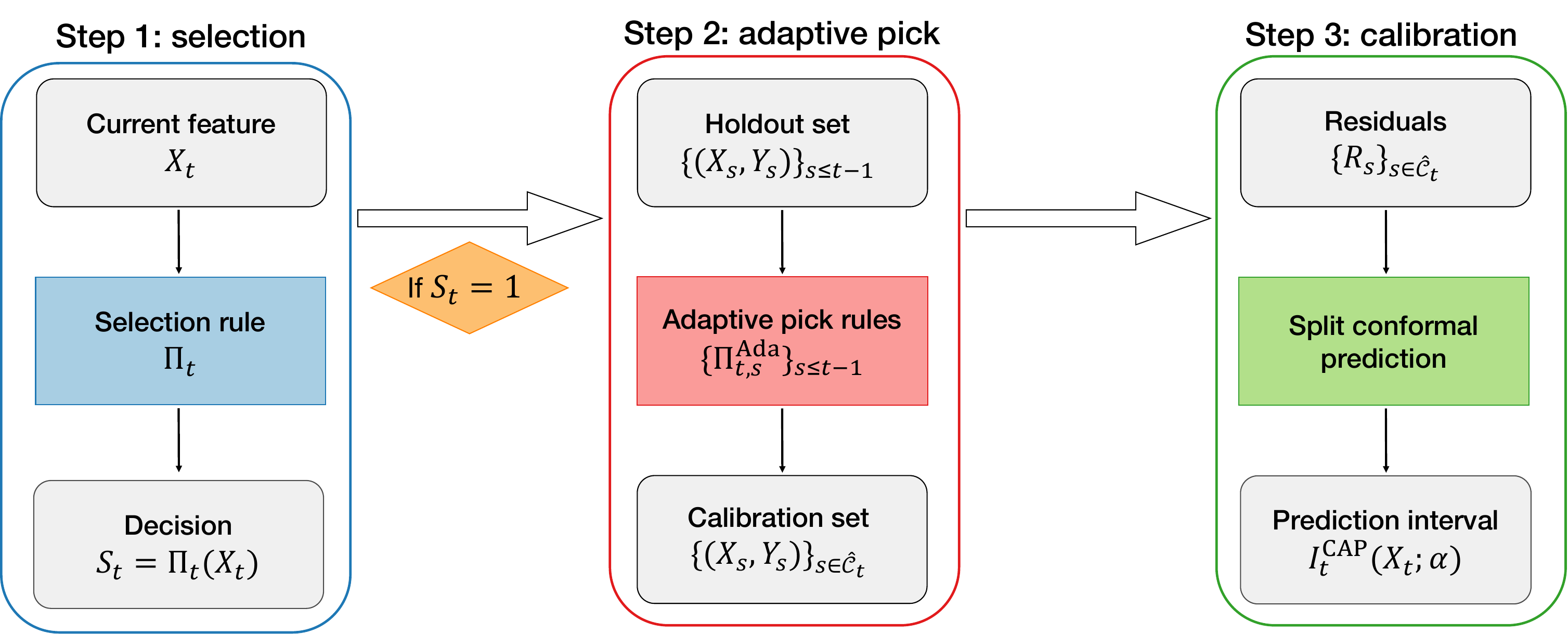}
    \caption{\small\it The workflow of CAP at time $t$. The picked calibration set is $\{(X_s,Y_s)\}_{s\in \widehat{\gC}_t}$, where $\widehat{\gC}_t = \LRl{s\in \gH_t: \Pi_{t,s}^{\rm{Ada}}(X_s) = 1}$. The residuals are computed by $R_s = |\widehat{\mu}(X_s) - Y_s|$.}
    \label{fig:alg_illustration}
\end{figure}

To ensure exact exchangeability after selection, we design adaptive pick rules for two popular classes of selection procedures. The first class is the {\it decision-driven selection} considered in \citet{weinstein2020online}, where the adaptive pick rule takes advantage of the intrinsic property of decision-driven selection to obtain an ``intersecting'' subset of the holdout set. The second class pertains to {\it selection with symmetric thresholds}, which involves screening individuals according to the empirical distributions of historical samples. Here, we propose an adaptive pick rule by ``swapping'' $X_t$ and $X_s$ for $s\in \gH_t$ in the explicit form of the indicator $S_t$ to obtain a new indicator determining whether $(X_s,Y_s)$ is picked as a calibration point.

The workflow of the proposed method CAP at time $t$ is described in Figure \ref{fig:alg_illustration}. Our contributions are:
\begin{itemize} 
    \item[(1)] Compared to the offline regime, controlling the real-time FCR is more challenging due to the temporal dependence of decisions $\{S_t\}_{t\geq 0}$. For decision-driven selection, we prove that CAP exactly controls the real-time FCR below the target level without any distributional assumption. For selection with symmetric thresholds, we provide an upper bound on the real-time FCR under certain mild stability conditions on the selection threshold. 

    \item[(2)] Credited to the adaptive pick on historical data, CAP could achieve the finite-sample SCC guarantee in both decision-driven selection and selection with symmetric thresholds. More importantly, our results are distribution-free and can be applied to many practical tasks without prior knowledge of data distribution.

    \item[(3)] To cope with the distribution shift in online data, we adjust the level of PIs whenever the selection happens through the adaptive conformal inference framework in \citet{gibbs2021adaptive}. The new algorithm achieves long-run FCR control with properly chosen parameters under arbitrary distribution shifts.

    \item[(4)] 
    Through extensive experiments on both synthetic and real-world data, we demonstrate the consistent superiority of our method over other benchmarks in terms of accurate FCR control and narrow PIs.
\end{itemize}


\begin{figure}[htb!]
    \centering
    \includegraphics[width=\textwidth]{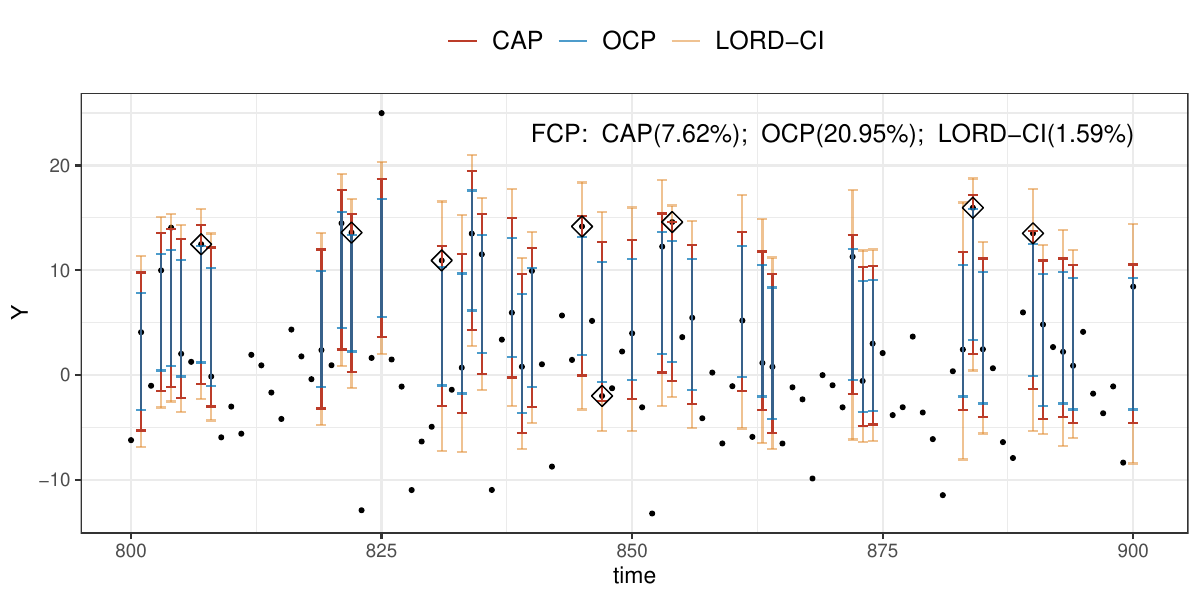}
    \caption{\small\it Plot for the real-time PIs for selected points from time 800 to 900. The selected points are marked by the cross. The experimental setup is the same as Scenario B with a decision-driven selection rule in Section \ref{sec:experiments}. The PIs are constructed by three methods with a target FCR level $10\%$. Red interval: CAP (FCP at index 900 is $7.62\%$); Blue interval: ordinary online conformal prediction which provides marginal interval (FCP is $20.95\%$); Orange interval: LORD-CI with defaulted parameters (FCP is $1.59\%$). Points circled by hollow diamond symbols indicate cases where CAP successfully covers the true response, while OCP fails. }
    \label{fig:illustra_plot}
\end{figure}

{Before closing this section, we display an example to illustrate the selective effects on predictive inference. We compared CAP with the other two benchmarks in a simulated scenario. The first one is the ordinary conformal prediction (OCP), which constructs the $(1-\alpha)$ marginal conformal PIs whenever $S_t=1$ without consideration of the selective bias. Another benchmark is LORD-CI in \citet{weinstein2020online}. Figure \ref{fig:illustra_plot} visualizes the real-time PIs with a target FCR level $10\%$ constructed by different methods. The simulation details are given in Section \ref{sec:experiments}. The proposed method CAP (red ones) produces the shortest intervals with FCP at $7.62\%$. 
The OCP (blue ones) fails to cover the responses with FCP at $20.95\%$. The points circled by diamonds indicate cases where our method, CAP, covers the true response while OCP fails. And LORD-CI (orange ones) produces excessively wide intervals and yields a conservative FCP level $1.59\%$. Therefore, CAP emerges as a valid approach to accurately quantifying uncertainty while simultaneously achieving effective interval sizes.

}

\subsection{Outline}
The remainder of this paper is organized as follows. The CAP methodology and its related works are described in Section 2.
Sections \ref{sec:decision_selection} and \ref{sec:symmetric_selection} present the construction of adaptive pick rules and the theoretical properties of CAP for decision-driven selection and online selection with symmetric thresholds, respectively. Section \ref{sec:dist_shift} investigates the CAP under distribution shift.
Numerical results and real-data examples are presented in Sections \ref{sec:experiments} and \ref{sec:real_data}. Section \ref{sec:conclusion} concludes the paper, and the technical proofs are relegated to the Appendix.

\section{Online selective conformal prediction}

\subsection{Algorithmic structure of CAP}

Suppose a prediction model {$\widehat{\mu}(\cdot):\sR^d \to \sR$} is pre-trained by an independent training set. To make sure that the PIs can be constructed when $t$ is small, we assume there exists an independent labeled set denoted by $\{(X_{i}, Y_{i})\}_{i = -n}^{-1}$. 
Let {$\gH_t = \{-n,\ldots,t-1\}$} be indices of the \emph{holdout} set at time $t$. The selection rule $\Pi_t$ is generated from the previously observed data $\{(X_{i}, Y_{i})\}_{i\in\gH_t}$. We summarize the general procedure of the proposed method CAP for online selective conformal prediction in Algorithm \ref{alg:main}.


\begin{algorithm}[htb!]
	\renewcommand{\algorithmicrequire}{\textbf{Input:}}
	\renewcommand{\algorithmicensure}{\textbf{Output:}}
	\caption{\textbf{C}alibration after \textbf{A}daptive \textbf{P}ick (CAP)}
	\label{alg:main}
	\begin{algorithmic}[1]
	\REQUIRE Pre-trained model $\widehat{\mu}$, initial labeled data $\{(X_{i},Y_i)\}_{i=-n}^{-1}$, FCR level $\alpha \in (0, 1)$.
        \STATE Compute the residuals in the initial labeled data $\{R_{i} = |Y_{i} - \widehat{\mu}(X_{i})|\}_{i=-n}^{-1}$.
	\FOR{$t=0,1,\ldots$}
        \STATE Observe $Y_{t-1}$ and compute $R_{t-1} = |Y_{t-1} - \widehat{\mu}(X_{t-1})|$.
        \STATE Specify the selection rule $\Pi_{t}(\cdot)$ and obtain $S_t = \Pi_{t}(X_t)$.
        \IF{$S_t=1$}
            \STATE Specify the adaptive pick rules $\{\Pi_{t,s}^{\rm{Ada}}(\cdot)\}_{s\in \gH_t}$.
            \STATE Obtain the indices of the picked calibration set $\widehat{\gC}_t = \{s\in \gH_t: \Pi_{t,s}^{\rm{Ada}}(X_s) = 1\}$.
            \STATE Report the prediction interval: $\gI_t^{\rm{CAP}}(X_t;\alpha) = \widehat{\mu}(X_t) \pm q_{\alpha}(\LRl{R_i}_{i \in \widehat{\gC}_t})$.
        \ENDIF
        \ENDFOR
        \ENSURE Selected PIs: $\{\gI_t^{\rm{CAP}}(X_t;\alpha): S_t = 1, 0\leq t \leq T\}$.
	\end{algorithmic}
\end{algorithm}

 

The adaptive pick rules $\{\Pi_{t,s}^{\rm{Ada}}(\cdot)\}_{s\in \gH_t}$ are designed to ensure the following symmetric properties:
\begin{equation}\label{eq:indicator_prod_symmetry}
    \text{$\Pi_{t,s}^{\rm{Ada}}(X_s)\cdot \Pi_t(X_t)$ is symmetric to $(X_s,X_t)$},\tag{P-1}
\end{equation}
and
\begin{equation}\label{eq:cal_set_symmetry}
    \widehat{\gC}_t\setminus \{s\} \text{ is symmetric to $(X_s,X_t)$ if $\Pi_{t,s}^{\rm{Ada}}(X_s)\cdot \Pi_t(X_t)=1$}.\tag{P-2}
\end{equation}
It is worthwhile noticing that $\Pi_{t,s}^{\rm Ada}(\cdot)$ and $\Pi_t(\cdot)$ depend on data $\{X_i\}_{0\leq i\leq t-1}$.
The symmetric property \eqref{eq:indicator_prod_symmetry} ensures the pairwise exchangeability \citep{barber2015controlling,zhao2024false} of the calibration data $(X_s,Y_s)$ and the test data $(X_t,Y_t)$ given the joint selection event $\{\Pi_{t,s}^{\rm{Ada}}(X_s)\cdot \Pi_t(X_t) = 1\}$. 
The symmetric property \eqref{eq:cal_set_symmetry} says that the leave-one-out picked calibration set is invariant with swapping $X_s$ and $X_t$ under the joint selection event.
In traditional split conformal prediction, the marginal coverage guarantee relies on the joint exchangeability between test data and calibration data. However, post-selection conformal prediction requires a stronger pairwise exchangeability under the selection events to control SCC. The next proposition
shows that \eqref{eq:indicator_prod_symmetry} and \eqref{eq:cal_set_symmetry} are sufficient conditions for finite-sample SCC control.

\begin{proposition}\label{pro:selection_exchangeable}
 If $\{(X_i,Y_i)\}_{i\geq -n}$ are i.i.d. and the conditions \eqref{eq:indicator_prod_symmetry} and \eqref{eq:cal_set_symmetry} hold, for any $t \geq 0$ with $\sP(S_t = 1) > 0$, we have
    \begin{align}
        \sP\LRl{Y_t \in \gI_t^{\rm CAP}(X_t) \mid S_t=1} \geq 1-\alpha.\nonumber
    \end{align}
\end{proposition}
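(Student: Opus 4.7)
The plan is to reduce the coverage event to a rank statement on residuals, and then promote (P-1), (P-2), and the i.i.d.\ assumption into full exchangeability of $\{R_i\}_{i\in\widehat{\gC}_t\cup\{t\}}$ conditional on the selection. With $N=|\widehat{\gC}_t|$ and $k=\lceil(1-\alpha)(N+1)\rceil$, the event $Y_t\in\gI_t^{\rm CAP}(X_t)$ is the same as $R_t$ having rank at most $k$ in the augmented multiset $\{R_i\}_{i\in\widehat{\gC}_t\cup\{t\}}$. If, conditional on $\widehat{\gC}_t=\gC$ and $S_t=1$, this rank is uniform on $\{1,\ldots,|\gC|+1\}$, then the conditional coverage is at least $k/(|\gC|+1)\ge 1-\alpha$, and averaging over $\gC$ under $\{S_t=1\}$ gives the claim.

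The main step is the conditional exchangeability. Fix a realization $\gC$, set $A_\gC=\{\widehat{\gC}_t=\gC,\,S_t=1\}$, and let $\tau_{s,t}$ denote the transposition that swaps the pair $(X_s,Y_s)$ with $(X_t,Y_t)$. The i.i.d.\ assumption implies the joint law of $\{(X_i,Y_i)\}_{i\ge -n}$ is invariant under $\tau_{s,t}$. For $s\in\gC$, property (P-1) forces $\Pi_{t,s}^{\rm Ada}(X_s)\cdot\Pi_t(X_t)$ to be preserved under $\tau_{s,t}$, so the event $\{s\in\widehat{\gC}_t,\,S_t=1\}$ is preserved; property (P-2) then preserves $\widehat{\gC}_t\setminus\{s\}$ on that event. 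Combining, $\widehat{\gC}_t$ itself is preserved, $A_\gC$ is $\tau_{s,t}$-invariant, and hence so is $\sP(\cdot\mid A_\gC)$.

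To upgrade pairwise swaps with $t$ to full exchangeability on $\gC\cup\{t\}$, note that any transposition $(s_1,s_2)$ within $\gC$ factors as $\tau_{s_1,t}\circ\tau_{s_2,t}\circ\tau_{s_1,t}$; each factor preserves $\sP$ and $A_\gC$, so their composition preserves them too. Since transpositions generate the symmetric group on $\gC\cup\{t\}$, the tuple $(X_i,Y_i)_{i\in\gC\cup\{t\}}$ and therefore $\{R_i\}_{i\in\gC\cup\{t\}}$ are jointly exchangeable under $\sP(\cdot\mid A_\gC)$. Hence the rank of $R_t$ is uniform on $\{1,\ldots,|\gC|+1\}$ (ties only inflate coverage), yielding $\sP\{Y_t\in\gI_t^{\rm CAP}(X_t)\mid A_\gC\}\ge k/(|\gC|+1)\ge 1-\alpha$, and marginalizing over $\gC$ under $\{S_t=1\}$ finishes.

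The main obstacle is the invariance bookkeeping after composition: (P-1) and (P-2) deliver invariance of $A_\gC$ under a single swap with $t$, but for compositions one must verify that each intermediate data configuration still lies in $A_\gC$ so that the next swap can be applied with (P-1) and (P-2) again. A secondary subtlety is the handling of ties among residuals, which is standard in conformal prediction and resolved either by continuity of the residual law or by a random tie-breaking rule.
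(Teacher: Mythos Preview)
Your argument is correct, and the composition bookkeeping you flag as the main obstacle does go through: once you know $\omega\in A_\gC$, the pair $(s_2,t)$ swap can be applied to $\tau_{s_1,t}\omega$ because you have already shown $\tau_{s_1,t}\omega\in A_\gC$, so $s_2\in\gC=\widehat{\gC}_t(\tau_{s_1,t}\omega)$ and (P-1), (P-2) apply again verbatim (they are properties of the pick rule as a function of the data, not of a particular realization). Iterating yields invariance of $A_\gC$ under the full symmetric group on $\gC\cup\{t\}$, and together with the i.i.d.\ law this gives the conditional exchangeability you need.

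The paper takes a different and somewhat more localized route. Rather than conditioning on $\{\widehat{\gC}_t=\gC\}$ and upgrading to full exchangeability, it starts from the deterministic inequality
\[
\Indicator{Y_t\notin\gI_t^{\rm CAP}(X_t)}\le\alpha+\frac{1}{|\widehat{\gC}_t|+1}\sum_{s\in\widehat{\gC}_t}\Bigl(\Indicator{R_t>Q_\alpha}-\Indicator{R_s>Q_\alpha}\Bigr),
\]
and then, for each $s$ separately, conditions on the unordered pair $[Z_s,Z_t]$ together with $\{Z_\ell\}_{\ell\neq s,t}$. Under this conditioning, (P-1) fixes the indicator $\Pi_{t,s}^{\rm Ada}(X_s)\Pi_t(X_t)$, (P-2) fixes the leave-one-out set $\widehat{\gC}_t\setminus\{s\}$, and pairwise exchangeability of $(Z_s,Z_t)$ kills each summand in expectation. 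No composition of transpositions is needed; the argument never requires invariance under swaps that do not involve $t$.

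What each buys: your route yields a stronger intermediate statement---full exchangeability of $\{R_i\}_{i\in\gC\cup\{t\}}$ given $A_\gC$---and is closer in spirit to classical split conformal. The paper's pairwise leave-one-out argument is lighter (it avoids the composition step entirely) and, more importantly, is the template that is reused verbatim in the FCR proofs later in the paper, where one must additionally decouple the random denominator $\sum_j S_j$ from $(Z_s,Z_t)$ term by term; that decoupling is naturally done one $s$ at a time, which is why the pairwise formulation is the paper's preferred tool.
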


This proposition implies that the key challenge lies in constructing adaptive pick rules for historical data, which depends largely on the selection rules implemented. In this context, we explore two broad classes of selection rules, which are detailed in Sections \ref{sec:decision_selection} and \ref{sec:symmetric_selection}. In addition, we also analyze the real-time FCR control results.

\begin{remark}

Throughout the paper, we use the absolute residual $R(X, Y) = |Y- \widehat{\mu}(X)|$ as the nonconformity score. It is straightforward to extend Algorithm \ref{alg:main} to general nonconformity scores, such as quantile regression \citep{romano2019conformalized} or distributional regression \citep{chernozhukov2021exact}. Let $R(\cdot,\cdot):\sR^n\times \sR \to \sR$ be a general nonconformity score function. We can replace the PI in Algorithm \ref{alg:main} with the following form
\begin{align*}
    \gI_t^{\rm{CAP}}(X_t;\alpha) = \LRl{y\in \sR: R(X_t,y) \leq q_{\alpha}\LRs{\{R(X_i,Y_i)\}_{i \in \widehat{\gC}_t}}}.
\end{align*}
All theoretical results in our paper will remain intact with the PIs defined above.
\end{remark}

\subsection{Related work}

This work is closely related to the post-selection inference on parameters or labels. 
\citet{benjamini2005false} proposed the first method that controls FCR in finite samples by adjusting the confidence level of the marginal confidence interval. Along this path, \citet{weinstein2013selection},  \citet{zhao2022general} and \citet{xu2022post} further investigated how to narrow the adjusted confidence intervals by using more useful selection information. Another line of work is the conditional approach. \citet{fithian2014optimal}, \citet{lee2016exact} and \citet{taylor2018post} proposed to construct confidence intervals for each selected variable conditional on the selection event and showed that the FCR can be further controlled if the conditional coverage property holds for an arbitrary selection subset. 
Those methods usually require a tractable conditional distribution given the selection condition. 
In particular, for the problem of online selective inference, {\citet{weinstein2020online} proposed a solution based on the LORD \citep{ramdas2017online} procedure to achieve real-time FCR control for decision-driven rules.} 
Recently, \citet{xu2023online} introduced a new approach called e-LOND-CI, which utilizes e-values \citep{vovk2021values} with LOND \citep{javanmard2015online} procedure for real-time FCR control. This method alleviates the constraints on selection rules in \citet{weinstein2020online} and provides a valid FCR control under arbitrary dependence, but its setting is much different from the present one in Section \ref{sec:dist_shift}, where we consider integrating feedback information over time. 

Conformal prediction 
is the fundamental brick of our proposed method.
As a powerful tool for predictive inference, it provides a distribution-free coverage guarantee in both the regression \citep{lei2018distribution} and the classification \citep{sadinle2019least}. Beyond predictive intervals, conformal inference is also broadly applied to the testing problem by constructing conformal $p$-values \citep{bates2023aos,jin2022selection}.
We refer to \cite{angelopoulos2021gentle} and \cite{shafer2008tutorial} for more comprehensive applications and reviews. The conventional conformal inference requires that the data points are exchangeable, which may be violated in practice. There are several works devoted to conformal inference beyond exchangeability. When the feature shift exists between the calibration set and the test set, \citet{tibshirani2019conformal} and \citet{jin2023model} introduced weighted conformal PIs and weighted conformal $p$-values, respectively, by injecting likelihood ratio weights. For general non-exchangeable data, \citet{barber2022conformal} used a robust weighted quantile to construct conformal PIs. For the online data under distribution shift, \citet{gibbs2021adaptive,gibbs2022conformal} developed adaptive conformal prediction algorithms based on the online learning approach. Besides, a relevant direction is to study the \emph{test-conditional} coverage $\sP\{Y_t \in \gI_t(X_t) \mid X_t = x\}$, which has been proved impossible for a finite-length PI without imposing distributional assumptions \citep{lei2014distribution,Barber20limits}. In contrast, our concerned SCC $\sP\{Y_t \in \gI_t(X_t) \mid S_t = 1\}$ could achieve valid finite-sample guarantee without distributional assumptions. 

{

Recently, we noticed that \citet{jin2024confidence} proposed Joint Mondrian Conformal Inference (JOMI) to guarantee the SCC after selection in test data. JOMI and CAP independently employ the swapping technique to ensure post-selection exchangeability for symmetric selection rules (see Section \ref{sec:symmetric_selection}) and achieve finite-sample distribution-free SCC guarantees. In contrast to \citet{jin2024confidence} that focused on the offline setting and label-involved selection rules with practical computation algorithms, we aim to achieve real-time FCR control which requires addressing the temporal dependence issue of online selection rules. In an another related study, \citet{gazin2024selecting} proposed to select informative prediction sets with FCR control by applying the BH procedure \citep{benjamini1995controlling}. Besides, \citet{sarkar2023post} proposed a post-selection framework to ensure simultaneous inference \citep{berk2013valid} across all coverage levels. This approach differs from our focus, which is on inference conditional on the selection event. Table \ref{table:comparison_RW} displays a summary of the comparison with related works in selective conformal prediction. 
}

\begin{table}[ht]
\begin{minipage}{\textwidth}
\centering
\caption{Comparison with related works in selective conformal prediction}\label{table:comparison_RW}
\resizebox{\textwidth}{!}{\begin{tabular}{llll}
\toprule
Methods & References & Selection rules \footnote{\emph{Decision-driven} selection is defined in Definition \ref{def:decision_driven}. \emph{Symmetric} selection refers to selection rules whose output is invariant to any permutation of the holdout set, and \emph{Joint-symmetric} selection requires this invariance holds for any permutation of the holdout set and test set. Top-K selection refers to the rules where the number of selected test data is fixed as a deterministic integer $K$.}     & Control    \\
\midrule
\underline{\textbf{Offline}}  \\
SCOP & \citet{bao2023selective} & Joint-symmetric \& Top-K          & FCR \& SCC \\
JOMI & \citet{jin2024confidence} & Symmetric        & SCC        \\
InfoSP \& InfoSCOP & \citet{gazin2024selecting} & BH    & FCR        \\
\underline{\textbf{Online}} &\\
LORD-CI & \citet{weinstein2020online} & Decision-driven                 & FCR        \\
e-LOND-CI\footnote{We extend e-LOND-CI to the conformal prediction setting in Appendix \ref{appen:e-lord-ci}.}
& \citet{xu2023online} & Arbitrary & FCR\\
{CAP} & {This paper} & Decision-driven \& Symmetric        & FCR \& SCC\\
\bottomrule
\end{tabular}}
\end{minipage}
\end{table}


\section{CAP for decision-driven selection }\label{sec:decision_selection}


{In this section, we investigate the online selective conformal prediction under the decision-driven selection rules.} 

\begin{definition}\label{def:decision_driven}
    Let $\sigma(\{S_i\}_{i=0}^{t-1})$ be the $\sigma$-field generated by decisions $\{S_i\}_{i=0}^{t-1}$. The online selection rule is called decision-driven selection if $\Pi_t(\cdot)$ is $\sigma(\{S_i\}_{i=0}^{t-1})$-measurable.
\end{definition}

The decision-driven selection depends on historical data only through previous decisions. For example, one can choose $S_t=\Indicator{\widehat{\mu}(X_t)\leq c_t}$, where $c_t=C_1+C_2(\sum_{i=0}^{t-1}S_i)$ for constants $C_1,C_2$. It is more flexible than choosing a constant $c_t\equiv C_1$ as the threshold since we incorporate the cumulative selection number to dynamically adjust the selection rule. Besides, {many online error rate control algorithms \citep{foster2008alpha,aharoni2014generalized}, used for online multiple testing in sequential clinical trials \citep{lee2021statistical} and computational biology \citep{aharoni2010quality}, also fall in the category of decision-driven. We will discuss this selection with online multiple testing in detail in Section \ref{sec:online_fdr}. 
Before exploring the implementation of CAP under decision-driven selection rules, we introduce the following assumption for FCR control.}

\begin{assumption}\label{assum:independent_initial_holdout}
    The decision-driven selection rules $\{\Pi_{t}(\cdot)\}_{t\geq 0}$ are independent of the initial labeled data $\{(X_i,Y_i)\}_{i=-n}^{-1}$.
\end{assumption}

Since the $\{(X_i,Y_i)\}_{i=-n}^{-1}$ are used only for calibration and the selection rule $\Pi_t$ depends on the previous decisions by Definition \ref{def:decision_driven}, Assumption \ref{assum:independent_initial_holdout} is reasonable for most scenarios. We notice that \citet{weinstein2020online} require the confidence interval $\gI_t(\cdot)$ to be $\sigma(\{S_i\}_{i=0}^{t-1})$-measurable, which means the previously observed data $\{(X_i,Y_i)\}_{i=0}^{t-1}$ \emph{cannot} be used for calibration at time $t$ and the holdout set needs to be fixed as $\{(X_i,Y_i)\}_{i=-n}^{-1}$. We first regard this case as a warm-up and demonstrate that the CAP with a nonadaptive pick on the holdout set is enough to control FCR. Then in the case of the full holdout set $\gH_t$, we show that the nonadaptive pick may fail and present a novel construction for the adaptive pick rules to select calibration data points.

\subsection{Warm-up: fixed holdout set}\label{sec:decision_warm_up}

{Here, we use the initial labeled data $\{(X_i,Y_i)\}_{i=-n}^{-1}$ as a fixed holdout set in the entire online process, namely, $\gH_0 = \{-n,\ldots,-1\}$ instead of $\gH_t$ in Lines 6 and 7 of Algorithm \ref{alg:main}.} 
Under Assumption \ref{assum:independent_initial_holdout}, the product of selection indicators $\Pi_t(X_t)\Pi_t(X_s)$ is symmetric to $(X_s,X_t)$ for $s\in \gH_0$ because $\Pi_t$ is independent of both $X_t$ and $X_s$. Therefore, the nonadaptive pick on the fixed holdout set is enough to guarantee the SCC. When $S_t = 1$, we perform $\Pi_t$ on $\{X_i\}_{i=-n}^{-1}$ and obtain the calibration set $\{(X_s,Y_s): \Pi_{t}(X_{s}) = 1\}_{s\in \gH_0}$. With this selected calibration set, the next theorem shows that the real-time FCR can be controlled below $\alpha$ and owns an anti-conservative lower bound.

\begin{theorem}\label{thm:FCR_decision_select}
    Under Assumption \ref{assum:independent_initial_holdout}, if we use the fixed holdout set $\{(X_s,Y_s)\}_{s\in \gH_0}$ at time $t$ in Algorithm \ref{alg:main} and set $\Pi_{t,s}^{\rm Ada}(\cdot) = \Pi_t(\cdot)$ for $s\in \gH_0$, it satisfies: (1) For any $T \geq 0$, $\FCR(T) \leq \alpha$; (2) Let $p_t = \sP\LRl{S_t = 1 \mid \sigma(\{S_i\}_{i=0}^{t-1})}$. If the residuals $\{R_i\}_{i = -n}^T$ are distinct and $\sum_{t=0}^T S_t > 0$ almost surely, we also have the following lower bound,
        \begin{align}\label{eq:anti_conservative_bound}
            \FCR(T) \geq \alpha - \E\LRm{\frac{\sum_{t=0}^T S_t\LRl{\frac{1 - (1 - p_t)^{n+1}}{(n+1)p_t}}}{\sum_{j=0}^T S_j} }.
        \end{align}
\end{theorem}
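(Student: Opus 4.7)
My plan is to (i) exploit Assumption~\ref{assum:independent_initial_holdout} to obtain per-time exchangeability of the calibration residuals and the test residual after conditioning on enough information that $R_T=\sum_{j=0}^T S_j$ is determined, (ii) convert the resulting conditional coverage bound into control of the ratio $\FCR(T)$ via a tower-based summation identity, and (iii) upgrade the conformal inequality to the tight distinct-residual equality and evaluate a binomial moment to obtain the anti-conservative lower bound.

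For step (i), I would introduce the $\sigma$-field $\gG_t:=\sigma(\{S_j\}_{j=0}^T,\Pi_t,\widehat{\gC}_t)$. Since $\Pi_t$ is $\sigma(\{S_j\}_{j<t})$-measurable and, by Assumption~\ref{assum:independent_initial_holdout}, independent of the initial labeled data, conditional on $\gG_t$ and $\{S_t=1\}$ the family $\{(X_i,Y_i)\}_{i\in\widehat{\gC}_t\cup\{t\}}$ is i.i.d.\ from the conditional distribution of $(X,Y)$ given $\Pi_t(X)=1$. The standard split-conformal argument applied to these exchangeable residuals then yields, on $\{|\widehat{\gC}_t|\ge 1\}$,
\[
\sP\!\left(R_t > q_\alpha(\{R_i\}_{i\in\widehat{\gC}_t}) \,\big|\, \gG_t, S_t=1\right) \le \alpha,
\]
while on $\{|\widehat{\gC}_t|=0\}$ the quantile is $+\infty$ and the bound is trivial. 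Writing $V_t := S_t\,\Indicator{Y_t\notin\gI_t^{\mathrm{CAP}}(X_t;\alpha)}$, this reads $\E[V_t\mid\gG_t]\le\alpha S_t$. Since $1\vee R_T$ is $\gG_t$-measurable, step (ii) for part (1) is then immediate:
\[
\FCR(T) = \sum_{t=0}^T \E\!\left[\frac{\E[V_t\mid\gG_t]}{1\vee R_T}\right] \le \alpha \sum_{t=0}^T \E\!\left[\frac{S_t}{1\vee R_T}\right] = \alpha\,\E\!\left[\frac{R_T}{1\vee R_T}\right] \le \alpha.
\]

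For step (iii), distinctness upgrades the conformal inequality to the equality $\sP(R_t>q_\alpha\mid\gG_t,S_t=1) = \lfloor\alpha(|\widehat{\gC}_t|+1)\rfloor/(|\widehat{\gC}_t|+1) \ge \alpha - 1/(|\widehat{\gC}_t|+1)$. To decouple the holdout-driven $|\widehat{\gC}_t|$ from the online-driven quantities $S_t,R_T$, I would pass to the coarser $\sigma$-field $\gG_t' := \sigma(\{S_j\}_{j=0}^T,\Pi_t) \subset \gG_t$. Given $\Pi_t$, the holdout features $\{X_i\}_{i\in\gH_0}$ remain i.i.d.\ and independent of the online process, so $|\widehat{\gC}_t|\mid\gG_t'\sim\mathrm{Bin}(n,p_t)$ with $p_t=\sP(\Pi_t(X)=1\mid\Pi_t)$ matching the theorem's definition. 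The classical moment identity $\E[1/(M+1)] = (1-(1-p)^{n+1})/((n+1)p)$ for $M\sim\mathrm{Bin}(n,p)$ then gives
\[
\E[V_t\mid\gG_t'] \ge S_t\!\left(\alpha - \frac{1-(1-p_t)^{n+1}}{(n+1)p_t}\right),
\]
and dividing by $R_T$ (which is $\gG_t'$-measurable), summing over $t$, and invoking $\sum_{j=0}^T S_j>0$ a.s.\ to collapse $R_T/R_T$ to $1$, yields exactly \eqref{eq:anti_conservative_bound}.

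The principal obstacle is ensuring that conformal exchangeability survives the conditioning on $R_T$, which otherwise couples different time indices. The resolution rests on the decision-driven structure: $\Pi_t$ is a deterministic function of $\{S_j\}_{j<t}$, so conditioning on the full decision sequence $\{S_j\}_{j=0}^T$ only fixes the selection rules and the observed pattern of selections and leaks no information about the holdout residuals, while Assumption~\ref{assum:independent_initial_holdout} keeps the holdout i.i.d.\ and independent of everything. A secondary subtlety in the lower bound is correctly separating the holdout-driven $|\widehat{\gC}_t|$ from the online-driven $S_t/R_T$; the two-step conditioning $\gG_t\supset\gG_t'$ accomplishes exactly that.
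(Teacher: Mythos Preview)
Your argument is correct and takes a genuinely different route from the paper's. The paper does not condition on the entire decision sequence; instead it starts from the deterministic residual bounds \eqref{eq:miscover_upper}--\eqref{eq:miscover_lower} and, to decouple the FCP denominator from $(Z_s,Z_t)$, introduces a leave-one-out surrogate: it replaces $X_t$ by a point $x_t^*\in\sigma(\{S_i\}_{i<t})$ with $\Pi_t(x_t^*)=1$, producing virtual decisions $\{S_j^{(t)}\}$ that agree with $\{S_j\}$ on $\{S_t=1\}$ but are independent of $Z_t$. It then applies a pairwise exchangeability lemma (Lemma~\ref{lemma:zero_gap_decision_selection}) between each holdout point and the test point to make the gap term vanish. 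Your approach instead exploits the fact that, under the decision-driven assumption, the event $\{S_0=s_0,\ldots,S_T=s_T\}$ factorizes as $\bigcap_j\{\pi_j(X_j)=s_j\}$ for deterministic rules $\pi_j$, so conditioning on it simultaneously fixes $R_T$ and leaves the test point and the (independent) holdout points conditionally i.i.d.\ from the law of $(X,Y)\mid\Pi_t(X)=1$; this delivers the conformal bounds in one stroke, without the leave-one-out construction or the per-pair lemma. For the lower bound, both arguments reduce to the same binomial moment $\E[1/(M+1)]$, but your two-step conditioning $\gG_t\supset\gG_t'$ separates the holdout-driven $|\widehat{\gC}_t|$ from the online-driven $S_t/R_T$ more transparently than the paper's computation \eqref{eq:lower_gap}. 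What the paper's approach buys is that the same leave-one-out and pairwise-exchangeability technology is precisely what is needed for the harder full-holdout and symmetric-threshold results (Theorems~\ref{thm:FCR_decision_dyn_SCOP}--\ref{thm:FCR_swap}), where conditioning on $\{S_j\}_{j=0}^T$ would also constrain the online calibration points and destroy the factorization you rely on; Theorem~\ref{thm:FCR_decision_select} thus serves there as a warm-up for the general machinery. Your argument is cleaner for the fixed-holdout case specifically but does not extend as readily.
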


Theorem \ref{thm:FCR_decision_select} reveals that the CAP achieves finite-sample and distribution-free FCR control. Similar to the marginal coverage of split conformal~\citep{lei2018distribution}, we also have the anti-conservative guarantee in \eqref{eq:anti_conservative_bound} when the residuals are continuous. The quantity $(n+1)p_t$ characterizes the size of the picked calibration set. If the selection probability $p_t$ is bounded above zero, 
then the lower bound \eqref{eq:anti_conservative_bound} becomes $\FCR(T) \geq \alpha - O\LRs{n^{-1}}$. Consequently, we have exact FCR control in the asymptotic regime, i.e., $\lim_{(n,T)\to \infty} \FCR(T) = \alpha$.

For completeness and comparison, we also provide the construction and validity of the online adjusted method named LORD-CI proposed by \citet{weinstein2020online} in the conformal setting. Given any $\sigma(\{S_i\}_{i=0}^{t-1})$-measurable coverage level $\alpha_t \in (0,1)$, a \emph{marginal} split conformal PI is constructed as
\begin{align}\label{eq:def_marginal_PI}
    \gI_t^{\rm{marg}}(X_t;\alpha_t) = \widehat{\mu}(X_t) \pm q_{\alpha_t}\LRs{\{R_i\}_{i\in \gH_0}},
\end{align}
where $q_{\alpha_t}\LRs{\{R_i\}_{i\in \gH_0}}$ is the $\lceil (1-\alpha_t)(n+1)\rceil$-st smallest value in $\{R_i\}_{i\in \gH_0}$. 
The PI \eqref{eq:def_marginal_PI} can serve as a recipe for LORD-CI by dynamically updating the marginal level $\alpha_t$ to maintain the following invariant
\begin{align}\label{eq:FCP_estimate}
    \frac{\sum_{t=0}^T \alpha_t}{1 \vee \sum_{j=0}^{T} S_j} \leq \alpha,\quad \forall~ T\geq 0.
\end{align}
We refer to \citet{weinstein2020online} and literature therein for explicit procedures in constructing the sequence $\{\alpha_t\}_{t\geq 0}$ satisfying \eqref{eq:FCP_estimate}.  {The left hand side of \eqref{eq:FCP_estimate} is an over-conservative upper bound of $\FCP(T)$ by discarding $S_t$ in the numerator, which yields the following result
\begin{align}
    \FCR(T) \leq \E\left[\frac{\sum_{t=0}^T \Indicator{Y_t\not\in\gI_t^{\rm marg}(X_t;\alpha_t)}}{1 \vee \sum_{j=0}^{T} S_j}\right]\leq \E\left[\frac{\sum_{t=0}^T \alpha_t}{1 \vee \sum_{j=0}^{T} S_j}\right] \leq \alpha,\nonumber
\end{align}
where the second inequality holds since $\Pi_t$ is decision-driven, and the last inequality holds due to \eqref{eq:FCP_estimate}. Hence LORD-CI has information loss about the selection event.} Under the same conditions in Theorem 2 of \citet{weinstein2020online}, we can obtain the FCR control results for LORD-CI in the conformal prediction setting.

\begin{proposition}\label{pro:LORD_CI}
    Let $\{S_j\}_{j=0}^T$ and $\{\widetilde{S}_j\}_{j=0}^T$ be two decision sequences, suppose $S_t \geq \widetilde{S}_t$ holds whenever $S_j\geq \widetilde{S}_j$ for any $j\leq t-1$. Under Assumption \ref{assum:independent_initial_holdout}, if $\alpha_t \in \sigma(\{S_i\}_{i=0}^{t-1})$ for any $t\geq 0$ and \eqref{eq:FCP_estimate} holds, the LORD-CI algorithm satisfies that $\FCR(T) \leq \alpha$ for any $T \geq 0$.
\end{proposition}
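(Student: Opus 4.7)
The plan is to adapt the proof of Theorem~2 in \citet{weinstein2020online} by recognizing that the split-conformal interval $\gI_t^{\rm marg}(X_t;\alpha_t)$ built from the fixed initial calibration set $\{(X_i,Y_i)\}_{i=-n}^{-1}$ delivers the same marginal $(1-\alpha_t)$ coverage as a standard confidence interval. Concretely, because $\alpha_t$ is $\sigma(\{S_i\}_{i=0}^{t-1})$-measurable and, by Assumption~\ref{assum:independent_initial_holdout} together with the decision-driven structure, $\alpha_t$ is a deterministic function of $X_0,\ldots,X_{t-1}$ alone, the level $\alpha_t$ is independent of both $(X_t,Y_t)$ and $\{(X_i,Y_i)\}_{i=-n}^{-1}$. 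Exchangeability of the initial calibration set with $(X_t,Y_t)$ given $\alpha_t$ then yields the marginal coverage
\begin{align*}
\sP\bigl(Y_t\notin \gI_t^{\rm marg}(X_t;\alpha_t)\,\big|\,\alpha_t\bigr)\leq \alpha_t.
\end{align*}

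As in the excerpt, the first reduction uses $S_t\leq 1$ in the numerator of $\FCP(T)$, giving
\begin{align*}
\FCR(T)\leq \E\biggl[\frac{\sum_{t=0}^T\Indicator{Y_t\notin \gI_t^{\rm marg}(X_t;\alpha_t)}}{1\vee \sum_{j=0}^T S_j}\biggr],
\end{align*}
and the task reduces to upper bounding the right-hand side by $\alpha$. To handle each summand, I would introduce, for fixed $t$, the leave-one-out decision sequence $\{\widetilde{S}_j^{(t)}\}_{j=0}^T$ defined by $\widetilde{S}_j^{(t)}=S_j$ for $j<t$, $\widetilde{S}_t^{(t)}=0$, and for $j>t$ by recursively running the online selection rule on the modified history. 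An induction using the monotonicity hypothesis ``$S_j\geq \widetilde{S}_j$ for all $j\leq t-1$ implies $S_t\geq \widetilde{S}_t$'' yields $S_j\geq \widetilde{S}_j^{(t)}$ for every $j$, hence $1\vee\sum_j S_j\geq 1\vee\sum_j \widetilde{S}_j^{(t)}$. Crucially, since $\widetilde{S}_t^{(t)}=0$ is injected deterministically and the selection rule is decision-driven, the quantity $\sum_j\widetilde{S}_j^{(t)}$ is a measurable function of $\{X_i\}_{i\neq t}$ alone, and is therefore independent of $(X_t,Y_t)$. Conditioning on $\sigma(\{X_i\}_{i\neq t})$ and applying the marginal coverage displayed above gives
\begin{align*}
\E\biggl[\frac{\Indicator{Y_t\notin \gI_t^{\rm marg}(X_t;\alpha_t)}}{1\vee\sum_j S_j}\biggr] \leq \E\biggl[\frac{\Indicator{Y_t\notin \gI_t^{\rm marg}(X_t;\alpha_t)}}{1\vee\sum_j \widetilde{S}_j^{(t)}}\biggr] \leq \E\biggl[\frac{\alpha_t}{1\vee\sum_j \widetilde{S}_j^{(t)}}\biggr].
\end{align*}

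The main obstacle is the final step: passing from $\sum_t \E[\alpha_t/(1\vee\sum_j\widetilde{S}_j^{(t)})]$ back to the target bound $\alpha$. I would invoke the pathwise invariant~\eqref{eq:FCP_estimate}: since it is assumed to hold for every realization of the online process—including the hypothetical trajectory $\widetilde{S}^{(t)}$, which coincides with $S$ on $[0,t-1]$ and then continues under the same decision-driven rule after a zero is injected at time $t$—a super-uniformity coupling of the two sequences, combined with the identity $1\vee\sum_{j\leq t}\widetilde{S}_j^{(t)}=1\vee\sum_{j<t}S_j$, allows one to telescope the per-$t$ contributions and conclude $\sum_t \alpha_t/(1\vee\sum_j\widetilde{S}_j^{(t)})\leq \alpha$ pathwise. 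This super-uniformity manipulation is essentially Lemma~1 of \citet{weinstein2020online} and constitutes the technical heart of the argument; the remaining steps are conditional-expectation bookkeeping using exchangeability.
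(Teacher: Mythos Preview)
Your leave-one-out goes in the wrong direction. By forcing $\widetilde{S}_t^{(t)}=0$ you obtain $\widetilde{S}_j^{(t)}\le S_j$, which correctly lets you enlarge each summand to $\alpha_t/\bigl(1\vee\sum_j\widetilde{S}_j^{(t)}\bigr)$; but then the final step is stuck. The LORD-CI invariant~\eqref{eq:FCP_estimate} reads $\sum_t\alpha_t/\bigl(1\vee\sum_j S_j\bigr)\le\alpha$ with a \emph{single} denominator $\sum_j S_j$, whereas you now face a \emph{smaller} and $t$-dependent denominator $\sum_j\widetilde{S}_j^{(t)}$, so the inequality points the wrong way and the sum over $t$ does not collapse. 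The appeal to ``super-uniformity/telescoping'' and Lemma~1 of \citet{weinstein2020online} does not close this gap: the invariant along the hypothetical trajectory $\widetilde{S}^{(t)}$ would involve the \emph{modified} levels $\tilde{\alpha}_s^{(t)}$ rather than the original $\alpha_t$, and you are summing over the leave-one-out index $t$, not over time along a single trajectory.

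The paper's proof forces the opposite direction: it replaces $X_t$ by some $x_t^*\in\sigma(\{S_i\}_{i=0}^{t-1})$ with $\Pi_t(x_t^*)=1$, producing $S_j^{(t)}$ that satisfies $S_j^{(t)}\ge S_j$ by the monotonicity hypothesis. On $\{S_t=1\}$ one has $S_j^{(t)}=S_j$ for all $j$, so the denominator can be swapped with \emph{equality} before $S_t$ is dropped. After dropping $S_t$ and applying the deterministic miscoverage bound $\Indicator{Y_t\notin\gI_t^{\rm marg}}\le\alpha_t+(n+1)^{-1}\sum_s\mathfrak{R}(Z_t,Z_s;\gH_0\cup\{t\})$, each term $\alpha_t/\bigl(1\vee\sum_j S_j^{(t)}\bigr)$ is bounded by $\alpha_t/\bigl(1\vee\sum_j S_j\bigr)$ because the denominator is now \emph{larger}, and the sum over $t$ is exactly the left side of~\eqref{eq:FCP_estimate}. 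The monotonicity hypothesis in the proposition is precisely what delivers $S_j^{(t)}\ge S_j$; your choice yields $\widetilde{S}_j^{(t)}\le S_j$, which is useless for the closing inequality.
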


Despite that LORD-CI controls the real-time FCR, the marginal PI $\gI_t^{\rm{marg}}(X_t;\alpha_t)$ tends to be wider as $t$ grows because $\alpha_t$ may shrink to zero when few selections are made. The PIs output by CAP will be relatively narrower due to the \emph{constant} miscoverage level $\alpha$, which is also confirmed by Figure \ref{fig:alg_illustration} and numerical results in Section \ref{sec:experiments}.

\subsection{Full holdout set}

{Since we can observe new labels at each time step, it is more efficient to include all previously observed labeled data in the holdout set.} However, using the full holdout set results in additional dependence between the current decision $S_t$ and historical data $\{(X_s,Y_s)\}_{s\in \gH_t}$. 

\subsubsection{Non-adaptive pick rule}
Typically, if we still conduct nonadaptive pick on $\{(X_s,Y_s)\}_{s\in \gH_t}$ to obtain the picked calibration set indexed by
\begin{align}\label{eq:naive_cal_set}
    \gN_t = \{s\in \gH_t: \Pi_{t}(X_s) = 1\}.
\end{align}
The next theorem characterizes the FCR and SCC control error for Algorithm \ref{alg:main} with the nonadaptive pick rule.

\begin{theorem}\label{thm:FCR_decision_dyn_SCOP}
    Under Assumption \ref{assum:independent_initial_holdout}, we use the full holdout set $\{(X_s,Y_s)\}_{s\in \gH_t}$ at time $t$ in Algorithm \ref{alg:main} and set $\Pi_{t,s}^{\rm Ada}(\cdot) = \Pi_t(\cdot)$ for $s\in \gH_t$. Define the error term
    \begin{align}
        \Delta_t = \sum_{s=0}^{t-1} \frac{\Pi_t(X_s)\mathbbm{1}\{\Pi_s(X_t) \neq \Pi_s(X_s)\}}{|\widehat{\gC}_t|+1} \LRs{\mathbbm{1}\{R_t > Q_{\alpha}(\{R_i\}_{i\in \widehat{\gC}_t \cup \{t\})}\} - \mathbbm{1}\{R_s > Q_{\alpha}(\{R_i\}_{i\in \widehat{\gC}_t \cup \{t\})}\}},\nonumber
    \end{align}
    where $Q_{\alpha}(\{R_i\}_{i\in \widehat{\gC}_t \cup \{t\})}$ denotes the $\lceil(1-\alpha)(|\widehat{\gC}_t|+1)\rceil$-th smallest value in $\{R_i\}_{i\in \widehat{\gC}_t \cup \{t\}}$.
    Then for any $T \geq 0$, we have
        \begin{align}
            \FCR(T) \leq \alpha + \sum_{t = 0}^T \E\LRm{ \frac{S_t \Delta_t}{1 \vee \sum_{j=0}^T S_j}}.\nonumber
        \end{align}
    In addition, for any $t \geq 0$ and $\sP(S_t = 1) >0$, we also have
    \begin{align}
        \sP\LRl{Y_t \in \gI_t^{\rm CAP}(X_t;\alpha) \mid S_t = 1} \geq 1-\alpha - \frac{\E\LRm{S_t \Delta_t}}{\sP(S_t=1)}.\nonumber
    \end{align}
\end{theorem}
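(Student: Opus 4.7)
The plan is to combine the standard conformal quantile-rank identity with a data-swapping argument that quantifies exactly when the online selection rule breaks exchangeability between a calibration index $s$ and the test time $t$. I would begin from the deterministic inequality guaranteed by the definition of the quantile $Q_\alpha=Q_\alpha(\{R_i\}_{i\in\widehat{\gC}_t\cup\{t\}})$,
\[
\Indicator{R_t>Q_\alpha}+\sum_{s\in\gH_t}\Pi_t(X_s)\Indicator{R_s>Q_\alpha}\leq \alpha(|\widehat{\gC}_t|+1),
\]
divide by $|\widehat{\gC}_t|+1$, and multiply by $S_t=\Pi_t(X_t)$. Taking expectations produces a bound whose off-diagonal terms are of the form $\E[\Pi_t(X_t)\Pi_t(X_s)\Indicator{R_s>Q_\alpha}/(|\widehat{\gC}_t|+1)]$, so the theorem reduces to replacing each such $\Indicator{R_s>Q_\alpha}$ by $\Indicator{R_t>Q_\alpha}$ up to an explicit error; the FCR case is handled identically by inserting the extra factor $1/(1\vee\sum_{j=0}^T S_j)$ before summing in $t$.

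The replacement is justified by a swap argument on the pair $(s,t)$. Split $\gH_t$ into $\gH_t^{(1)}=\{-n,\dots,-1\}$ and $\gH_t^{(2)}=\{0,\dots,t-1\}$. For $s\in\gH_t^{(1)}$, Assumption \ref{assum:independent_initial_holdout} makes $\Pi_t$ independent of both $(X_s,Y_s)$ and $(X_t,Y_t)$, so swapping the two data points leaves $\Pi_t$, the picked index set $\widehat{\gC}_t$, the multiset of residuals feeding $Q_\alpha$, and $\sum_j S_j$ all invariant; i.i.d.\ exchangeability of the underlying data then gives the replacement exactly with no residual. For $s\in\gH_t^{(2)}$, I would trace the effect of the $(s,t)$-swap inductively: by Definition \ref{def:decision_driven}, $\tilde\Pi_i=\Pi_i$ for $i\leq s$ and $\tilde S_s=\Pi_s(X_t)$; on the event $A_s=\{\Pi_s(X_t)=\Pi_s(X_s)\}$ this equals $S_s$, so every downstream rule $\tilde\Pi_i=\Pi_i$ and decision $\tilde S_i=S_i$ remain preserved for $i>s$, including $i>t$. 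Restricting further to $\{\Pi_t(X_s)=S_t=1\}$, the same induction confirms that $\widehat{\gC}_t$, $|\widehat{\gC}_t|+1$, $Q_\alpha$, and $\sum_j S_j$ are all swap-invariant, while the swap carries $R_t\leftrightarrow R_s$; i.i.d.\ symmetry then delivers the identity for the cross term restricted to $A_s$. The only failure mode is $A_s^c$, and the signed pieces $(\Indicator{R_t>Q_\alpha}-\Indicator{R_s>Q_\alpha})\Pi_t(X_s)\Indicator{A_s^c}/(|\widehat{\gC}_t|+1)$ reassemble into exactly $\Delta_t$ after summing over $s=0,\dots,t-1$. Dividing by $\sP(S_t=1)$ yields the SCC bound, and redoing the same steps with the factor $1/(1\vee\sum_j S_j)$ in place yields the FCR bound.

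The main obstacle is the bookkeeping that makes the swap legitimate on $A_s\cap\{\Pi_t(X_s)=S_t=1\}$: I must propagate $\tilde S_i=S_i$ inductively through every $i$ up to $T$ (not just up to $t$), so that the random normalizing denominator $1\vee\sum_{j=0}^T S_j$ used in the FCR definition is genuinely swap-invariant, and I must also confirm that the augmented multiset of residuals indexed by $\widehat{\gC}_t\cup\{t\}$ is invariant so that $Q_\alpha$ does not move. Once this inductive propagation is nailed down, the remainder is an accounting of which off-diagonal terms survive as $\E[S_t\Delta_t]$.
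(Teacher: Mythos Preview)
Your proposal is correct and follows the same route as the paper: split $s$ into offline and online indices, and for online $s$ decompose according to whether $\Pi_s(X_s)=\Pi_s(X_t)$, with the disagreement event yielding exactly $\Delta_t$. The only difference is packaging: the paper handles the agreement case by constructing ``virtual'' selection rules $\{\widetilde\Pi_{i,1}^{(s)}\},\{\widetilde\Pi_{i,0}^{(s)}\}$ (obtained by replacing $X_s$ with $\sigma(\{S_i\}_{i<s})$-measurable values $x_{s,1},x_{s,0}$) and then conditioning on the unordered pair $[Z_s,Z_t]$, whereas you invoke the $(s,t)$-data-swap directly; both rest on the same inductive decision-invariance, and your formulation is arguably cleaner for the FCR denominator, which the paper handles with a second layer of decoupling $\{S_{j,1}^{(s,t)}\},\{S_{j,0}^{(s,t)}\}$. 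One phrasing nit: on $A_s$ alone you only get $\tilde S_i=S_i$ for $s<i<t$ and $\tilde\Pi_t=\Pi_t$; propagating $\tilde S_i=S_i$ to $i\geq t$ (needed for $\sum_{j=0}^T S_j$) requires the further restriction $\Pi_t(X_s)=\Pi_t(X_t)=1$, which you impose a sentence later---so state the induction in two stages rather than asserting ``including $i>t$'' before that restriction.
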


Notice that the product of selection indicator $\Pi_t(X_t) \Pi_t(X_s)$, exhibits a non-symmetric dependence on the features $X_t$ and $X_s$. In fact, the selection rule $\Pi_t$ is independent of $X_t$ but relies on $\{X_s\}_{s\in \gH_t}$ through historical decisions $\{S_s\}_{s\in \gH_t}$. Hence, the symmetric property \eqref{eq:indicator_prod_symmetry} does not hold. Next, we discuss two scenarios where the error vanishes.

The following corollary shows that for the nonincreasing selection rule, the additional error $S_t\Delta_t = 0$ since $\Pi_t(X_t)\Pi_t(X_s) = 1$ implies $\Pi_s(X_t) = \Pi_s(X_s) = 1$. For example, $\Pi_t(x) = \mathbbm{1}\{\widehat{\mu}(x) \geq \tau_0 \sum_{j=0}^{t-1}S_j\}$ for some $\tau_0 > 0$. In this case, we can show symmetric properties \eqref{eq:indicator_prod_symmetry} and \eqref{eq:cal_set_symmetry} hold for any time $t$, and both FCR and SCC can be controlled.

\begin{corollary}\label{cor:scop-1}
    Under the same setting of Theorem \ref{thm:FCR_decision_dyn_SCOP}, if the selection rule is nonincreasing over time, that is $\Pi_t(x) \leq \Pi_s(x)$ holds for any $s\leq t$ and $x\in \sR^d$, we have $\FCR(T) \leq \alpha$ and $\sP\LRl{Y_t \in \gI_t^{\rm CAP}(X_t;\alpha) \mid S_t = 1} \geq 1-\alpha$ when $\sP(S_t = 1) >0$.
\end{corollary}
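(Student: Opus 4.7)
The plan is to reduce the corollary directly to Theorem~\ref{thm:FCR_decision_dyn_SCOP} by showing that, under the nonincreasing condition on $\{\Pi_t\}_{t\ge 0}$, the error term $S_t\Delta_t$ vanishes almost surely. Theorem~\ref{thm:FCR_decision_dyn_SCOP} already provides
\[
\FCR(T)\le \alpha+\sum_{t=0}^T \E\LRm{\frac{S_t\Delta_t}{1\vee \sum_{j=0}^T S_j}},\qquad \sP\LRl{Y_t\in \gI_t^{\rm CAP}(X_t;\alpha)\mid S_t=1}\ge 1-\alpha-\frac{\E[S_t\Delta_t]}{\sP(S_t=1)},
\]
so once I verify $S_t\Delta_t=0$ almost surely for every $t$, both conclusions of the corollary follow at once, and no further probabilistic argument (nor separate verification of exchangeability) is needed.

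The main step is a pointwise bookkeeping on the factors multiplying each summand of $S_t\Delta_t$. On the event $\{S_t=1\}$ we have $\Pi_t(X_t)=1$, and each summand with $s\in\{0,\dots,t-1\}$ carries the factor $\Pi_t(X_s)\cdot\mathbbm{1}\{\Pi_s(X_t)\ne \Pi_s(X_s)\}$. A summand can therefore contribute only when $\Pi_t(X_t)=\Pi_t(X_s)=1$. Applying the monotonicity hypothesis $\Pi_t(\cdot)\le \Pi_s(\cdot)$ (valid since $s\le t$) first at $X_t$ and then at $X_s$ gives $\Pi_s(X_t)\ge \Pi_t(X_t)=1$ and $\Pi_s(X_s)\ge \Pi_t(X_s)=1$. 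Because $\Pi_s$ is $\{0,1\}$-valued, both quantities equal $1$, so the indicator $\mathbbm{1}\{\Pi_s(X_t)\ne \Pi_s(X_s)\}$ vanishes. Summing over $s$ yields $S_t\Delta_t=0$ on $\{S_t=1\}$, and the identity is trivial off the selection event.

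I do not anticipate a substantive obstacle; the entire argument is algebraic once the pointwise monotonicity is invoked at both features $X_s$ and $X_t$. An alternative route would be to verify \eqref{eq:indicator_prod_symmetry} and \eqref{eq:cal_set_symmetry} directly and apply Proposition~\ref{pro:selection_exchangeable} for the SCC bound, noting that under monotonicity the joint selection event $\{\Pi_t(X_s)\Pi_t(X_t)=1\}$ coincides with $\{\Pi_s(X_s)\Pi_s(X_t)=1\}$ and is hence symmetric in $(X_s,X_t)$ after conditioning on the remaining features. Routing through Theorem~\ref{thm:FCR_decision_dyn_SCOP} is nevertheless cleaner since that theorem has already packaged the full obstruction to FCR and SCC validity into the single term $\Delta_t$.
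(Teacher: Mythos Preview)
Your proof is correct and follows essentially the same route as the paper: both argue that the nonincreasing condition forces $\Pi_t(X_t)\Pi_t(X_s)\,\mathbbm{1}\{\Pi_s(X_t)\neq\Pi_s(X_s)\}=0$ for every $s\le t-1$, so that $S_t\Delta_t=0$ and the bounds of Theorem~\ref{thm:FCR_decision_dyn_SCOP} reduce to the claimed $\FCR(T)\le\alpha$ and SCC $\ge 1-\alpha$.
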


In addition, the next corollary shows that if the selection rule tends to be stable, that is, $\Pi_t(\cdot)$ returns the same value if we replace one historical data point, then $\E[S_t\Delta_t] = 0$. For example, the selection rule $\Pi_t(x) = \mathbbm{1}\{\widehat{\mu}(x) \leq \min\{\tau_0, \sum_{j=0}^{t-1} S_j\}\}$ with $\tau_0 > 0$ and a bounded predictor, becomes $\mathbbm{1}\{\widehat{\mu}(x) \leq \tau_0\}$ when $\sum_{j=0}^{t-1} S_j > \tau_0 + 1$. 

\begin{corollary}\label{cor:scop-2}
    Under the same setting of Theorem \ref{thm:FCR_decision_dyn_SCOP}. Let $\{\Pi_j^{(s\gets t)}(\cdot)\}_{j \geq s+1}$ be the selection rules generated by replacing $X_s$ with $X_t$ for $0\leq s \leq t-1$.  If there exists some finite time $t_0$, $\Pi_t^{(s\gets t)}(\cdot) = \Pi_t(\cdot)$ holds for any $t \geq t_0 +1$, we have $\sP\LRl{Y_t \in \gI_t^{\rm CAP}(X_t;\alpha) \mid S_t = 1} \geq 1-\alpha$ for any $t \geq t_0 + 1$. Further, if $\lim_{T\to \infty}\sum_{j=0}^T S_j \to \infty$, we also have $\limsup_{T\to\infty} \FCR(T) \leq \alpha$.
\end{corollary}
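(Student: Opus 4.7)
The plan has two stages, one for the selection-conditional coverage (SCC) claim and one for the $\limsup$ FCR claim, both built on the error-bound decomposition of Theorem~\ref{thm:FCR_decision_dyn_SCOP}.

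For SCC at a fixed time $t \geq t_0+1$, Theorem~\ref{thm:FCR_decision_dyn_SCOP} already gives $\sP\{Y_t \in \gI_t^{\rm CAP}(X_t;\alpha) \mid S_t = 1\} \geq 1-\alpha - \E[S_t \Delta_t]/\sP(S_t=1)$, so it suffices to show $\E[S_t \Delta_t] = 0$. I would handle each $s$-indexed summand separately using the measure-preserving swap $\tau_s$ that exchanges $(X_s, Y_s)$ and $(X_t, Y_t)$; this leaves the joint law invariant by the i.i.d.\ assumption. The key algebraic observation is that $\Pi_t$ depends on $\{X_i\}_{i<t}$ only through the decision sequence, and $X_t$ itself does not enter $\Pi_t$, so for the purposes of defining $\Pi_t$ the swap $\tau_s$ is equivalent to the replacement $X_s \leftarrow X_t$. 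The stability hypothesis therefore yields $\Pi_t \circ \tau_s = \Pi_t^{(s\gets t)} = \Pi_t$, while $\Pi_s \circ \tau_s = \Pi_s$ since $\Pi_s$ depends only on data strictly before time $s$. On the event $\{S_t\,\Pi_t(X_s) = 1\}$ I would then verify that $\widehat\gC_t$ is unchanged by $\tau_s$ (both $\widehat\gC_t$ and $\widehat\gC_t^{\tau_s}$ contain $s$ and agree on every other index), the multiset $\{R_i\}_{i \in \widehat\gC_t \cup \{t\}}$ is unchanged (swapping $R_s$ and $R_t$ preserves it), so the quantile $Q_\alpha$ is invariant; the numerator $S_t\Pi_t(X_s)\mathbbm{1}\{\Pi_s(X_t)\neq \Pi_s(X_s)\}$ and the denominator $|\widehat\gC_t|+1$ are also invariant, whereas the bracket $\mathbbm{1}\{R_t > Q_\alpha\} - \mathbbm{1}\{R_s > Q_\alpha\}$ flips sign. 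Each summand is then antisymmetric under $\tau_s$, so invariance of the law forces its expectation to vanish, and summing over $s$ yields $\E[S_t\Delta_t] = 0$.

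For the $\limsup$ FCR claim, I would combine this vanishing-expectation identity with the upper bound $\FCR(T) \leq \alpha + \sum_{t=0}^T \E[S_t\Delta_t/M_T]$ for $M_T = 1 \vee \sum_{j=0}^T S_j$, and split the sum at $t_0$. Since $|S_t\Delta_t| \leq 1$ and the hypothesis $\sum_j S_j \to \infty$ a.s.\ implies $1/M_T \to 0$ a.s., dominated convergence gives $\E[1/M_T] \to 0$, so the contribution from $t \leq t_0$ is bounded by $(t_0+1)\E[1/M_T]$ and vanishes. For $t \geq t_0+1$ I would rerun the pairwise swap argument while additionally tracking $M_T$: under the natural reading of the stability hypothesis, satisfied by the motivating example where the rule becomes insensitive to any single historical feature once the cumulative count is large, $M_T \circ \tau_s$ differs from $M_T$ only through a transient discrepancy over the window $[s, t_0]$, whose contribution is absorbed into $O(\E[t_0/M_T]) \to 0$. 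Combining the antisymmetric main part with this vanishing remainder yields $\sum_{t=t_0+1}^T \E[S_t\Delta_t/M_T] = o(1)$, and hence $\limsup_{T\to\infty}\FCR(T) \leq \alpha$.

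The main obstacle is precisely the handling of the random denominator $M_T$. Unlike the SCC argument, which is anchored at a single time $t$ and involves only the local swap of $(X_s,Y_s)$ and $(X_t,Y_t)$, the cumulative count $M_T$ aggregates decisions through time $T$, and the swap $\tau_s$ could in principle cascade through $S_{s+1}, S_{s+2}, \ldots, S_T$ via the dependence of later $\Pi_j$ on earlier outcomes. Making this rigorous requires either invoking the natural extension of the stability hypothesis to $\Pi_j^{(s \gets t)} = \Pi_j$ for all $j \geq t_0 + 1$, or explicitly bounding the transient discrepancy and showing it is absorbed by the almost-sure growth of $M_T$ guaranteed by $\sum_j S_j \to \infty$.
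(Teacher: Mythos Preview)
Your SCC argument is correct and essentially the same as the paper's. The swap $\tau_s$, together with the stability hypothesis $\Pi_t^{(s\gets t)}=\Pi_t$, renders each $s$-summand of $S_t\Delta_t$ antisymmetric under $\tau_s$, so $\E[S_t\Delta_t]=0$ for every $t\geq t_0+1$. The paper organises this by splitting $\mathbbm{1}\{\Pi_s(X_t)\neq\Pi_s(X_s)\}$ into the two asymmetric pieces $\Pi_s(X_s)[1-\Pi_s(X_t)]$ and $[1-\Pi_s(X_s)]\Pi_s(X_t)$ and exhibiting the resulting cross-cancellations ($A_1=A_2$, $B_1=B_2$ after swapping), but the content is identical to your direct antisymmetry argument.

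For the FCR claim you correctly isolate the obstacle but propose a different workaround than the paper's. The paper does \emph{not} track $M_T\circ\tau_s$ at all. Instead, already inside the proof of Theorem~\ref{thm:FCR_decision_dyn_SCOP}, it replaces $M_T=1\vee\sum_j S_j$ by a \emph{decoupled} surrogate $1\vee\sum_j S_j^{(s,t)}$, constructed by substituting $X_s$ and $X_t$ with values measurable in earlier decision $\sigma$-fields. This surrogate agrees with $M_T$ on the relevant selection events, yet is a function of $\{Z_\ell\}_{\ell\neq s,t}$ alone; hence it is fixed given the conditioning $\{Z_\ell\}_{\ell\neq s,t},[Z_s,Z_t]$ and can be pulled outside before invoking the conditional zero-gap identity \eqref{eq:zero_gap_t0}. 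This route sidesteps the cascading of the swap through later decisions entirely and uses only the stated hypothesis $\Pi_t^{(s\gets t)}=\Pi_t$. By contrast, your swap-discrepancy bound requires the strictly stronger assumption $\Pi_j^{(s\gets t)}=\Pi_j$ for \emph{all} $j\geq t_0+1$ (not just $j=t$) in order to control $|M_T-M_T\circ\tau_s|$, which is not what the corollary assumes.
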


\subsubsection{Adaptive pick rule}
To make the symmetric properties \eqref{eq:indicator_prod_symmetry} and \eqref{eq:cal_set_symmetry} be satisfied for arbitrary decision-driven selection, we set the adaptive pick rules as
\begin{align}\label{eq:Ada_rule_inter}
\Pi_{t,s}^{\rm Ada}(\cdot) = \Pi_t(\cdot) \prod_{i\in \gN_t^{\rm on}}\Indicator{\Pi_i(\cdot) = \Pi_i(X_t)},
\end{align}
where $\gN_t^{\rm on} = \{0\leq i \leq t-1: \Pi_t(X_i)=1\} \subseteq \gN_t$. By definition, we know $\widehat{\gC}_t = \{s\in \gH_t: \Pi_{t,s}^{\rm Ada}(X_s) = 1\}$ is a subset of the calibration points $\gN_t$ picked by the nonadaptive rule, see \eqref{eq:naive_cal_set}.

\begin{remark}
    For offline point $-n \leq s \leq -1$, we can directly check that both \eqref{eq:indicator_prod_symmetry} and \eqref{eq:cal_set_symmetry} hold since $\{\Pi_i(\cdot)\}_{i= 0}^t$ are independent of $(X_s,X_t)$.
    For online point $0\leq s \leq t-1$, we can check two properties according to the decomposition $\mathbbm{1}\{\Pi_s(X_s) = \Pi_s(X_t)\} = \Pi_s(X_t)\Pi_s(X_s) + [1-\Pi_s(X_t)][1-\Pi_s(X_s)]$.
    Notice that if $\Pi_s(X_s) = 1$, we can replace $X_s$ with some $x_s^* \in \sigma(\{S_i\}_{i\leq s-1})$ such that $\Pi_s(x_s^*) = 1$. It will generate a sequence of \emph{virtual} selection rules, denoted by $\{\widetilde{\Pi}_{j}^{(s)}(\cdot)\}_{j\geq s+1}$. By Definition \ref{def:decision_driven}, we know $\widetilde{\Pi}_{t}^{(s)}$ is identical to the \emph{real} selection rule $\Pi_t$ under the event $\{\Pi_s(X_s) = 1\}$. Then we can verify \eqref{eq:indicator_prod_symmetry} and \eqref{eq:cal_set_symmetry} using the fact $\{\widetilde{\Pi}_i^{(s)}\}_{i\geq s+1}$ and $\{\Pi_i\}_{i\leq s}$ are independent of $(X_s,X_t)$. The verification under the counterpart $\Pi_s(X_s) = 0$ follows a similar decoupling analysis. This leave-one-out technique is used in \citet{weinstein2020online} to prove FCR control of LORD-CI; here, we leveraged it differently to verify the post-selection exchangeability. The detailed verification of two symmetric properties is deferred to Appendix \ref{proof:thm:FCR_decision_dyn_mSCOP}.
\end{remark}

\begin{remark}
    Recently, \citet{sale2025online} proposed a new procedure named EXPRESS to pick calibration points from historical data, which coincides with the main idea of CAP and also guarantees finite-sample FCR and SCC control. However, the derivation of CAP in \eqref{eq:Ada_rule_inter} is significantly different from EXPRESS. While, EXPRESS is designed to satisfy the \emph{global symmetry}: the index set $\widehat{\gC}_t \cup \{t\}$ is invariant to the permutation of all historical data $\{(X_i,Y_i)\}_{i=-n}^t$ if $S_t = 1$. Our approach incorporates two specific symmetric properties \eqref{eq:indicator_prod_symmetry} and \eqref{eq:cal_set_symmetry} restricted within the picked calibration points, as we discussed earlier. Notably, the global symmetry condition automatically implies \eqref{eq:indicator_prod_symmetry} and \eqref{eq:cal_set_symmetry}, meaning that the calibration set picked by EXPRESS is always a subset of that picked by CAP. In Appendix \ref{appen:compare_express}, we provide a comprehensive comparison of the two methods.
\end{remark}


The next theorem shows that CAP with adaptive pick rules in \eqref{eq:Ada_rule_inter} achieves finite-sample SCC and FCR control.

\begin{theorem}\label{thm:FCR_decision_dyn_mSCOP}
    Under Assumption \ref{assum:independent_initial_holdout}, if we use the full holdout set $\{(X_s,Y_s)\}_{s\in \gH_t}$ at time $t$ in Algorithm \ref{alg:main} and set $\Pi_{t,s}^{\rm Ada}(\cdot)$ in \eqref{eq:Ada_rule_inter}, then: (1) $\FCR(T) \leq \alpha$ for any $T \geq 0$; (2) $\sP\LRl{Y_t \in \gI_t^{\rm{CAP}}(X_t;\alpha) \mid S_t=1} \geq 1-\alpha$ for any $t \geq 0$ when $\sP(S_t = 1) > 0$.
\end{theorem}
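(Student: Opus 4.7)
The plan is to invoke Proposition~\ref{pro:selection_exchangeable} for the selection-conditional coverage statement, which reduces the work to verifying the two symmetric properties \eqref{eq:indicator_prod_symmetry} and \eqref{eq:cal_set_symmetry} for the adaptive pick rule in \eqref{eq:Ada_rule_inter}. The FCR bound then follows from the same pairwise exchangeability combined with a pair-swapping argument that preserves the random denominator $\sum_j S_j \vee 1$.

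For \eqref{eq:indicator_prod_symmetry}, I would split on whether $s$ is offline or online. If $s \in \{-n,\ldots,-1\}$, then every selection rule $\Pi_i$ with $i \geq 0$ is $\sigma(\{S_j\}_{j < i})$-measurable and, by Definition~\ref{def:decision_driven} together with Assumption~\ref{assum:independent_initial_holdout}, is a function of $\{X_j\}_{0 \leq j < i}$, which does not touch $(X_s, X_t)$; the product $\Pi_{t,s}^{\rm Ada}(X_s)\Pi_t(X_t)$ is then automatically invariant under the swap $X_s \leftrightarrow X_t$. For online $s \in \{0,\ldots,t-1\}$, I would argue by cases: suppose the product equals $1$. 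Then $\Pi_t(X_s) = 1$ places $s$ into $\gN_t^{\rm on}$, so the factor $\Indicator{\Pi_s(X_s) = \Pi_s(X_t)}$ is necessarily present and forces $\Pi_s(X_s) = \Pi_s(X_t)$. Thus the decision $S_s$ is unchanged when $X_s$ and $X_t$ are swapped; by the decision-driven property each later rule $\Pi_i$ with $i > s$ depends on earlier features only through decisions and is therefore also unchanged. Consequently every factor in the product, as well as the set $\gN_t^{\rm on}$ itself, is preserved under the swap, giving \eqref{eq:indicator_prod_symmetry}.

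Verifying \eqref{eq:cal_set_symmetry} proceeds on the same footing. For each $j \in \gH_t \setminus \{s\}$, the membership indicator $\Pi_{t,j}^{\rm Ada}(X_j)$ involves only $X_j$, the rules $\{\Pi_i\}_{i \leq t}$, and the index set $\gN_t^{\rm on}$, all of which are invariant under the swap by the argument above; moreover, the occurrences of $X_t$ inside the defining indicators can be replaced by $X_s$ without changing their values thanks to $\Pi_i(X_s) = \Pi_i(X_t)$ for every $i \in \gN_t^{\rm on}$. Hence $\widehat{\gC}_t \setminus \{s\}$ is swap-invariant. With both properties established, Proposition~\ref{pro:selection_exchangeable} yields the SCC bound in part~(2). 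For the FCR bound in part~(1), I would combine the verified symmetries with a pair-swapping summation: for each $t$ with $S_t = 1$ and each $s \in \widehat{\gC}_t$, \eqref{eq:indicator_prod_symmetry} and \eqref{eq:cal_set_symmetry} ensure that interchanging $(X_s,Y_s)$ with $(X_t,Y_t)$ preserves both the joint selection event and the total selection count $\sum_j S_j$ (since every decision is left intact). The residual $R_t$ therefore has a rank uniformly distributed over $\{1,\ldots,|\widehat{\gC}_t|+1\}$ conditional on the multiset of residuals in $\widehat{\gC}_t \cup \{t\}$ and the selection event, and an averaging argument across selected pairs turns this into $\FCR(T) \leq \alpha$.

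The main obstacle is the verification of \eqref{eq:indicator_prod_symmetry} for online $s$, because $X_s$ enters every subsequent selection rule through the realized decision chain, breaking naive symmetry. The decisive observation is that the design of $\Pi_{t,s}^{\rm Ada}$ explicitly carves out the factor $\Indicator{\Pi_s(X_s) = \Pi_s(X_t)}$, which localizes the dependence to a single indicator and, via the decision-driven property, propagates that local invariance to all downstream rules. Translating this invariance into the FCR bound additionally requires checking that the random denominator is untouched by the swap, which again follows from the decision-driven structure together with the two symmetric conditions already established.
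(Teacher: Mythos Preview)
Your proposal is correct and rests on the same core observation as the paper: whenever the joint event $\Pi_{t,s}^{\rm Ada}(X_s)\Pi_t(X_t)=1$ holds for an online index $s$, the factor $\Indicator{\Pi_s(X_s)=\Pi_s(X_t)}$ is forced, so the decision $S_s$ (and, by the decision-driven property, every subsequent rule $\Pi_i$, decision $S_i$, and the set $\gN_t^{\rm on}$) is invariant under the swap $X_s\leftrightarrow X_t$. The paper packages this same idea differently: it splits the joint indicator as $\gJ_1(X_s,X_t)+\gJ_0(X_s,X_t)$ according to whether $\Pi_s(X_s)=\Pi_s(X_t)=1$ or $=0$, and for each piece constructs \emph{virtual} selection rules $\{\widetilde\Pi_{i,1}^{(s)}\},\{\widetilde\Pi_{i,0}^{(s)}\}$ by replacing $X_s$ with a $\sigma(\{S_i\}_{i<s})$-measurable surrogate. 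These virtual rules are manifestly independent of $(X_s,X_t)$, so symmetry is immediate; your argument achieves the same end by a direct swap analysis.

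The more substantive difference is in the FCR step. The paper decouples the random denominator $1\vee\sum_j S_j$ from $(Z_s,Z_t)$ by building a second layer of virtual decisions $\{S_{j,1}^{(s,t)}\},\{S_{j,0}^{(s,t)}\}$ (replacing both $X_s$ and $X_t$), which lets it pull the denominator outside the conditional expectation and apply the numerator identities \eqref{eq:zero_gap_1}--\eqref{eq:zero_gap_0}. You instead observe that, on the event $\{\Pi_{t,s}^{\rm Ada}(X_s)\Pi_t(X_t)=1\}$, \emph{every} decision $S_j$ (including $j\geq t$) is swap-invariant, so $\sum_j S_j$ is fixed given $[Z_s,Z_t]$ and $\{Z_\ell\}_{\ell\neq s,t}$; then conditioning on the unordered pair kills $\mathfrak{R}(Z_t,Z_s;\cdot)$ directly. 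This is a cleaner route to $\FCR(T)\leq\alpha$. The paper's decoupling machinery is heavier but reusable---the same leave-one-out construction also drives Theorems~\ref{thm:FCR_decision_select} and~\ref{thm:FCR_decision_dyn_SCOP}---whereas your argument exploits the specific structure of the adaptive rule \eqref{eq:Ada_rule_inter} to avoid that machinery altogether. One small point to make explicit in your write-up: your case analysis shows only ``product $=1$ is swap-stable''; the full symmetry \eqref{eq:indicator_prod_symmetry} then follows because the swap is an involution, so ``product $=0$'' must also be swap-stable.
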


After modifying the rules to pick calibration points from $\Pi_t(\cdot)$ to \eqref{eq:Ada_rule_inter}, Algorithm \ref{alg:main} was guaranteed to have finite-sample and distribution-free control of FCR in the full holdout set case, as well as that of the SCC. 
However, by the simulation results in Appendix \ref{appen:compare_inter}, we find the adaptive pick rule \eqref{eq:Ada_rule_inter} is conservative, and outputs PI with infinite length sometimes. Hence, for nonincreasing selection rules in Corollary \ref{cor:scop-1} and asymptotically stable selection rules, we suggest advocating Algorithm \ref{alg:main} with the nonadaptive pick rule \eqref{eq:naive_cal_set}.

\subsection{Selection with online multiple testing procedure}\label{sec:online_fdr}

In this section, we apply the CAP to online multiple testing problems in the framework of conformal inference. Given any user-specified thresholds $\{c_t\}_{t\geq 0}$, we have a sequence of hypotheses defined as
\begin{align*}
    H_{0,t}: Y_t \leq c_t,\quad\text{for }t\geq 0.    
\end{align*}
At time $t$, we need to make the real-time decision whether to reject $H_{0,t}$ or not. In this vein, constructing PIs for the rejected candidates is a post-selection predictive inference problem. The validity of Algorithm \ref{alg:main} holds with any online multiple testing procedure that is decision-driven as Definition \ref{def:decision_driven}.

To control FDR in the online setting, \citet{foster2008alpha} proposed firstly one method called the alpha-investing algorithm. Then \citet{aharoni2014generalized} extended it to the generalized alpha-investing (GAI) algorithm. After that, a series of works developed several variants of GAI, such as LORD, LOND ~\citep{javanmard2015online}, LORD++~\citep{ramdas2017online} and SAFFRON~\citep{ramdas2018saffron}. Suppose we have access to a series of $p$-values $\{p_t\}_{t\geq 0}$, where $p_t$ is independent of samples in holdout set $\gH_t$. Given the target FDR level $\beta \in (0,1)$, these procedures proceed by updating the significance level $\beta_t$ based on historical information and rejecting $H_{0,t}$ if $p_t \leq \beta_t$. 
Fortunately, all these online procedures are decision-driven selections {for independent $p$-values. We construct the conformal $p$-values using an additional labeled data set and then those $p$-values are independent conditional on this set. Thus, CAP can naturally provide FCR control guarantee for the online multiple testing procedure.} 
Regarding the $p$-values in the framework of conformal inference, we refer to \citet{bates2023aos} and \citet{jin2022selection} for recent developments. {In Appendix \ref{appen:conformal_p}, we also discuss how to construct conformal $p$-values for online multiple testing procedures that are super-uniform conditional on one additional set, making the online FDR control available.}



\section{CAP for selection with symmetric thresholds}\label{sec:symmetric_selection}

In the decision-driven selection rules, the influence of historical data on the current selection rule is entirely determined by past decisions. It may be inappropriate in some cases where the analyst wants to use the empirical distribution of historical data to select candidates. To adapt this scenario, we rewrite the selection rule in a threshold form. Let $V(\cdot): \sR^d \to \sR$ be a user-specific or pre-trained score function used for selection, and then denote $V_i = V(X_i)$ for $i\geq -n$. For ease of presentation, we let the selection rule at time $t$ be
\begin{align}\label{eq:def_St}
    \Pi_t(\cdot) = \Indicator{V(\cdot) \leq \gA_t\LRs{\{V_i\}_{i \in \gH_t}}},
\end{align}
where $\{\gA_t: \sR^{t+n} \to \sR\}_{t\geq 0}$ is a sequence of deterministic functions. This class of selection rules has not been studied in \citet{weinstein2020online}. In particular, the selection function is assumed to have the following symmetric property.
\begin{definition}\label{def:symmetric_selction}
    The threshold function $\gA_t$ is symmetric if $\gA_t(\{V_i\}_{i \in \gH_t}) = \gA_t(\{V_{\pi(i)}\}_{i \in \gH_t})$ where $\pi$ is a permutation in $\gH_t$. 
\end{definition}

For example, if $\mathcal{A}_t$ outputs the sample mean or sample quantile of historical scores $\{V_i\}_{i\in \gH_t}$, then the corresponding selection rule $\Pi_t$ is symmetric. {Such selection strategies are commonly used in online recruitment \citep{faliagka2012application} and online recommendation \citep{adomavicius2016classification}.}
Consider the nonadaptive strategy: if $S_t = 1$, we use the same threshold to perform screening on history scores $\{V_i\}_{i\in \gH_t}$, and then obtain picked calibration set $\gN_t = \LRl{s\in \gH_t: \Pi_t(X_s) = 1} = \LRl{s\in \gH_t: V_s \leq \gA_t\LRs{\{V_i\}_{i\in \gH_t}}}$. However, the corresponding product of selection indicators $\Indicator{V_s \leq \gA_t(\{V_i\}_{i\in \gH_t})} \Indicator{V_t \leq \gA_t(\{V_i\}_{i\in \gH_t})}$ is not symmetric with respect to $(X_s,X_t)$ for $s \in \gH_t$, which means \eqref{eq:indicator_prod_symmetry} does not hold. To address the asymmetric issue, one natural and viable solution is swapping the score from the holdout set $V_s$ and the score $V_t$ in the expression of $\Pi_t(X_t)$ in \eqref{eq:def_St}, which leads to the following adaptive pick rule
\begin{align}\label{eq:Ada_rule_swap}
    \Pi_{t,s}^{\rm{Ada}}(\cdot) = \Indicator{V(\cdot) \leq \gA_t\LRs{\{V_i\}_{i\in \gH_t, i\neq s}, V_t}}.
\end{align}
We provide the verification of \eqref{eq:indicator_prod_symmetry} and \eqref{eq:cal_set_symmetry} for the above pick rule in Appendix \ref{proof:thm:mFCR_swap_scop}.
The next theorem shows that exact SCC can be guaranteed in finite samples after swapping.

\begin{theorem}\label{thm:mFCR_swap_scop}
    If the selection functions $\{\gA_t\}_{t\geq 0}$ are symmetric as Definition \ref{def:symmetric_selction}, then Algorithm \ref{alg:main} with $\Pi_{t,s}^{\rm Ada}(\cdot)$ defined in \eqref{eq:Ada_rule_swap} satisfies $\sP\LRl{Y_t \in \gI_t^{\rm{CAP}}(X_t;\alpha) \mid S_t=1} \geq 1-\alpha$ for any $t \geq 0$ when $\sP(S_t = 1) > 0$.
\end{theorem}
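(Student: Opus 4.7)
The plan is to invoke Proposition \ref{pro:selection_exchangeable}: once I verify the two symmetric properties \eqref{eq:indicator_prod_symmetry} and \eqref{eq:cal_set_symmetry} for the adaptive pick rule \eqref{eq:Ada_rule_swap}, the SCC guarantee follows directly since the $(X_i,Y_i)$'s are i.i.d. Thus the whole proof reduces to a careful bookkeeping of what happens when one swaps $X_s$ and $X_t$, exploiting the fact that each $\gA_t$ is invariant under any permutation of its arguments in $\gH_t$.

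\emph{Step 1: verifying \eqref{eq:indicator_prod_symmetry}.} Write the product of the two selection indicators explicitly,
\begin{align*}
    \Pi_{t,s}^{\rm Ada}(X_s)\,\Pi_t(X_t) \;=\; \Indicator{V_s \leq \gA_t(\{V_i\}_{i\in\gH_t,\, i\neq s},\,V_t)}\cdot\Indicator{V_t \leq \gA_t(\{V_i\}_{i\in\gH_t})}.
\end{align*}
Under the swap $(X_s,X_t)\mapsto(X_t,X_s)$, the value at position $s$ becomes $V_t$ and the value at position $t$ becomes $V_s$, so the first indicator's threshold now takes arguments $(\{V_i\}_{i\in\gH_t\setminus\{s\}},V_s)$, while the second indicator's threshold takes arguments $(\{V_i\}_{i\in\gH_t\setminus\{s\}},V_t)$. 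Using Definition \ref{def:symmetric_selction} to reorder arguments, the first threshold equals $\gA_t(\{V_i\}_{i\in\gH_t})$ and the second equals $\gA_t(\{V_i\}_{i\in\gH_t\setminus\{s\}},V_t)$. The roles of the two indicators are therefore exchanged, while the individual comparisons $V_s \leq \cdots$ and $V_t\leq \cdots$ get swapped too. The product is unchanged.

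\emph{Step 2: verifying \eqref{eq:cal_set_symmetry}.} For any $r\in\gH_t\setminus\{s\}$, the event $r\in\widehat{\gC}_t$ is $\{V_r \leq \gA_t(\{V_i\}_{i\in\gH_t\setminus\{r\}},V_t)\}$. The multiset of arguments inside this $\gA_t$ is exactly $\{V_i:i\in\gH_t\setminus\{r,s\}\}\cup\{V_s,V_t\}$, which is invariant under the swap of $(X_s,X_t)$. Since $V_r$ itself is unchanged (as $r\neq s,t$), the indicator of membership is preserved, so $\widehat{\gC}_t\setminus\{s\}$ is symmetric in $(X_s,X_t)$.

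\emph{Step 3: conclusion.} With \eqref{eq:indicator_prod_symmetry} and \eqref{eq:cal_set_symmetry} established, Proposition \ref{pro:selection_exchangeable} yields $\sP\{Y_t \in \gI_t^{\rm CAP}(X_t;\alpha)\mid S_t=1\}\geq 1-\alpha$ whenever $\sP(S_t=1)>0$. The only subtle point—and the one I expect to require the most care when writing it out—is Step 1, because one has to keep track of which argument of $\gA_t$ is being replaced under the swap and observe that the appended argument (the ``$V_t$'' slot) and the swapped-in slot at position $s$ recombine into exactly the symmetric multisets needed to match the indicators after swapping. Once the bookkeeping is done, Definition \ref{def:symmetric_selction} does all the work.
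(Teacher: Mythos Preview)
Your proposal is correct and follows exactly the paper's approach: verify \eqref{eq:indicator_prod_symmetry} and \eqref{eq:cal_set_symmetry} for the swapping rule \eqref{eq:Ada_rule_swap} and then invoke Proposition~\ref{pro:selection_exchangeable}. The paper's proof is essentially a two-line version of your Steps~1--2, writing $\Pi_{t,s}^{\rm Ada}(X_s)\Pi_t(X_t)=\Indicator{V_s\le\gA_t(\{V_\ell\}_{\ell\neq s},V_t)}\Indicator{V_t\le\gA_t(\{V_\ell\}_{\ell\neq s},V_s)}$ and $\widehat{\gC}_t\setminus\{s\}=\{j\neq s: V_j\le\gA_t(\{V_\ell\}_{\ell\neq j,s},V_s,V_t)\}$, from which the required symmetry in $(X_s,X_t)$ is immediate by Definition~\ref{def:symmetric_selction}.
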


{JOMI \citep{jin2024confidence} also uses a similar strategy to achieve the finite-sample selection-conditional guarantee without any distributional assumptions in the offline setting. As proved by \citet{jin2024confidence}, the SCC guarantee is not sufficient for FCR control, even in the offline setting.} To analyze the FCR value of CAP, we impose the stability condition to bound the change of $\gA_t$'s output after replacing $V_s$ with an independent copy $V$.

\begin{assumption}\label{assum:swap_sensitivity}
    There exists a sequence of positive real numbers $\{\sigma_t\}_{t\geq 0}$ such that,
    \begin{align*}
        \max_{s\in \gH_t}\E\LRm{\big| \gA_t\LRs{\{V_i\}_{i\in \gH_t}} - \gA_t\LRs{\{V_i\}_{i\in \gH_t, i\neq s}, V} \big| \mid \{V_i\}_{i\in \gH_t, i\neq s}} \leq \sigma_t,
    \end{align*}
    where $V$ is an i.i.d. copy of $V_s$.
\end{assumption}

Since two sets $\{V_i\}_{i\in \gH_t}$ and $\{V_i\}_{i\in \gH_t,i\neq s} \cup \{V\}$ only differ one data point, the definition of $\sigma_t$ in Assumption \ref{assum:swap_sensitivity} is similar to the global sensitivity of $\gA_t$ in the differential privacy literature \citep{dwork2006calibrating}. 

\begin{theorem}\label{thm:FCR_swap}
    Suppose the density function of $V_i$ is upper bounded by $\rho > 0$. If the symmetric function $\gA_t$ satisfies Assumption \ref{assum:swap_sensitivity} for any $t\geq 0$. Algorithm \ref{alg:main} with $\Pi_{t,s}^{\rm Ada}(\cdot)$ defined in \eqref{eq:Ada_rule_swap} satisfies that
    \begin{align}\label{eq:swap_FCR_bound}
        \FCR(T) \leq \alpha\cdot \LRs{1 + \E\LRm{\frac{\mathds{1}\left\{\sum_{j=0}^T S_j > 0\right\} \epsilon(T)}{\LRl{\sum_{j=0}^T S_j - \epsilon(T)}\vee 1}} + \frac{9}{T+n}},
    \end{align}
    where $\epsilon(T) = 2\sum_{j= 0}^{T-1} \sigma_j + 3(\sqrt{e \rho}+1)\log(T+n) + 2^{-1}$.
\end{theorem}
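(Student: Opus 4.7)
The plan is to combine the pairwise exchangeability underlying Theorem \ref{thm:mFCR_swap_scop} with a stability-based comparison between the ``real'' selection count and a ``swapped'' reference count, closed off by a concentration step, in order to convert the per-time selection-conditional coverage inequality into the ratio-form bound \eqref{eq:swap_FCR_bound}.

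First, I would decompose $\FCR(T) = \sum_{t=0}^T \E[S_t\, \mathds{1}\{Y_t \notin \gI_t^{\rm CAP}(X_t;\alpha)\}/(1\vee \sum_{j=0}^T S_j)]$ and invoke, for each $t$, the symmetric properties \eqref{eq:indicator_prod_symmetry}--\eqref{eq:cal_set_symmetry} verified in Appendix \ref{proof:thm:mFCR_swap_scop} for the swap rule \eqref{eq:Ada_rule_swap}: conditionally on the joint event $\{\Pi_{t,s}^{\rm Ada}(X_s)\Pi_t(X_t)=1\}$ and the remaining data, $(R_s,R_t)$ is exchangeable for every $s\in\gH_t$. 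A standard rank argument over $\widehat{\gC}_t$ then delivers the \emph{numerator} bound $\sP(Y_t\notin \gI_t^{\rm CAP}, S_t=1) \leq \alpha\,\sP(S_t=1)$. The subtlety is that this bound compares coverage against the \emph{swapped} selection events, whereas $\FCR(T)$ normalizes by the observed count $\sum_j S_j$ generated by the unswapped threshold; the gap between $\gA_t(\{V_i\}_{i\in\gH_t})$ and $\gA_t(\{V_i\}_{i\neq s}, V_t)$ must therefore be carried through.

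Next I would use Assumption \ref{assum:swap_sensitivity} together with the density bound $\rho$ to quantify this gap. The stability assumption says swapping a single $V_s$ with an independent copy shifts $\gA_t$ by at most $\sigma_t$ in expectation, and a shift of size $\sigma_t$ can flip the selection label of any $V_i$ only when $V_i$ lies within $\sigma_t$ of the threshold -- an event of probability at most $\rho \sigma_t$ by the density bound. Cumulating across time produces the $2\sum_{j=0}^{T-1}\sigma_j$ contribution to $\epsilon(T)$. Because $\epsilon(T)$ must sit inside the nonlinear ratio $1/(1\vee \sum_j S_j)$, I would lift this in-expectation flip count to a high-probability statement by applying a Freedman/Bernstein-type tail bound to the adapted sequence of flip indicators, yielding the $3(\sqrt{e\rho}+1)\log(T+n)$ term and leaving a residual failure probability of order $(T+n)^{-1}$ that is ultimately responsible for the trailing $9/(T+n)$ in \eqref{eq:swap_FCR_bound}.

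For the assembly, I would restrict to the good event $\{|\widetilde{N}_T - \sum_{j=0}^T S_j| \leq \epsilon(T)\}$, where $\widetilde{N}_T$ denotes the swapped reference count; the exchangeable numerator bound then becomes $\alpha\widetilde{N}_T \leq \alpha(\sum_{j=0}^T S_j + \epsilon(T))$, and dividing by $(\sum_j S_j)\vee 1$ yields the multiplicative inflation $\alpha(1+\epsilon(T)/((\sum_j S_j - \epsilon(T))\vee 1))$, with the bad-event contribution absorbed into $9\alpha/(T+n)$. The main obstacle, I expect, is the second step: $\sigma_t$ controls only the mean shift of $\gA_t$, whereas FCR is sensitive to the cumulative count of actual indicator flips across the full horizon, and preserving the exponent $\sqrt{\rho}$ (rather than $\rho$) in \eqref{eq:swap_FCR_bound} probably requires a Cauchy--Schwarz step on the flip probability combined with a delicate martingale argument that does not incur an additional $\log(T+n)$ factor.
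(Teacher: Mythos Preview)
Your high-level ingredients (pairwise exchangeability, threshold stability, martingale concentration) match the paper's, but the way you propose to stitch them together has a genuine gap at the decoupling step.

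Specifically, your first move is to invoke SCC and obtain the ``numerator bound'' $\sP(Y_t\notin\gI_t^{\rm CAP},\,S_t=1)\le\alpha\,\sP(S_t=1)$, and then to ``carry through'' the swap gap into the denominator afterwards. This does not work: $\FCR(T)$ is $\E[\sum_t S_t\mathds{1}\{Y_t\notin\gI_t\}/(1\vee\sum_j S_j)]$, and the random denominator $\sum_j S_j$ is correlated with the miscoverage event through all historical scores. An inequality on $\E[S_t\mathds{1}\{Y_t\notin\gI_t\}]$ cannot simply be divided by a random quantity. Your assembly paragraph then introduces a single ``swapped reference count $\widetilde N_T$'', but there is no such global object: the swap that restores exchangeability is different for every calibration index $s\in\gH_t$, so each pair $(s,t)$ produces its own virtual count $\rmS_t^{(s\gets t)}(T)=\sum_{j\ne t} S_j^{(s\gets t)}$.

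The paper's resolution is a per-pair exact identity (its Lemma~\ref{lemma:zero_gap_swap}): for every $(s,t)$,
\[
\E\!\left[\frac{S_tC_{t,s}}{\rmS_t(T)+1}\,\frac{\mathds{1}\{R_t>Q_\alpha\}}{|\widehat\gC_t|+1}\right]
=\E\!\left[\frac{S_tC_{t,s}}{\rmS_t^{(s\gets t)}(T)+1}\,\frac{\mathds{1}\{R_s>Q_\alpha\}}{|\widehat\gC_t|+1}\right],
\]
proved by conditioning on the unordered pair $[Z_s,Z_t]$ and the remaining data, so that swapping $Z_s\leftrightarrow Z_t$ simultaneously swaps the residual indicator \emph{and} the denominator to its virtual counterpart. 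Summing this over $s\in\widehat\gC_t$ lets the quantile inequality $\frac{1}{|\widehat\gC_t|+1}\sum_s C_{t,s}\mathds{1}\{R_s>Q_\alpha\}\le\alpha$ absorb the $R_s$-terms, leaving behind only the discrepancy $\max_{s}\big|\sum_{j=s\vee 0}^{t-1}(S_j-S_j^{(s\gets t)})\big|$ between real and virtual counts. That discrepancy is then controlled pairwise by the martingale/Bernstein step you describe, with a union bound over $(s,t)$ producing the $\log(T+n)$ and $9/(T+n)$ terms. So the missing idea is not the concentration itself but the lemma that trades the observed denominator for a virtual one \emph{inside the expectation}, before any rank or quantile argument is applied.
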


To deal with the complicated dependence between selection and calibration, \citet{bao2023selective} imposed a condition on the joint distribution for the pair of the residual and selection score $(R_i, V_i)$. Due to the swapping design of $\widehat{\gC}_t$, this assumption is no longer required to obtain FCR bound. The distributional assumption on $V_i$ in Theorem \ref{thm:FCR_swap} is thus quite mild.

\begin{remark}
    To analyze FCR, we need to decouple the dependence between the numerator and the denominator of FCP. The conventional leave-one-out analysis in online error rate control does not work for the selection function $\gA_t$. In the proof of Theorem \ref{thm:FCR_swap}, we address this difficulty by using the exchangeability of data and symmetricity of $\gA_t$. We construct a sequence of virtual decisions $\{S_j^{(s\gets t)}\}_{j=s}^{t-1}$ by replacing $V_s$ with $V_t$ in the real decisions $\{S_j\}_{j=s}^{t-1}$. Since the function $\gA_j$ is symmetric, we can guarantee that $S_j^{(s\gets t)}$ and $S_j$ have the same distribution. The additional error $\epsilon(T)$ in \eqref{eq:swap_FCR_bound} comes from the difference term $\sum_{j=s}^{t-1} S_j - S_j^{(s\gets t)}$, which can be bounded via empirical Bernstein's inequality.
\end{remark}

Next, we will show that the error $\epsilon(T)$ can be upper bounded by a logarithmic factor with high probability when $\gA_t$ returns the historical mean or quantile.

\begin{proposition}\label{pro:mean_selection}
    Suppose $\gA_t$ returns the sample mean of history scores, i.e., $\sum_{i\in \gH_t} V_i/|\gH_t|$. If $\E[|V_i|] \leq \sigma$ for some $\sigma > 0$, then we have $\epsilon(T) \leq 4(\sqrt{e \rho} + \sigma +1)\log(T+n)$.
\end{proposition}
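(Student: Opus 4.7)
The plan is to verify Assumption~\ref{assum:swap_sensitivity} for the sample-mean selection function with an explicit rate $\sigma_t=O(1/(t+n))$, then substitute this into the definition of $\epsilon(T)$ from Theorem~\ref{thm:FCR_swap} and collect everything into a single $\log(T+n)$ factor.

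First I would write down the leave-one-out finite-difference identity. With $|\gH_t|=t+n$, an elementary calculation gives
\begin{equation*}
\gA_t\LRs{\{V_i\}_{i\in \gH_t}}-\gA_t\LRs{\{V_i\}_{i\in \gH_t,\,i\neq s},V}=\frac{V_s-V}{t+n}.
\end{equation*}
Since $V_s$ and the i.i.d.\ copy $V$ are independent of $\{V_i\}_{i\in \gH_t,\,i\neq s}$, the conditional expectation of the absolute value collapses to $\E|V_s-V|/(t+n)$; the triangle inequality together with $\E|V_i|\leq\sigma$ yields $\E|V_s-V|\leq 2\sigma$, and hence Assumption~\ref{assum:swap_sensitivity} holds with $\sigma_t\leq 2\sigma/(t+n)$, uniformly in $s$.

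Substituting into $\epsilon(T)=2\sum_{j=0}^{T-1}\sigma_j+3(\sqrt{e\rho}+1)\log(T+n)+1/2$, the stability term contributes $2\sum_{j=0}^{T-1}\sigma_j\leq 4\sigma\sum_{j=0}^{T-1}(j+n)^{-1}\leq 4\sigma\log(T+n)$, after using the standard integral comparison $\sum_{j=0}^{T-1}(j+n)^{-1}\leq 1+\log(T+n)$ and absorbing the additive constant into the logarithm. Combining this estimate with the $3(\sqrt{e\rho}+1)\log(T+n)$ contribution and the residual $1/2$ (again absorbed into the log) produces
\begin{equation*}
\epsilon(T)\leq 4\sigma\log(T+n)+4(\sqrt{e\rho}+1)\log(T+n)=4(\sqrt{e\rho}+\sigma+1)\log(T+n),
\end{equation*}
which is the claimed bound.

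The argument presents no substantive obstacle: once the $1/|\gH_t|$ sensitivity of the sample mean is identified, the remainder is a mechanical constant-tracking exercise. The only point that requires a little care is absorbing the $O(1)$ residuals---from the harmonic partial sum and from the explicit $1/2$ in the definition of $\epsilon(T)$---into the single multiplicative factor $4$ in front of $(\sqrt{e\rho}+\sigma+1)\log(T+n)$; this is possible because each of these residuals is dominated by $\log(T+n)$ for $T+n$ larger than an absolute constant, which is also implicit in the statement.
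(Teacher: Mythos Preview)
Your proposal is correct and follows essentially the same approach as the paper: compute the leave-one-out sensitivity of the sample mean to obtain $\sigma_t=2\sigma/(t+n)$, bound the harmonic partial sum by $\log(T+n)$, and absorb the remaining constants into the factor $4$. The paper's proof is in fact slightly terser than yours---it silently drops the $+1/2$ and uses the integral comparison $\sum_{j=0}^{T-1}(n+j)^{-1}\le\log(T+n)$ directly---so your explicit remarks about absorbing $O(1)$ residuals are, if anything, a bit more careful than the original.
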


\begin{proposition}\label{pro:quantile_selection}
    Suppose $\gA_t$ returns the $\vartheta$-th sample quantile of history scores $\{V_i\}_{i\in \gH_t}$ for $\vartheta \in (0,1]$. If $\{V_i\}_{i = -n}^T$ are continuous and $n\geq 9$, then with probability at least $1 - (T+n)^{-2}$, we have $\epsilon(T) \leq 36\log^2(T+n)$.
\end{proposition}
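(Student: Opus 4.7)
The plan is to show that on an event of probability at least $1-(T+n)^{-2}$ one can take $\sigma_j \lesssim \log(T+n)/(j+n)$ simultaneously in $j$; then summing, $\sum_{j=0}^{T-1}\sigma_j = O(\log^2(T+n))$, which combined with the deterministic $O(\log(T+n))$ terms in the definition of $\epsilon(T)$ yields the advertised bound.

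The first step is a purely deterministic reduction of the swap sensitivity of a sample quantile to a gap of adjacent order statistics. Fix $j$ and $s\in\gH_j$, condition on $\{V_i\}_{i\in\gH_j,\,i\neq s}$ with sorted values $v_{(1)}<\cdots<v_{(m_j-1)}$, where $m_j = j+n$, and set $k=\lceil\vartheta m_j\rceil$. A direct case analysis on whether the swapped value lies below $v_{(k-1)}$, between $v_{(k-1)}$ and $v_{(k)}$, or above $v_{(k)}$ shows that the $\vartheta$-sample-quantile of the full size-$m_j$ set always lies in $[v_{(k-1)}, v_{(k)}]$, so
\[
\bigl|\gA_j(\{V_i\}_{i\in\gH_j}) - \gA_j(\{V_i\}_{i\in\gH_j,\,i\neq s},V)\bigr| \;\le\; v_{(k)} - v_{(k-1)}.
\]

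The second step uses concentration of order statistics on the uniform scale. Since $V_i$ is continuous, the probability integral transform gives $U_i := F(V_i)\sim U[0,1]$, and the quantity that actually enters the empirical Bernstein step behind Theorem~\ref{thm:FCR_swap} is the probability mass of the swept interval, $F(v_{(k)}) - F(v_{(k-1)}) = U_{(k)}^{(-s)} - U_{(k-1)}^{(-s)} \sim \mathrm{Beta}(1, m_j-1)$. Standard exponential Beta tails give, for any exponent $c'$ and a suitably large constant $c$,
\[
\Prob\!\left(U_{(k)}^{(-s)} - U_{(k-1)}^{(-s)} > \frac{c\log(T+n)}{m_j}\right) \;\le\; (T+n)^{-c'}.
\]
Union-bounding over $j\le T-1$, over the $O(m_j)$ choices of $s\in\gH_j$, and over the handful of neighboring gap positions around $k$ (a total of at most $O((T+n)^2)$ events), one selects $c$ large enough so that the failure probability is at most $(T+n)^{-2}$. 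On the resulting event $\gE$, the effective sensitivity obeys $\sigma_j \le C\log(T+n)/m_j$ simultaneously for every $j$; the hypothesis $n\ge 9$ is used only to absorb small-sample corrections of the Beta tail into the constant.

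On $\gE$, $\sum_{j=0}^{T-1}\sigma_j \le C\log(T+n)\sum_{j=0}^{T-1}(j+n)^{-1}\le C\log^2(T+n)$, and adding the deterministic $3(\sqrt{e\rho}+1)\log(T+n) + 1/2$ contribution to $\epsilon(T)$ (which is dominated by $\log^2(T+n)$) and tracking constants yields $\epsilon(T)\le 36\log^2(T+n)$. The main obstacle is that the raw $V$-space gap $V_{(k)} - V_{(k-1)}$ cannot be bounded from above under the bare hypothesis $f\le\rho$, because the density may be arbitrarily small in some region; the resolution is to pass to the probability-integral-transformed scale, where order-statistic gaps are $\mathrm{Beta}(1, m_j-1)$-distributed with sharp exponential tails concentrated at $1/m_j$ independently of the shape of $f$, which is precisely the form of sensitivity needed by the Bernstein step of Theorem~\ref{thm:FCR_swap}.
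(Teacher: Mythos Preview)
Your proposal is correct and matches the paper's approach closely. Both arguments first apply the probability integral transform to reduce to i.i.d.\ uniforms (the paper notes that the quantile-threshold indicator $\Indicator{V_j\le \gA_j(\{V_i\})}$ is invariant under monotone maps and then assumes $V_i\sim U[0,1]$ without loss of generality), then bound the swap sensitivity of the sample quantile by adjacent order-statistic gaps, control those gaps with high probability, and sum $\sum_{j}(n+j)^{-1}\le \log(T+n)$ to get the $\log^2$ term. The only technical differences are that the paper bounds by the \emph{maximum of two} adjacent full-set gaps (via a swap-one-out lemma) before passing to the leave-$s$-out set for $\gF_{j-1}^{(s)}$-measurability, and then invokes a uniform max-spacing concentration lemma that controls all spacings at once for each $j$; you instead bound directly by the single leave-$s$-out gap $v_{(k)}-v_{(k-1)}$ and use the explicit $\mathrm{Beta}(1,m_j-1)$ tail together with a union over the $O((T+n)^2)$ pairs $(j,s)$. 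Your route is slightly more elementary and the single-gap bound a touch sharper, at the cost of a larger union bound; both comfortably land inside the stated constant $36$.
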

Plugging the upper bounds in Propositions \ref{pro:mean_selection} and \ref{pro:quantile_selection} into \eqref{eq:swap_FCR_bound}, we see that Algorithm \ref{alg:main} is asymptotically valid for FCR control if $\log(T+n)/(\sum_{j=0}^T S_j) = o_p(1)$ for these two cases.



\section{CAP under distribution shift}\label{sec:dist_shift}

In some online settings, the exchangeable (or i.i.d.) assumption on the data generation process does not hold anymore, in which the distribution of $(X_t,Y_t)$ may vary smoothly over time. Without exchangeability, the marginal coverage cannot even be guaranteed. \citet{gibbs2021adaptive} developed an algorithm named adaptive conformal inference (ACI), which updates the miscoverage level according to the historical feedback on under/over coverage. For a marginal target level $\alpha$, the ACI updates the current miscoverage level by
\begin{align}\label{eq:ACI_rule}
    \alpha_{t} = \alpha_{t-1} + \gamma \LRs{\alpha - \indicator{Y_{t-1} \not\in \gI_{t-1}(X_{t-1};\alpha_{t-1})}},
\end{align}
where $\gamma > 0$ is the step size parameter. \citet{gibbs2022conformal} further showed that the ACI updating rule is equivalent to a gradient descent step on the pinball loss $\ell(\theta;\beta_t) = \alpha (\beta_t - \theta) - \min\{0, \beta_t - \theta\}$, where $\beta_t = \sup\{\beta\in [0,1]: Y_t \in \gI_t(X_t;\beta)\}$. That is, the miscoverage level in \eqref{eq:ACI_rule} can be written as
\begin{align}\label{eq:ACI_gd}
    \alpha_t = \alpha_{t-1} - \gamma \nabla\ell(\alpha_{t-1};\beta_{t-1}),
\end{align}
where $\nabla\ell(\alpha_{t-1};\beta_{t-1})$ is the subgradient of pinball loss.
By re-framing the ACI into an online convex optimization problem over the losses $\{\ell(\cdot;\beta_t)\}_{t\geq 0}$, \citet{gibbs2022conformal} proposed a dynamically-tuned adaptive conformal inference (DtACI) algorithm by employing an exponential reweighting scheme \citep{vovk1990aggregating,wintenberger2017optimal,gradu2023adaptive}, which can dynamically estimate the optimal step size $\gamma$.

\begin{algorithm}[htb]
	\renewcommand{\algorithmicrequire}{\textbf{Input:}}
	\renewcommand{\algorithmicensure}{\textbf{Output:}}
	\caption{Selective DtACI with CAP}
	\label{alg:cond_DtACI}
	\begin{algorithmic}[1]
	\REQUIRE Set of candidate step-sizes $\{\gamma_i\}_{i=1}^k$, starting points $\{\alpha_0^i\}_{i=1}^k$, tuning parameter sequence $\{\phi_t,\eta_t\}_{t=0}^T$.
        \STATE \textbf{Initialize:} $\tau \gets \min\{t: S_t = 1\}$, $w_{\tau}^i \gets 1$, $p_{\tau}^i \gets 1/k$, $\alpha_{\tau} \gets \alpha_{0}^i$ with probability $p_{\tau}^i$;
        \STATE Call Algorithm \ref{alg:main} and return $\gI_{\tau}^{\rm{CAP}}(X_{\tau};\alpha_{\tau})$;
	\FOR{$t=\tau+1,\ldots,T$}
        \IF{$S_t = 1$}
        \STATE $\beta_{\tau} \gets \sup\{\beta \in [0,1]: Y_{\tau} \in \gI_{\tau}^{\rm{CAP}}(X_{\tau}; \beta)\}$;
        \FOR{$i=1,\ldots,k$}
        \STATE Call Algorithm \ref{alg:main} and return $\gI_{\tau}^{\rm{CAP}}(X_{\tau};\alpha_{\tau}^i)$;
        \STATE $\operatorname{err}_{\tau}^i\gets \Indicator{Y_{\tau} \not\in \gI_{\tau}^{\rm{CAP}}(X_{\tau}; \alpha^i_{\tau})}$;
        \STATE  $\alpha_{t}^i \gets \alpha_{\tau}^i + \gamma_i (\alpha - \operatorname{err}_{\tau}^i)$;
        \STATE $\bar{w}_{\tau}^i \gets w_{\tau}^i\exp\LRl{-\eta_{\tau} \ell(\beta_{\tau},\alpha_{\tau}^i)}$;
        \ENDFOR
        \STATE $w_{t}^i \gets (1-\phi_{\tau})\bar{w}_{\tau}^i + \phi_{\tau} \sum_{j=1}^k \bar{w}_{\tau}^j /k$ for $1\leq i \leq k$;
        
        \STATE Define $p_{t}^i = w_{t}^i/\sum_{j=1}^k w_{t}^j$ for $1\leq i \leq k$;
        \STATE Assign $\alpha_{t} = \alpha_{t}^i$ with probability $p_{t}^i$;
        \STATE Call Algorithm \ref{alg:main} and return $\gI_t^{\rm{CAP}}(X_t;\alpha_t)$;
        \STATE Set $\tau \gets t$;
        \ENDIF
        \ENDFOR
        \ENSURE Selected PIs: $\{\gI_t^{\rm{CAP}}(X_t;\alpha_t): S_t = 1, 0\leq t \leq T\}$.
	\end{algorithmic}
\end{algorithm}

The original motivation of ACI and DtACI is to achieve approximate marginal coverage by reactively correcting all past mistakes. For the selective inference problem, we aim to control the conditional miscoverage probability through historical feedback. In this vein, we may replace the fixed confidence level $\alpha$ in Algorithm \ref{alg:main} with an adapted value $\alpha_t$ by conditionally correcting past mistakes whenever the selection happens. If $S_t = 1$, we firstly find the most recent selection time $\tau=\max\{0\leq s\leq t-1: S_s=1\}$. 
Define a new random variable $\beta_{\tau}^{\rm{CAP}} = \sup\{\beta \in [0,1]: Y_{\tau} \in \gI_{\tau}^{\rm{CAP}}(X_t;\beta)\}$. Parallel with \eqref{eq:ACI_gd}, we update the current confidence level through one step of gradient descent on $\ell(\alpha_{\tau};\beta_{\tau}^{\rm{CAP}})$, i.e.,
\begin{align*}
    \alpha_t = \alpha_{\tau} - \gamma \nabla\ell(\alpha_{\tau};\beta_{\tau}^{\rm{CAP}}).
\end{align*}
Deploying the exponential reweighting scheme, we can also get a selective DtACI algorithm and summarize it in Algorithm \ref{alg:cond_DtACI}. To ensure that Algorithm \ref{alg:cond_DtACI} can be started, we call Algorithm \ref{alg:main} whether $S_0 = 1$ or not in Line 2.
By slightly modifying Theorem 3.2 in \citet{gibbs2022conformal}, we can obtain the following control result on FCR.

\begin{theorem}\label{thm:cond_DtACI}
    Let $\gamma_{\min} = \min_{i}\gamma_i$, $\gamma_{\max} = \max_i \gamma_i$ and $\varrho_t=\frac{(1+2\gamma_{\max})^2}{\gamma_{\min}} \eta_t e^{\eta_t (1+2\gamma_{\max})} + \frac{2(1+\gamma_{\max})}{\gamma_{\min}} \phi_t$. Suppose $\sum_{j=0}^T S_j > 0$ almost surely. Under arbitrary distribution shift on the data $\{(X_i,Y_i)\}_{i= -n}^T$, Algorithm \ref{alg:cond_DtACI} satisfies that
    \begin{align}
        \LRabs{\FCR(T) - \alpha} &\leq \frac{1+2\gamma_{\max}}{ \gamma_{\min}}\E\left[\frac{1}{\sum_{j=0}^T S_j}\right] + \E\left[{\frac{\sum_{t=0}^T S_t \varrho_t}{\sum_{j=0}^T S_j}}\right],\nonumber
    \end{align}
    where the expectation is taken over the randomness from $\{(X_i,Y_i)\}_{i=-n}^T$ and Algorithm \ref{alg:cond_DtACI}.
\end{theorem}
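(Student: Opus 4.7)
The plan is to adapt Theorem 3.2 of \citet{gibbs2022conformal} to the subsequence of selection epochs. Let $M = \sum_{j=0}^T S_j$ (positive almost surely by hypothesis) and write the ordered selection times within $\{0,\ldots,T\}$ as $\tau_1 < \cdots < \tau_M$. Because Algorithm \ref{alg:cond_DtACI} updates its internal state ($w^i$, $p^i$, $\alpha^i$, $\alpha$) only at times $t$ with $S_t = 1$, the sequence $\{(\alpha_{\tau_k}, \{\alpha_{\tau_k}^i, p_{\tau_k}^i\}_i)\}_{k=1}^M$ evolves exactly like DtACI driven by the pinball losses $\ell(\,\cdot\,;\beta_{\tau_k}^{\rm CAP})$, where $\beta_{\tau_k}^{\rm CAP} = \sup\{\beta\in[0,1]: Y_{\tau_k}\in\gI_{\tau_k}^{\rm CAP}(X_{\tau_k};\beta)\}$. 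Moreover
$$\FCR(T) = \E\left[\frac{1}{M}\sum_{k=1}^M \err_{\tau_k}\right],\qquad \err_{\tau_k} = \Indicator{Y_{\tau_k}\notin \gI_{\tau_k}^{\rm CAP}(X_{\tau_k};\alpha_{\tau_k})},$$
so it suffices to control the right-hand side and compare it to $\alpha$.

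First I would establish the uniform bound $\alpha_{\tau_k}^i\in[-\gamma_i, 1+\gamma_i]$ for every $k, i$ by induction on Line 9: once $\alpha^i\leq 0$ the CAP quantile is $+\infty$, so the interval covers $\sR$, forcing $\err=0$ and hence a nonnegative jump of size at most $\gamma_i$; a symmetric argument handles $\alpha^i\geq 1$. In particular the pinball losses $\ell(\alpha_{\tau_k}^i;\beta_{\tau_k}^{\rm CAP})$ are uniformly bounded by $1+2\gamma_{\max}$. Telescoping the update $\alpha_{t}^i = \alpha_{\tau}^i + \gamma_i(\alpha - \err_\tau^i)$ across the $M$ selections gives, for every fixed arm $i$,
$$\left|\alpha - \frac{1}{M}\sum_{k=1}^M \err_{\tau_k}^i\right| \leq \frac{1+2\gamma_{\max}}{\gamma_{\min} M}.$$
Next, the Hedge/fixed-share meta-algorithm in Lines 10--14 is standard, and applying Lemma B.1 of \citet{gibbs2022conformal} to the bounded losses $\ell(\,\cdot\,;\beta_{\tau_k}^{\rm CAP})$ yields the pinball regret bound
$$\E\left[\sum_{k=1}^M \ell(\alpha_{\tau_k};\beta_{\tau_k}^{\rm CAP}) - \min_{i}\sum_{k=1}^M \ell(\alpha_{\tau_k}^i;\beta_{\tau_k}^{\rm CAP})\right] \leq \E\left[\sum_{k=1}^M \varrho_{\tau_k}\right],$$
with $\varrho_{\tau_k}$ matching the theorem's definition.

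The final step is to convert pinball regret into miscoverage regret. Using $\partial_\theta\ell(\theta;\beta)\ni \alpha - \Indicator{\theta\leq \beta}$ together with $\Indicator{\alpha_{\tau_k}\leq \beta_{\tau_k}^{\rm CAP}} = 1 - \err_{\tau_k}$, the same convexity/linearization argument used in the proof of Theorem 3.2 of \citet{gibbs2022conformal} translates the bound above into
$$\left|\E\left[\frac{1}{M}\sum_{k=1}^M \err_{\tau_k}\right] - \E\left[\frac{1}{M}\sum_{k=1}^M \err_{\tau_k}^{i^{\star}}\right]\right| \leq \E\left[\frac{1}{M}\sum_{k=1}^M \varrho_{\tau_k}\right]$$
for the arm $i^{\star}$ achieving the minimum. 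Combining with the telescoping bound via the triangle inequality, taking expectations, and rewriting $\sum_{k=1}^M \varrho_{\tau_k} = \sum_{t=0}^T S_t\varrho_t$ yields the claim. The main obstacle is precisely this last conversion: it must carefully account for the expert randomization (the played $\alpha_{\tau_k}$ is sampled from $\{p_{\tau_k}^i\}$) and the non-smoothness of the pinball loss. The crucial observation making the adaptation valid is that none of the Gibbs--Cand\`es arguments invoke exchangeability of the data or any marginal structure, so they transfer verbatim to the selected subsequence under arbitrary distribution shift.
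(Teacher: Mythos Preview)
Your outline diverges from the paper's argument at the crucial aggregation step, and the divergence is a genuine gap rather than an alternative route. The paper (following Gibbs--Cand\`es' Theorem 3.2 verbatim on the selected subsequence) never separates the analysis into ``per-arm telescoping'' plus ``pinball regret'' plus ``conversion''. Instead it introduces the potential $\widetilde{\alpha}_{\tau_m}=\sum_i p_{\tau_m}^i\alpha_{\tau_m}^i/\gamma_i$ and, using the update rule together with $\E_{\rm A}[\err_{\tau_m}]=\sum_i p_{\tau_m}^i\err_{\tau_m}^i$, obtains the exact one-step identity
\[
\E_{\rm A}[\err_{\tau_m}]-\alpha \;=\; \widetilde{\alpha}_{\tau_m}-\widetilde{\alpha}_{\tau_{m+1}}+\sum_{i=1}^k\frac{(p_{\tau_{m+1}}^i-p_{\tau_m}^i)\,\alpha_{\tau_{m+1}}^i}{\gamma_i}.
\]
Telescoping leaves only the boundary term (giving $(1+2\gamma_{\max})/\gamma_{\min}$) and the sum of probability-drift corrections; the latter is bounded by analyzing the exponential-weights and fixed-share updates directly, producing exactly the constants in $\varrho_t$.

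Your Step~3 is where the plan breaks down. A Hedge/fixed-share regret bound controls $\sum_k \ell(\alpha_{\tau_k};\beta_{\tau_k})-\min_i\sum_k \ell(\alpha_{\tau_k}^i;\beta_{\tau_k})$, i.e.\ cumulative \emph{pinball loss}, not $\big|\sum_k(\err_{\tau_k}-\err_{\tau_k}^{i^\star})\big|$, which involves $0$--$1$ indicators. The subgradient identity $\partial_\theta\ell(\theta;\beta)\ni \err-\alpha$ links the two quantities only through the \emph{update}, not through a pointwise inequality between losses and indicators; there is no convexity argument that turns a one-sided pinball regret bound into a two-sided miscoverage bound with the stated constants. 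In particular, the ``same convexity/linearization argument'' you attribute to the proof of Theorem~3.2 in \citet{gibbs2022conformal} is not there---their proof is precisely the potential-function computation above. If you retain your decomposition you would need a separate argument for Step~3 (and would most likely land on different constants, with $\log k$ factors typical of Hedge bounds); the cleaner fix is to drop Steps~2--3 entirely and run the potential-function identity directly.
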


In Theorem \ref{thm:cond_DtACI}, if $\lim_{t\to\infty}\eta_t = \lim_{t\to \infty}\phi_t = 0$ and $\lim_{T\to \infty}\sum_{j=0}^T S_j = \infty$, we can guarantee $\lim_{T\to \infty}\FCR(T) = \alpha$.
While \citet{gibbs2022conformal} advocated using constant or slowly changing values for $\eta_t$ to achieve approximate marginal coverage, it is more appropriate to use the decaying $\eta_t$ in our setting as our goal is to control FCR. Despite the finite-sample guarantee no longer holding for Algorithm \ref{alg:cond_DtACI}, Theorem \ref{thm:cond_DtACI} does not require any conditions on the prediction model $\widehat{\mu}$ or the online selection rule $\Pi_t$. It implies that Algorithm \ref{alg:cond_DtACI} is flexible in practical use. To be specific, we can update the learning model $\widehat{\mu}$ after observing newly labeled data to address the distribution shift. {Moreover, Algorithm 2, when modified by replacing Lines 7 and 15 with ordinary conformal prediction, also exhibits the long-term coverage property as established in Theorem \ref{thm:cond_DtACI}. However, in practice, CAP-DtACI demonstrates superior performance because the PIs constructed using the picked calibration set adapt more effectively to the selective scenario. Detailed discussions and illustrative experiments are provided in Appendix \ref{subsec:discuss_dtaci}.}



\section{Synthetic experiments}\label{sec:experiments}

The validity and efficiency of our proposed method will be examined via extensive numerical studies. We focus on using a full holdout set, and the results for the fixed holdout set are provided in Figure \ref{fig:fixed} of Appendix. To mitigate computational costs, we adopt a windowed scheme that utilizes only the most recent 200 data points as the holdout set. Importantly, the theoretical guarantee remains intact; see Appendix \ref{appen:moving_window} for more details. 
Unless stated otherwise, this windowed scheme is used for all the numerical experiments. 

The evaluation metrics in our experiments are empirical FCR and the average length of constructed PIs across $500$ replications. In each replication, we calculate the current FCP and the average length of all constructed intervals up to the current time $T$ and then derive the real-time FCR level and average length by averaging these values across replications. 

\subsection{Results for i.i.d. settings}
We first generate i.i.d. 10-dimensional features $X_i$ from uniform distribution ${\rm Unif}([-2,2]^{10})$ and explore three distinct models for the responses $Y_i=\mu(X_i)+\epsilon_i$ with different configurations of $\mu(\cdot)$ and distributions of $\epsilon_i$'s.

\begin{itemize}
    \item \textbf{Scenario A} (Linear model with heterogeneous noise): Let $\mu(X)=X^\top\beta$ where $\beta=(\mathbf{1}_5^{\top},-\mathbf{1}_5^{\top})^\top$ and $\mathbf{1}_5$ is a $5$-dimensional vector with all elements $1$. The noise is heterogeneous and follows the conditional distribution $\epsilon\mid X\sim N(0,\{1+|\mu(X)|\}^2)$. We employ ordinary least squares (OLS) to obtain $\widehat{\mu}(\cdot)$.
    \item \textbf{Scenario B} (Nonlinear model): Let $\mu(X)=X^{(1)}+2X^{(2)}+3(X^{(3)})^2$, where $X^{(k)}$ denotes the $k$-th element of vector $X$ and $\epsilon\sim N(0,1)$ is independent of $X$. The support vector machine (SVM) is applied to train $\widehat{\mu}(\cdot)$.
    \item \textbf{Scenario C} (Aggregation model): Let $\mu(X)=4(X^{(1)}+1)|X^{(3)}|\indicator{{X^{(2)}>-0.4}}+4(X^{(1)}-1)\indicator{X^{(2)}\leq-0.4}.$ The noise follows $\epsilon\sim N(0,1+|X^{(4)}|)$. We use random forest (RF) to obtain $\widehat{\mu}(\cdot)$. 
\end{itemize}
Under each scenario, we utilize an independent labeled set with a size of 200 to train the model $\widehat{\mu}(\cdot)$. We set the initial holdout data size as $n=50$ in the simulations. The results reported in Figure \ref{fig:n_change} show that CAP is not affected too much when the initial size is greater than 10.

To evaluate the performance of our proposed CAP, we conduct comprehensive comparisons with two benchmark methods. The first one is the Online Ordinary Conformal Prediction (OCP), which constructs the PI based on the whole holdout set and ignores the selection effects. The second one is the LORD-CI with default parameters as suggested in \citet{weinstein2020online}. In addition, we have also considered the e-LOND-CI method proposed by \citet{xu2023online}. However, our empirical studies show that it exhibits excessively conservative FCR and yields significantly wider interval lengths compared to other benchmarks. Therefore, we only included the results of this approach in Appendix \ref{appen:e-lord-ci}.

Several selection rules are considered. The first is selection with a fixed threshold. 
\begin{itemize}
\item[1)] \textbf{Fixed}: A selection rule $\Pi$ with a fixed threshold is posed on the first component of the feature, i.e., $S_t=\Pi(X_t)=\mathds{1}\{X^{(1)}_t>1\}$. Here, we can use $\Pi$ to pick calibration set $\{(X_i,Y_i)\}_{i\in \widehat{\gC}_t}$ where $\widehat{\gC}_t = \{s\in \gH_t: \Pi(X_s) = 1\}$.
\end{itemize}
The next two rules are decision-driven selection in Section \ref{sec:decision_selection}.Here, we consider the nonadaptive pick rule $\Pi_t$ to pick calibration points as \eqref{eq:naive_cal_set}.

\begin{itemize}
\item[2)] \textbf{Dec-driven}: At each time $t$, the selection rule is $S_t=\mathds{1}\{\widehat{\mu}(X_t)>\tau(\sum_{i=0}^{t-1} S_i)\}$ where
$\tau(s)=\tau_0-\min\{s/50,2\}$ and $\tau_0$ is fixed for each scenario. 

\item[3)] \textbf{Mul-testing}: Selection with the online multiple testing procedure such as SAFFRON \citep{ramdas2018saffron} with defaulted parameters. We consider the hypotheses as $H_{0t}:Y_t\leq \tau_0-1$ and set the target FDR level as $\beta=20\%$. Additional independent labeled data $\gD_{\rm Add}$ of size $500$ is generalized to construct $p$-values. The detailed procedure is shown in the Appendix \ref{appen:conformal_p}.

\end{itemize}
The following two selection rules are $S_t = \Indicator{\widehat{\mu}(X_t)>\mathcal{A}(\{\widehat{\mu}(X_i)\}_{i=t-200}^{t-1})}$, which are symmetric to the holdout set. Here, we adopt the adaptive pick rules defined in \eqref{eq:Ada_rule_swap} to pick calibration points.
\begin{itemize}
\item[4)] \textbf{Quantile}: $\mathcal{A}(\{\widehat{\mu}(X_i)\}_{i=t-200}^{t-1})$ is the $70\%$-quantile of the $\{\widehat{\mu}(X_i)\}_{i=t-200}^{t-1} $.

\item[5)] \textbf{Mean}: $\mathcal{A}(\{\widehat{\mu}(X_i)\}_{i=t-200}^{t-1})=\sum_{i=t-200}^{t-1} \widehat{\mu}(X_i)/200$.
\end{itemize}

\begin{figure}[htbp]
    \centering
    \includegraphics[width=\textwidth]{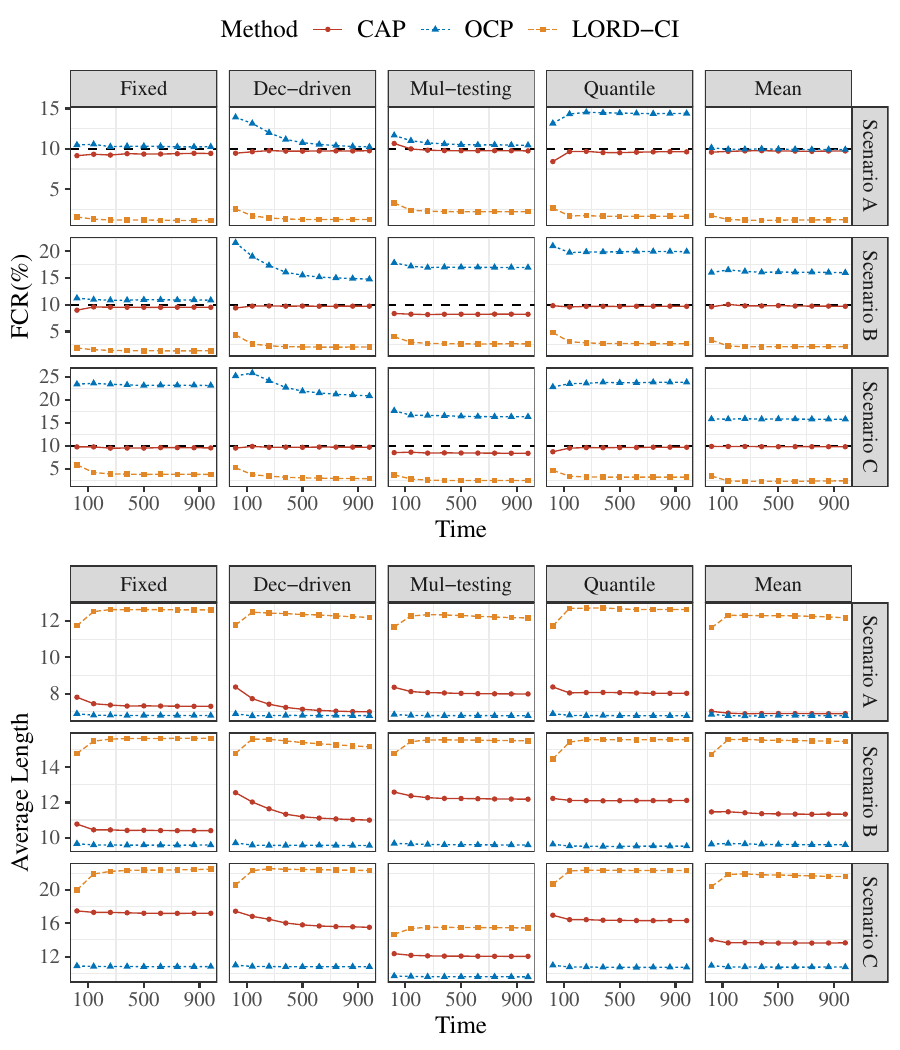}
    \caption{\small\it Real-time FCR and average length from time $20$ to $1,000$ for different scenarios and selection rules. The black dashed line denotes the target FCR level $10\%$.}
    \label{fig:dynamic}
\end{figure}

Figure \ref{fig:dynamic} displays the performance of all benchmarks for the full holdout set across different scenarios and selection rules. All plots indicate that the proposed CAP outperforms the other two methods uniformly in terms of real-time FCR control. This is consistent with the theoretical guarantees of CAP in FCR control. Across all settings, our method achieves stringent FCR control with narrowed PIs. As expected, the OCP yields the shortest PI lengths but much inflated FCR levels under all scenarios. This can be understood since OCP applies all data in the holdout set to build the marginal PIs without consideration of selection effects. The LORD-CI results in considerately conservative FCR levels and accordingly it offers much wider PIs than other methods. Those unsatisfactory PIs are not surprising since the LORD-CI updates the marginal level $\alpha_t$ which may become small as $t$ grows, as discussed in Proposition \ref{pro:LORD_CI}.




\subsection{Evaluation under distribution shift}\label{subsec:simu_shift}

We further consider four different settings to evaluate the performance of CAP-DtACI under distribution shifts. The first one is the i.i.d. setting which is the same as Scenario B. The second one is a slowly shifting setting where the training and initial labeled data follow the same distribution as that of Scenario B, while the online data gradually drifts over time according to $Y_t=(1-t/500)X_t^{(1)}+(2+\sin{\pi t/200})X_t^{(2)}+(3-t/500)(X_t^{(3)})^2+\varepsilon_t$, where $X_t\sim {\rm Unif}([-2,2])^{10}$ and $\varepsilon\sim N(0,1)$. The third is based on a change point model that generates the same data as in Scenario B when $t\leq 200$, but follows a different pattern when $t>200$, i.e., $Y_t=-2X_t^{(1)}-X_t^{(2)}+3(X_t^{(3)})^2+\varepsilon_t$ . The last shift setting is a time series model, where $Y_t=\{2\sin{\pi X_t^{(1)}X_t^{(2)}}+10(X_t^{(3)})^2+5X_t^{(4)}+2X_t^{(5)}+\xi_t\}/4$ and $\xi_t$ is generated from an ARMA$(0,1)$ process, specifically $\xi_{t+1}=0.99\xi_{t}+\varepsilon_{t+1}+0.99\varepsilon_{t}$. 

We conducted a comparative analysis of the proposed CAP-DtACI in Algorithm \ref{alg:cond_DtACI} with the CAP in Algorithm \ref{alg:main} and the original DtACI. We fix the target FCR level as $\alpha=10\%$. 
 To implement DtACI, we fix a candidate number of $k=6$, the starting points $\alpha_0^i=\alpha$ for $i=1,\cdots,6$ and determine other parameters following the suggestions in \citet{gibbs2022conformal}. Typically, we consider the candidate step-sizes $\{\gamma_i\}_{i=1}^6=\{ 0.008, 0.0160, 0.032, 0.064, 0.128,0.256\}$ and let $\phi_t=\phi_0=1/(2I)$, $\eta_t=\eta_0=\sqrt{\{3\log(kI)+6\}/\{I(1-\alpha)^2\alpha^3+I\alpha^2(1-\alpha)^2\}}$ with $I=200$. For the proposed CAP-DtACI, we employ the same parameters except considering decaying learning parameters 
 $\phi_t=\phi_0(\sum_{i=0}^tS_t)^{-0.501}$ and $\eta_t=\eta_0(\sum_{i=0}^tS_t)^{-0.501}$.

\begin{figure}[ht!]
    \centering
    \includegraphics[width=\textwidth]{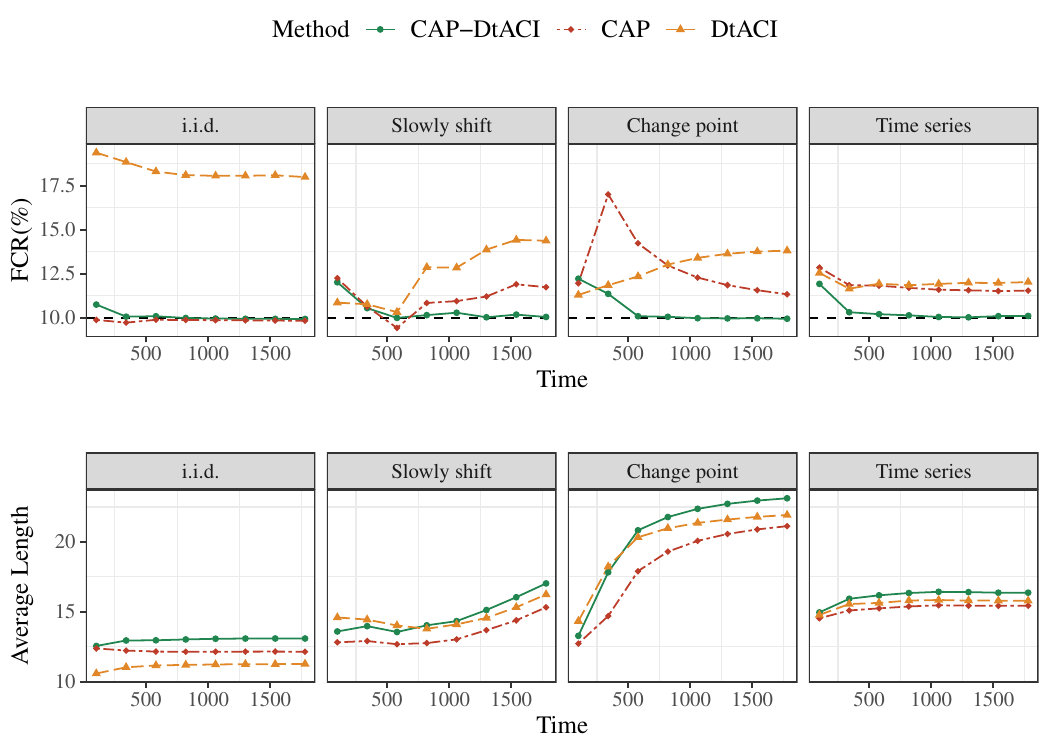}
    \caption{\small\it Comparison for CAP-DtACI, CAP and DtACI by real-time FCR and average length from time $100$ to $2,000$ for quantile selection rule under different data-generating settings. The black dashed line represents the target FCR level $10\%$.}
    \label{fig:comp_DtACI}
\end{figure}

For simplicity, we focus solely on the {\bf Quantile} selection rule as previously described and leave other model settings, including the initial data size, training data size, and prediction algorithm, consistent with those in Scenario B. The results are illustrated in Figure \ref{fig:comp_DtACI}. It is evident that the original DtACI consistently tends to yield an inflated FCR with respect to the target level across all four settings, as it does not account for selection effects. CAP method can only control the FCR under the i.i.d setting, but due to the violation of exchangeability, CAP does not work well in terms of FCR control when distribution shifts exist. In contrast, CAP-DtACI achieves reliable FCR control across various settings by updating an adapted value $\alpha_t$. 

\section{Real data applications}\label{sec:real_data}

\subsection{Drug discovery}
{In drug discovery, researchers examine the binding affinity of drug-target pairs on a case-by-case basis to pinpoint potential drugs with high affinity \citep{huang2022artificial}. With the aid of machine learning tools, we can forecast the affinity for each drug-target pair. If the predicted affinity is high, we can select this pair for further clinical trials. To further quantify the uncertainty by predictions, our method can be employed to construct PIs with a controlled error rate.}
The DAVIS dataset \citep{davis2011comprehensive} consists of $25,772$ drug-target pairs, each accompanied by the binding affinity, structural information of the drug compound, and the amino acid sequence of the target protein. Using the Python library DeepPurpose \citep{huang2020deeppurpose}, we encode the drugs and targets into numerical features and consider the log-scale affinities as response variables. We randomly sample $15,000$ observations from the dataset as the training set to fit a small neural network model with $3$ hidden layers and 5 epochs. Additionally, we set another $2,000$ observations as the online test set, and reserve $50$ data points as the initial labeled data. 

Our objective is to develop real-time prediction intervals for the affinities of selected drug-target pairs. We explore four distinct selection rules in this pursuit, including fixed selection rule $S_t=\Indicator{\widehat{\mu}(X_t)>9}$; decision-driven rule with  $S_t=\Indicator{\widehat{\mu}(X_t)>8+\min\{\sum_{j=0}^{t-1} S_j/400,1\}}$; online multiple testing rule using SAFFRON, which tests $H_{0t}: Y_t\leq 9$ with FDR level at $20\%$ and requires another 1,000 independent labeled samples to construct conformal $p$-values; quantile selection rule, which is $S_t=\Indicator{\widehat{\mu}(X_t)>\mathcal{A}(\{\widehat{\mu}(X_i)\}_{i=t-200}^{t-1})}$, where $\mathcal{A}(\{\widehat{\mu}(X_i)\}_{i=t-200}^{t-1})$ is the $70\%$-quantile of the $\{\widehat{\mu}(X_i)\}_{i=t-200}^{t-1} $. 

\begin{figure}[htb]
    \centering
    \includegraphics[width=\textwidth]{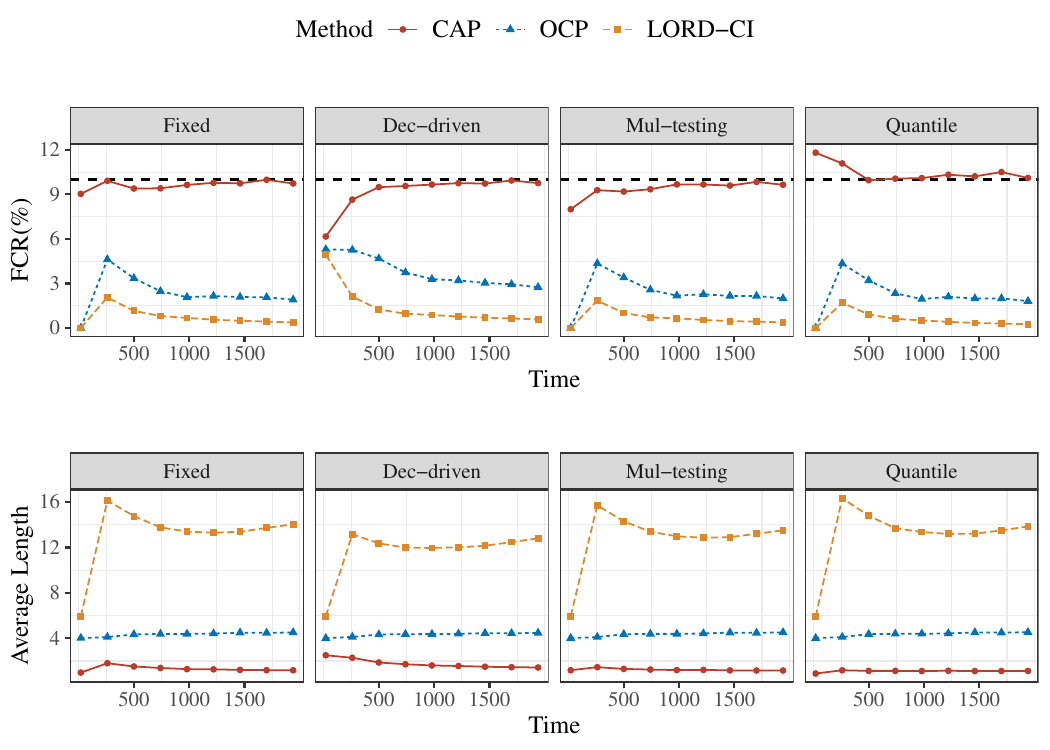}
    \caption{\small\it  Real-time FCR and average length from time $20$ to $2,000$ by $50$ repetitions for drug discovery. The black dashed line denotes the target FCR level $10\%$. }
    \label{fig:drug-FCR}
\end{figure}

Figure \ref{fig:drug-FCR} depicts the real-time FCR and average length of PIs based on the proposed CAP, OCP and LORD-CI across $50$ runs. The results illustrate that the FCR of CAP closely aligns with the nominal level of $10\%$, and CAP can obtain narrowed PIs over time, validating our theoretical findings. In contrast, both OCP and LORD-CI tend to yield conservative FCR values, consequently leading to unsatisfactory PI lengths. Additionally, given that the true log-scale affinities fall within the range of $(-5,10)$, excessively wide intervals would offer limited guidance for further decisions.  By leveraging CAP, researchers can make informed decisions and implement reliable strategies in the pursuit of discovering promising new drugs.

\subsection{Stock volatility}
Stock market volatility exerts a critical role in the global financial market and trading decisions. As an indicator, forecasting future volatility in real-time can provide valuable insights for investors to make informed decisions and account for the potential risk. It is also essential to quantify the uncertainty of predicted volatility. We consider applying our proposed methods to this problem, and the time dependence would have some impact on these methods. In this task, the goal is to use the historical stock prices to predict the volatility the next day. Furthermore,  one is concerned about those days with large volatility. Thereby, we would select those days with large predicted volatility and construct a prediction interval for them.

\begin{figure}[htb]
    \centering
    \includegraphics[width=\textwidth]{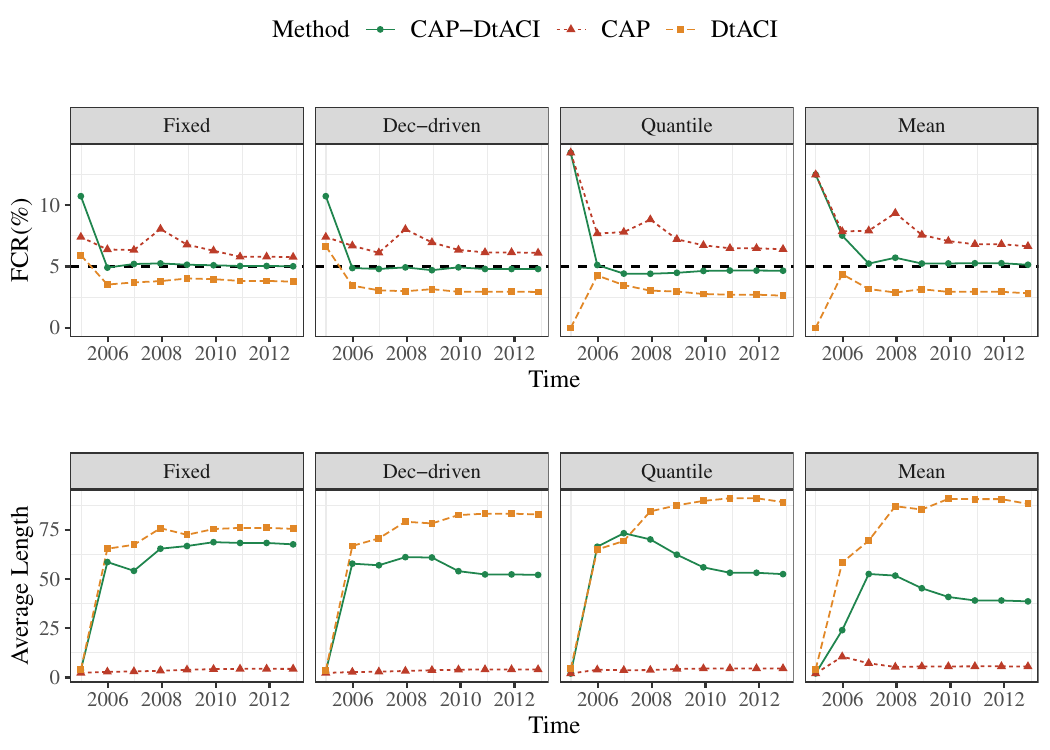}
    \caption{\small\it Real-time FCR and average length from year $2005$ to $2013$ by $20$ replications for four selection rules in stock volatility prediction task. The black dashed line is the target FCR level $5\%$. }
    \label{fig:Stock-FCR}
\end{figure}

We consider the daily price data for NVIDIA from year 1999 to 2021. Denote the price sequence as $\{P_t\}_{t\geq 0}$. We define the return as $X_t:=(P_t-P_{t-1})/P_{t-1}$ and the volatility as $Y_t=X_t^2$. At each time $t$, the predicted volatility $\widehat{Y}_t$ is predicted by a fitted GARCH(1,1) model \citep{bollerslev1986generalized} based on the most recent 1,250 days of
returns $\{X_i\}_{i\in \gH_t}$. And we use a normalized non-conformity score $R_t=|Y_t^2-\widehat{Y}_t^2|/\widehat{Y}^2_t$ instead of the absolute residual as \citet{gibbs2022conformal} suggested. The parameters for implementing CAP-DtACI and DtACI are the same as those in Section \ref{subsec:simu_shift}, except that the parameter $I=1,250$. We set FCR level as $\alpha=5\%$ and use a window size of 1,250.

Four practical selection rules are considered here: fixed selection rule with $S_t=\Indicator{\widehat{Y}_t> 8\times 10^{-4}}$; decision-driven selection rule with $S_t=\Indicator{\widehat{Y}_t>8\times 10^{-4}+\min(\sum_{j=0}^{t-1} S_j/50,4)\times 10^{-4}}$; quantile selection rule with $S_t=\Indicator{\widehat{Y}_t> \mathcal{A}(\{\widehat{Y}_i\}_{i=t-1250}^{t-1})}$ where $\mathcal{A}(\{\widehat{Y}_i\}_{i=t-1250}^{t-1})$ is the 70\%-quantile of $\{\widehat{Y}_i\}_{i=t-1250}^{t-1}$; mean selection rule with $S_t=\Indicator{\widehat{Y}_t>\sum_{i=t-1250}^{t-1}\widehat{Y}_i/1250}$.

Figure \ref{fig:Stock-FCR} shows the FCR and average lengths of CAP-DtACI, CAP and DtACI over 20 replications. The replications are used to ease the randomness generated from DtACI algorithm. As illustrated, CAP-DtACI performs well in delivering FCR close to the target level as time grows. In contrast, CAP has an inflated FCR due to a lack of consideration of distribution shifts and dependent structure of the time series. And the original DtACI delivers much wider PIs as it neglects the selection effects.

\section{Conclusion}\label{sec:conclusion}

This paper addresses the challenge of online selective inference within the framework of conformal prediction. To tackle the non-exchangeability issue introduced by data-driven online selection processes, we introduce CAP, a novel approach that adaptively picks calibration points from historical labeled data to produce reliable PIs for selected observations. Our theoretical analysis and numerical experiments demonstrate the effectiveness of our method in controlling SCC and FCR across various data environments and selection rules.

We point out several future directions. First, while our method targets two common selection rules, further exploration is needed to extend our framework to accommodate arbitrary selection rules. Second, we mainly assume a fixed predictive model for theoretical simplicity. It would be interesting to investigate the feasibility of online updating of machine learning models throughout the process for future study. Third, there may exist a more delicate variant of CAP under some special time series models to obtain tight FCR control.

\bibliography{ref-JMLR-final}

\newpage
\appendix
\allowdisplaybreaks
\numberwithin{equation}{section}
\numberwithin{theorem}{section}

\section{Preliminaries}
In the Appendix, we denote $Z_i = (X_i,Y_i)$ the covariate-label pair for $i\geq -n$. For any index set $\gC$, we write $Q_{\alpha}(\{R_i\}_{i\in \gC})$ as the $\lceil (1-\alpha)|\gC|\rceil$st smallest value in residuals $\{R_i\}_{i\in \gC}$. We also omit the confidence level $\alpha$ in $\gI_t^{\rm{CAP}}(X_t;\alpha)$ whenever the context is clear.

\subsection{Auxiliary lemmas and miscoverage indicator bounds}
The following two lemmas are usually used in the conformal inference literature \citep{vovk2005algorithmic,lei2018distribution,romano2019conformalized,barber2021predictive,barber2022conformal}.
\begin{lemma}\label{lemma:quantile_inflation}
Let $x_{(\lceil n(1-\alpha)\rceil)}$ is the $\lceil n(1-\alpha)\rceil$smallest value in $\{x_i \in \sR: i \in [n]\}$. Then for any $\alpha \in (0,1)$, it holds that
\begin{align*}
    \frac{1}{n}\sum_{i=1}^n \Indicator{x_i > x_{(\lceil n(1-\alpha)\rceil)}}\leq \alpha.
\end{align*}
If all values in $\{x_i: i \in [n]\}$ are distinct, it also holds that
\begin{align*}
    \frac{1}{n}\sum_{i=1}^n \Indicator{x_i > x_{(\lceil n(1-\alpha)\rceil)}}\geq \alpha - \frac{1}{n},
\end{align*}
\end{lemma}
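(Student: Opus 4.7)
The plan is to prove both inequalities by directly counting how many of the values $x_i$ exceed the order statistic $x_{(k)}$ where $k = \lceil n(1-\alpha)\rceil$, and then relating this count to $n\alpha$ via the basic bounds on the ceiling function.

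For the upper bound, I would argue that at most $n - k$ of the values $x_i$ can be strictly greater than $x_{(k)}$, since by definition of the $k$-th smallest value, there are at least $k$ indices $i$ satisfying $x_i \leq x_{(k)}$ (in the presence of ties the count may be larger, but never smaller). Hence
\[
\sum_{i=1}^n \Indicator{x_i > x_{(k)}} \leq n - k = n - \lceil n(1-\alpha)\rceil \leq n - n(1-\alpha) = n\alpha,
\]
using $\lceil y\rceil \geq y$. Dividing through by $n$ yields the first inequality.

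For the lower bound under the assumption that all $x_i$ are distinct, exactly $k$ of the values satisfy $x_i \leq x_{(k)}$ and exactly $n-k$ satisfy $x_i > x_{(k)}$. Applying the sharper bound $\lceil y\rceil < y+1$ gives
\[
\sum_{i=1}^n \Indicator{x_i > x_{(k)}} = n - \lceil n(1-\alpha)\rceil > n - n(1-\alpha) - 1 = n\alpha - 1,
\]
so dividing by $n$ yields the claimed lower bound $\alpha - 1/n$. The only subtle point is being careful with the ceiling inequalities and with the role of distinctness: without distinctness, ties at $x_{(k)}$ can reduce the count of strict excedances below $n-k$, which is why the lower bound requires distinctness whereas the upper bound does not. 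No auxiliary machinery beyond the definition of the order statistic and elementary properties of $\lceil\cdot\rceil$ is needed, so there is no real obstacle here; the main thing is to present the two directions cleanly with the correct inequality on the ceiling in each case.
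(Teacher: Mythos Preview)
Your proof is correct; both the upper and lower bounds follow exactly as you describe from the counting argument and the elementary bounds $y \leq \lceil y\rceil < y+1$. The paper does not actually prove this lemma---it is stated as a standard fact from the conformal inference literature \citep{vovk2005algorithmic,lei2018distribution,romano2019conformalized,barber2021predictive,barber2022conformal}---so your direct argument is entirely appropriate and there is nothing to compare against.
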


\begin{lemma}\label{lemma:order_add_one}
Given real numbers $x_1,...,x_n,x_{n+1}$, let $\{x_{(r)}^{[n]}: r\in [n]\}$ be order statistics of $\{x_i: i\in [n]\}$, and $\{x_{(r)}^{[n+1]}: r\in [n+1]\}$ be the order statistics of $\{x_i: i\in [n+1]\}$, then for any $r \in [n]$ we have: $\{x_{n+1} \leq x_{(r)}^{[n]}\} = \{x_{n+1} \leq x_{(r)}^{[n+1]}\}$.
\end{lemma}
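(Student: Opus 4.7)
The plan is to reduce both events $\{x_{n+1}\leq x_{(r)}^{[n]}\}$ and $\{x_{n+1}\leq x_{(r)}^{[n+1]}\}$ to the same elementary counting condition on how many of the first $n$ values lie strictly below $x_{n+1}$. To this end, I would introduce the two counts $p := |\{i \in [n] : x_i < x_{n+1}\}|$ and $q := |\{i \in [n] : x_i \leq x_{n+1}\}|$, noting $p\leq q\leq n$. These are deterministic functions of the given real numbers and will provide the common reference frame for both orderings.

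First I would show $\{x_{n+1}\leq x_{(r)}^{[n]}\}=\{r\geq p+1\}$. By the definition of order statistics on $\{x_1,\ldots,x_n\}$, the values $x_{(1)}^{[n]},\ldots,x_{(p)}^{[n]}$ are exactly the $p$ elements strictly less than $x_{n+1}$, while $x_{(p+1)}^{[n]},\ldots,x_{(n)}^{[n]}$ are $\geq x_{n+1}$. Hence $x_{(r)}^{[n]}\geq x_{n+1}$ if and only if $r\geq p+1$.

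Next I would repeat the same bookkeeping after inserting $x_{n+1}$. The insertion places $x_{n+1}$ somewhere between positions $p+1$ and $q+1$ (with possibly multiple valid positions in the tie region), but in any such placement the first $p$ entries of the sorted $(n+1)$-sample are unchanged and still strictly less than $x_{n+1}$, and positions $p+1,\ldots,n+1$ are all $\geq x_{n+1}$. Therefore $x_{(r)}^{[n+1]}\geq x_{n+1}$ again reduces to $r\geq p+1$. Combining the two equivalences, both events equal $\{r\geq p+1\}$ and are thus identical.

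There is no real obstacle here; the only subtlety is handling ties, which is why I would track both $p$ and $q$. If all $n+1$ values were distinct we would have $p=q$ and a single insertion position, making the argument immediate; introducing $q$ only serves to verify that the tie region does not shift the threshold index $p+1$ at which $x_{(r)}^{[n+1]}$ first reaches $x_{n+1}$.
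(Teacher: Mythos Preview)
Your proof is correct. The paper does not actually prove this lemma; it states it as one of two standard facts ``usually used in the conformal inference literature'' and simply cites references. Your counting argument via $p=|\{i\in[n]:x_i<x_{n+1}\}|$ is exactly the natural way to establish this well-known result, and your treatment of ties is careful and complete.
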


According to the definition of $\gI_t^{\rm{CAP}}(X_t)$ in Algorithm \ref{alg:main}, together with Lemma \ref{lemma:order_add_one}, we know
    \begin{align}
        \Indicator{Y_t \not\in \gI_t^{\rm{CAP}}(X_t)} = \Indicator{R_t > q_{\alpha}(\{R_i\}_{i\in \widehat{\gC}_t})} = \Indicator{R_t > Q_{\alpha}(\{R_i\}_{i\in \widehat{\gC}_t \cup \{t\}})}.\nonumber
    \end{align}
    In addition, Lemma \ref{lemma:quantile_inflation} guarantees
    \begin{align}
        \alpha - \frac{1}{|\widehat{\gC}_t|+1} \leq \frac{1}{|\widehat{\gC}_t|+1}\sum_{j \in \widehat{\gC}_t \cup \{t\}}\Indicator{R_j > Q_{\alpha}(\{R_i\}_{i\in \widehat{\gC}_t \cup \{t\}})} \leq \alpha.\nonumber
    \end{align}
    For convenience, we denote $Z_i = (X_i,Y_i), i\geq -n$ and for any index subset $\gC \subseteq \{-n,\ldots,t\}$, we let 
    \begin{align}
        \mathfrak{R}(Z_t,Z_s; \gC) = \mathbbm{1}\{R_t > Q_{\alpha}(\{R_i\}_{i\in \gC})\} - \mathbbm{1}\{R_s > Q_{\alpha}(\{R_i\}_{i\in \gC})\}.\nonumber
    \end{align}
    Combining the two relations above, we have
    \begin{align}\label{eq:miscover_upper}
       \Indicator{Y_t \not\in \gI_t^{\rm{CAP}}(X_t)}\leq \alpha + \frac{1}{|\widehat{\gC}_t|+1}\sum_{s \in \widehat{\gC}_t} \mathfrak{R}(Z_t,Z_s; \widehat{\gC}_t \cup \{t\}),
    \end{align}
    and
    \begin{align}\label{eq:miscover_lower}
        \Indicator{Y_t \not\in \gI_t^{\rm{CAP}}(X_t)}\geq  \alpha - \frac{1}{|\widehat{\gC}_t|+1} + \frac{1}{|\widehat{\gC}_t|+1}\sum_{s \in \widehat{\gC}_t}\mathfrak{R}(Z_t,Z_s; \widehat{\gC}_t \cup \{t\}).
    \end{align}
    Notice that the two bounds above both deterministically hold.

\subsection{Proof of Proposition \ref{pro:selection_exchangeable}}
\begin{proof}
    Notice that, conditioning on the data $\{Z_{\ell}\}_{\ell\neq s,t}$, the selection rules $\Pi_t(\cdot)$ and $\Pi_{t,s}^{\rm Ada}(\cdot)$ depends only on $X_s$ and $X_t$. Let $[Z_s,Z_t]$ be unordered set of $Z_s$ and $Z_t$. Denote $\widehat{\gC}_{t}^{(s)} = \{i\leq t-1,i\neq s: \Pi_{t,i}^{\rm Ada}(X_i) = 1\}$. Clearly, $\widehat{\gC}_{t}^{(s)} \cup \{s\} = \widehat{\gC}_t $ holds if $\Pi_{t,s}^{\rm Ada}(X_s) = 1$. By \eqref{eq:indicator_prod_symmetry}, we know $\Pi_{t,s}^{\rm Ada}(X_s)\Pi_t(X_t)$ is fixed given $[Z_s,Z_t]$ and $\{Z_{\ell}\}_{\ell\neq s,t}$. By \eqref{eq:cal_set_symmetry}, we also know $\widehat{\gC}_{t}^{(s)}$ is fixed given $[Z_s,Z_t]$ and $\{Z_{\ell}\}_{\ell\neq s,t}$ if $s \in \widehat{\gC}_t$.
    Then it follows that
    \begin{align}\label{eq:zero_gap_under_symmetry}
        &\E\LRm{\frac{1}{|\widehat{\gC}_t|+1} \Pi_{t,s}^{\rm Ada}(X_s)\Pi_t(X_t) \cdot\mathfrak{R}(Z_t,Z_s; \widehat{\gC}_t \cup \{t\})}\nonumber\\
        =&\E\LRm{\frac{1}{|\widehat{\gC}_{t}^{(s)}|+2} \Pi_{t,s}^{\rm Ada}(X_s)\Pi_t(X_t)\cdot \mathfrak{R}(Z_t,Z_s; \widehat{\gC}_{t}^{(s)} \cup \{s,t\})}\nonumber\\
        =& \E\LRm{\E\LRm{\frac{1}{|\widehat{\gC}_{t}^{(s)}|+2} \Pi_{t,s}^{\rm Ada}(X_s)\Pi_t(X_t)\cdot \mathfrak{R}(Z_t,Z_s; \widehat{\gC}_{t}^{(s)} \cup \{s,t\}) \mid [Z_s,Z_t], \{Z_{\ell}\}_{\ell \neq s,t}}}\nonumber\\
        =& \E\LRm{\frac{1}{|\widehat{\gC}_{t}^{(s)}|+2} \Pi_{t,s}^{\rm Ada}(X_s)\Pi_t(X_t)\cdot \E\LRm{\mathfrak{R}(Z_t,Z_s; \widehat{\gC}_{t}^{(s)} \cup \{s,t\}) \mid [Z_s,Z_t], \{Z_{\ell}\}_{\ell \neq s,t}}}\nonumber\\
        =& 0,
    \end{align}
    where the last equality follows from the exchangeability between $Z_s$ and $Z_t$, and $Q_{\alpha}(\{R_i\}_{i\in \widehat{\gC}_{t}^{(s)} \cup \{s,t\}})$ is symmetric to $Z_s$ and $Z_t$. Recalling that $S_t = \Pi_t(X_t)$, then we have
    \begin{align}
        \sP&\LRl{Y_t \not\in \gI_t^{\rm CAP}(X_t)\mid S_t = 1} = \frac{1}{\sP(S_t=1)} \E\LRm{S_t\mathbbm{1}\{Y_t \not\in \gI_t^{\rm CAP}\}}\nonumber\\
        &\Eqmark{i}{\leq} \alpha + \frac{1}{\sP(S_t=1)} \E\LRm{\frac{\Pi_t(X_t)}{|\widehat{\gC}_t|+1}\sum_{s \in \widehat{\gC}_t} \mathfrak{R}(Z_t,Z_s; \widehat{\gC}_t \cup \{t\})}\nonumber\\
        &\Eqmark{ii}{=} \alpha + \frac{1}{\sP(S_t=1)}\sum_{s=-n}^{t-1} \E\LRm{\frac{1}{|\widehat{\gC}_t|+1}\Pi_t(X_t)\Pi_{t,s}^{\rm Ada}(X_s) \mathfrak{R}(Z_t,Z_s; \widehat{\gC}_t \cup \{t\})}\nonumber\\
        &\Eqmark{iii}{=} 0,\nonumber
    \end{align}
    where $(i)$ follows from \eqref{eq:miscover_upper}; $(ii)$ holds due to the definition of $\widehat{\gC}_t$; and $(iii)$ holds due to \eqref{eq:zero_gap_under_symmetry}.
\end{proof}



\section{Proofs for decision-driven selection}
\subsection{Proof of Theorem \ref{thm:FCR_decision_select}} \label{proof:thm:FCR_decision_select}
\begin{lemma}\label{lemma:zero_gap_decision_selection}
         Under the conditions of Theorem \ref{thm:FCR_decision_select}, we have
        \begin{align*}
            \E\LRm{\Pi_{t}(X_t)\Pi_{t}(X_s) \mathfrak{R}(Z_t,Z_s; \widehat{\gC}_t \cup \{t\}) \mid \sigma\LRs{\{S_i\}_{i=0}^{t-1}, \{Z_i\}_{i=-n,i\neq s}^{-1}}} = 0.
        \end{align*}
    \end{lemma}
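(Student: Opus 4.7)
The plan is to exploit three facts: (i) under the fixed-holdout setup, $\Pi_t(\cdot)$ becomes a deterministic function once we condition on the past decisions; (ii) on the event $\Pi_t(X_s)\Pi_t(X_t)=1$ the index set $\widehat{\gC}_t \cup \{t\}$ decomposes into a deterministic ``core'' together with the pair $\{s,t\}$, so the quantile is a symmetric function of $(Z_s,Z_t)$; and (iii) $(Z_s,Z_t)$ are exchangeable conditional on the sigma-field in question. Combining these, the two indicator terms inside $\mathfrak{R}(Z_t,Z_s;\cdot)$ have equal conditional expectations, so their difference vanishes.

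Denote the conditioning sigma-field by $\gF = \sigma\LRs{\{S_i\}_{i=0}^{t-1},\{Z_i\}_{i=-n,i\neq s}^{-1}}$. First, I would observe that $\Pi_t(\cdot)$ is $\sigma(\{S_i\}_{i=0}^{t-1})$-measurable by Definition \ref{def:decision_driven}, hence a deterministic function of its input given $\gF$. Second, I would argue conditional exchangeability of $(Z_s,Z_t)$: by Assumption \ref{assum:independent_initial_holdout} the rules $\{\Pi_i(\cdot)\}_{0\leq i\leq t-1}$ are independent of the initial labeled data $\{Z_i\}_{i=-n}^{-1}$, and the decisions $\{S_i\}_{i=0}^{t-1}$ are produced using only $\{X_i\}_{i=0}^{t-1}$ (and earlier decisions). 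In particular, $\gF$ carries no information that distinguishes $Z_s$ from $Z_t$, so under the i.i.d. assumption on $\{Z_i\}_{i\geq -n}$ we have $(Z_s,Z_t)\mid \gF \overset{d}{=} (Z_t,Z_s)\mid \gF$.

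Third, I would expand the conditional expectation on the event of interest. Since $\Pi_t(X_s)\Pi_t(X_t)=1$ forces $s\in \widehat{\gC}_t$, we may write $\widehat{\gC}_t\cup\{t\} = \gC^{(s)} \cup \{s,t\}$ where $\gC^{(s)} := \{i\in \gH_0\setminus\{s\}: \Pi_t(X_i)=1\}$ is $\gF$-measurable. Therefore the residuals in $\gC^{(s)}$ and the quantile $Q_\alpha(\{R_i\}_{i\in \gC^{(s)}\cup\{s,t\}})$ are symmetric functions of the unordered pair $\{Z_s,Z_t\}$ given $\gF$. The factor $\Pi_t(X_s)\Pi_t(X_t)$ is also symmetric in $(X_s,X_t)$.

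Finally, applying conditional exchangeability termwise:
\begin{align*}
\E\bigl[\Pi_t(X_t)\Pi_t(X_s)\,\mathbbm{1}\{R_t>Q_\alpha\}\mid \gF\bigr]
= \E\bigl[\Pi_t(X_s)\Pi_t(X_t)\,\mathbbm{1}\{R_s>Q_\alpha\}\mid \gF\bigr],
\end{align*}
since swapping $(Z_s,Z_t)$ leaves $\Pi_t(X_s)\Pi_t(X_t)$ and $Q_\alpha$ unchanged while interchanging $\mathbbm{1}\{R_t>Q_\alpha\}$ and $\mathbbm{1}\{R_s>Q_\alpha\}$. Subtracting yields the claim. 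The main obstacle I anticipate is writing the exchangeability argument cleanly, because one must verify that neither the deterministic selection rule $\Pi_t$ (as a function, after conditioning) nor the ``core'' residuals in $\gC^{(s)}$ depend on $(Z_s,Z_t)$ in an asymmetric way; this is the reason for isolating $s$ from the holdout set in the conditioning and for invoking Assumption \ref{assum:independent_initial_holdout}.
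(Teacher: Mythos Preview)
Your proposal is correct and follows essentially the same approach as the paper's proof: both recognize that, conditional on $\gF$, the rule $\Pi_t(\cdot)$ is fixed, the set $\widehat{\gC}_t\cup\{t\}$ equals a $\gF$-measurable ``core'' $\gC^{(s)}$ together with $\{s,t\}$ on the relevant event, and the quantile and the factor $\Pi_t(X_s)\Pi_t(X_t)$ are symmetric in $(Z_s,Z_t)$ while $\mathfrak{R}$ is antisymmetric, so conditional exchangeability of $(Z_s,Z_t)$ kills the expectation. The paper formalizes the last step by additionally conditioning on the unordered-pair event $\gE_z=\{[Z_s,Z_t]=[z_1,z_2]\}$ and computing both terms explicitly as $\tfrac{1}{2}\mathbbm{1}\{\Pi_t(x_1)=1,\Pi_t(x_2)=1\}\bigl[\mathbbm{1}\{r_1>Q\}+\mathbbm{1}\{r_2>Q\}\bigr]$, but this is just a concrete rendering of the swap argument you outline.
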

\begin{proof}[Proof of Theorem \ref{thm:FCR_decision_select}]
    Recall that $\widehat{\gC}_t = \{-n\leq s \leq -1: \Pi_t(X_s) = 1\}$.
    Invoking \eqref{eq:miscover_upper}, we can upper bound FCR by
    \begin{align}\label{eq:FCR_expansion_upper}
        \FCR(T) &= \E\LRm{\frac{\sum_{t=0}^T S_t \Indicator{Y_t \not\in \gI_t^{\rm{CAP}}(X_t)}}{1 \vee \sum_{j=0}^T S_j}}\nonumber\\
        &\leq \sum_{t=0}^T\E\LRm{\frac{S_t}{1 \vee \sum_{j=0}^T S_j} \LRl{\alpha + \frac{1}{|\widehat{\gC}_t|+1}\sum_{s \in \widehat{\gC}_t} \mathfrak{R}(Z_t,Z_s; \widehat{\gC}_t \cup \{t\})}}\nonumber\\
        &\leq \alpha + \sum_{t=0}^T\E\LRm{\frac{S_t}{1 \vee \sum_{j=0}^T S_j} \frac{1}{|\widehat{\gC}_t|+1} \sum_{s \in \widehat{\gC}_t} \mathfrak{R}(Z_t,Z_s; \widehat{\gC}_t \cup \{t\})}.
    \end{align}
    Similarly, using \eqref{eq:miscover_lower}, we have the following lower bound
    \begin{align}\label{eq:FCR_expansion_lower}
        \FCR(T) \geq \alpha &- \sum_{t=0}^T\E\LRm{\frac{S_t}{1 \vee \sum_{j=0}^T S_j} \frac{1}{|\widehat{\gC}_t|+1}}\nonumber\\
        &+\sum_{t=0}^T\E\LRm{\frac{S_t}{1 \vee \sum_{j=0}^T S_j} \frac{\sum_{s \in \widehat{\gC}_t} \mathfrak{R}(Z_t,Z_s; \widehat{\gC}_t \cup \{t\})}{|\widehat{\gC}_t|+1} }.
    \end{align}
    Let $\Pi_{j}^{(t)}(\cdot)$ be corresponding selection rule by replacing $X_t$ with $x_{t}^* \in \sigma(\{S_i\}_{i=0}^{t-1})$ such that $\Pi_{t}(x_{t}^*) = 1$. Correspondingly, we denote $S_j^{(t)} = \Pi_j^{(t)}(X_j)$ for any $j\geq 0$. According to our assumption $\Pi_t(\cdot) \in \sigma(\{S_i\}_{i=0}^{t-1})$, we know: (1) $S_j^{(t)} = S_j$ for any $0\leq j \leq t-1$; (2) if $S_t=1$, it holds that $S_j^{(t)} = \Pi_{j}^{(t)}(X_{j}) = \Pi_{j}(X_{j}) = S_j$ for any $j\geq t$. Since $\Pi_{t}(\cdot)$ is independent of $\{Z_i\}_{i=-1}^{-n}$, we have
    \begin{align}\label{eq:zero_gap_fix_cal}
        &\E\LRm{\frac{S_t}{1 \vee \sum_{j=0}^T S_j} \frac{1}{|\widehat{\gC}_t|+1}\sum_{s \in \widehat{\gC}_t} \mathfrak{R}(Z_t,Z_s; \widehat{\gC}_t \cup \{t\})}\nonumber\\ 
        & = \E\LRm{\frac{S_t}{1\vee \sum_{j=0}^T S_j^{(t)} } \sum_{s \in \widehat{\gC}_t}\frac{\mathfrak{R}(Z_t,Z_s; \widehat{\gC}_t \cup \{t\})}{\sum_{j=-n}^{-1}\Pi_t(X_j) +1}}\nonumber\\
        & = \E\LRm{\frac{1}{1\vee \sum_{j=0}^T S_j^{(t)} } \sum_{s=-n}^{-1}\frac{\Pi_{t}(X_t)\Pi_{t}(X_s) \mathfrak{R}(Z_t,Z_s; \widehat{\gC}_t \cup \{t\})}{\sum_{j=-n,j\neq s}^{-1}\Pi_t(X_j)+2}}\nonumber\\
        & = \E\LRm{\frac{1}{1\vee \sum_{j=0}^T S_j^{(t)} } \sum_{s=-n}^{-1}\frac{\E\LRl{\Pi_{t}(X_t)\Pi_{t}(X_s) \mathfrak{R}(Z_t,Z_s; \widehat{\gC}_t \cup \{t\}) \mid \sigma\LRs{\{S_i\}_{i=0}^{t-1}, \{Z_i\}_{i=t+1}^T, \{Z_i\}_{i=-n,i\neq s}^{-1}}}}{\sum_{j=-n,j\neq s}^{-1}\Pi_t(X_{j})+2} }\nonumber\\
        & = \E\LRm{\frac{1}{1\vee \sum_{j=0}^T S_j^{(t)} } \sum_{s=-n}^{-1}\frac{\E\LRl{\Pi_{t}(X_t)\Pi_{t}(X_s) \mathfrak{R}(Z_t,Z_s; \widehat{\gC}_t \cup \{t\}) \mid \sigma\LRs{\{S_i\}_{i=0}^{t-1}, \{Z_i\}_{i=-n,i\neq s}^{-1}}}}{\sum_{j=-n,j\neq s}^{-1}\Pi_t(X_{j})+2} }\nonumber\\
        &= 0,
    \end{align}
    where the last equality follows from Lemma \ref{lemma:zero_gap_decision_selection}. Plugging \eqref{eq:zero_gap_fix_cal} into \eqref{eq:FCR_expansion_upper} gives the desired upper bound $\FCR(T) \leq \alpha$.
    Let $p_t = \sP\LRl{\Pi_{t}(X_t) = 1 \mid \sigma(\{S_i\}_{i=0}^{t-1})}$. From the i.i.d. assumption, we know $|\widehat{\gC}_t| \sim \operatorname{Binomial}(n, p_t)$ given $\sigma(\{S_i\}_{i=0}^{t-1})$. Then we have
    \begin{align}\label{eq:lower_gap}
        \E\LRm{\frac{S_t}{1 \vee \sum_{j=0}^T S_j} \frac{1}{|\widehat{\gC}_t|+1}} &= \E\LRm{\frac{1}{1+ \sum_{j\neq t}^T S_j} \E\LRl{\frac{S_t}{|\widehat{\gC}_t|+1}  \mid \sigma(\{S_i\}_{i=0}^{t-1}), \{Z_i\}_{i\geq t+1}}}\nonumber\\
        &\Eqmark{i}{=} \E\LRm{\frac{1}{1+ \sum_{j\neq t}^T S_j} \E\LRl{\frac{S_t}{|\widehat{\gC}_t|+1}  \mid \sigma(\{S_i\}_{i=0}^{t-1})}}\nonumber\\
        &\Eqmark{ii}{=} \E\LRm{\frac{1}{1+ \sum_{j\neq t}^T S_j} \E\LRl{S_t  \mid \sigma(\{S_i\}_{i=0}^{t-1})}\cdot \E\LRl{\frac{1}{|\widehat{\gC}_t|+1}  \mid \sigma(\{S_i\}_{i=0}^{t-1})}}\nonumber\\
        &= \E\LRm{\frac{1}{1+ \sum_{j\neq t}^T S_j} \E\LRl{S_t  \mid \sigma(\{S_i\}_{i=0}^{t-1})} \cdot \frac{1 - (1 - p_t)^{n+1}}{(n+1) p_t}}\nonumber\\
        &= \E\LRm{\frac{S_t}{1\vee \sum_{j=0}^T S_j}\frac{1 - (1 - p_t)^{n+1}}{(n+1)p_t}},
    \end{align}
    where $(i)$ holds since $S_t, |\widehat{\gC}_t| \independent \{Z_i\}_{i\geq t+1}$ given $\sigma(\{S_i\}_{i=0}^{t-1})$; and $(ii)$ holds due to i.i.d. assumption.
    Plugging \eqref{eq:zero_gap_fix_cal} and \eqref{eq:lower_gap} into \eqref{eq:FCR_expansion_lower} yields the desired lower bound
    \begin{align*}
        \FCR(T) &\geq \alpha \cdot \E\LRm{\frac{\sum_{t = 0}^T S_t}{1 \vee \sum_{j = 0}^T S_j}} - \sum_{t=0}^T \E\LRm{\frac{S_t}{1\vee\sum_{j=0}^T S_j}\frac{1 - (1 - p_t)^{n+1}}{(n+1)p_t}}\\
        &=\alpha - \sum_{t=0}^T \E\LRm{\frac{S_t}{\sum_{j=0}^T S_j}\frac{1 - (1 - p_t)^{n+1}}{(n+1)p_t}},
    \end{align*}
    where the last inequality follows from the assumption $\sum_{j=0}^T S_j > 0$ with probability 1.
    Therefore, we have finished the proof.
\end{proof}

\subsection{Proof of Lemma \ref{lemma:zero_gap_decision_selection}}
\begin{proof}
    We first notice that $\Pi_t(\cdot)$ is fixed given $\sigma(\{S_i\}_{i=0}^{t-1})$. It also means that $\{\Pi_t(X_{-i})\}_{i=1,i\neq s}^n$ are also fixed given $\sigma(\{S_i\}_{i=0}^{t-1}, \{Z_i\}_{i=-n,i\neq s}^{-1})$. Let $z_1 = (x_1,y_1)$ and $z_2 = (x_2,y_2)$. Now define the event $\gE_z = \LRl{[Z_s, Z_t] = [z_1, z_2]}$, where $[Z_s, Z_t]$ and $[z_1,z_2]$ are two unordered sets. Clearly, we know $Q_{\alpha}(\{R_i\}_{i\in \widehat{\gC}_t \cup \{t\}})$ is fixed given $\gE_z$ and $\sigma(\{S_i\}_{i=0}^{t-1}, \{Z_i\}_{i=-n,i\neq s}^{-1})$.
    Recalling the definition of $\mathfrak{R}(Z_t,Z_s; \widehat{\gC}_t \cup \{t\})$, we can get
    \begin{align*}
        &\E\LRm{\Pi_{t}(X_t)\Pi_{t}(X_s) \mathfrak{R}(Z_t,Z_s; \widehat{\gC}_t \cup \{t\}) \mid \sigma\LRs{\{S_i\}_{i=0}^{t-1}, \{Z_i\}_{i=-n,i\neq s}^{-1}}, \gE_z}\nonumber\\
        &\qquad = \sP\LRl{R_t > Q_{\alpha}(\{R_i\}_{i\in \widehat{\gC}_t \cup \{t\}}), \Pi_t(X_t) = 1, \Pi_t(X_s) = 1 \mid \sigma\LRs{\{S_i\}_{i=0}^{t-1}, \{Z_i\}_{i=-n,i\neq s}^{-1}}, \gE_z}\nonumber\\
        &\qquad - \sP\LRl{R_s > Q_{\alpha}(\{R_i\}_{i\in \widehat{\gC}_t \cup \{t\}}), \Pi_t(X_t) = 1, \Pi_t(X_s) = 1 \mid \sigma\LRs{\{S_i\}_{i=0}^{t-1}, \{Z_i\}_{i=-n,i\neq s}^{-1}}, \gE_z}\nonumber\\
        &\qquad \Eqmark{*}{=} \frac{1}{2}\Indicator{\Pi_t(x_1) = 1, \Pi_t(x_2)=1}\LRm{\Indicator{r_1 > Q_{\alpha}(\{R_i\}_{i\in \widehat{\gC}_t \cup \{t\}})} + \Indicator{r_2 > Q_{\alpha}(\{R_i\}_{i\in \widehat{\gC}_t \cup \{t\}})}}\nonumber\\
        &\qquad - \frac{1}{2}\Indicator{\Pi_t(x_1) = 1, \Pi_t(x_2)=1}\LRm{\Indicator{r_1 > Q_{\alpha}(\{R_i\}_{i\in \widehat{\gC}_t \cup \{t\}})} + \Indicator{r_2 > Q_{\alpha}(\{R_i\}_{i\in \widehat{\gC}_t \cup \{t\}})}}\nonumber\\
        &\qquad = 0,
    \end{align*}
    where $r_1 = |y_1 - \widehat{\mu}(x_1)|$ and $r_2 = |y_2 - \widehat{\mu}(x_2)|$; the equality $(*)$ holds since $(R_s,R_t)$ are exchangeable and $\gE_z \independent (\{S_i\}_{i=0}^{t-1}, \{Z_i\}_{i=-n,i\neq s}^{-1})$. Through marginalizing over $\gE_z$, we can prove the desired result.
\end{proof}

\subsection{Proof of Proposition \ref{pro:LORD_CI}}\label{proof:pro:LORD_CI}
\begin{proof}
    Recall that $\gI_t^{\rm{marg}}(X_t;\alpha_t) = \widehat{\mu}(X_t) \pm q_{\alpha_t}\LRs{\{R_i\}_{i\in \gH_0}}$ with $\gH_0 = \{-n,\ldots,-1\}$. It follows that
    \begin{align}\label{eq:miscover_upper_marg}
        \Indicator{Y_t\not\in \gI_t^{\rm{marg}}(X_t;\alpha_t)}\leq \alpha_t + \frac{1}{n+1}\sum_{s=-n}^{-1}\mathfrak{R}(Z_t,Z_s; \gH_0 \cup \{t\}),
     \end{align}
     where $\mathfrak{R}(Z_t,Z_s; \gH_0 \cup \{t\}) = \Indicator{R_t > Q_{\alpha_t}(\{R_i\}_{i\in \gH_0\cup \{t\}})} - \Indicator{R_s > Q_{\alpha_t}(\{R_i\}_{i\in \gH_0\cup \{t\}})}$.
    We follow the notation $S_j^{(t)}$ in Section \ref{proof:thm:FCR_decision_select}.
    By the definition, we have
    \begin{align}
        &\FCR(T) = \sum_{t=0}^T\E\LRm{\frac{S_t \Indicator{Y_t \not\in \gI_t^{\rm{marg}}(X_t;\alpha_t) }}{1\vee \sum_{j=0}^T S_j}}\nonumber\\
        &= \sum_{t=0}^T\E\LRm{\frac{S_t \Indicator{Y_t \not\in \gI_t^{\rm{marg}}(X_t;\alpha_t) }}{1\vee \sum_{j=0}^{T} S_j^{(t)}}}\nonumber\\
        &\leq \sum_{t=0}^T\E\LRm{\frac{\Indicator{Y_t \not\in \gI_t^{\rm{marg}}(X_t;\alpha_t) }}{1\vee \sum_{j=0}^{T} S_j^{(t)}}}\nonumber\\
        &\Eqmark{i}{\leq} \sum_{t=0}^T \E\LRm{\frac{\alpha_t}{1\vee \sum_{j=0}^{T} S_j^{(t)}}} + \sum_{t=0}^T\E\LRm{\frac{1}{1\vee \sum_{j=0}^{T} S_j^{(t)}} \frac{\sum_{s=-n}^{-1} \mathfrak{R}(Z_t,Z_s; \gH_0 \cup \{t\})}{n+1}}\nonumber\\
        &\Eqmark{ii}{\leq} \alpha + \sum_{t=0}^T\E\LRm{\E\LRl{\frac{1}{1\vee \sum_{j=0}^{T} S_j^{(t)}} \frac{\sum_{s=-n}^{-1} \mathfrak{R}(Z_t,Z_s; \gH_0 \cup \{t\})}{n+1} \mid \sigma\LRs{\{S_i\}_{i=0}^{t-1}}} }\nonumber\\
        &\Eqmark{iii}{\leq} \alpha + \sum_{t=0}^T\E\LRm{\E\LRl{\frac{1}{1\vee \sum_{j=0}^{T} S_j^{(t)}} \mid \sigma\LRs{\{S_i\}_{i=0}^{t-1}}}\frac{\sum_{s=-n}^{-1}\E\LRm{ \mathfrak{R}(Z_t,Z_s; \gH_0 \cup \{t\}) \mid \sigma\LRs{\{S_i\}_{i=0}^{t-1}}}}{n+1} }\nonumber\\
        &\Eqmark{iv}{=} \alpha,\nonumber
    \end{align}
    where $(i)$ follows from \eqref{eq:miscover_upper_marg}; $(ii)$ holds due to the LORD-CI's invariant $\sum_{t=0}^T \alpha_t/(\sum_{j=0}^T S_j) \leq \alpha$ and $S_j^{(t)} \geq S_j$ for any $j\geq t$; $(iii)$ holds since $\alpha_t \in \sigma\LRs{\{S_i\}_{i=0}^{t-1}}$ and $(Z_t, Z_{s}) \independent S_j^{(t)}$ for $j\geq t$; and $(iv)$ follows from the exchangeability between $Z_t$ and $Z_{s}$ such that
    \begin{align*}
        &\E\LRm{\mathfrak{R}(Z_t,Z_s; \gH_0 \cup \{t\}) \mid \sigma\LRs{\{S_i\}_{i=0}^{t-1}}} = \E\LRm{\mathfrak{R}(Z_t,Z_s; \gH_0 \cup \{t\})}\\
        &= \sP\LRm{R_t > Q_{\alpha}(\{R_i\}_{i\in \gH_0\cup \{t\}})} - \sP\LRm{R_s > Q_{\alpha}(\{R_i\}_{i\in \gH_0\cup \{t\}})} = 0.
    \end{align*}
\end{proof}

\subsection{Proof of Theorem \ref{thm:FCR_decision_dyn_SCOP}}\label{proof:them:FCR_decision_dyn_SCOP}

\begin{proof}
Recall that $\widehat{\gC}_t = \{-n\leq i\leq t-1: \Pi_t(X_i) = 1\}$. For convenience, we let $\Pi_i(\cdot)\equiv 1$ for $-n\leq i \leq -1$. 
Denote
\begin{align*}
    \widehat{\gC}_t^{(s)} = \{-n\leq i\leq t-1,i\neq s: \Pi_t(X_i) = 1\}.
\end{align*}
Clearly, it holds that $\widehat{\gC}_t^{(s)} \cup \{s\} = \widehat{\gC}_t$ if $s\in \widehat{\gC}_t$.
\subsubsection{Proof of selection-conditional coverage}
Notice that
\begin{align}\label{eq:scop_SCC}
    \sP\LRl{Y_t \not\in \gI_t^{\rm CAP}(X_t) \mid S_t = 1} &\leq \alpha + \frac{1}{\sP(S_t=1)}\sum_{s=-n}^{t-1}\E\LRm{\frac{\Pi_t(X_t)\mathbbm{1}\{s\in \widehat{\gC}_t\}}{|\widehat{\gC}_t|+1}  \mathfrak{R}(Z_t,Z_s; \widehat{\gC}_t \cup \{t\})}\nonumber\\
    &= \alpha + \frac{1}{\sP(S_t=1)}\sum_{s=-n}^{t-1}\E\LRm{\frac{\Pi_t(X_t)\Pi_t(X_s)}{|\widehat{\gC}_t^{(s)}|+2}  \mathfrak{R}(Z_t,Z_s; \widehat{\gC}_t^{(s)} \cup \{s,t\})}\nonumber\\
    &= \alpha + \frac{1}{\sP(S_t=1)}\sum_{s=0}^{t-1}\E\LRm{\frac{\Pi_t(X_t)\Pi_t(X_s)}{|\widehat{\gC}_t^{(s)}|+2}  \mathfrak{R}(Z_t,Z_s; \widehat{\gC}_t^{(s)} \cup \{s,t\})}.
\end{align}
In fact, the last equality holds due to for any offline point $-n\leq s \leq -1$,
\begin{align}\label{eq:exchange_gap_scop_fix}
    &\E\LRm{\frac{\Pi_t(X_t) \Pi_t(X_s)}{|\widehat{\gC}_t^{(s)}| + 2} \mathfrak{R}(Z_t,Z_s; \widehat{\gC}_t^{(s)} \cup \{s,t\})}\nonumber\\
    =& \E\LRm{ \E\LRm{\frac{\Pi_t(X_t) \Pi_t(X_s)}{|\widehat{\gC}_t^{(s)}| + 2} \mathfrak{R}(Z_t,Z_s; \widehat{\gC}_t \cup \{t\}) \mid \{Z_{\ell}\}_{\ell \neq s,t}, [Z_s,Z_t]}}\nonumber\\
    \Eqmark{i}{=}& \E\LRm{\frac{\Pi_t(X_t) \Pi_t(X_s)}{|\widehat{\gC}_t^{(s)}| + 2} \E\LRm{ \mathfrak{R}(Z_t,Z_s; \widehat{\gC}_t^{(s)} \cup \{s,t\}) \mid \{Z_{\ell}\}_{\ell \neq s,t}, [Z_s,Z_t]}}
    \Eqmark{ii}{=} 0,
\end{align}
where $S_j^{(t)}$ is defined in Section \ref{proof:pro:LORD_CI}; $(i)$ holds due to $\Pi_t(\cdot)$ is independent of $Z_s$ and $Z_t$, hence $\widehat{\gC}_t^{(s)}$ is symmetric to $(Z_s,Z_t)$; and $(ii)$ holds due to exchangeability between $Z_s$ and $Z_t$.

\textbf{\emph{Decoupling dependence over $X_s,\ 0\leq s \leq t-1$.}}
Let $x_{s,1},x_{s,0} \in \sigma(S_0,\ldots,S_{s-1})$ be the values such that $\Pi_{s}(x_{s,1}) = 1$ and $\Pi_s(x_{s,0})=0$ for $0\leq s \leq t-1$. Denote $\{\widetilde{\Pi}_{i,1}^{(s)}\}_{i\geq 0}$ and $\{\widetilde{\Pi}_{i,0}^{(s)}\}_{i\geq 0}$ the virtual selection rules generated by replacing $X_s$ with $x_{s,1}$ and $x_{s,0}$, respectively. Let $\{S_{i}^{(s,1)}\}_{i\geq 0}$ and $\{S_{i}^{(s,0)}\}_{i\geq 0}$ be the corresponding virtual decision sequences.
Then we have the following conclusions:
\begin{itemize}
        \item[(1)] $\widetilde{\Pi}_{i,1}^{(s)}(\cdot) \equiv \Pi_i(\cdot)$ for any $0 \leq i \leq s$;
        
        \item[(2)] If $S_s = 1$, then $\widetilde{\Pi}_{i,1}^{(s)}(\cdot) = \Pi_i(\cdot)$ for any $i \geq s+1$.
        
        \item[(3)] If $S_s = 0$, then $\widetilde{\Pi}_{i,0}^{(s)} = \Pi_i$ for any $i \geq s+1$.
\end{itemize}
Denote
\begin{align}
    \widetilde{\gC}_{t,1}^{(s)} = \LRl{-n\leq i\leq t-1,i\neq s: \widetilde{\Pi}_{t,1}(X_i)=1},\nonumber\\
    \widetilde{\gC}_{t,0}^{(s)} = \LRl{-n\leq i\leq t-1,i\neq s: \widetilde{\Pi}_{t,0}(X_i)=1}.\nonumber
\end{align}
Then we know $\widetilde{\gC}_{t,1}^{(s)} = \widehat{\gC}_t^{(s)}$ if $\Pi_s(X_s) = 1$, and $\widetilde{\gC}_{t,0}^{(s)} = \widehat{\gC}_t^{(s)}$ if $\Pi_s(X_s) = 0$. In addition, we also know $\widetilde{\gC}_{t,1}^{(s)}$ and $\widetilde{\gC}_{t,1}^{(s)}$ are independent of $(Z_s,Z_t)$.
For any online point $0\leq s \leq t-1$, we have
\begin{align}\label{eq:exchange_gap}
    &\E\LRm{\frac{\Pi_t(X_t)\Pi_t(X_s)}{|\widehat{\gC}_t^{(s)}|+2}  \mathfrak{R}(Z_t,Z_s; \widehat{\gC}_t^{(s)} \cup \{s,t\}) \mid \{Z_{\ell}\}_{\ell \neq s,t}, [Z_s,Z_t]}\nonumber\\ 
    &= \underbrace{\E\LRm{\frac{\Pi_t(X_t) \Pi_t(X_s) \Pi_s(X_t) \Pi_s(X_s)}{|\widehat{\gC}_t^{(s)}|+2} \mathfrak{R}(Z_t,Z_s; \widehat{\gC}_t^{(s)} \cup \{s,t\}) \mid \{Z_{\ell}\}_{\ell \neq s,t}, [Z_s,Z_t]}}_{\mathrm{I}}\nonumber\\
    & + \underbrace{\E\LRm{\frac{\Pi_t(X_t) \Pi_t(X_s) [1-\Pi_s(X_t)] [1-\Pi_s(X_s)]}{|\widehat{\gC}_t^{(s)}|+2} \mathfrak{R}(Z_t,Z_s; \widehat{\gC}_t^{(s)} \cup \{s,t\}) \mid \{Z_{\ell}\}_{\ell \neq s,t}, [Z_s,Z_t]}}_{\mathrm{II}}\nonumber\\
    & + \underbrace{\E\LRm{\frac{\Pi_t(X_t) \Pi_t(X_s)}{|\widehat{\gC}_t^{(s)}|+2} \mathbbm{1}\{\Pi_s(X_t) \neq \Pi_s(X_s)\}\cdot \mathfrak{R}(Z_t,Z_s; \widehat{\gC}_t^{(s)} \cup \{s,t\}) \mid \{Z_{\ell}\}_{\ell \neq s,t}, [Z_s,Z_t]}}_{\mathrm{III}},
\end{align}
where the first equality holds due to \eqref{eq:exchange_gap_scop_fix}. Because $\Pi_s, \widetilde{\Pi}_{t,1}^{(s)}, \widetilde{\gC}_{t, 1}^{(s)}$ are fixed given $\{Z_{\ell}\}_{\ell\neq s,t}$, using the exchangeability between $Z_s$ and $Z_t$, we can verify
\begin{align}\label{eq:exchange_zero_gap_1}
    \mathrm{I}&= \frac{\widetilde{\Pi}_{t,1}^{(s)}(X_t) \widetilde{\Pi}_{t,1}^{(s)}(X_s) \Pi_s(X_s) \Pi_s(X_t)}{|\widetilde{\gC}_{t, 1}^{(s)}|+2}\times\nonumber\\
    &\qquad\qquad\qquad\qquad\E\LRm{\mathfrak{R}(Z_t,Z_s; \widetilde{\gC}_{t, 1}^{(s)}\cup \{s,t\})\mid \{Z_{\ell}\}_{\ell \neq s,t}, [Z_s,Z_t]} = 0.
\end{align}
Similarly, we can show that
\begin{align}\label{eq:exchange_zero_gap_2}
    \mathrm{II} &= \frac{\widetilde{\Pi}_{t,0}^{(s)}(X_t) \widetilde{\Pi}_{t,0}^{(s)}(X_s) [1-\Pi_s(X_s)] [1-\Pi_s(X_t)]}{|\widetilde{\gC}_{t, 0}^{(s)}|+2}\times\nonumber\\
    &\qquad\qquad\qquad\qquad\E\LRm{\mathfrak{R}(Z_t,Z_s; \widetilde{\gC}_{t, 0}^{(s)}\cup \{s,t\})\mid \{Z_{\ell}\}_{\ell \neq s,t}, [Z_s,Z_t]}= 0.
\end{align}
Plugging \eqref{eq:exchange_zero_gap_1} and \eqref{eq:exchange_zero_gap_2} into \eqref{eq:exchange_gap}, together with \eqref{eq:scop_SCC}, we can get
\begin{align}
    \sP&\LRl{Y_t \not\in \gI_t^{\rm CAP}(X_t) \mid S_t = 1}\nonumber\\
    &\leq \alpha + \frac{\sum_{s=0}^{t-1}\E\LRm{\frac{\Pi_t(X_t) \Pi_t(X_s)}{|\widehat{\gC}_t^{(s)}|+2} \mathbbm{1}\{\Pi_s(X_t) \neq \Pi_s(X_s)\}\cdot \mathfrak{R}(Z_t,Z_s; \widehat{\gC}_t^{(s)} \cup \{s,t\})}}{\sP(S_t = 1)}.\nonumber
\end{align}
By the definition of $\Delta_t$ in Theorem \ref{thm:FCR_decision_dyn_SCOP}, we can prove the conclusion.

\subsubsection{Proof of FCR}
By the definition of FCR, we have
\begin{align}\label{eq:scop_FCR}
    \FCR(T) &= \sum_{t=0}^T\E\LRm{\frac{S_t \mathbbm{1}\{Y_t \not\in \gI_t^{\rm CAP}(X_t)\}}{ 1 \vee \sum_{j=0}^T S_j} }\nonumber\\
    &\leq \alpha + \sum_{t=0}^T\sum_{s=-n}^{t-1}\E\LRm{\frac{1}{ 1 \vee \sum_{j=0}^T S_j} \frac{1}{|\widehat{\gC}_t| + 1} \Pi_t(X_t) \Pi_t(X_s) \mathfrak{R}(Z_t,Z_s; \widehat{\gC}_t \cup \{t\})}.
\end{align}
For any $-n\leq s \leq -1$, we can show
\begin{align}\label{eq:scop_FCR_gap_offline}
    &\E\LRm{\frac{1}{ 1 \vee \sum_{j=0}^T S_j} \frac{1}{|\widehat{\gC}_t| + 1} \Pi_t(X_t) \Pi_t(X_s) \mathfrak{R}(Z_t,Z_s; \widehat{\gC}_t \cup \{t\})}\nonumber\\
    =& \E\LRm{\frac{1}{ 1 \vee \sum_{j=0}^T S_j^{(t)}} \E\LRm{\frac{1}{|\widehat{\gC}_t| + 1} \Pi_t(X_t) \Pi_t(X_s) \mathfrak{R}(Z_t,Z_s; \widehat{\gC}_t \cup \{t\}) \mid \{Z_{\ell}\}_{\ell \neq s,t}, [Z_s,Z_t]}} = 0,
\end{align}
where $S_j^{(t)}$ is defined in Section \ref{proof:pro:LORD_CI} and the last equality holds due to \eqref{eq:exchange_gap_scop_fix}.

\textbf{\emph{Decoupling dependence over both $X_s$ and $X_t$.}}
Let $x_{t}^{(s,1)} \in \sigma(S_{0,1}^{(s)},\ldots,S_{t-1,1}^{(s)})$ and $x_{t}^{(s,0)} \in \sigma(S_{0,0}^{(s)},\ldots,S_{t-1,0}^{(s)})$ be the values such that $\widetilde{\Pi}_{t,1}^{(s)}(x_{t}^{(s,1)}) = 1$ and $\widetilde{\Pi}_{t,0}^{(s)}(x_{t}^{(s,0)}) = 1$ for $t > s$, respectively. Let $\{S_{i,1}^{(s,t)}\}_{i\geq 0}$ be the virtual decision sequence generated by firstly replacing $X_s$ with $x_{s,1}$, and then replacing $X_t$ with $x_{t}^{(s,1)}$. Let $\{S_{i,0}^{(s,t)}\}_{i\geq 0}$ be the virtual decision sequence generated by firstly replacing $X_s$ with $x_{s,0}$, and then replacing $X_t$ with $x_{t}^{(s,1)}$. In this case, we can guarantee that $S_{i,1}^{(s,t)}, S_{i,0}^{(s,t)} \independent (Z_s,Z_t)$ for any $i\geq 0$ because $x_{s,1},x_{s,0} \independent (Z_s,Z_t)$ and $x_{t}^{(s,1)},x_{t}^{(s,0)} \independent (Z_s,Z_t)$. We have
\begin{itemize}
    \item[(1)] $S_{i,1}^{(s,t)} \equiv S_{i,0}^{(s,t)} \equiv S_i$ for $i\leq s-1$;
    \item[(2)] $S_{i,1}^{(s,t)} \equiv S_{i}^{(s,1)}$ and $S_{i,0}^{(s,t)} \equiv S_{i}^{(s,0)}$ for $s\leq i \leq t-1$;
    \item[(3)] If $S_t = 1$, $S_{i,1}^{(s,t)} = S_{i}^{(s,1)}$ for $i \geq t$.
    \item[(4)] If $S_t = 0$, $S_{i,0}^{(s,t)} = S_{i}^{(s,0)}$ for $i \geq t$.
\end{itemize}
Then for any $0\leq s\leq t-1$, we have
\begin{align}\label{eq:scop_FCR_gap_online}
    &\E\LRm{\frac{1}{ 1 \vee \sum_{j=0}^T S_j} \frac{1}{|\widehat{\gC}_t| + 1} \Pi_t(X_t) \Pi_t(X_s) \mathfrak{R}(Z_t,Z_s; \widehat{\gC}_t \cup \{t\})}\nonumber\\
    &= \underbrace{\E\LRm{\frac{1}{ 1 \vee \sum_{j=0}^T S_j^{(s,t)}}\frac{\Pi_t(X_t) \Pi_t(X_s) \Pi_s(X_t) \Pi_s(X_s)}{|\widehat{\gC}_t^{(s)}|+2} \mathfrak{R}(Z_t,Z_s; \widehat{\gC}_t^{(s)} \cup \{s,t\}) }}_{\mathrm{I}^{\prime}}\nonumber\\
    & + \underbrace{\E\LRm{\frac{1}{ 1 \vee \sum_{j=0}^T S_j^{(s,t)}}\frac{\Pi_t(X_t) \Pi_t(X_s) [1-\Pi_s(X_t)] [1-\Pi_s(X_s)]}{|\widehat{\gC}_t^{(s)}|+2} \mathfrak{R}(Z_t,Z_s; \widehat{\gC}_t^{(s)} \cup \{s,t\}) }}_{\mathrm{II}^{\prime}}\nonumber\\
    & + \underbrace{\E\LRm{\frac{1}{ 1 \vee \sum_{j=0}^T S_j^{(s,t)}}\frac{\Pi_t(X_t) \Pi_t(X_s)}{|\widehat{\gC}_t^{(s)}|+2} \mathbbm{1}\{\Pi_s(X_t) \neq \Pi_s(X_s)\}\cdot \mathfrak{R}(Z_t,Z_s; \widehat{\gC}_t^{(s)} \cup \{s,t\}) }}_{\mathrm{III}^{\prime}}.
\end{align}
Since $\{S_j^{(s,t)}\}_{j\geq 0}$ are independent of $Z_s,Z_t$, we have
\begin{align}
    \mathrm{I}^{\prime} &= \E\LRm{\frac{1}{ 1 \vee \sum_{j=0}^T S_j^{(s,t)}}\frac{\Pi_t(X_t) \Pi_t(X_s) \Pi_s(X_t) \Pi_s(X_s)}{|\widehat{\gC}_t^{(s)}|+2} \E\LRm{\mathfrak{R}(Z_t,Z_s; \widehat{\gC}_t^{(s)} \cup \{s,t\}) \mid \{Z_{\ell}\}_{\ell \neq s,t}, [Z_s,Z_t]}}\nonumber\\
    &= 0.\nonumber
\end{align}
Similarly, we can also show $\mathrm{II}^{\prime} = 0$. Then plugging \eqref{eq:scop_FCR_gap_offline} and \eqref{eq:scop_FCR_gap_online} into \eqref{eq:scop_FCR} yields the conclusion.

\subsection{Proof of Corollary \ref{cor:scop-1}}
If $\Pi_t(x) \leq \Pi_s(x)$ for any $x$ and $s\leq t-1$, we can guarantee that
\begin{align}
    \Pi_t(X_t)\Pi_t(X_s) \Indicator{\Pi_s(X_t) \neq \Pi_s(X_s)} = 0,\nonumber
\end{align}
since $\Pi_t(X_t)\Pi_t(X_s) = 1$ implies $\Pi_s(X_t) = \Pi_s(X_s) = 1$. Then we can show $\Delta_t = 0$.

\subsection{Proof of Corollary \ref{cor:scop-2}}
Notice the fact
\begin{align}
    \mathbbm{1}\{\Pi_s(X_t) = \Pi_s(X_s)\} = \Pi_s(X_s)[1 - \Pi_s(X_t)] + [1 - \Pi_s(X_s)]\Pi_s(X_t).\nonumber
\end{align}
Then we can decompose $\mathrm{III}$ in \eqref{eq:exchange_gap} as
\begin{align}\label{eq:gap_3_decompose}
    \mathrm{III} &= \underbrace{\E\LRm{\frac{\Pi_t(X_t) \Pi_t(X_s)  \Pi_s(X_s)[1 - \Pi_s(X_t)]}{|\widehat{\gC}_t^{(s)}|+2}  \mathbbm{1}\LRl{R_t > Q_{\alpha}\LRs{\{R_i\}_{i\in \widehat{\gC}_t^{(s)} \cup \{s,t\}}}} \mid \{Z_{\ell}\}_{\ell \neq s,t}, [Z_s,Z_t]}}_{\mathrm{A}_1}\nonumber\\
    &- \underbrace{\E\LRm{\frac{\Pi_t(X_t) \Pi_t(X_s)  \Pi_s(X_t)[1 - \Pi_s(X_s)]}{|\widehat{\gC}_t^{(s)}|+2}  \mathbbm{1}\LRl{R_s > Q_{\alpha}\LRs{\{R_i\}_{i\in \widehat{\gC}_t^{(s)} \cup \{s,t\}}}} \mid \{Z_{\ell}\}_{\ell \neq s,t}, [Z_s,Z_t]}}_{\mathrm{A}_2}\nonumber\\
    &+ \underbrace{\E\LRm{\frac{\Pi_t(X_t) \Pi_t(X_s)  \Pi_t(X_s)[1 - \Pi_s(X_s)]}{|\widehat{\gC}_t^{(s)}|+2}  \mathbbm{1}\LRl{R_t > Q_{\alpha}\LRs{\{R_i\}_{i\in \widehat{\gC}_t^{(s)} \cup \{s,t\}}}} \mid \{Z_{\ell}\}_{\ell \neq s,t}, [Z_s,Z_t]}}_{\mathrm{B}_1}\nonumber\\
    &- \underbrace{\E\LRm{\frac{\Pi_t(X_t) \Pi_t(X_s)  \Pi_s(X_s)[1 - \Pi_s(X_t)]}{|\widehat{\gC}_t^{(s)}|+2}  \mathbbm{1}\LRl{R_s > Q_{\alpha}\LRs{\{R_i\}_{i\in \widehat{\gC}_t^{(s)} \cup \{s,t\}}}} \mid \{Z_{\ell}\}_{\ell \neq s,t}, [Z_s,Z_t]}}_{\mathrm{B}_2}.
\end{align}
Recall the original generation mechanism of decision rules,
\begin{align}
    \begin{array}{ccccccccc}
         X_0 &&\cdots &X_s &{}& X_{s+1} & &\cdots &X_t  \\
         \Pi_0(\cdot)& \stackrel{\Pi_0(X_0)}{\longrightarrow}& \cdots &\Pi_s(\cdot)&\stackrel{\Pi_s(X_s)}{\longrightarrow} &\Pi_{s+1}(\cdot)&\stackrel{\Pi_{s+1}(X_{s+1})}{\longrightarrow} &\cdots & \Pi_t(\cdot)
    \end{array}.\nonumber
\end{align}
Now we swap $X_s$ and $X_t$ in the data sequence, and denote the generated decision rule as
\begin{align}
    \begin{array}{ccccccccc}
         X_0 &&\cdots &X_t &{}& X_{s+1} & &\cdots &X_s  \\
         \Pi_0(\cdot)& \stackrel{\Pi_0(X_0)}{\longrightarrow}& \cdots &\Pi_s(\cdot)&\stackrel{\Pi_s(X_t)}{\longrightarrow} &\Pi_{s+1}^{s\leftrightarrow t}(\cdot)&\stackrel{\Pi_{s+1}^{s\leftrightarrow t}(X_{s+1})}{\longrightarrow} &\cdots & \Pi_{t}^{s\leftrightarrow t}(\cdot)
    \end{array}.\nonumber
\end{align}
The corresponding picked calibration set is
\begin{align}
    \widehat{\gC}^{s \leftrightarrow t} = \LRl{-n\leq i \leq t-1, i\neq s: \Pi_t^{s \leftrightarrow t}(X_i) = 1}.\nonumber
\end{align}
According to our assumption, we know $\Pi_t^{(s\gets t)}(\cdot) = \Pi_t(\cdot)$ for $t \geq t_0+1$.
Hence, we have $ \widehat{\gC}^{s \leftrightarrow t} = \widehat{\gC}_{t}^{(s)}$ and
\begin{align}
    \mathrm{A}_1 &= \E\LRm{\frac{\Pi_t(X_t) \Pi_t(X_s)\Pi_s(X_s)[1-\Pi_{s}(X_t)]}{|\widehat{\gC}_t^{(s)}|+2}  \mathbbm{1}\LRl{R_t > Q_{\alpha}\LRs{\{R_i\}_{i\in \widehat{\gC}_t^{(s)} \cup \{s,t\}}}}\mid \{Z_{\ell}\}_{\ell \neq s,t}, [Z_s,Z_t]}\nonumber\\
    &= \E\LRm{\frac{\Pi_t^{s \leftrightarrow t}(X_t) \Pi_t^{s \leftrightarrow t}(X_s)\Pi_s(X_t)[1-\Pi_{s}(X_s)]}{|\widehat{\gC}^{s \leftrightarrow t}|+2} \mathbbm{1}\LRl{R_s > Q_{\alpha}\LRs{\{R_i\}_{i\in \widehat{\gC}^{s \leftrightarrow t} \cup \{s,t\}}}} \mid \{Z_{\ell}\}_{\ell \neq s,t}, [Z_s,Z_t]}\nonumber\\
    &= \E\LRm{\frac{\Pi_t(X_t) \Pi_t(X_s)\Pi_s(X_t)[1-\Pi_{s}(X_s)]}{|\widehat{\gC}^{s \leftrightarrow t}|+2} \mathbbm{1}\LRl{R_s > Q_{\alpha}\LRs{\{R_i\}_{i\in \widehat{\gC}_t^{(s)} \cup \{s,t\}}}} \mid \{Z_{\ell}\}_{\ell \neq s,t}, [Z_s,Z_t]}\nonumber\\
    &= \mathrm{A}_2.
\end{align}
Similarly, we can also show that $\mathrm{B}_1 = \mathrm{B}_2$. By recalling \eqref{eq:gap_3_decompose}, we have showed
$\mathrm{III} = 0$. Together with \eqref{eq:exchange_zero_gap_1} and \eqref{eq:exchange_zero_gap_2}, we conclude
\begin{align}\label{eq:zero_gap_t0}
    \E\LRm{\frac{\Pi_t(X_t)\Pi_t(X_s)}{|\widehat{\gC}_t^{(s)}|+2}  \mathfrak{R}(Z_t,Z_s; \widehat{\gC}_t^{(s)} \cup \{s,t\})\mid \{Z_{\ell}\}_{\ell \neq s,t}, [Z_s,Z_t]} = 0,\quad \forall t\geq t_0+1.
\end{align}
Recall the SCC bound \eqref{eq:scop_SCC}, we can show
\begin{align}\nonumber
    \sP\LRl{Y_t \not\in \gI_t^{\rm CAP}(X_t) \mid S_t = 1} &\leq \alpha, \quad \forall t\geq t_0+1.
\end{align}
Recall the FCR bound in Theorem \ref{thm:FCR_decision_dyn_SCOP} and \eqref{eq:scop_FCR_gap_online}, we also have
\begin{align}
    \FCR(T) 
    &\leq \alpha + \sum_{t=0}^{t_0}\E\LRm{\frac{S_t \Delta_t}{ 1 \vee \sum_{j=0}^T S_j} }\nonumber\\
    &\qquad + \sum_{t=t_0+1}^{T}\sum_{s=-n}^{t-1}\E\LRm{\frac{1}{ 1 \vee \sum_{j=0}^T S_j^{(s,t)}}\frac{1}{|\widehat{\gC}_t^{(s)}|+2}\Pi_t(X_t)\Pi_t(X_s)  \mathfrak{R}(Z_t,Z_s; \widehat{\gC}_t^{(s)} \cup \{s,t\}) }\nonumber\\
    &=\alpha + \sum_{t=0}^{t_0}\E\LRm{\frac{S_t \Delta_t}{ 1 \vee \sum_{j=0}^T S_j} },\nonumber
\end{align}
where the equality holds due to \eqref{eq:zero_gap_t0}. Notice that $\Delta_t \leq 1$, we further have
\begin{align}
    \FCR(T) \leq \alpha + \E\LRm{\frac{\sum_{t=0}^{t_0} S_t}{ 1 \vee \sum_{j=0}^T S_j} } \to \alpha,\nonumber
\end{align}
as long as $t_0$ is finite and $\sum_{j=0}^T S_j \to \infty$.

\end{proof}

\subsection{Proof of Theorem \ref{thm:FCR_decision_dyn_mSCOP}}\label{proof:thm:FCR_decision_dyn_mSCOP}
At time $t$, for $-n \leq s \leq t-1$ we define the following candidate set
\begin{align}\label{eq:Nts}
    \gN_t^{(s)} = \LRl{-n\leq j\leq t-1,j\neq s: \Pi_t(X_j) = 1}.
\end{align}
In addition, we let $\Pi_s(\cdot) \equiv 1$ for any $-n\leq s \leq -1$.
According to \eqref{eq:Ada_rule_inter}, the picked calibration set can be rewritten as
\begin{align}\label{eq:online_Ct}
    \widehat{\gC}_{t} = \Big\{-n\leq s \leq t-1: \Pi_t(X_s) \mathbbm{1}\{\Pi_s(X_s) = \Pi_s(X_t)\} = 1,&\nonumber\\
    \prod_{i\in \gN_t^{(s)}} \mathbbm{1}\{\Pi_i(X_s) = \Pi_i(X_t)\}=1\Big\}&.
\end{align}
\subsubsection{Proof of selection-conditional coverage}\label{proof:SCC_dec_driven_ada}
Next, we will prove the following relation: for $-n\leq s\leq t-1$,
\begin{align}\label{eq:zero_gap}
    \E\LRm{\frac{1}{|\widehat{\gC}_t|+1} \Pi_t(X_t) \mathbbm{1}\{s\in \widehat{\gC}_t\}\cdot \mathfrak{R}(Z_t,Z_s; \widehat{\gC}_t \cup \{t\}) \mid \{Z_{\ell}\}_{\ell\neq s,t}} = 0.
\end{align}
Define the leave-one-out picked calibration set as
\begin{align}\label{eq:reference_set_ts}
    \widehat{\gC}_{t}^{(s)} = \Bigg\{\substack{-n\leq i\leq t-1,\\ i\neq s}:\ &\Pi_t(X_i) \mathbbm{1}\{\Pi_i(X_i) = \Pi_i(X_t)\}{\mathbbm{1}\{\Pi_i(X_i) = \Pi_i(X_s)\}} = 1,\nonumber\\
    &\prod_{j\in \gN_t^{(s)}} \mathbbm{1}\{\Pi_j(X_i) = \Pi_j(X_t)\}{\mathbbm{1}\{\Pi_j(X_i) = \Pi_j(X_s)\}}=1\Bigg\}.
\end{align}
By the definition of $\widehat{\gC}_t$ in \eqref{eq:online_Ct}, we know if $s\in \widehat{\gC}_t$ then $\Pi_i(X_s) = \Pi_i(X_t), \forall i\in \gN_t^{(s)}$. It implies that for any $i\leq t-1$ such that $\Pi_t(X_i) = 1$ (i.e., $i\in \gN_t^{(s)}$), we have
\begin{align}
    \mathbbm{1}\{\Pi_i(X_i) = \Pi_i(X_t)\}\mathbbm{1}\{\Pi_i(X_i) = \Pi_i(X_s)\} = \mathbbm{1}\{\Pi_i(X_i) = \Pi_i(X_t)\}.\nonumber
\end{align}
By comparing \eqref{eq:online_Ct} and \eqref{eq:reference_set_ts}, we can guarantee that for any $-n\leq s \leq t-1$
\begin{align}\label{eq:cal_set_ts}
    \widehat{\gC}_t = \widehat{\gC}_{t}^{(s)} \cup \{s\} \text{ if }s\in \widehat{\gC}_t.
\end{align}
Then we can rewrite \eqref{eq:zero_gap} as
\begin{align}\label{eq:zero_gap_mid}
    \E\LRm{\frac{1}{|\widehat{\gC}_{t}^{(s)}|+2} \Pi_t(X_t) \mathbbm{1}\{s\in \widehat{\gC}_t\}\cdot \mathfrak{R}(Z_t,Z_s; \widehat{\gC}_{t}^{(s)} \cup \{s,t\}) \mid \{Z_{\ell}\}_{\ell\neq s,t}} = 0.
\end{align}
Due to the fact $\mathbbm{1}\{\Pi_s(X_s) = \Pi_s(X_t)\} = \Pi_s(X_s)\Pi_s(X_t) + (1-\Pi_s(X_s))(1-\Pi_s(X_t))$, then we can decompose the joint selection indicator in \eqref{eq:zero_gap_mid} as
\begin{align}\label{eq:joint_selection}
    \Pi_t(X_t)\mathbbm{1}\{s\in \widehat{\gC}_{t}\} &= \underbrace{\Pi_t(X_t)\Pi_t(X_s)\Pi_s(X_t)\Pi_s(X_s) \prod_{i\in \gN_t^{(s)}} \mathbbm{1}\{\Pi_i(X_s) = \Pi_i(X_t)\}}_{\gJ_1(X_s,X_t)}\nonumber\\
    &+\underbrace{\Pi_t(X_t)\Pi_t(X_s)(1-\Pi_s(X_s))(1-\Pi_s(X_t))\prod_{i\in \gN_t^{(s)}} \mathbbm{1}\{\Pi_i(X_s) = \Pi_i(X_t)\}}_{\gJ_0(X_s,X_t)}.
\end{align}
Notice that, if $-n\leq s \leq -1$, we know $\{\Pi_i(\cdot)\}_{-n\leq i\leq t-1}$ are independent of $Z_s$ and $Z_t$. Hence $\gN_t^{(s)}$ is independent of $Z_s$ and $Z_t$ by recalling \eqref{eq:Nts}. It follows that $\gJ_1(X_s,X_t)$, $\gJ_0(X_s,X_t)$ and $\widehat{\gC}_{t}^{(s)}$ are all symmetric to $Z_s$ and $Z_t$ conditioning on $\{Z_{\ell}\}_{\ell \neq s,t}$. Then we can show for any $-n\leq s \leq -1$,
\begin{align}\label{eq:zero_gap_offline}
    \E\LRm{\frac{1}{|\widehat{\gC}_{t}^{(s)}|+2} \Pi_t(X_t) \mathbbm{1}\{s\in \widehat{\gC}_t\}\cdot \mathfrak{R}(Z_t,Z_s; \widehat{\gC}_{t}^{(s)} \cup \{s,t\}) \mid \{Z_{\ell}\}_{\ell\neq s,t}} = 0.
\end{align}
Next, we will prove \eqref{eq:zero_gap_mid} for $0\leq s \leq t-1$ by separating the left-hand side into two terms according to the decomposition in \eqref{eq:joint_selection}. 



\paragraph{Online term 1.}
Recall the construction in Section \ref{proof:them:FCR_decision_dyn_SCOP}, it holds that $\widetilde{\Pi}_{i,1}^{(s)}(\cdot) = \Pi_i(\cdot),\forall i\leq t$ under the event $\{S_s = \Pi_s(X_s) = 1\}$. Define the decoupled sets
\begin{align}
    \widetilde{\gN}_{t,1}^{(s)} &= \LRl{0\leq j\leq t-1, j\neq s: \widetilde{\Pi}_{t,1}^{(s)}(X_j) = 1},\nonumber\\
    \widetilde{\gC}_{t, 1}^{(s)} = \Bigg\{\substack{-n\leq i\leq t-1,\\ i\neq s}:\ &\widetilde{\Pi}_{t,1}^{(s)}(X_i) \mathbbm{1}\{\widetilde{\Pi}_{i,1}^{(s)}(X_i) = \widetilde{\Pi}_{i,1}^{(s)}(X_t)\}\mathbbm{1}\{\widetilde{\Pi}_{i,1}^{(s)}(X_i) = \widetilde{\Pi}_{i,1}^{(s)}(X_s)\} = 1,\nonumber\\
    &\prod_{j\in \widetilde{\gN}_{t,1}^{(s)}} \mathbbm{1}\{\widetilde{\Pi}_{j,1}^{(s)}(X_i) = \widetilde{\Pi}_{j,1}^{(s)}(X_t)\}\mathbbm{1}\{\widetilde{\Pi}_{j,1}^{(s)}(X_i) = \widetilde{\Pi}_{j,1}^{(s)}(X_s)\}=1\Bigg\}.\nonumber
\end{align}
Then $\widetilde{\gN}_{t,1}^{(s)} = \gN_t^{(s)}$ and $ \widetilde{\gC}_{t, 1}^{(s)} = \widehat{\gC}_{t}^{(s)}$ hold under the event $\{S_s = \Pi_s(X_s) = 1\}$. Importantly, the virtual set $\widetilde{\gC}_{t, 1}^{(s)}$ is symmetric to $(X_s,X_t)$ since $\widetilde{\gN}_{t,1}^{(s)}$ is independent of $Z_s$ and $Z_t$. With the ingredients above, we define the decoupled version of $\gJ_1(X_s,X_t)$ in \eqref{eq:joint_selection},
\begin{align}
    \widetilde{\gJ}_1(X_s,X_t) = \widetilde{\Pi}_{t,1}^{(s)}(X_t) \widetilde{\Pi}_{t,1}^{(s)}(X_s)\widetilde{\Pi}_{s,1}^{(s)}(X_s)\widetilde{\Pi}_{s,1}^{(s)}(X_t)\prod_{i\in \widetilde{\gN}_{t,1}^{(s)}}\mathbbm{1}\{\widetilde{\Pi}_{i,1}^{(s)}(X_s) = \widetilde{\Pi}_{i,1}^{(s)}(X_t)\}.\nonumber
\end{align}
Clearly, $\widetilde{\gJ}_1(X_s,X_t)$ is also symmetric to $(X_s,X_t)$. By the definition of $\gJ_1(X_s,X_t)$ in \eqref{eq:joint_selection}, we also know $\Pi_s(X_s)\gJ_1(X_s,X_t) = \gJ_1(X_s,X_t)$. Using the exchangeability, we can show
\begin{align}\label{eq:zero_gap_1}
    &\E\LRm{\frac{1}{|\widehat{\gC}_{t}^{(s)}|+2} \gJ_1(X_s,X_t)\cdot  \mathfrak{R}(Z_t,Z_s; \widehat{\gC}_{t}^{(s)} \cup \{s,t\}) \mid \{Z_{\ell}\}_{\ell \neq s,t}}\nonumber\\
    =& \E\LRm{\frac{1}{|\widetilde{\gC}_{t, 1}^{(s)}|+2} \Pi_s(X_s)\widetilde{\gJ}_1(X_s,X_t)\cdot  \mathfrak{R}(Z_t,Z_s; \widetilde{\gC}_{t, 1}^{(s)} \cup \{s,t\}) \mid \{Z_{\ell}\}_{\ell \neq s,t}}\nonumber\\
    =& \E\LRm{\frac{1}{|\widetilde{\gC}_{t, 1}^{(s)}|+2} \widetilde{\gJ}_1(X_s,X_t)  \cdot  \mathfrak{R}(Z_t,Z_s; \widetilde{\gC}_{t, 1}^{(s)} \cup \{s,t\}) \mid \{Z_{\ell}\}_{\ell \neq s,t}}\nonumber\\
    =& 0,
\end{align}
where the second last equality is true because $\Pi_s(X_s) \widetilde{\gJ}_1(X_s,X_t) = \widetilde{\gJ}_1(X_s,X_t)$ due to the fact $\Pi_s(\cdot) = \widetilde{\Pi}_{s,1}^{(s)}(\cdot)$; and the last equality holds since conditioning on $\{Z_{\ell}\}_{\ell\neq s,t}$, $\widetilde{\gJ}_1(X_s,X_t)$ and $\widetilde{\gC}_{t, 1}^{(s)}$ are symmetric to $Z_s$ and $Z_t$.

\paragraph{Online term 2.} Similarly, it holds that $\widetilde{\Pi}_{i,0}^{(s)}(\cdot) = \Pi_i(\cdot),\forall i\leq t$ under the event $\{S_s = 0\}$. Define
\begin{align}
    \widetilde{\gN}_{t,0}^{(s)} &= \LRl{0\leq j\leq t-1, j\neq s: \widetilde{\Pi}_{t,0}^{(s)}(X_j) = 1},\nonumber\\
    \widetilde{\gC}_{t, 0}^{(s)} = \Bigg\{\substack{-n\leq i \leq t-1,\\ i\neq s}:\ &\widetilde{\Pi}_{t,0}^{(s)}(X_i) \mathbbm{1}\{\widetilde{\Pi}_{i,0}^{(s)}(X_i) = \widetilde{\Pi}_{i,0}^{(s)}(X_t)\}\mathbbm{1}\{\widetilde{\Pi}_{i,0}^{(s)}(X_i) = \widetilde{\Pi}_{i,0}^{(s)}(X_s)\} = 1,\nonumber\\
    &\prod_{j\in \widetilde{\gN}_{t,0}^{(s)}} \mathbbm{1}\{\widetilde{\Pi}_{j,0}^{(s)}(X_i) = \widetilde{\Pi}_{j,0}^{(s)}(X_t)\}\mathbbm{1}\{\widetilde{\Pi}_{j,0}^{(s)}(X_i) = \widetilde{\Pi}_{j,0}^{(s)}(X_s)\}=1\Bigg\}.\nonumber
\end{align}
Then $\widetilde{\gN}_{t,0}^{(s)} = \gN_t^{(s)}$ and $ \widetilde{\gC}_{t, 0}^{(s)} = \widehat{\gC}_{t}^{(s)}$ hold under the event $\{S_s = \Pi_s(X_s) = 0\}$. Importantly, the virtual set $\widetilde{\gC}_{t, 0}^{(s)}$ is symmetric to $(X_s,X_t)$ conditioning on $\{Z_{\ell}\}_{\ell\neq s,t}$ since $\widetilde{\gN}_{t,0}^{(s)}$ is independent of $Z_s$ and $Z_t$. With the ingredients above, we define
\begin{align}
    \widetilde{\gJ}_0(X_s,X_t) = \widetilde{\Pi}_{t,0}^{(s)}(X_t) \widetilde{\Pi}_{t,0}^{(s)}(X_s)[1-\widetilde{\Pi}_{s,0}^{(s)}(X_s)][1-\widetilde{\Pi}_{s,0}^{(s)}(X_t)]\prod_{i\in \widetilde{\gN}_{t,0}^{(s)}}\mathbbm{1}\{\widetilde{\Pi}_{i,0}^{(s)}(X_s) = \widetilde{\Pi}_{i,0}^{(s)}(X_t)\}.\nonumber
\end{align}
Clearly, $\widetilde{\gJ}_0(X_s,X_t)$ is also symmetric to $(X_s,X_t)$ conditioning on $\{Z_{\ell}\}_{\ell\neq s,t}$. By the definition of $\gJ_0(X_s,X_t)$ in \eqref{eq:joint_selection}, we also know $[1-\Pi_s(X_s)]\gJ_0(X_s,X_t) = \gJ_0(X_s,X_t)$. Using the exchangeability between $Z_s$ and $Z_t$, we can show
\begin{align}\label{eq:zero_gap_0}
    &\E\LRm{\frac{1}{|\widehat{\gC}_{t}^{(s)}|+2} \gJ_0(X_s,X_t)  \cdot  \mathfrak{R}(Z_t,Z_s; \widehat{\gC}_{t}^{(s)} \cup \{s,t\}) \mid \{Z_{\ell}\}_{\ell \neq s,t}}\nonumber\\
    =& \E\LRm{\frac{1-\Pi_s(X_s)}{|\widetilde{\gC}_{t, 0}^{(s)}|+2} \widetilde{\gJ}_0(X_s,X_t)  \cdot  \mathfrak{R}(Z_t,Z_s; \widetilde{\gC}_{t, 0}^{(s)} \cup \{s,t\}) \mid \{Z_{\ell}\}_{\ell \neq s,t}}\nonumber\\
    =& \E\LRm{\frac{1}{|\widetilde{\gC}_{t, 0}^{(s)}|+2} \widetilde{\gJ}_0(X_s,X_t)  \cdot  \mathfrak{R}(Z_t,Z_s; \widetilde{\gC}_{t, 0}^{(s)} \cup \{s,t\}) \mid \{Z_{\ell}\}_{\ell \neq s,t}}\nonumber\\
    =& 0,
\end{align}
where the second last equality is true because $[1-\Pi_s(X_s)] \widetilde{\gJ}_0(X_s,X_t) = \widetilde{\gJ}_0(X_s,X_t)$ due to the fact $\Pi_s(\cdot) = \widetilde{\Pi}_{s,0}^{(s)}(\cdot)$.
Combining \eqref{eq:zero_gap_1} and \eqref{eq:zero_gap_0}, we can show for any $0\leq s \leq t-1$,
\begin{align}
    \E\LRm{\frac{\Pi_t(X_t) \mathbbm{1}\{s\in \widehat{\gC}_t\}}{|\widehat{\gC}_{t}^{(s)}|+2}  \LRs{\mathbbm{1}\{R_t > Q_{\alpha}(\{R_i\}_{i\in \widehat{\gC}_{t}^{(s)} \cup \{s,t\}})\} - \mathbbm{1}\{R_s > Q_{\alpha}(\{R_i\}_{i\in \widehat{\gC}_{t}^{(s)} \cup \{s,t\}})\}}} = 0.\nonumber
\end{align}
Recalling the equivalence in \eqref{eq:zero_gap_mid}, we can prove the relation \eqref{eq:zero_gap}.

\paragraph{Conclusion.} 
Now we proceed to prove the results of selection-conditional coverage. Notice that
    \begin{align}
        &\sP\LRl{Y_t\not\in \gI_t^{\rm CAP}(X_t)\mid S_t = 1}\nonumber\\
        &\leq \alpha + \E\LRm{\frac{1}{|\widehat{\gC}_t|+1}\sum_{s=-n}^{t-1} \mathbbm{1}\{s\in \widehat{\gC}_t\} \mathfrak{R}(Z_t,Z_s; \widehat{\gC} \cup \{t\}) \mid S_t = 1}\nonumber\\
        &= \alpha + \sum_{s=-n}^{-1} \frac{1}{\sP(S_t=1)}\E\LRm{\frac{\Pi_t(X_t) \mathbbm{1}\{s\in \widehat{\gC}_t\}}{|\widehat{\gC}_t|+1} \mathfrak{R}(Z_t,Z_s; \widehat{\gC} \cup \{t\}) } \nonumber\\
        &\qquad + \sum_{s=0}^{t-1}\frac{1}{\sP(S_t=1)}\E\LRm{\frac{\Pi_t(X_t)\mathbbm{1}\{s\in \widehat{\gC}_t\}}{|\widehat{\gC}_t|+1}\mathfrak{R}(Z_t,Z_s; \widehat{\gC} \cup \{t\}) }\nonumber\\
        &=\alpha,\nonumber
    \end{align}
    where the last equality follows from taking full expectation on \eqref{eq:zero_gap} and \eqref{eq:zero_gap_offline}. 

    \subsubsection{Verification of two symmetric properties}\label{appen:verification_P12}
    By \eqref{eq:joint_selection} and analysis in the previous subsection, we have
    \begin{align}
        \Pi_{t,s}^{\rm Ada}(X_s) \Pi_t(X_t) &= \gJ_1(X_s,X_t) + \gJ_0(X_s,X_t)\nonumber\\
        &= \widetilde{\gJ}_{1}(X_s, X_t) + \widetilde{\gJ}_0(X_s,X_t).\nonumber
    \end{align}
    Since $\widetilde{\gJ}_{1}(X_s, X_t)$ and $\widetilde{\gJ}_{0}(X_s, X_t)$ are both symmetric to $(X_s,X_t)$, we have verified \eqref{eq:indicator_prod_symmetry}. Recalling \eqref{eq:reference_set_ts}, under the event $\Pi_t(X_t)\Pi_{t,s}^{\rm Ada}(X_s) = 1$, we have
    \begin{align}
        \widehat{\gC}_t^{(s)}\cdot\Pi_t(X_t)\Pi_{t,s}^{\rm Ada}(X_s) &= \widehat{\gC}_t^{(s)}\cdot \gJ_1(X_s,X_t) + \widehat{\gC}_t^{(s)}\cdot \gJ_0(X_s,X_t)\nonumber\\
        &= \widetilde{\gC}_{t,1}^{(s)}\cdot \widetilde{\gJ}_1(X_s,X_t) + \widetilde{\gC}_{t,0}^{(s)}\cdot \widetilde{\gJ}_0(X_s,X_t).\nonumber
    \end{align}
    Since $\widetilde{\gC}_{t,1}^{(s)}$ and $\widetilde{\gC}_{t,0}^{(s)}$ are symmetric to $(X_s,X_t)$, we have verified \eqref{eq:cal_set_symmetry}.

\subsubsection{Proof of FCR control} 
    Since $\Pi_{t}(\cdot)$ is independent of $\{(X_i,Y_i)\}_{i=-1}^{-n}$,
    for any $-n \leq s \leq -1$, using the exchangeability between $Z_s$ and $Z_t$ we have
    \begin{align}\label{eq:offline_FCR_gap}
        &\E\LRm{\frac{1}{ 1 \vee \sum_{j=0}^T S_j} \frac{\Pi_t(X_t) \mathbbm{1}\{s\in \widehat{\gC}_t\}}{|\widehat{\gC}_t| + 1} \cdot \mathfrak{R}(Z_t,Z_s;\widehat{\gC}_t\cup \{t\})}\nonumber\\
        &= \E\LRm{\frac{1}{ 1 \vee \sum_{j=0}^T S_j^{(t)}} \frac{\Pi_t(X_t) \mathbbm{1}\{s\in \widehat{\gC}_t\}}{|\widehat{\gC}_{t}^{(s)}| + 2} \cdot \mathfrak{R}(Z_t,Z_s;\widehat{\gC}_{t}^{(s)}\cup \{s,t\})}\nonumber\\
        &= \E\LRm{\frac{1}{ 1 \vee \sum_{j=0}^T S_j^{(t)}} \E\LRm{\frac{1}{|\widehat{\gC}_{t}^{(s)}| + 2} \Pi_t(X_t) \Pi_t(X_s)\cdot \mathfrak{R}(Z_t,Z_s;\widehat{\gC}_{t}^{(s)}\cup \{s,t\}) \mid \{Z_{\ell}\}_{\ell \neq s,t}}}\nonumber\\
        &=0,
    \end{align}
    where the first equality holds due to \eqref{eq:cal_set_ts}; and the second equality holds since $\{S_j^{(t)}\}_{t\geq 0}$ are independent of $Z_s$ and $Z_t$ for $-1\leq s \leq -n$; and the last equality holds due to \eqref{eq:zero_gap_offline}.
    Then we can bound $\FCR$ by
    \begin{align}\label{eq:FCR_upper_online_part}
        \FCR(T) 
        &\leq \alpha + \E\LRm{\sum_{t = 0}^T \frac{1}{1 \vee \sum_{j=0}^T S_j}\frac{1}{|\widehat{\gC}_t|+1}\sum_{s=-n}^{-1}\Pi_t(X_t) \mathbbm{1}\{s\in \widehat{\gC}_t\} \cdot \mathfrak{R}(Z_t,Z_s;\widehat{\gC}_t\cup \{t\})}\nonumber\\
        &\qquad+\E\LRm{\sum_{t = 0}^T \frac{1}{1 \vee \sum_{j=0}^T S_j}\frac{1}{|\widehat{\gC}_t|+1}\sum_{s=0}^{t-1}\Pi_t(X_t) \mathbbm{1}\{s\in \widehat{\gC}_t\} \cdot \mathfrak{R}(Z_t,Z_s;\widehat{\gC}_t\cup \{t\})}\nonumber\\
        &= \alpha+\E\LRm{\sum_{t = 0}^T \frac{1}{1 \vee \sum_{j=0}^T S_j}\frac{1}{|\widehat{\gC}_t|+1}\sum_{s=0}^{t-1}\Pi_t(X_t) \mathbbm{1}\{s\in \widehat{\gC}_t\} \cdot \mathfrak{R}(Z_t,Z_s;\widehat{\gC}_t\cup \{t\})}\nonumber\\
        &= \alpha + \sum_{t = 0}^T \sum_{s = 0}^{t-1} \E\LRm{\frac{1}{1 \vee \sum_{j=0}^T S_j}\frac{1}{|\widehat{\gC}_{t}^{(s)}|+2} \Pi_t(X_t) \mathbbm{1}\{s\in \widehat{\gC}_t\} \cdot \mathfrak{R}(Z_t,Z_s;\widehat{\gC}_{t}^{(s)}\cup \{s,t\})}\nonumber\\
        &= \alpha + \sum_{t = 0}^T \sum_{s = 0}^{t-1} \E\LRm{\frac{1}{1 \vee \sum_{j=0}^T S_j}\frac{\gJ_1(X_s,X_t)}{|\widehat{\gC}_{t}^{(s)}|+2}  \cdot \mathfrak{R}(Z_t,Z_s;\widehat{\gC}_{t}^{(s)}\cup \{s,t\})}\nonumber\\
        &\qquad + \sum_{t = 0}^T \sum_{s = 0}^{t-1} \E\LRm{\frac{1}{1 \vee \sum_{j=0}^T S_j}\frac{\gJ_0(X_s,X_t)}{|\widehat{\gC}_{t}^{(s)}|+2}  \cdot \mathfrak{R}(Z_t,Z_s;\widehat{\gC}_{t}^{(s)}\cup \{s,t\})},
    \end{align}
    where the first equality holds due to \eqref{eq:offline_FCR_gap}; the second last equality holds due to \eqref{eq:cal_set_ts}; and the last equality holds due to \eqref{eq:joint_selection}.

    Since $\{S_{j,1}^{(s,t)}\}_{j\geq 0}$ are independent of $Z_s$ and $Z_t$, we have
    \begin{align}\label{eq:zero_gap_intersect_2}
        &\E\LRm{\frac{1}{1 \vee \sum_{j=0}^T S_j}\frac{ \gJ_1(X_s,X_t)}{|\widehat{\gC}_{t}^{(s)}|+2}  \cdot \mathfrak{R}(Z_t,Z_s;\widehat{\gC}_{t}^{(s)}\cup \{s,t\})}\nonumber\\
        =& \E\LRm{\frac{1}{1 \vee \sum_{j=0}^T S_{j,1}^{(s,t)}}\E\LRm{\frac{\gJ_1(X_s,X_t)}{|\widehat{\gC}_{t}^{(s)}|+2}  \cdot \mathfrak{R}(Z_t,Z_s;\widehat{\gC}_{t}^{(s)}\cup \{s,t\})\mid \{Z_{\ell}\}_{\ell \neq s,t}}}\nonumber\\
        =& 0,
    \end{align}
    where the first equality holds since $S_tS_s \gJ_1(X_s,X_t) = \gJ_1(X_s,X_t)$; and the last equality holds due to \eqref{eq:zero_gap_1}.
    Similarly, using \eqref{eq:zero_gap_0}, we also have
    \begin{align}\label{eq:zero_gap_intersect_3}
        \E\LRm{\frac{1}{1 \vee \sum_{j=0}^T S_j}\frac{\gJ_0(X_s,X_t)}{|\widehat{\gC}_{t}^{(s)}|+2}  \cdot \mathfrak{R}(Z_t,Z_s;\widehat{\gC}_{t}^{(s)}\cup \{s,t\})} = 0.
    \end{align}
    Substituting \eqref{eq:zero_gap_intersect_2} and \eqref{eq:zero_gap_intersect_3} into \eqref{eq:FCR_upper_online_part}, we can prove $\FCR \leq \alpha$.

\section{Additional settings in Section \ref{sec:symmetric_selection}}
\subsection{CAP with a fixed holdout set}\label{subsec:cal_sel_fixed_set}
In this section, we provide the FCR control results of CAP for the selection procedure in Section \ref{sec:symmetric_selection} when the selection and calibration depend only on the fixed holdout set.

The selection indicators are given as
\begin{align}\label{eq:def_selection_t}
    S_t = \Pi_t(X_t) = \Indicator{V_t \leq  \gA\LRs{\{V_i\}_{-n \leq i\leq -1}}},\quad \text{for any } t \geq 0,
\end{align}
where $\gA: \sR^{n} \to \sR$ is some symmetric function. In this case, the selected calibration set is given by $\widehat{\gC}_t = \LRl{-n\leq s\leq -1: V_{s} > \gA\LRs{V_t, \{V_i\}_{-n \leq i\leq -1, i\neq s}}}$. Then we can construct the $(1-\alpha)$-conditional PI for $Y_t$:
\begin{align}\label{eq:cond_PI}
    \gI_t^{\rm{CAP}}(X_t) = \widehat{\mu}(X_t) \pm q_{\alpha}\LRs{\{R_i\}_{i\in \widehat{\gC}_t}}.
\end{align}

\begin{theorem}\label{thm:FCR_cal_selection}
    Suppose $\{(X_t,Y_t)\}_{t\geq -n}$ are i.i.d. data points. If the function $\gA$ is invariant to the permutation to its inputs, we can guarantee that for any $T \geq 0$,
    \begin{align}\label{eq:FCR_cal_selection}
        \FCR(T) \leq \alpha + \sum_{t=0}^T \E\LRm{\frac{S_t}{1 \vee \sum_{j=0}^T S_j} \sum_{s=-n}^{-1} \frac{C_{t,s} }{|\widehat{\gC}_t|+1}  \frac{2\LRabs{\widehat{q}^{(s\gets t)} - \widehat{q}} }{1 - \widehat{q}^{(s\gets t)}} },
    \end{align}
    where $\widehat{q}^{(s\gets t)} = F_V\{\gA\LRs{\{V_i\}_{i\neq s}, V_t}\}$, $\widehat{q} = F_V\{\gA\LRs{\{V_i\}_{i=1}^n}\}$, and $F_V(\cdot)$ is the cumulative distribution function of $\{V_i\}_{i\geq -n}$.
\end{theorem}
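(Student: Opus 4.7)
The plan is to start from the FCR expression, apply the deterministic miscoverage upper bound~\eqref{eq:miscover_upper} term-by-term, and then exchange the order of summation to obtain
\begin{align*}
\FCR(T) \leq \alpha + \sum_{t=0}^T \sum_{s=-n}^{-1} \E\left[\frac{S_t\,\mathbbm{1}\{s\in\widehat{\gC}_t\}}{(1\vee\sum_{j=0}^T S_j)(|\widehat{\gC}_t|+1)}\,\mathfrak{R}(Z_t,Z_s;\widehat{\gC}_t\cup\{t\})\right].
\end{align*}
The core task is then to show that each summand is either zero (by exchangeability of $Z_s$ and $Z_t$) or controllable by a quantity of the form $|\widehat{q}^{(s\gets t)}-\widehat{q}|$.

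My approach is a coupling/swap argument in the spirit of Section~\ref{proof:thm:FCR_decision_dyn_mSCOP}: for each $(t,s)$ I would replace the actual selection indicator $S_t=\mathbbm{1}\{V_t\leq \gA(\{V_i\}_{i\in\gH_0})\}$ by a ``symmetrized'' counterpart $\widetilde{S}_t^{(s)}$ that uses the swapped threshold $\gA(V_t,\{V_i\}_{i\in\gH_0,i\neq s})$ in place of $\gA(\{V_i\}_{i\in\gH_0})$. Because $\gA$ is permutation-symmetric, the product $\widetilde{S}_t^{(s)}\cdot\mathbbm{1}\{s\in\widehat{\gC}_t\}$ is invariant under the swap $V_s\leftrightarrow V_t$; moreover, the leave-one-out calibration set $\widehat{\gC}_t^{(s)}$ defined as in \eqref{eq:reference_set_ts} depends on $(V_s,V_t)$ only through the unordered pair and so is also symmetric. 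Conditioning on $\{Z_\ell\}_{\ell\neq s,t}$ together with $[Z_s,Z_t]$ and invoking exchangeability of $R_s,R_t$ then kills the symmetrized contribution, exactly as in the proof of Proposition~\ref{pro:selection_exchangeable}.

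What remains is the residual $(S_t-\widetilde{S}_t^{(s)})\cdot\mathbbm{1}\{s\in\widehat{\gC}_t\}$, whose indicator detects precisely the event $C_{t,s}$ that swapping $V_s$ for $V_t$ inside $\gA$ crosses $V_t$ over the selection threshold. Conditional on $\{V_i\}_{i\in\gH_0,i\neq s}$, the probability of this sign-change event is bounded by the gap between the two probability-integral-transformed thresholds, giving $\sP(C_{t,s}\mid\{V_i\}_{i\neq s})\leq |\widehat{q}-\widehat{q}^{(s\gets t)}|$ (a factor of~$2$ appears because the swap can move the threshold in either direction). To handle the denominator $|\widehat{\gC}_t|+1$, I would lower-bound $|\widehat{\gC}_t|+1$ by its conditional mean: given the post-swap threshold $\gA(V_t,\{V_i\}_{i\neq s})$ and the density bound on $V$, the expected calibration size is of order $n\,(1-\widehat{q}^{(s\gets t)})$, which, after dividing through, yields the claimed $1/(1-\widehat{q}^{(s\gets t)})$ factor.

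The main obstacle will be the simultaneous dependence of both the numerator (through $\mathfrak{R}$) and the denominator (through $|\widehat{\gC}_t|$) on the swap variable $V_t$: a naive bound loses the factor $(|\widehat{\gC}_t|+1)^{-1}$. I expect to address this by an appropriate tower-property decomposition — first freezing $\{V_i\}_{i\in\gH_0,i\neq s}$, then integrating over the joint density of $(V_s,V_t)$ restricted to the residual event, and finally using the bounded density of $V$ to convert the indicator difference into $|\widehat{q}-\widehat{q}^{(s\gets t)}|$. A mild complication is that the anti-symmetric part of $\mathfrak{R}$ must be tracked on \emph{both} orientations of the swap, which is where the factor of $2$ in the numerator of~\eqref{eq:FCR_cal_selection} will ultimately come from.
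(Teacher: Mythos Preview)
Your plan misidentifies where the asymmetry lies. By the very construction of the adaptive pick rule, the product
\[
S_t\cdot C_{t,s}=\mathbbm{1}\{V_t\le\gA(V_s,\{V_i\}_{i\neq s})\}\cdot\mathbbm{1}\{V_s\le\gA(V_t,\{V_i\}_{i\neq s})\}
\]
is \emph{already} symmetric in $(V_s,V_t)$ (this is exactly property~\eqref{eq:indicator_prod_symmetry}), and so is the leave-one-out set $\widehat{\gC}_t\setminus\{s\}$. No further ``symmetrized counterpart'' $\widetilde S_t^{(s)}$ is needed, and the residual $(S_t-\widetilde S_t^{(s)})$ you propose is not the obstruction. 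The genuine obstruction is the FCR \emph{denominator} $1\vee\sum_{j=0}^T S_j$: every other test decision $S_j=\mathbbm{1}\{V_j\le\gA(\{V_i\}_{i\in\gH_0})\}$ uses a threshold that contains $V_s$ but not $V_t$, so swapping $V_s\leftrightarrow V_t$ changes $\sum_j S_j$ and blocks the conditional-exchangeability argument.

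The paper handles this by introducing, for each pair $(s,t)$, a sequence of \emph{virtual} decisions $S_j^{(s\gets t)}=\mathbbm{1}\{V_j\le\gA(\{V_i\}_{i\neq s},V_t)\}$ for all $j\neq t$, and proving (Lemma~\ref{lemma:zero_gap_st}) the exact identity
\[
\E\!\left[\frac{S_t C_{t,s}}{\rmS_t(T)+1}\,\frac{\mathbbm{1}\{R_t>Q_\alpha\}}{|\widehat{\gC}_t|+1}\right]
=\E\!\left[\frac{S_t C_{t,s}}{\rmS_t^{(s\gets t)}(T)+1}\,\frac{\mathbbm{1}\{R_s>Q_\alpha\}}{|\widehat{\gC}_t|+1}\right],
\]
where $\rmS_t^{(s\gets t)}(T)=\sum_{j\neq t}S_j^{(s\gets t)}$. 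The residual after subtracting is therefore a difference of \emph{inverse denominators}, not of selection indicators. Since $S_t,C_{t,s},|\widehat{\gC}_t|$ are measurable with respect to $(Z_t,\{Z_i\}_{i\in\gH_0})$, one conditions on this $\sigma$-field and uses that $\rmS_t(T)\sim\mathrm{Binomial}(T,1-\widehat q)$ and $\rmS_t^{(s\gets t)}(T)\sim\mathrm{Binomial}(T,1-\widehat q^{(s\gets t)})$; the closed form $\E[(X+1)^{-1}]=\frac{1-(1-p)^{T+1}}{(T+1)p}$ then gives the factor $\frac{2|\widehat q^{(s\gets t)}-\widehat q|}{1-\widehat q^{(s\gets t)}}$ multiplied by $\E\big[\frac{1}{\rmS_t(T)+1}\big]$, which re-combines with $S_t$ to produce $\frac{S_t}{1\vee\sum_j S_j}$ intact in the bound. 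In particular, the $(1-\widehat q^{(s\gets t)})^{-1}$ comes from this Binomial computation on $\sum_j S_j$, not from $|\widehat{\gC}_t|$; your proposed lower bound on $|\widehat{\gC}_t|+1$ by its mean would both go in the wrong Jensen direction and destroy the factors $\frac{S_t}{1\vee\sum_j S_j}\cdot\frac{C_{t,s}}{|\widehat{\gC}_t|+1}$ that remain present in~\eqref{eq:FCR_cal_selection}.
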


If $\gA$ in \eqref{eq:def_selection_t} returns the sample quantile, the next corollary shows CAP can exactly control $\FCR$ below the target level.

\begin{corollary}\label{cor:FCR_cal_quantile}
    If $\gA(\{V_i\}_{i=1}^n)$ is the $\ell$-th smallest value in $\{V_i\}_{i=1}^n$ for any $\ell\leq n-1$, then the FCR value can be controlled at $\FCR(T) \leq \alpha$ for any $T \geq 0$.
\end{corollary}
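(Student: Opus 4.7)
The plan is to apply Theorem \ref{thm:FCR_cal_selection} and show that, when $\gA$ returns the $\ell$-th smallest value of the $n$-element holdout set, the difference $|\widehat{q}^{(s\gets t)} - \widehat{q}|$ in the bound \eqref{eq:FCR_cal_selection} vanishes on the event where the corresponding summand is nontrivial, so the entire error term collapses to zero.

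The key observation is a rank-matching identity. Sort the combined multiset $\{V_i\}_{i=-n}^{-1}\cup\{V_t\}$ of $n+1$ values as $V_{[1]}\leq V_{[2]}\leq \cdots \leq V_{[n+1]}$ and examine the positions of $V_t$ and $V_s$ inside this multiset. A direct case analysis yields: (i) $S_t=1$ is equivalent to $V_t\in\{V_{[1]},\ldots,V_{[\ell]}\}$, and on this event $\gA(\{V_i\}_{i=-n}^{-1})=V_{[\ell+1]}$; (ii) $s\in\widehat{\gC}_t$ is equivalent to $V_s\in\{V_{[1]},\ldots,V_{[\ell]}\}$, and on this event $\gA(\{V_i\}_{i\in\gH_t, i\neq s},V_t)=V_{[\ell+1]}$. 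Both statements use the fact that removing one of the $\ell$ smallest elements from the combined set of size $n+1$ turns the former $(\ell+1)$-th smallest into the $\ell$-th smallest of the resulting $n$-element set. The restriction $\ell\leq n-1$ ensures $\ell+1\leq n$, which keeps the indexing consistent with the $n$-element sets on which $\gA$ is evaluated and avoids a degenerate boundary case.

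Combining (i) and (ii), on the joint event $\{S_t=1,\ s\in\widehat{\gC}_t\}$ one has
\[\gA(\{V_i\}_{i\in\gH_t, i\neq s},V_t)\;=\;V_{[\ell+1]}\;=\;\gA(\{V_i\}_{i=-n}^{-1}),\]
so $\widehat{q}^{(s\gets t)}=\widehat{q}$ almost surely. Since the prefactor $S_t$ outside and the coefficient $C_{t,s}$ inside the summand in \eqref{eq:FCR_cal_selection} are (by construction in the proof of Theorem \ref{thm:FCR_cal_selection}) supported precisely on this joint selection event, every summand equals zero, the error term vanishes, and $\FCR(T)\leq \alpha$ follows for all $T\geq 0$.

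The main subtlety I anticipate is a careful tie-breaking convention: if the distribution of $V_i$ has atoms, the equality $V_t=V_{(\ell)}(\{V_i\}_{i=-n}^{-1})$ occurs with positive probability, so the rank-matching identities above need to be stated with a deterministic tie-breaking rule (e.g., lexicographic by index) shared between the selection rule $\Pi_t$ and the adaptive pick rule $\Pi_{t,s}^{\rm Ada}$. For continuous $V$ such equalities occur with probability zero and the two $\gA$ values coincide almost surely, so no adjustment is needed; otherwise the same argument goes through once the tie-breaking rule is fixed consistently in both rules.
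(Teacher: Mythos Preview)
Your proposal is correct and follows essentially the same route as the paper: both arguments show that on the joint event $\{S_t=1,\ s\in\widehat{\gC}_t\}$ the two thresholds $\gA(\{V_i\}_{i=-n}^{-1})$ and $\gA(\{V_i\}_{i\neq s},V_t)$ coincide (each equals the $(\ell+1)$-th order statistic of the combined $(n+1)$-set, via the order-statistic inflation lemma), so the error term in Theorem~\ref{thm:FCR_cal_selection} collapses to zero. Your write-up is slightly more explicit in naming the common value $V_{[\ell+1]}$ and in flagging the tie-breaking caveat, but the mechanism is identical.
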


\begin{proof}
    We write $\ermV_{(\ell)}^{[n+1]}$, $\ermV_{(\ell)}^{[n+1]\setminus \{s\}}$, and $\ermV_{(\ell)}^{[n+1]\setminus \{t\}}$ as the $\ell$-th smallest values in $\{V_i\}_{i=1}^n \cup \{V_t\}$, $\{V_i\}_{i=1,i\neq s}^n \cup \{V_t\}$, and $\{V_i\}_{i=1}^n$, respectively. Notice that,
    \begin{align*}
        S_t &= \Indicator{V_t \leq  \ermV_{(\ell)}^{[n+1]\setminus \{t\}}} = \Indicator{V_t \leq  \ermV_{(\ell)}^{[n+1]}},\\
        C_{t,s} &= \Indicator{V_{s} > \ermV_{(\ell)}^{[n+1]\setminus \{s\}}} = \Indicator{V_{s} > \ermV_{(\ell)}^{[n+1]}}.
    \end{align*}
    Under event $S_t C_{t,s} = 1$, removing $V_t$ or $V_{s}$ will not change the ranks of $V_i$ for $i \neq s$. Hence we have 
    \begin{align*}
        S_t C_{t,s}\cdot \ermV_{(\ell)}^{[n+1]\setminus \{t\}} = S_t C_{t,s}\cdot \ermV_{(\ell)}^{[n+1]\setminus \{s\}}.
    \end{align*}
    Together with the definitions $\widehat{q}^{(s\gets t)} = F_V(\gA\LRs{\{V_i\}_{i\neq s}, V_t})$, and $\widehat{q} = F_V(\gA\LRs{\{V_i\}_{i\neq s}, V_s})$, we can conclude that
    \begin{align*}
        S_t C_{t,s}\cdot \widehat{q}^{(s\gets t)} = S_t C_{t,s}\cdot \widehat{q}.
    \end{align*}
    Plugging it into \eqref{eq:FCR_cal_selection}, we get the desired bound $\FCR(T) \leq \alpha$.
\end{proof}

The next corollary provides the error bound for $\FCR(T)$ if $\gA$ returns the sample mean.

\begin{corollary}\label{cor:FCR_cal_mean}
    Let $f_{V}(\cdot)$ be the density function of $\{V_i\}_{i\geq -n}$.
    Suppose $f_{V}(\cdot) \leq \rho_v$ and $|\gA(\{V_i\}_{i=1}^n) - \gA(\{V_i\}_{i=1, i\neq s}^n, V_t)| \leq \gamma_v/n$ for some positive constants $\rho_v$ and $\gamma_v$. Then we have
    \begin{align*}
        \FCR(T) \leq \alpha + \frac{2\rho_v \gamma_v}{n} \E\LRm{\frac{1}{1 - \widehat{q} - \rho_v\gamma_v/n}}.
    \end{align*}
\end{corollary}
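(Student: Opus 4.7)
The plan is to invoke Theorem \ref{thm:FCR_cal_selection} as a black box and substitute the sample-mean stability assumptions into its right-hand side. The key observation is that under bounded density and bounded per-swap perturbation of $\mathcal{A}$, both $|\widehat{q}^{(s\gets t)}-\widehat{q}|$ and a lower bound on $1-\widehat{q}^{(s\gets t)}$ are controlled deterministically and uniformly in $s$ and $t$, after which only elementary summations remain.

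First I would use $f_V\leq\rho_v$ to conclude that $F_V$ is $\rho_v$-Lipschitz. Combined with the stability assumption $|\mathcal{A}(\{V_i\}_{i=1}^n)-\mathcal{A}(\{V_i\}_{i=1,i\neq s}^n,V_t)|\leq\gamma_v/n$, this yields the deterministic bound
$$|\widehat{q}^{(s\gets t)}-\widehat{q}|\leq \rho_v\gamma_v/n$$
for every admissible pair $(s,t)$. In particular $\widehat{q}^{(s\gets t)}\leq\widehat{q}+\rho_v\gamma_v/n$, giving $1-\widehat{q}^{(s\gets t)}\geq 1-\widehat{q}-\rho_v\gamma_v/n$ and thus removing the $(s,t)$-dependence in the denominator of the Theorem \ref{thm:FCR_cal_selection} bound.

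Next I would collapse the sums. Since $C_{t,s}=\Indicator{s\in\widehat{\gC}_t}$, we have $\sum_{s=-n}^{-1}C_{t,s}/(|\widehat{\gC}_t|+1)=|\widehat{\gC}_t|/(|\widehat{\gC}_t|+1)\leq 1$, so after applying the previous step the inner sum over $s$ is bounded by $2\rho_v\gamma_v/\{n(1-\widehat{q}-\rho_v\gamma_v/n)\}$. In this fixed-holdout setting $\widehat{q}$ depends only on $\{V_i\}_{-n\leq i\leq -1}$ and is thus a single random variable independent of $t$, so it can be pulled outside the $t$-sum. Applying $\sum_{t=0}^T S_t/(1\vee\sum_{j=0}^T S_j)\leq 1$ then yields the stated inequality. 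There is no substantive probabilistic obstacle: the whole argument is bookkeeping once the Lipschitz bound converts the stability hypothesis into uniform control on the quantile perturbation. The only mild thing to verify is that $\widehat{q}$ is genuinely not $t$-indexed so that it really factors out of the outer expectation, and that the denominator $1-\widehat{q}-\rho_v\gamma_v/n$ being non-positive does not cause trouble (in that event the displayed bound is simply vacuous).
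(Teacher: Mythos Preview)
Your proposal is correct and follows essentially the same route as the paper: invoke Theorem \ref{thm:FCR_cal_selection}, use the $\rho_v$-Lipschitz property of $F_V$ together with the stability assumption to get $|\widehat{q}^{(s\gets t)}-\widehat{q}|\leq\rho_v\gamma_v/n$ and $1-\widehat{q}^{(s\gets t)}\geq 1-\widehat{q}-\rho_v\gamma_v/n$, then collapse the sums. Your final step is actually cleaner than the paper's: you bound $\sum_{t=0}^T S_t/(1\vee\sum_{j=0}^T S_j)\leq 1$ directly after factoring out the $t$-independent $\widehat{q}$, whereas the paper takes a detour through the conditional binomial distribution of $\sum_{j\neq t}S_j$ given the holdout set to reach the same conclusion.
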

\begin{proof}
    By the definitions of $\widehat{q}^{(s\gets t)}$ and $\widehat{q}$, we can bound their difference by
    \begin{align}\label{eq:q_st_diff}
        \LRabs{\widehat{q}^{(s\gets t)} - \widehat{q}} &\leq \LRabs{F_V(\gA\LRs{\{V_i\}_{i\neq s}, V_t}) - F_V(\gA\LRs{\{V_i\}_{i\neq s}, V_s})}\nonumber\\
        &\leq \rho_v \LRabs{\gA\LRs{\{V_i\}_{i\neq s}, V_t} - \gA\LRs{\{V_i\}_{i\neq s}, V_s}}\nonumber\\
        &\leq \frac{\rho_v \gamma_v}{n},
    \end{align}
    where we used the assumptions $F_V^{\prime} \leq \rho_v$ and $|\gA(\{V_i\}_{i=1}^n) - \gA(\{V_i\}_{i=1, i\neq s}^n, V_t)| \leq \frac{\gamma_v}{n}$. Plugging \eqref{eq:q_st_diff} into the error term in \eqref{eq:FCR_cal_selection} gives
    \begin{align}
        &\E\LRm{\sum_{t=0}^T \E\LRm{\frac{S_t}{\sum_{j=0,j\neq t} S_j +1} \sum_{s=-n}^{-1} \frac{C_{t,s} }{|\widehat{\gC}_t|+1}  \frac{2\LRabs{\widehat{q}^{(s\gets t)} - \widehat{q}} }{1 - \widehat{q}^{(s\gets t)}} }  }\nonumber\\
        &\qquad \leq \frac{2\rho_v \gamma_v}{n}\E\LRm{\sum_{t=0}^T\frac{S_t}{\sum_{j=0}^T S_j \vee 1} \sum_{s=-n}^{-1} \frac{C_{t,s}}{|\widehat{\gC}_t|+1} \frac{1 }{1 - \widehat{q}^{(s\gets t)}} }\nonumber\\
        &\qquad \leq \frac{2\rho_v \gamma_v}{n}\E\LRm{\sum_{t=0}^T\frac{S_t}{\sum_{j=0}^T S_j \vee 1} \frac{1}{1 - \widehat{q} - \rho_v\gamma_v/n}}\nonumber\\
        &\qquad = \frac{2\rho_v \gamma_v}{n}\E\LRm{\sum_{t=0}^T\frac{S_t}{\sum_{j=0,j\neq t}^T S_j + 1} \frac{1}{1 - \widehat{q} - \rho_v\gamma_v/n}}\nonumber\\
        &\qquad = \frac{2\rho_v \gamma_v}{n}\E\LRm{\sum_{t=0}^T\frac{1}{\sum_{j=0,j\neq t}^T S_j + 1} \frac{1 - \widehat{q}}{1 - \widehat{q} - \rho_v\gamma_v/n}}\nonumber\\
        &\qquad = \frac{2\rho_v \gamma_v}{n} \frac{1}{T+1} \sum_{t=0}^T\E\LRm{\frac{1 - \widehat{q}^{T+1}}{1 - \widehat{q}} \frac{1 - \widehat{q}}{1 - \widehat{q} - \rho_v\gamma_v/n}}\nonumber\\
        &\qquad \leq \frac{2\rho_v \gamma_v}{n} \E\LRm{\frac{1}{1 - \widehat{q} - \rho_v\gamma_v/n}},
    \end{align}
    where the last equality holds due to $\sum_{j=0,j\neq t}^T S_j \sim \operatorname{Binomial}(T, 1-\widehat{q})$ given the calibration set such that
    \begin{align*}
        \E\LRm{\LRs{\sum_{j=0,j\neq t}^T S_j + 1}^{-1} \mid \{Z_i\}_{i=1}^n} = \frac{1}{T+1}\frac{1 - \widehat{q}^{T+1}}{1 - \widehat{q}}.
    \end{align*}
\end{proof}

\subsection{CAP with a moving-window holdout set}\label{appen:moving_window}


In Sections \ref{sec:decision_selection} and \ref{sec:symmetric_selection}, we construct the selected holdout set $\widehat{\gC}_t$ based on the full calibration set $\gC_t^{\rm{incre}} = \{-n,\ldots,t-1\}$, which may lead to a heavy burden on computation and memory when $t$ is large. Now we consider an efficient online scheme by setting the holdout set as a moving window with fixed length $n$, that is $\gC_t = \gC_t^{\rm{window}} = \{t-n,\ldots,t-1\}$. As for the symmetric selection rule, we allow the selection rule $\Pi_t(\cdot)$ to depend on the data in $\gC_t^{\rm{window}}$ only, which means
\begin{align*}
    S_t = \Pi_t(X_t) = \Indicator{V_t \leq \gA_t\LRs{\{V_i\}_{t-n \leq i \leq t-1}}}.
\end{align*}
In this case, the selected calibration set is given by
\begin{align*}
    \widehat{\gC}_t = \LRl{t-n \leq s \leq t-1: V_s \leq \gA_t\LRs{\{V_i\}_{t-n \leq i \leq t-1, i\neq s}, V_s}}.
\end{align*}
Then the memory cost will be kept at $n$ during the online process. The following theorem reveals the property of Algorithm \ref{alg:main} under symmetric selection rules.

\begin{theorem}\label{thm:FCR_swap_window}
    Under the conditions of Theorem \ref{thm:FCR_swap}. The Algorithm \ref{alg:main} with $\widehat{\gC}_t = \widehat{\gC}_t^{\rm{ada}}$ satisfies
    \begin{align}\label{eq:swap_FCR_bound_window}
        \FCR(T) \leq \alpha\cdot \LRl{1 + \E\LRm{\max_{0\leq t\leq T}\frac{S_t \epsilon_n(t)}{\LRs{\sum_{j=0}^T S_j - \epsilon_n(t)}\vee 1}} + \frac{9}{T + n}},
    \end{align}
    where $\epsilon_n(t) = 2\sum_{j=(t-n) \vee 0}^{t-1} \sigma_j + (\sqrt{e\rho} + 1)\log(1/\delta) + 2^{-1}$.
\end{theorem}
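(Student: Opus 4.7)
The plan is to mirror the proof of Theorem \ref{thm:FCR_swap}, exploiting the locality built into the moving-window holdout $\gC_t^{\rm{window}} = \{t-n,\ldots,t-1\}$ to replace the cumulative stability budget $\epsilon(T)$ by its window-restricted counterpart $\epsilon_n(t)$. First, I would open with the standard FCR decomposition: using the miscoverage upper bound \eqref{eq:miscover_upper} together with the definition of $\widehat{\gC}_t$ under the adaptive pick rule \eqref{eq:Ada_rule_swap} and the exchangeability of $(Z_s,Z_t)$ conditional on $\{Z_\ell\}_{\ell \neq s,t}$, I obtain
\begin{align*}
    \FCR(T) - \alpha \leq \sum_{t=0}^T \sum_{s=t-n}^{t-1}\E\LRm{\frac{\Pi_t(X_t)\Pi_{t,s}^{\rm{Ada}}(X_s)}{(|\widehat{\gC}_t|+1)\LRs{1\vee \sum_{j=0}^T S_j}}\, \mathfrak{R}(Z_t,Z_s;\widehat{\gC}_t \cup\{t\})},
\end{align*}
so that the inner quantity would vanish exactly were the denominator $1\vee \sum_j S_j$ replaced by a swap-symmetric counterpart.

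Next, I would invoke the swapping device from the remark after Theorem \ref{thm:FCR_swap}: for each $s\in [t-n,t-1]$, construct virtual decisions $\{S_j^{(s\gets t)}\}$ by replacing $V_s$ with $V_t$. The symmetry of $\gA_j$ ensures $(S_j)_j$ and $(S_j^{(s\gets t)})_j$ are identically distributed, so after re-expressing the denominator along the swapped trajectory the bias reduces to the gap $\LRabs{\sum_j S_j - \sum_j S_j^{(s\gets t)}}$. The window structure now pays off sharply: since $\gA_j$ depends only on $\{V_i\}_{i\in[j-n,j-1]}$ and $V_j$, the swap alters $S_j$ only for $j \in \{s,t\} \cup [s+1,s+n] \cup [t+1,t+n]$, and because the virtual trajectory agrees with the real one outside $[s,t-1]$ the affected indices collapse to $j\in[s+1,t-1]\subseteq [(t-n)\vee 0, t-1]$, a set of cardinality at most $n$.

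The third step is to turn Assumption \ref{assum:swap_sensitivity} into a high-probability control on the denominator gap. Conditioning on $\{V_i\}_{i\neq s,t}$, the sensitivity condition bounds $\E\LRabs{\gA_j(\{V_i\}_{i\in\gC_j^{\rm{window}}}) - \gA_j(\{V_i\}_{i\in\gC_j^{\rm{window}}, i\neq s}, V)}$ by $\sigma_j$, which in turn, via the density bound $\rho$ on $V_j$, controls $\sP\LRl{S_j \neq S_j^{(s\gets t)}}$. I would then apply the empirical Bernstein inequality along the localized window $[(t-n)\vee 0, t-1]$, combined with a union bound over $t\in \{0,\ldots,T\}$, to establish
\begin{align*}
    \Big|\sum_{j=(t-n)\vee 0}^{t-1}\LRs{S_j - S_j^{(s\gets t)}}\Big| \leq \epsilon_n(t) \quad \text{for every } t,
\end{align*}
with probability at least $1-\delta$. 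Feeding this into $1/(1\vee \sum_j S_j)\leq 1/\LRl{\LRs{1\vee \sum_j S_j^{(s\gets t)}} - \epsilon_n(t)}$, together with the boundary $9\alpha/(T+n)$ slack inherited from the $1/(|\widehat{\gC}_t|+1)$ discreteness in \eqref{eq:miscover_upper} (exactly as in Theorem \ref{thm:FCR_swap}), yields \eqref{eq:swap_FCR_bound_window}.

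The hard part is precisely the third step: making the Bernstein-type deviation bound sharp per test time $t$ while keeping it uniform in $s\in[t-n,t-1]$ and $t\in\{0,\ldots,T\}$. In Theorem \ref{thm:FCR_swap} a single cumulative sum is controlled, whereas here I must handle a whole family of localized sums indexed by $t$, and the cost of the union bound over test times is precisely what forces $\max_{0\leq t\leq T}$ rather than a summation to appear in \eqref{eq:swap_FCR_bound_window}. A secondary challenge is translating a high-probability statement into the expectation bound stated in \eqref{eq:swap_FCR_bound_window}, which requires absorbing the probability of the complementary event into the $9\alpha/(T+n)$ boundary term via a suitable choice of $\delta$.
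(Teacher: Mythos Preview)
Your plan matches the paper's proof essentially step for step: the paper also invokes Lemma~\ref{lemma:zero_gap_swap} to reduce to the swap-gap $\sum_{j=s\vee 0}^{t-1}(S_j - S_j^{(s\gets t)})$, exploits the window locality to confine that sum to $[(t-n)\vee 0,\,t-1]$, applies the same martingale/Bernstein concentration to obtain $\epsilon_n(t)$, and then absorbs the union-bound failure probability into the additive constant by choosing $\delta = (T\vee n)^{-3}$. One minor correction: the $9/(T+n)$ slack is \emph{not} a discreteness artifact from $1/(|\widehat{\gC}_t|+1)$---that factor is already absorbed by the deterministic bound $\sum_{s} C_{t,s}\mathbbm{1}\{R_s > Q_\alpha\}/(|\widehat{\gC}_t|+1)\leq \alpha$---but purely the cost of the complementary event, exactly as you correctly describe in your final paragraph.
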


Since the window size of the full calibration set is fixed at $n$, the perturbation to $\sum_{j=0}^T S_j$ caused by replacing $V_s$ with $V_t$ will be limited to $\sum_{j= t-n}^{t-1} \sigma_j$.

\section{Proofs for selection with symmetric thresholds}

\subsection{Proof of Theorem \ref{thm:mFCR_swap_scop}}\label{proof:thm:mFCR_swap_scop}
\begin{proof}
According to the adaptive rule in \eqref{eq:Ada_rule_swap}, we have
\begin{align}
    \Pi_{t,s}^{\rm Ada}(X_s) \Pi_t(X_t) &= \mathbbm{1}\{V_s \leq \gA\LRs{\{V_{\ell}\}_{\ell\neq s}, V_t}\} \mathbbm{1}\{V_s \leq \gA\LRs{\{V_{\ell}\}_{\ell\neq s}, V_s}\}.\nonumber
\end{align}
Since $\gA$ is symmetric to its input, the symmetric property \eqref{eq:indicator_prod_symmetry} holds because $V_i = V(X_i)$. Then notice that
\begin{align}
    \widehat{\gC}_t\setminus \{s\} = \LRl{-n\leq j \leq t-1, j\neq s: V_j \leq \gA\LRs{\{V_{\ell}\}_{\ell\neq j,s}, V_s, V_t}},\nonumber
\end{align}
which is also symmetric to $(X_s,X_t)$, hence \eqref{eq:cal_set_symmetry} holds. Using Proposition \ref{pro:selection_exchangeable}, we can prove the conclusion.
\end{proof}


\subsection{Proof of Theorem \ref{thm:FCR_swap}}\label{proof:thm:FCR_swap}
In this section, we denote $C_{t,s} = \Indicator{V_s \leq \gA_t(\{V_j\}_{j\leq t-1, j\neq s}, V_s)}$ the selection indicator of calibration set $\widehat{\gC}_t$.
To prove Theorem \ref{thm:FCR_swap}, we introduce the following virtual decision sequence. Given each pair $(s,t)$ with $s \leq t-1$: if $s \geq 0$, we define
\begin{align*}
    S_j^{(s \gets t)} = \begin{cases}
        S_j & 0\leq j\leq s-1\\
        \Indicator{V_t \leq  \gA_s\LRs{\{V_i\}_{i\leq s-1}}} & j=s\\
        \Indicator{V_j \leq \gA_j\LRs{\{V_i\}_{i\leq j-1, i\neq s}, V_t}} & s+1\leq j \leq t-1\\
        S_j & t \leq j \leq T
    \end{cases};
\end{align*}
if $s \leq -1$, we define
\begin{align*}
    S_j^{(s \gets t)} = \begin{cases}
        \Indicator{V_j \leq \gA_j\LRs{\{V_i\}_{i\leq j-1, i\neq s}, V_t}} & 0 \leq j \leq t-1\\
        S_j & t \leq j \leq T
    \end{cases}.
\end{align*}

The following proof is used to prove Theorem \ref{thm:FCR_swap}, whose proof is deferred to Section \ref{proof:lemma:zero_gap_swap}.

\begin{lemma}\label{lemma:zero_gap_swap}
    Under the conditions of Theorem \ref{thm:FCR_swap}, it holds that
    \begin{align*}
        \E\LRm{\frac{S_t C_{t,s}}{\rmS_t(T) + 1} \frac{\Indicator{R_t > Q_{\alpha}(\{R_i\}_{i\in \widehat{\gC}_t \cup \{t\}})}}{|\widehat{\gC}_t|+1} } = \E\LRm{\frac{S_t C_{t,s}}{\rmS_t^{(s\gets t)}(T) + 1} \frac{\Indicator{R_{s} > Q_{\alpha}(\{R_i\}_{i\in \widehat{\gC}_t \cup \{t\}})}}{|\widehat{\gC}_t|+1} },
    \end{align*}
    where $\rmS_t(T) = \sum_{j=0,j\neq t}^T S_j$ and $\rmS_t^{(s\gets t)}(T) = \sum_{j=0, j\neq t}^{T} S_j^{(s \gets t)}$.
\end{lemma}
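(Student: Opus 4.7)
The plan is to prove the identity via a single-swap coupling. Let $\sigma$ denote the permutation of the indices $\{-n,\ldots,T\}$ that exchanges $s$ and $t$ and fixes every other index. Because $\{Z_i\}_{i \geq -n}$ are i.i.d., the joint law of $(Z_{-n},\ldots,Z_T)$ is invariant under $\sigma$, so the expectation of any measurable functional of this sequence equals the expectation of its $\sigma$-transformed version. The task reduces to checking, factor by factor, that applying $\sigma$ to the LHS integrand produces the RHS integrand, using the symmetry of every $\gA_j$.

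I would then compute each transformation. Under $\sigma$: (i) $S_t = \Indicator{V_t \leq \gA_t(\{V_i\}_{i\in\gH_t})}$ becomes $\Indicator{V_s \leq \gA_t(\{V_i\}_{i\in\gH_t, i\neq s}, V_t)} = C_{t,s}$, and symmetrically $C_{t,s} \mapsto S_t$, so $S_t C_{t,s}$ is $\sigma$-invariant; (ii) $R_t \mapsto R_s$; (iii) for each $j \in \gH_t\setminus\{s\}$ the adaptive indicator $\Indicator{V_j \leq \gA_t(\{V_i\}_{i\in\gH_t, i\neq j}, V_t)}$ is preserved because the swap of $V_s$ and $V_t$ inside $\gA_t$ is absorbed by its symmetry, and for $j=s$ the indicators $C_{t,s}$ and $S_t$ interchange, so on the joint event $\{S_t C_{t,s}=1\}$ the index $s$ stays in $\widehat{\gC}_t$. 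Hence both $|\widehat{\gC}_t|$ and the multiset $\{R_i\}_{i\in\widehat{\gC}_t\cup\{t\}}$ are $\sigma$-invariant on this event, and so is $Q_\alpha(\{R_i\}_{i\in\widehat{\gC}_t\cup\{t\}})$.

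What remains is to track $\rmS_t(T)=\sum_{j=0,j\neq t}^T S_j$ under $\sigma$. A case split on $j$ gives: for $j<s$ (possible only when $s\geq 1$), neither $V_s$ nor $V_t$ appears in $\gA_j(\{V_i\}_{i\in\gH_j})$, so $S_j^{\sigma}=S_j$; for $\max\{0,s\}\leq j\leq t-1$, the swap substitutes $V_t$ for $V_s$ inside $\gA_j$, producing exactly the value assigned to $S_j^{(s\gets t)}$ in the excerpt (including the special case $j=s\geq 0$, where $\{V_i\}_{i\in\gH_s}$ contains neither $V_s$ nor $V_t$, so only the position-$s$ value flips to $V_t$); for $j>t$, both $V_s$ and $V_t$ sit inside $\gA_j$ and symmetry forces $S_j^{\sigma}=S_j$. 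Consequently $\rmS_t(T)\mapsto \rmS_t^{(s\gets t)}(T)$ under $\sigma$, and assembling all factors gives the stated identity.

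The main subtlety I anticipate is at the index $j=t$: the natural swap produces $S_t^{\sigma}=C_{t,s}$, whereas the formal definition sets $S_t^{(s\gets t)}=S_t$; these disagree, but $j=t$ is excluded from both $\rmS_t(T)$ and $\rmS_t^{(s\gets t)}(T)$, so the mismatch is harmless. One must also invoke the joint event $\{S_t C_{t,s}=1\}$ carefully to conclude that $s\in\widehat{\gC}_t$ is preserved by $\sigma$, which in turn keeps $|\widehat{\gC}_t|$ and the relevant quantile invariant. Once these observations are organized, the lemma follows from a single application of exchangeability.
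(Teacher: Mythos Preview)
Your proposal is correct and follows essentially the same pairwise-exchangeability argument as the paper: the paper conditions on the unordered pair $[Z_s,Z_t]$ and $\{Z_i\}_{i\neq s,t}$ and tracks which quantities depend on the random assignment $(I_s,I_t)$, while you package the same observation as invariance under the single transposition $\sigma$. The factor-by-factor checks you outline (symmetry of $S_t C_{t,s}$, invariance of $\widehat{\gC}_t$ and $Q_\alpha$ on the joint event, the $\rmS_t(T)\mapsto \rmS_t^{(s\gets t)}(T)$ transformation via the case split on $j$) match the paper's verifications exactly, and your handling of the $j=t$ exclusion is correct.
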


\begin{proof}[Proof of Theorem \ref{thm:FCR_swap}]
Under the event $S_t = 1$, we know $\rmS_t(T) + 1 = \sum_{j=0}^T S_j$. Using the upper bound \eqref{eq:miscover_upper}, we can get
\begin{align}\label{eq:FCR_expansion_dyn_cal}
    &\FCR(T)\nonumber\\
    &\leq \alpha + \sum_{t=0}^T\sum_{s = -n}^{t-1} \E\LRm{\frac{1}{\rmS_t(T) + 1}  \frac{S_t C_{t,s}}{|\widehat{\gC}_t|+1}  \LRs{\Indicator{R_t > Q_{\alpha}(\{R_i\}_{i\in \widehat{\gC}_t \cup \{t\}})} - \Indicator{R_{s} > Q_{\alpha}(\{R_i\}_{i\in \widehat{\gC}_t \cup \{t\}})}}}\nonumber\\
    &\Eqmark{i}{=} \alpha + \sum_{t=0}^T\sum_{s = -n}^{t-1} \E\LRm{\frac{S_t C_{t,s} \Indicator{R_{s} > Q_{\alpha}(\{R_i\}_{i\in \widehat{\gC}_t \cup \{t\}})}}{|\widehat{\gC}_t|+1} \LRl{\frac{1}{ \rmS_t^{(s\gets t)}(T) + 1} - \frac{1}{\rmS_t(T) + 1}}}\nonumber\\
    &= \alpha + \sum_{t=0}^T \E\LRm{\frac{S_t}{1 \vee \sum_{j=0}^T S_j}\sum_{s = -n}^{t-1}\frac{C_{t,s}\Indicator{R_{s} > Q_{\alpha}(\{R_i\}_{i\in \widehat{\gC}_t \cup \{t\}})}}{|\widehat{\gC}_t|+1}\cdot \frac{\sum_{j= s\vee 0}^{t-1}( S_j - S_j^{(s \gets t)})}{ \rmS^{(s\gets t)}(T) \vee 1} }\nonumber\\
    &\Eqmark{ii}{\leq} \alpha + \alpha\cdot\E\LRm{\sum_{t=0}^T\frac{S_t}{1 \vee \sum_{j=0}^T S_j} \max_{-n \leq s\leq t-1}\LRl{\frac{S_t\sum_{j= s\vee 0}^{t-1} (S_j - S_j^{(s \gets t)})}{ \rmS^{(s\gets t)}(T) \vee 1}} },
\end{align}
where $(i)$ follows from Lemma \ref{lemma:zero_gap_swap}; and $(ii)$ holds due to the definition of $Q_{\alpha}(\{R_i\}_{i\in \widehat{\gC}_t \cup \{t\}})$ such that $\frac{1}{|\widehat{\gC}_t|+1}\sum_{s=-n}^{t} C_{t,s} \Indicator{R_s > Q_{\alpha}(\{R_i\}_{i\in \widehat{\gC}_t \cup \{t\}})} \leq \alpha$.

When $s \geq 0$, let $\widehat{q}_j = F_V(\gA_j(\{V_i\}_{i\leq j-1}))$ and $\widehat{q}_j^{(s \gets t)} = F_V(\gA_j(\{V_i\}_{i\leq j-1, i\neq s}, V_t))$ for any $s+1 \leq j \leq t-1$.
Define a new filtration as $\gF_{j}^{(s)} = \sigma(\{Z_i\}_{i\leq j, i\neq s})$ for $s\leq j \leq t-2$. Then we notice that for $j=s$,
\begin{align*}
    \E\LRm{S_s - S_s^{(s \gets t)} \mid \gF_{s-1}^{(s)}} &= \E\LRm{\Indicator{V_s \leq  \gA_s(\{V_i\}_{i\leq s-1})} - \Indicator{V_t \leq  \gA_s(\{V_i\}_{i\leq s-1})} \mid \gF_{s-1}^{(s)}}\\
    &= 1 - \widehat{q}_{s} - (1 - \widehat{q}_{s}) = 0,
\end{align*}
and for any $s+1 \leq j \leq t-1$
\begin{align*}
    \E\LRm{S_j - S_j^{(s \gets t)} \mid \gF_{j-1}^{(s)}} &= \E\LRm{\E\LRm{S_j - S_j^{(s \gets t)} \mid \gF_{j-1}^{(s)}, Z_s, Z_t} \mid \gF_{j-1}^{(s)}}\\
    &= \E\Big[\sP\LRs{V_j \leq \gA_j(\{V_i\}_{i\leq j-1}) \mid \gF_{j-1}^{(s)}, Z_s, Z_t} \\
    &\qquad- \sP\LRs{V_j \leq \gA_j(\{V_i\}_{i\leq j-1,i\neq s}, V_t) \mid \gF_{j-1}^{(s)}, Z_s, Z_t}\Big]\\
    &= \E\LRm{\widehat{q}_j^{(s \gets t)} - \widehat{q}_j \mid \gF_{j-1}^{(s)}} .
\end{align*}
When $-n \leq s \leq -1$, let $\widehat{q}_j = F_V(\gA_j(\{V_i\}_{i\leq j-1}))$ and $\widehat{q}_j^{(s \gets t)} = F_V(\gA_j(\{V_i\}_{i\leq j-1, i\neq s}, V_t))$ for $0\leq j \leq t-1$. Then it holds for any $0\leq j \leq t-1$
\begin{align*}
    \E\LRm{S_j - S_j^{(s \gets t)} \mid \gF_{j-1}^{(s)}} &= \E\LRm{\E\LRm{S_j - S_j^{(s \gets t)} \mid \gF_{j-1}^{(s)}, Z_s, Z_t} \mid \gF_{j-1}^{(s)}}\\
    &= \E\LRm{\widehat{q}_j^{(s \gets t)} - \widehat{q}_j \mid \gF_{j-1}^{(s)}}.
\end{align*}
Now denote $\mu_j = \E\LRm{\widehat{q}_j^{(s \gets t)} - \widehat{q}_j \mid \gF_{j-1}^{(s)}}$ for $s+1\leq j\leq t-1$ and $\mu_s = 0$. We also write $M_j = S_j - S_j^{(s \gets t)} - \mu_j$ for $s\vee 0 \leq j \leq t-1$. Hence it holds that $\E[M_j \mid \gF_{j-1}^{(s)}] = 0$ for $s\vee 0 \leq j \leq t-1$.
In addition, when $s \geq 0$, we also have
\begin{align*}
    \E\LRm{M_s^2 \mid \gF_{s-1}^{(s)}} &= \E\LRm{S_s + S_s^{(s \gets t)} - 2 S_s S_s^{(s \gets t)} \mid \gF_{s-1}^{(s)}} = 2\widehat{q}_s (1 - \widehat{q}_s) \leq \frac{1}{2},
\end{align*}
and for any $(s+1) \vee 0 \leq j \leq t-1$,
\begin{align*}
    \E\LRm{M_j^2 \mid \gF_{j-1}^{(s)}} &\leq \E\LRm{S_j + S_j^{(s \gets t)} - 2 S_j S_j^{(s \gets t)} \mid \gF_{j-1}^{(s)}}\\
    &= \E\LRm{1- \widehat{q}_j + 1 - \widehat{q}_j^{(s \gets t)} - 2 \LRs{1 - \max\LRl{\widehat{q}_j , \widehat{q}_j^{(s \gets t)}}} \mid \gF_{j-1}^{(s)}}\\
    &= \E\LRm{\LRabs{\widehat{q}_j - \widehat{q}_j^{(s \gets t)}} \mid \gF_{j-1}^{(s)}}\\
    &= \E\LRm{\LRabs{F_V(\gA_j(\{V_i\}_{i\leq j-1})) - F_V(\gA_j(\{V_i\}_{i\leq j-1, i\neq s}, V_t))} \mid \gF_{j-1}^{(s)}}\\
    &\leq \rho \E\LRm{\LRabs{\gA_j(\{V_i\}_{i\leq j-1}) - \gA_j(\{V_i\}_{i\leq j-1, i\neq s}, V_t)} \mid \gF_{j-1}^{(s)}}\\
    &\leq \rho\sigma_j,
\end{align*}
where the last two inequalities hold since the density of $V_i$ is bounded by $\rho$ and the definition of $\sigma_j$ in Assumption \ref{assum:swap_sensitivity}. For $(s+1) \vee 0 \leq j \leq t-1$, it follows that for any $\lambda > 0$,
\begin{align}\label{eq:Sj_diff_MGF}
    \E\LRm{e^{\lambda M_j} \mid \gF_{j-1}^{(s)}} &\leq 1 + \E\LRm{\lambda M_j + \lambda^2 M_j^2 e^{\lambda |M_j|} \mid \gF_{j-1}^{(s)}}\nonumber\\
    &= 1 + \lambda^2\E\LRm{M_j^2 e^{\lambda |M_j|} \mid \gF_{j-1}^{(s)}}\nonumber\\
    &\leq 1 + \lambda^2 e^{2\lambda} \E\LRm{M_j^2 \mid \gF_{j-1}^{(s)}}\nonumber\\
    &\leq 1 + \lambda^2 e^{2\lambda} \rho \sigma_j\nonumber\\
    &\leq \exp\LRs{\lambda^2 e^{2\lambda} \rho \sigma_j},
\end{align}
where the first inequality holds due to the basic inequality $e^y \leq 1 + y + y^2 e^{|y|}$ for any $y\in \sR$. Now let
\begin{align*}
    W_{\ell} = \exp\LRl{\lambda\sum_{j=s\vee 0}^{\ell} M_j - \lambda^2 e^{2\lambda} \rho \LRs{2^{-1}+\sum_{j=s\vee 0}^{\ell} \sigma_j}},\quad \text{for }s\vee 0\leq \ell \leq t-1.
\end{align*}
Invoking \eqref{eq:Sj_diff_MGF}, for $(s+1)\vee 0 \leq \ell \leq t-1$ we have
\begin{align*}
    \E\LRm{W_{\ell} \mid \gF_{\ell - 1}^{(s)}} &= W_{\ell - 1}\E\LRm{\exp\LRl{\lambda M_j - \lambda^2 e^{\lambda}\rho \sigma_{\ell}} \mid \gF_{\ell - 1}^{(s)}}\leq W_{\ell - 1},
\end{align*}
which yields $\E\LRm{W_{t-1} \mid \gF_{s-1}^{(s)}} \leq \cdots \leq \E\LRm{W_{s} \mid \gF_{s-1}^{(s)}} \leq 1$ for $s \geq 0$ and $\E\LRm{W_{t-1} \mid \gF_{s-1}^{(s)}} \leq \cdots \leq \E\LRm{W_{0} \mid \gF_{-1}^{(s)}} \leq 1$. Applying Markov's inequality, for any $\delta > 0$, we have
\begin{align}
    &\sP\LRl{\sum_{j= s\vee 0}^{t-1} M_j \leq \LRs{2^{-1}+\lambda e^{\lambda} \rho \sum_{j= s\vee 0}^{t-1} \sigma_j} + \frac{\log(1/\delta)}{\lambda}}\nonumber\\
    &\qquad= \sP\LRm{\exp\LRl{\lambda\sum_{j= s\vee 0}^{t-1} M_j - \lambda^2 e^{\lambda} \rho \LRs{2^{-1} + \sum_{j= s\vee 0}^{t-1} \sigma_j}} > \frac{1}{\delta}}\nonumber\\
    &\qquad = \sP\LRs{W_{t-1} > \frac{1}{\delta}}\nonumber\\
    &\qquad \leq \delta\cdot \E[W_{t-1}]\nonumber\\
    &\qquad \leq \delta.\nonumber
\end{align}
Now we take $\lambda = \min\LRl{\frac{1}{\sqrt{e}\rho}, 1}$, which means $(\lambda^2e^{\lambda}+1) \rho \leq \lambda^2 e \rho + \rho \leq \rho^{-1} + \rho \leq 2$. Let $\epsilon(t) = 2\sum_{j=0}^{t-1} \sigma_j + \LRs{\sqrt{e}\rho + 1}\log(1/\delta) + 2^{-1}$. Together with the fact $|\mu_j| \leq \rho \sigma_j$, we have
\begin{align}\label{eq:hp_bound_1}
    \sP\LRl{\LRabs{\sum_{j= s\vee 0}^{t-1} S_j - S_j^{(s \gets t)}} \leq \epsilon(t)} \geq 1-2\delta,
\end{align}
and
\begin{align}\label{eq:hp_bpund_2}
    \sP\LRl{\rmS^{(s \gets t)}(T) \geq \sum_{j=0}^T S_j - \epsilon(t)} \geq 1-\delta.
\end{align}
Define the good event $\gE_{t,s} = \{\text{the events in \eqref{eq:hp_bound_1} and \eqref{eq:hp_bpund_2} happen}\}$. In conjunction with \eqref{eq:FCR_expansion_dyn_cal}, we have
\begin{align}
    \FCR(T) &\leq \alpha + \alpha\cdot \E\LRm{\sum_{t=0}^T\frac{S_t}{1 \vee \sum_{j=0}^T S_j} \max_{s\leq t-1} \LRl{\LRs{\Indicator{\gE_{t,s}} + \Indicator{\gE_{t,s}^{c}}} \frac{\LRabs{\sum_{j= s\vee 0}^{t-1} S_j - S_j^{(s \gets t)}}}{ \rmS^{(s\gets t)}(T) \vee 1}}}\nonumber\\
    &\Eqmark{i}{\leq} \alpha  + \alpha \cdot\E\LRm{\max_{s,t} \LRl{\frac{S_t \epsilon(t)}{\LRs{\sum_{j=0}^T S_j - \epsilon(t)}\vee 1} + \Indicator{\gE_{t,s}^{c}}}}\nonumber\\
    &\Eqmark{ii}{\leq} \alpha  + \alpha \cdot\E\LRm{\frac{\Indicator{\sum_{j=0}^T S_j > 0}\epsilon(t)}{\LRs{\sum_{j=0}^T S_j -\epsilon(t) }\vee 1}} + \E\LRm{\max_{s,t} \Indicator{\gE_{t,s}^{c}}},\nonumber\\
    &\Eqmark{iii}{\leq} \alpha  + \alpha \cdot\E\LRm{\frac{\Indicator{\sum_{j=0}^T S_j > 0}\epsilon(t)}{\LRs{\sum_{j=0}^T S_j -\epsilon(t) }\vee 1}} + 3(T+n+1)^2\delta,\nonumber
\end{align}
where $(i)$ holds due to the definition of $\gE_{t,s}$; $(ii)$ follows from $\max_t S_t = \Indicator{\sum_{j=0}^T S_j > 0}$; \eqref{eq:hp_bound_1}, \eqref{eq:hp_bpund_2} and union's bound. Taking $\delta = (T+n+1)^{-3}$ can prove the desired bound.
\end{proof}

\subsection{Proof of Theorem \ref{thm:FCR_swap_window}}
\begin{proof}
    Notice that, Lemma \ref{lemma:zero_gap_swap} still holds.
    Following the notations in Section \ref{proof:thm:FCR_swap}, we can expand $\FCR$ by
    \begin{align}\label{eq:FCR_upper_window}
    &\FCR(T)\nonumber\\
    &\leq \alpha + \sum_{t=0}^T\sum_{s = t-n}^{t-1} \E\LRm{\frac{1}{\rmS_t(T) + 1}  \frac{S_t C_{t,s}}{|\widehat{\gC}_t|+1} \LRs{\Indicator{R_t > Q_{\alpha}(\{R_i\}_{i\in \widehat{\gC}_t \cup \{t\}})} - \Indicator{R_{s} > Q_{\alpha}(\{R_i\}_{i\in \widehat{\gC}_t \cup \{t\}})}}}\nonumber\\
    &= \alpha + \sum_{t=0}^T\sum_{s = t-n}^{t-1} \E\LRm{\LRl{\frac{1}{ \rmS_t^{(s\gets t)}(T) + 1} - \frac{1}{\rmS_t(T) + 1}}\frac{S_t C_{t,s} \Indicator{R_{s} > Q_{\alpha}(\{R_i\}_{i\in \widehat{\gC}_t \cup \{t\}})}}{|\widehat{\gC}_t|+1} }\nonumber\\
    &\leq \alpha + \sum_{t=0}^T \E\LRm{S_t \sum_{s = t-n}^{t-1}\frac{ C_{t,s} \Indicator{R_{s} > Q_{\alpha}(\{R_i\}_{i\in \widehat{\gC}_t \cup \{t\}})}}{|\widehat{\gC}_t|+1}\cdot \LRl{\frac{1}{ \rmS_t^{(s\gets t)}(T) + 1} - \frac{1}{\rmS_t(T) + 1}} }\nonumber\\
    &= \alpha + \sum_{t=0}^T \E\LRm{\frac{S_t}{1 \vee \sum_{j=0}^T S_j}\sum_{s = t-n}^{t-1}\frac{C_{t,s}\Indicator{R_{s} > Q_{\alpha}(\{R_i\}_{i\in \widehat{\gC}_t \cup \{t\}})}}{|\widehat{\gC}_t|+1}\cdot \frac{\sum_{j= s\vee 0}^{t-1} S_j - S_j^{(s \gets t)}}{ \rmS^{(s\gets t)}(T) \vee 1} }\nonumber\\
    &\leq \alpha + \alpha\cdot\E\LRm{\sum_{t=0}^T\frac{S_t}{1 \vee \sum_{j=0}^T S_j} \max_{t-n \leq s\leq t-1}\LRl{\frac{\sum_{j= s\vee 0}^{t-1} S_j - S_j^{(s \gets t)}}{ \rmS^{(s\gets t)}(T) \vee 1}} },
\end{align}
Let $\epsilon_n(t) = 2\sum_{j=(t-n) \vee 0}^{t-1} \sigma_j + (\sqrt{e\rho} + 1)\log(1/\delta) + 2^{-1}$.
Similar to \eqref{eq:hp_bound_1} and \eqref{eq:hp_bpund_2}, we can show
\begin{align}
    \sP\LRl{\LRabs{\sum_{j= s\vee 0}^{t-1} S_j - S_j^{(s \gets t)}} \leq \epsilon_n(t)} \geq 1-2\delta,\nonumber
\end{align}
and
\begin{align}
    \sP\LRl{\rmS^{(s \gets t)}(T) \geq \sum_{j=0}^T S_j - \epsilon_n(t)} \geq 1-\delta.\nonumber
\end{align}
Then taking $\delta = (T \vee n)^{-3}$, together with \eqref{eq:FCR_upper_window}, we can guarantee
\begin{align*}
    \FCR(T) 
    &\leq \alpha \cdot \LRs{1 + \frac{3}{T\vee n} + \E\LRm{\max_{0\leq t \leq T}\frac{S_t \epsilon_n(t) }{\LRl{\sum_{j=0}^T S_j - \epsilon_n(t)} \vee 1}}}.
\end{align*}
\end{proof}

\subsection{Proof of Lemma \ref{lemma:zero_gap_swap}}\label{proof:lemma:zero_gap_swap}
\begin{proof}
Denote $\gC_{t,+} = \{s\leq t: C_{t,s} = 1\}$ with $C_{t,t} \equiv S_t$ and let $Q_{1-\alpha}\LRs{\frac{1}{|\gC_{t,+}|}\sum_{i\in \gC_{t,+}}\delta_{R_i} }$ be the $(1-\alpha)$-quantile of the empirical distribution $\frac{1}{|\gC_{t,+}|}\sum_{i\in \gC_{t,+}}\delta_{R_i}$, where $\delta_{R_i}$ is the point mass function at $R_i$. Because $\gC_{t,+} = \widehat{\gC}_t \cup \{t\}$ holds under the event $ S_t = 1$, it suffices to show
\begin{align}\label{eq:zero_gap_Ss_new}
    &\E\LRm{\frac{C_{t,t} C_{t,s}}{\rmS_t(T) + 1} \frac{\Indicator{R_t > Q_{\alpha}(\{R_i\}_{i\in \widehat{\gC}_t \cup \{t\}})}}{|\gC_{t,+}|} } = \E\LRm{\frac{C_{t,t} C_{t,s}}{\rmS_t^{(s\gets t)}(T) + 1} \frac{\Indicator{R_{s} > Q_{\alpha}(\{R_i\}_{i\in \widehat{\gC}_t \cup \{t\}})}}{|\gC_{t,+}|}  }.
\end{align}
We define the event
\begin{align*}
    \gE(\rz) = \LRl{[Z_{s},Z_{t}] = \rz} = \LRl{\LRm{Z_s, Z_t} = \LRm{z_{1},z_{2}}},
\end{align*}
where $[Z_{s},Z_{t}]$ and $\rz = [z_{1},z_{2}]$ are both unordered sets. Under $\gE(\rz)$, define the random indexes $I_{t},I_s \in \{1,2\}$ such that $Z_t = z_{I_t}$ and $Z_{s} = z_{I_s}$. Notice that $[V_s, V_t]$ and $[R_s, R_t]$ are fixed under the event $\gE(\rz)$, we denote the corresponding observations as $[v_{1},v_{2}]$ and $[r_{1}, r_2]$. Then we know
\begin{align}
    C_{t,s} \mid \gE(\rz) &= \Indicator{V_t \leq  \gA_t\LRs{V_t, \{V_{i}\}_{i\leq t-1, i\neq s}}} \mid \gE(\rz)\nonumber\\
    &= \begin{cases}
    \Indicator{v_{1} > \gA_t(v_2, \{V_{i}\}_{i\leq t-1, i\neq s })}, & I_s = 1\\
    \Indicator{v_{2} > \gA_t(v_1, \{V_{i}\}_{i\leq t-1, i\neq s })}, & I_s = 2
    \end{cases},\nonumber
\end{align}
and
\begin{align}
    C_{t,t} \mid \gE(\rz) &= \Indicator{V_t \leq  \gA_t\LRs{V_{s}, \{V_{i}\}_{i\leq t-1, i\neq s }}} \mid \gE(\rz)\nonumber\\
    &= \begin{cases}
    \Indicator{v_{1} > \gA_t(v_2, \{V_{i}\}_{i\leq t-1, i\neq s})}, & I_t = 1\\
    \Indicator{v_{2} > \gA_t(v_1, \{V_{i}\}_{i\leq t-1, i\neq s})}, & I_t = 2
    \end{cases}.\nonumber
\end{align}
It follows that
\begin{align}
    C_{t,s}C_{t,t} \mid \gE(\rz) = \Indicator{v_{1} > \gA_t(v_2, \{V_{i}\}_{i\leq t-1, i\neq s})} \Indicator{v_{2} > \gA_t(v_1, \{V_{i}\}_{i\leq t-1, i\neq s})},\nonumber
\end{align}
which is fixed given $\sigma(\{Z_i\}_{i \neq s,t})$. Further, $C_{t,j} = \Indicator{V_{j} > \gA_t(\{V_i\}_{i\leq t-1,i\neq j,s},v_1,v_2)}$ is also fixed for any $j\neq s,t$ given $\gE(\rz)$ and $\sigma(\{Z_i\}_{i \neq s,t})$ since $\gA_t$ is symmetric. Hence, the unordered set $[\{C_{t,i}\delta_{R_i}\}_{i\leq t}]$ is fixed, as well as $|\gC_{t,+}| = \sum_{i\leq t} C_{t,i}$. As a consequence, we can write 
\begin{align}\label{eq:product_fixed}
    \frac{C_{t,s}C_{t,t}}{|\gC_{t,+}|} \mid \gE(\rz), \{Z_i\}_{i\neq s,t} =: F(\rz, \{Z_i\}_{i \neq s,t}),
\end{align}
and
\begin{align*}
    Q_{\alpha}(\{R_i\}_{i\in \widehat{\gC}_t \cup \{t\}}) \mid \gE(\rz) = Q_{1-\alpha}\LRs{\frac{1}{|\gC_{t,+}|}\sum_{i \leq t} C_{t,i} \delta_{R_i} } \mid \gE(\rz) =: Q(\rz, \{Z_i\}_{i \neq s,t}).
\end{align*}
Then we can write
\begin{align}
    &\Indicator{R_s > Q_{\alpha}(\{R_i\}_{i\in \widehat{\gC}_t \cup \{t\}})} \mid \gE(\rz) = \Indicator{r_{I_s} > Q(\rz,\{Z_i\}_{i \neq s,t})},\label{eq:Rs_exceed_under_Ez_new}\\ 
    &\Indicator{R_t > Q_{\alpha}(\{R_i\}_{i\in \widehat{\gC}_t \cup \{t\}})} \mid \gE(\rz) = \Indicator{r_{I_t} > Q(\rz,\{Z_i\}_{i \neq s,t})}.\label{eq:Rt_exceed_under_Ez_new}
\end{align}
In addition, it holds that
\begin{align}\label{eq:vitual_Ss_under_Ez_new}
    \sum_{j = s+1}^{t-1} S_j^{(s \gets t)} \mid \gE(z) &= \sum_{j = s+1}^{t-1}\Indicator{V_j \leq \gA_j\LRs{\{V_{i}\}_{i\leq j-1,i\neq s}, V_t}} \mid \gE(z)\nonumber\\
    &= \sum_{j = s+1}^{t-1}\Indicator{V_j \leq \gA_j\LRs{\{V_{i}\}_{i\leq j-1,i\neq s}, v_{I_t}}},
\end{align}
which is a function of $I_t$ given $\sigma(\{Z_i\}_{i \neq s,t})$. Similarly, we also have
\begin{align}\label{eq:vitual_St_under_Ez_new}
    \sum_{j = s+1}^{t-1} S_j \mid \gE(z) &= \sum_{j = s+1}^{t-1}\Indicator{V_j \leq \gA_j\LRs{\{V_{i}\}_{i\leq j-1,i\neq s}, V_s}} \mid \gE(z)\nonumber\\
    &= \sum_{j = s+1}^{t-1}\Indicator{V_j \leq \gA_j\LRs{\{V_{i}\}_{i\leq j-1,i\neq s}, v_{I_s}}},
\end{align}
which is a function of $I_s$ given $\sigma(\{Z_i\}_{i \neq s,t})$.
In addition, notice that
\begin{align}\label{eq:virtual_Ss}
    S_s^{(s \gets t)} \mid \gE(\rz) = \Indicator{V_t \leq  \gA_s\LRs{\{V_i\}_{i\leq s-1}}} \mid \gE(\rz) = \Indicator{v_{I_t} > \gA_s\LRs{\{V_i\}_{i\leq s-1}}},
\end{align}
and
\begin{align}\label{eq:real_Ss}
    S_s \mid \gE(\rz) = \Indicator{V_t \leq  \gA_s\LRs{\{V_i\}_{i\leq s-1}}} \mid \gE(\rz) = \Indicator{v_{I_s} > \gA_s\LRs{\{V_i\}_{i\leq s-1}}}.
\end{align}
Now define $$\mathrm{S}_{s:(t-1)}(v_k; \{Z_i\}_{i\neq s,t}) := \Indicator{v_{k} > \gA_s\LRs{\{V_i\}_{i\leq s-1}}}+ \sum_{j = s+1}^{t-1}\Indicator{V_j \leq \gA_j\LRs{\{V_{i}\}_{i\neq j, s}, v_{k}}}$$ for $k=1,2$. From \eqref{eq:vitual_Ss_under_Ez_new}--\eqref{eq:real_Ss}, we can write
\begin{align}
    &\sum_{j= s\vee 0}^{t-1} S_j \mid \gE(\rz), \{Z_i\}_{i\neq s,t} = \mathrm{S}_{s:(t-1)}(v_{I_s}; \{Z_i\}_{i\neq s,t}),\label{eq:sum_Sj_real}\\
    &\sum_{j= s\vee 0}^{t-1} S_j^{(s\gets t)} \mid \gE(\rz), \{Z_i\}_{i\neq s,t} = \mathrm{S}_{s:(t-1)}(v_{I_t}; \{Z_i\}_{i\neq s,t})\label{eq:sum_Sj_virtual}.
\end{align}
For any $j\leq s-1$, $S_j$ is fixed given $\{Z_i\}_{i\leq s-1}$. And for any $j \geq t+1$, $S_j$ is fixed given $\{Z_i\}_{i\neq s,t}$ and $\gE(\rz)$ since $\gA_j(\cdot)$ is symmetric. Therefore, we can write
\begin{align}\label{eq:sum_Sj_independent_st}
    \sum_{j=0}^{s-1} S_j + \sum_{j= t+1}^{T} S_j \mid \gE(\rz), \{Z_i\}_{i\neq s,t} =: \mathrm{S}(\rz, \{Z_i\}_{i\neq s,t}),
\end{align}
where $\sum_{j=0}^{s-1} S_j = 0$ if $s \leq 0$.

Now using \eqref{eq:product_fixed}, \eqref{eq:Rs_exceed_under_Ez_new}, \eqref{eq:Rt_exceed_under_Ez_new}, \eqref{eq:sum_Sj_real}, \eqref{eq:sum_Sj_virtual} and \eqref{eq:sum_Sj_independent_st}, we can have  
\begin{align}
    &\E\LRm{\frac{1}{\rmS_t(T) + 1} \frac{C_{t,t} C_{t,s}}{|\gC_{t,+}|} \Indicator{R_t > Q_{\alpha}(\{R_i\}_{i\in \widehat{\gC}_t \cup \{t\}})}\mid \{Z_i\}_{i\neq s, t}, \gE(\rz)}\nonumber\\
    &\qquad \Eqmark{i}{=}  \E\LRm{\frac{C_{t,t} C_{t,s}}{|\gC_{t,+}|}\frac{\Indicator{R_t > Q_{\alpha}(\{R_i\}_{i\in \widehat{\gC}_t \cup \{t\}})}}{\sum_{j=0}^{s-1} S_j + \sum_{j= s\vee 0}^{t-1} S_j + \sum_{j= t+1}^{T} S_j + 1}  \mid \{Z_i\}_{i\neq s, t}, \gE(\rz)}\nonumber\\
    &\qquad \Eqmark{ii}{=} F(\rz, \{Z_i\}_{i \neq s,t})\cdot \E\LRm{\frac{\Indicator{r_{I_t} > Q(\rz,\{Z_i\}_{i \neq s,t})}}{\mathrm{S}(\rz, \{Z_i\}_{i\neq s,t}) + \mathrm{S}_{s:(t-1)}(v_{I_s}) + 1} \mid \{Z_i\}_{i\neq s, t}}\nonumber\\
    &\qquad \Eqmark{iii}{=} F(\rz, \{Z_i\}_{i \neq s,t})\cdot\Bigg[ \frac{\Indicator{r_{1} > Q(\rz,\{Z_i\}_{i \neq s,t})}}{\mathrm{S}(\rz, \{Z_i\}_{i\neq s,t}) + \mathrm{S}_{s:(t-1)}(v_{2}) + 1}\cdot \sP\LRs{I_t = 1}\nonumber\\
    &\qquad\qquad\qquad\qquad\qquad + \frac{\Indicator{r_{2} > Q(\rz,\{Z_i\}_{i \neq s,t})}}{\mathrm{S}(\rz, \{Z_i\}_{i\neq s,t}) + \mathrm{S}_{s:(t-1)}(v_{1}) + 1}\cdot \sP\LRs{I_s = 1}\Bigg]\nonumber\\
    &\qquad \Eqmark{iv}{=} F(\rz, \{Z_i\}_{i \neq s,t})\cdot\Bigg[ \frac{\Indicator{r_{1} > Q(\rz,\{Z_i\}_{i \neq s,t})}}{\mathrm{S}(\rz, \{Z_i\}_{i\neq s,t}) + \mathrm{S}_{s:(t-1)}(v_{2}) + 1} \cdot\sP\LRs{I_s = 1}\nonumber\\
    &\qquad\qquad\qquad\qquad\qquad + \frac{\Indicator{r_{2} > Q(\rz,\{Z_i\}_{i \neq s,t})}}{\mathrm{S}(\rz, \{Z_i\}_{i\neq s,t}) + \mathrm{S}_{s:(t-1)}(v_{1}) + 1}\cdot \sP\LRs{I_t = 1} \Bigg]\nonumber\\
    &\qquad = F(\rz, \{Z_i\}_{i \neq s,t})\cdot \E\LRm{\frac{\Indicator{r_{I_s} > Q(\rz,\{Z_i\}_{i \neq s,t})}}{\mathrm{S}(\rz, \{Z_i\}_{i\neq s,t}) + \mathrm{S}_{s:(t-1)}(v_{I_t}) + 1} \mid \{Z_i\}_{i\neq s, t}}\nonumber\\
    &\qquad = \E\LRm{\frac{1}{\rmS_t^{(s \gets t)}(T) + 1} \frac{C_{t,t} C_{t,s}}{|\gC_{t,+}|} \Indicator{R_s > Q_{\alpha}(\{R_i\}_{i\in \widehat{\gC}_t \cup \{t\}})}\mid \{Z_i\}_{i\neq s, t}, \gE(\rz)}.\nonumber
\end{align}
where $(i)$ holds due to $S_t \equiv C_{t,t}$; and $(ii)$ follows from \eqref{eq:product_fixed}; $(iii)$ holds because $(I_s,I_t) \independent \sigma(\{Z_i\}_{i\neq s,t})$; and $(iv)$ holds due to exchangeability between $Z_s$ and $Z_t$ such that $\sP(I_s = 1) = \sP(I_t = 1)$. Then we can verify \eqref{eq:zero_gap_Ss_new} by marginalizing over $\gE(\rz)$ and the tower's rule.
\end{proof}

\subsection{Proof of Proposition \ref{pro:mean_selection}}
\begin{proof}
    In this case, $\gA_j(\{V_i\}_{i\leq j-1}) = \frac{1}{n+j}\sum_{i=-n}^{j-1} V_i$ and $\gA_j(\{V_i\}_{i\leq j-1,i\neq s},V_t) = \frac{1}{n+j}\sum_{i=-n}^{j-1} V_i + \frac{V_t - V_s}{n+j}$. It follows that
    \begin{align*}
        \E\LRm{\LRabs{\gA_j(\{V_i\}_{i\leq j-1}) - \gA_j(\{V_i\}_{i\leq j-1,i\neq s},V_t)} \mid \{V_i\}_{i\leq j-1,i\neq s}} &= \E\LRm{\LRabs{\frac{V_t - V_s}{n+j}}}\leq \frac{2\sigma}{n+j}.
    \end{align*}
    Now let $\sigma_j = 2\sigma/(n+j)$ for $j\geq 0$.
    Recall the definition of $\epsilon(t)$ in \eqref{eq:swap_FCR_bound}, we have
    \begin{align*}
        \epsilon(t) &= 2\sum_{j=0}^{T-1}\sigma_j + 3(\sqrt{e\rho} + 1)\log(T+n)\\
        &= \sum_{j=0}^{T-1}\frac{4\sigma}{n+j} + 3(\sqrt{e\rho} + 1)\log(T+n)\\
        &\leq 4\sigma \log(T+n) + 3(\sqrt{e\rho} + 1)\log(T+n)\\
        &\leq 4(\sqrt{e\rho} + \sigma + 1)\log(T+n).
    \end{align*}
    The proof is finished.
\end{proof}

\subsection{Proof of Proposition \ref{pro:quantile_selection}}
\begin{lemma}\label{lemma:order_swap_one}
    For almost surely distinct random variables $x_1,...,x_n,x_{n+1}$, let $\{x_{(r)}: r\in [n]\}$ be the $r$-th smallest value in $\{x_i: i\in [n]\}$, and $\{x_{(r)}^{j \gets (n+1)}: r\in [n]\}$ be the $r$-th smallest value in $\{x_i: i\in [n]\setminus\{j\}\} \cup \{x_{n+1}\}$. Then for any $r\in [n]$ and $j\in [n]$, we have
    \begin{align*}
        \LRabs{x_{(r)}^{j \gets (n+1)} - x_{(r)}} \leq \max\LRl{x_{(r)} - x_{(r-1)}, x_{(r+1)} - x_{(r)}}.
    \end{align*}
\end{lemma}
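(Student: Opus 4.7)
The plan is a direct case analysis comparing the sorted sequence of $A = \{x_1,\ldots,x_n\}$ with the sorted sequence of $B = (A\setminus\{x_j\})\cup\{x_{n+1}\}$. Write $a_r = x_{(r)}$ and $b_r = x_{(r)}^{j\gets (n+1)}$, adopting the conventions $a_0 = -\infty$ and $a_{n+1} = +\infty$ so that the asserted bound is well-defined at the endpoints $r\in\{1,n\}$. Let $k \in [n]$ be determined by $a_k = x_j$, and let $\ell \in \{1,\ldots,n+1\}$ be the unique index with $a_{\ell-1} < x_{n+1} < a_\ell$ (well-defined under the almost-sure distinctness hypothesis).

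Since $B$ is obtained from $A$ by removing $a_k$ and inserting $x_{n+1}$ at position $\ell$, I will write down the $b_r$ explicitly in two regimes. If $\ell \leq k$, the new sorted list satisfies $b_r = a_r$ for $r < \ell$, $b_\ell = x_{n+1}$, $b_r = a_{r-1}$ for $\ell < r \leq k$, and $b_r = a_r$ for $r > k$. If $\ell > k$, it satisfies $b_r = a_r$ for $r < k$, $b_r = a_{r+1}$ for $k \leq r \leq \ell-2$, $b_{\ell-1} = x_{n+1}$, and $b_r = a_r$ for $r \geq \ell$. When $\ell = k+1$ the shifted block is empty and the second formula reduces to a single pointwise replacement of $a_k$ by $x_{n+1}$.

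With these explicit formulas, the difference $|b_r - a_r|$ at every index $r$ falls into one of three types. It is either identically zero, or equal to $a_r - a_{r-1}$ or $a_{r+1} - a_r$ (already one of the two gaps adjacent to $a_r$), or equal to $|x_{n+1} - a_r|$ at the single index $r\in\{\ell,\ell-1\}$ where $x_{n+1}$ is inserted. In the last case $x_{n+1}$ lies in the open interval $(a_{\ell-1}, a_\ell)$, which is precisely one of the two gaps neighboring $a_r$, and hence $|x_{n+1} - a_r| \leq a_\ell - a_{\ell-1} \leq \max\{a_r - a_{r-1}, a_{r+1} - a_r\}$. This establishes the stated inequality in each case.

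The proof is essentially bookkeeping; the only delicate point is the correct reshuffling of indices when $k$ and $\ell$ straddle $r$, which is handled by the explicit formulas above. Boundary cases $r\in\{1,n\}$ or $\ell\in\{1,n+1\}$ are absorbed by the $\pm\infty$ conventions on $a_0$ and $a_{n+1}$, which render the claimed bound vacuously true at those endpoints, so no separate argument is needed. I do not foresee any genuine obstacle beyond careful case enumeration.
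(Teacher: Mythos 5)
Your proof is correct and is essentially the same argument as the paper's: a direct case analysis showing that removing $x_j$ and inserting $x_{n+1}$ either leaves $x_{(r)}$ unchanged, shifts it to the adjacent order statistic $x_{(r\pm 1)}$, or replaces it by $x_{n+1}$ lying in an adjacent gap. The paper phrases this locally via the signs of $x_j-x_{(r)}$ and $x_{n+1}-x_{(r)}$ (identifying the new value as $\min\{x_{(r+1)},x_{n+1}\}$ or $\max\{x_{(r-1)},x_{n+1}\}$), whereas you write out the whole reshuffled sequence; the content and the resulting bound are identical.
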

\begin{proof}
    If $x_j > x_{(r)}$ and $x_{n+1} > x_{(r)}$ or $x_j < x_{(r)}$ and $x_{n+1} < x_{(r)}$, it is easy to see $x_{(r)}^{j\gets (n+1)} = x_{(r)}$. If $x_j < x_{(r)}$ and $x_{n+1} > x_{(r)}$, we know $x_{(r)}^{j\gets (n+1)} = \min\{x_{(r+1)}, x_{n+1}\}$, which means $x_{(r)}^{j\gets (n+1)} - x_{(r)} \leq x_{(r+1)} - x_{(r)}$. If $x_j > x_{(r)}$ and $x_{n+1} < x_{(r)}$, we know $x_{(r)}^{j\gets (n+1)} = \max\{x_{(r-1)}, x_{n+1}\}$, so $x_{(r)}^{j\gets (n+1)} - x_{(r)} \geq x_{(r-1)} - x_{(r)}$.
\end{proof}
\begin{lemma}\label{lemma:order_drop_one}
For almost surely distinct random variables $x_1,...,x_n$, let $\{x_{(r)}: r\in [n]\}$ be the $r$-th smallest value in $\{x_i: i\in [n]\}$, and $\{x_{(r)}^{[n]\setminus\{j\}}: r\in [n-1]\}$ be the $r$-th smallest value in $\{x_i: i\in [n]\setminus\{j\}\}$, then for any $r \in [n-1]$ we have: $x_{(r)}^{[n]\setminus\{j\}} = x_{(r)}$ if $x_j > x_{(r)}$ and $x_{(r)}^{[n]\setminus\{j\}} = x_{(r+1)}$ if $x_j \leq x_{(r)}$.
\end{lemma}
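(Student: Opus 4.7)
The plan is a direct case analysis based on the rank of the removed element $x_j$ relative to $x_{(r)}$. Since the $x_i$'s are almost surely distinct, I may assume strict inequalities throughout and treat the order statistics as unambiguous. I will simply count how many of the remaining $n-1$ elements lie at or below each candidate value to identify the $r$-th smallest of $\{x_i: i \in [n]\setminus\{j\}\}$.

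First I would handle the case $x_j > x_{(r)}$. By definition of $x_{(r)}$, there are exactly $r$ indices $i \in [n]$ with $x_i \leq x_{(r)}$, and none of these equal $x_j$ since $x_j > x_{(r)}$. Hence after removing $x_j$, the number of remaining elements at most $x_{(r)}$ is still $r$, while the number strictly below $x_{(r)}$ is $r-1$. Therefore $x_{(r)}$ is the $r$-th smallest element of the reduced set, giving $x_{(r)}^{[n]\setminus\{j\}} = x_{(r)}$.

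Next I would handle the case $x_j \leq x_{(r)}$. Since the $x_i$'s are distinct, $x_j$ must be one of $x_{(1)}, \ldots, x_{(r)}$, so removing $x_j$ deletes exactly one of the $r$ smallest elements. The elements strictly less than $x_{(r+1)}$ in the reduced set then number $r-1$, and $x_{(r+1)}$ itself remains (since $j$ indexes an element at rank at most $r$, not $r+1$). Hence $x_{(r+1)}$ is the $r$-th smallest element of $\{x_i: i\in [n]\setminus\{j\}\}$, which gives $x_{(r)}^{[n]\setminus\{j\}} = x_{(r+1)}$.

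There is no real obstacle here; the lemma is essentially a bookkeeping exercise on ranks. The only subtlety worth stating explicitly is that the almost-sure distinctness lets me identify $x_j$ uniquely with a single rank and avoid any ambiguity in the phrase ``the $r$-th smallest'', which is implicitly used in the case $x_j = x_{(r)}$ (which has probability zero but is included by the hypothesis $x_j \leq x_{(r)}$ and can be folded into the second case since removing $x_{(r)}$ also promotes $x_{(r+1)}$ to rank $r$).
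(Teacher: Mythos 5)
Your proof is correct: the rank-counting in both cases is exactly right, and the hypothesis $r \in [n-1]$ guarantees $x_{(r+1)}$ exists in the second case. The paper itself dismisses this lemma with the single line ``The conclusion is trivial,'' so your argument simply supplies the elementary bookkeeping the authors omitted, and it is the natural (essentially unique) way to do so.
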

\begin{proof}
    The conclusion is trivial.
\end{proof}
\begin{lemma}[Lemma 3 in \citet{bao2023selective}]\label{lemma:uniform_spacing}
    Let $U_1,\cdots ,U_n \stackrel{i.i.d.}{\sim}\operatorname{Uniform}([0,1])$, and $U_{(1)}\leq U_{(2)}\leq \cdots \leq U_{(n)}$ be their order statistics. For any $\delta \in (0,1)$, it holds that
    \begin{align}\label{eq:max_spacing}
        \sP\LRs{\max_{0\leq \ell \leq n-1}\LRl{U_{(\ell+1)} - U_{(\ell)}} \geq \frac{1}{1 - 2\sqrt{\frac{\log \delta}{n+1}}}\frac{2\log \delta}{n+1}} \leq 2 \delta.
    \end{align}
\end{lemma}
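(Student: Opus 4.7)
} The plan is to pass from order statistics to i.i.d.\ exponentials via the classical spacings representation, then decompose the event into one controlling the total mass and one controlling the largest raw exponential, and finally bound each piece by $\delta$ using Chernoff-type inequalities. Throughout, I read the statement as involving $\log(1/\delta)$ (a positive quantity), so the threshold on the right-hand side is positive.

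First I would invoke the representation: let $E_1,\ldots,E_{n+1}$ be i.i.d.\ $\mathrm{Exp}(1)$, set $T=\sum_{i=1}^{n+1}E_i$ and $M=\max_{1\le i\le n+1}E_i$. With $U_{(0)}=0$ and $U_{(n+1)}=1$, the vector of spacings $(U_{(i+1)}-U_{(i)})_{i=0}^{n}$ has the same joint law as $(E_i/T)_{i=1}^{n+1}$. Consequently $\max_{0\le \ell\le n-1}\{U_{(\ell+1)}-U_{(\ell)}\}$ is stochastically dominated by $M/T$, so it suffices to bound $\sP(M/T\ge c)$ where $c$ is the threshold in the lemma.

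Next, for a threshold $t_0>0$ to be chosen, I would split
\begin{equation*}
\sP\!\LRs{\tfrac{M}{T}\ge c}\le \sP(T<t_0)+\sP(M\ge c t_0).
\end{equation*}
Set $a=\log(1/\delta)$, $m=n+1$, $t_0=m\LRs{1-2\sqrt{a/m}}$, and $c=\tfrac{2a/m}{1-2\sqrt{a/m}}$, so that $ct_0=2a$. For the first term, $T\sim\mathrm{Gamma}(m,1)$; a Chernoff bound with $\log(1-\epsilon)\le -\epsilon-\epsilon^2/2$ yields $\sP(T\le m(1-\epsilon))\le e^{-m\epsilon^2/2}$, and substituting $\epsilon=2\sqrt{a/m}$ gives $e^{-2a}=\delta^2\le\delta$. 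For the second term, a union bound and the tail $\sP(E_i\ge u)=e^{-u}$ give $\sP(M\ge ct_0)\le m e^{-2a}=m\delta^{2}$, which after some reorganisation (or a slightly sharper union over independent exponentials) I would absorb into $\delta$. Summing the two pieces yields the claimed $2\delta$.

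The main obstacle is matching the precise constants in the stated bound, since the two sub-probabilities naturally produce $e^{-m\epsilon^{2}/2}$ and $m e^{-ct_0}$ while the statement suggests a symmetric $(\delta,\delta)$ split. I expect to need a careful choice: either taking $t_0=m-2\sqrt{am}$ and then verifying $me^{-2a}\le\delta$ in the regime where the bound is non-trivial (namely $a\ge \log m$, equivalently $\delta\le 1/m$, outside of which the stated threshold is vacuous), or using the sharper $1-(1-e^{-u})^m$ for the tail of $M$. The concentration steps themselves are standard; the care lies entirely in lining up the constants so the union of the two events has probability at most $2\delta$ exactly.
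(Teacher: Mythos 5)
The paper itself does not prove this lemma -- it is imported verbatim by citation from \citet{bao2023selective} -- so there is no in-paper argument to compare against. Judged on its own terms, your route is the standard one for maximal uniform spacings and is essentially sound: the representation of the $n+1$ spacings as $(E_i/T)_{i=1}^{n+1}$ with $E_i$ i.i.d.\ exponential and $T=\sum_i E_i$ is correct (and the lemma's maximum, taken over only $n$ of the $n+1$ spacings, is dominated by $M/T$); the split $\sP(M/T\ge c)\le \sP(T<t_0)+\sP(M\ge ct_0)$ with $t_0=(n+1)(1-2\sqrt{a/(n+1)})$ and $ct_0=2a$ is exactly right; and both concentration steps check out ($\sP(T\le m(1-\epsilon))\le e^{-m\epsilon^2/2}$ via the Chernoff bound with $\log(1-\epsilon)\le-\epsilon-\epsilon^2/2$, giving $e^{-2a}=\delta^2$, and $\sP(M\ge 2a)\le (n+1)e^{-2a}=(n+1)\delta^2$). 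Your reading of $\log\delta$ as $\log(1/\delta)$ is clearly the intended one, as the paper's application in Proposition \ref{pro:quantile_selection} confirms.

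The one point you should not paper over is the factor $n+1$ in the second term: your total is $(n+2)\delta^2$, which is $\le 2\delta$ only when $\delta\lesssim 1/(n+1)$, and this is not a constant-chasing artifact that a sharper union bound will remove. The lemma as literally stated is false for moderate $\delta$: with $n=100$ and $\delta=0.3$ the threshold evaluates to about $0.031$, which lies below the mean $H_{101}/101\approx 0.051$ of the maximum spacing, so the left-hand probability is close to $1$ while $2\delta=0.6$. So your remark that the statement is ``vacuous'' outside $\delta\le 1/(n+1)$ is not accurate -- it is simply wrong there -- and the honest fix is to state $\delta\le 1/(n+1)$ (or $a\ge\log(n+1)$) as an explicit hypothesis, under which $(n+1)\delta^2\le\delta$ and your two pieces sum to $2\delta$ as claimed. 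This restriction costs nothing downstream, since the paper only ever invokes the lemma with $\delta_j=(n+j)^{-3}$.
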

\begin{proof}[Proof of Proposition \ref{pro:quantile_selection}]
    Let $F_v(\cdot)$ be the c.d.f. of $\{V_i\}_{i\geq -n}$. If $\gA_j$ takes the quantile of $\{V_i\}_{i\leq j-1}$ for $j\geq 0$, then notice that
    \begin{align*}
        \Indicator{V_j \leq \gA_j\LRs{\{V_i\}_{i\leq j-1}}} = \Indicator{F_v(V_j) \leq \gA_j\LRs{\{F_v(V_i)\}_{i\leq j-1}}}.
    \end{align*}
    Without loss of generality, we assume $V_i \stackrel{i.i.d.}{\sim} \operatorname{Uniform}([0,1])$. Denote $\ermV_{(r)}^{[n+j]}$ and $\ermV_{(r)}^{[n+j]\setminus s}$ the $r$-th smallest values in $\{V_i\}_{i\leq j-1}$ and $\{V_i\}_{i\leq j-1,i\neq s}$ respectively. Then we have
    \begin{align*}
        &\LRabs{\gA_j\LRs{\{V_i\}_{i\leq j-1}} - \gA_j\LRs{\{V_i\}_{i\leq j-1,i\neq s}, V_t}}\\
        &\qquad\leq \max\LRl{\ermV_{(\lceil \beta (n+j)\rceil)}^{[n+j]} - \ermV_{(\lceil \beta (n+j)\rceil-1)}^{[n+j]}, \ermV_{(\lceil \beta (n+j)\rceil+1)}^{[n+j]} - \ermV_{(\lceil \beta (n+j)\rceil)}^{[n+j]}}\\
        &\qquad\leq \max\LRl{\ermV_{(\lceil \beta (n+j)\rceil)}^{[n+j]\setminus s} - \ermV_{(\lceil \beta (n+j)\rceil-1)}^{[n+j]\setminus s}, \ermV_{(\lceil \beta (n+j)\rceil+1)}^{[n+j]\setminus s} - \ermV_{(\lceil \beta (n+j)\rceil)}^{[n+j]\setminus s}}\\
        &\qquad=: \sigma_j,
    \end{align*}
    where the first inequality follows from Lemma \ref{lemma:order_swap_one}; and the second inequality follows from Lemma \ref{lemma:order_drop_one}. Invoking Lemma \ref{lemma:uniform_spacing}, we can guarantee that for any $\delta_j \in (0,1)$,
    \begin{align*}
        \sP\LRs{\sigma_j > \frac{1}{1 - 2\sqrt{\frac{\log(1/\delta_j)}{n+j}}} \frac{2\log\delta_j}{n+j}} \leq 2\delta_j.
    \end{align*}
    Taking $\delta_j = (n+j)^{-3}$ and applying union's bound, we have
    \begin{align*}
        \sP\LRs{\bigcup_{0\leq j \leq T-1} \LRl{\sigma_j > \frac{1}{1 - 2\sqrt{\frac{3\log(n+j)}{n+j}}} \frac{6\log(n+j)}{n+j}}}\leq \sum_{j=0}^{T-1} (n+j)^{-3} \leq (T+n)^{-2}.
    \end{align*}
    Then with probability at least $1 - (T+n)^{-2}$, it holds that
    \begin{align*}
        \epsilon(t) &= 2\sum_{j=0}^{T-1}\sigma_j + 3(\sqrt{e} + 1)\log(T+n)\nonumber\\
        &\leq \sum_{j=0}^{T-1} \frac{2}{1 - 2\sqrt{\frac{3\log(n+j)}{n+j}}} \frac{6\log(n+j)}{n+j} + 3(\sqrt{e} + 1)\log(T+n)\\
        &\Eqmark{i}{\leq} \sum_{j=0}^{T-1} \frac{24\log(n+j)}{n+j} + 3(\sqrt{e} + 1)\log(T+n)\\
        &\leq \sum_{j=0}^{T-1} \frac{24\log(T+n)}{n+j} + 3(\sqrt{e} + 1)\log(T+n)\\
        &\Eqmark{ii}{\leq} 24\log^2(T+n) + 3(\sqrt{e} + 1)\log(T+n),
    \end{align*}
    where $(i)$ holds due to the assumption $48\log n \leq n$ and $n\geq 3$ (the function $\log x/x$ is decreasing on $[3,+\infty)$); $(ii)$ follows from $\sum_{j=0}^{T-1}\frac{1}{n+j} \leq \int_{n-1}^{T+n-1} \frac{1}{x} dx \leq \log(T+n)$.
\end{proof}

\subsection{Proof of Theorem \ref{thm:FCR_cal_selection}}
The following lemma is parallel to Lemma \ref{lemma:zero_gap_swap}, which can be proved in similar arguments in Section \ref{proof:lemma:zero_gap_swap}.
\begin{lemma}\label{lemma:zero_gap_st}
    Under the conditions of Theorem \ref{thm:FCR_cal_selection}, the following relation holds:
    \begin{align*}
        \E\LRm{\frac{S_t C_{t,s}}{\rmS_t(T) + 1} \frac{\Indicator{R_{t} > Q_{\alpha}(\{R_i\}_{i\in \widehat{\gC}_t \cup \{t\}})} }{|\widehat{\gC}_t|+1} } = \E\LRm{\frac{S_t C_{t,s}}{\rmS_t^{(s\gets t)}(T) + 1} \frac{\Indicator{R_s > Q_{\alpha}(\{R_i\}_{i\in \widehat{\gC}_t \cup \{t\}})}}{|\widehat{\gC}_t|+1}  }.
    \end{align*}
\end{lemma}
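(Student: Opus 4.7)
The plan is to mimic the swap-conditioning argument used for Lemma \ref{lemma:zero_gap_swap} in Section \ref{proof:lemma:zero_gap_swap}, but exploit the simpler structure of the fixed-holdout setting of Section \ref{subsec:cal_sel_fixed_set}, where the selection rule $S_j = \mathbbm{1}\{V_j \leq \gA(\{V_i\}_{-n\leq i\leq -1})\}$ depends on the online data only through the (fixed) holdout scores, and the adaptive pick indicator is $C_{t,s'} = \mathbbm{1}\{V_{s'} \leq \gA(V_t, \{V_i\}_{-n\leq i\leq -1,\,i\neq s'})\}$. Accordingly, the virtual decisions will be $S_j^{(s\gets t)} = \mathbbm{1}\{V_j \leq \gA(\{V_i\}_{-n\leq i\leq -1,\,i\neq s}, V_t)\}$ for $j\geq 0,\ j\neq t$, obtained by replacing the single holdout score $V_s$ with $V_t$ inside the threshold.

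First I would introduce, as in \eqref{proof:lemma:zero_gap_swap}, the unordered-set event $\gE(\rz) = \{[Z_s,Z_t] = [z_1,z_2]\}$ together with the random labels $I_s,I_t \in\{1,2\}$ determined by $Z_s = z_{I_s},\ Z_t = z_{I_t}$, and I would condition on $\gE(\rz)$ and $\{Z_k\}_{k\neq s,t}$. Writing $v_k = V(z_k)$ and $r_k = |y_k-\widehat{\mu}(x_k)|$ for $k=1,2$, I would then catalogue which quantities are fixed, which depend only on $I_s$, and which depend only on $I_t$. The symmetry of $\gA$ gives that for every $s'\in\{-n,\dots,-1\}\setminus\{s\}$, $C_{t,s'} = \mathbbm{1}\{V_{s'}\leq \gA(v_1,v_2,\{V_k\}_{k\neq s,s'})\}$ is constant; hence $|\widehat{\gC}_t|$ equals $C_{t,s} + [\text{const}]$, so on the event $\{S_tC_{t,s}=1\}$ one has $C_{t,s}=1$ and $|\widehat{\gC}_t|+1$ is constant. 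Similarly, $Q_\alpha(\{R_i\}_{i\in\widehat{\gC}_t\cup\{t\}})$ is a fixed quantity $Q$, because on $\{S_tC_{t,s}=1\}$ the set $\widehat{\gC}_t\cup\{t\}$ equals $\{s,t\}\cup[\text{const}]$ and the residuals $\{R_s,R_t\} = \{r_1,r_2\}$ as an unordered set. Finally, $S_tC_{t,s} = \mathbbm{1}\{v_{I_t} \leq \gA(v_{I_s},\cdot)\}\,\mathbbm{1}\{v_{I_s}\leq \gA(v_{I_t},\cdot)\}$ is symmetric in $(I_s,I_t)$.

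Next I would identify the $I$-dependence of the remaining ingredients. The miscoverage indicators reduce to $\mathbbm{1}\{R_t>Q\} = \mathbbm{1}\{r_{I_t}>Q\}$ and $\mathbbm{1}\{R_s>Q\} = \mathbbm{1}\{r_{I_s}>Q\}$. For the denominators, for every $j\in\{0,\ldots,T\}\setminus\{t\}$ the real decision $S_j = \mathbbm{1}\{V_j\leq \gA(v_{I_s}, \{V_i\}_{-n\leq i\leq -1,\,i\neq s})\}$ depends only on $I_s$ given $\{Z_k\}_{k\neq s,t}$, while the virtual decision $S_j^{(s\gets t)} = \mathbbm{1}\{V_j\leq \gA(v_{I_t}, \{V_i\}_{-n\leq i\leq -1,\,i\neq s})\}$ depends only on $I_t$. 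Thus there is a single deterministic function $g$ (depending only on $\{Z_k\}_{k\neq s,t}$) such that $\rmS_t(T)+1 = g(v_{I_s})$ and $\rmS_t^{(s\gets t)}(T)+1 = g(v_{I_t})$ on the conditioning event.

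With all of the above in hand, the two conditional expectations in the claim both collapse to a weighted sum over the two equally likely values of $(I_s,I_t)\in\{(1,2),(2,1)\}$; using the exchangeability of $Z_s$ and $Z_t$ (so $\mathbb{P}(I_s=1)=\mathbb{P}(I_t=1)=\tfrac12$) and the symmetry of $S_tC_{t,s}/(|\widehat{\gC}_t|+1)$, both sides equal the same explicit expression $A\bigl[\mathbbm{1}\{r_1>Q\}/g(v_2) + \mathbbm{1}\{r_2>Q\}/g(v_1)\bigr]/2$, where $A = S_tC_{t,s}/(|\widehat{\gC}_t|+1)$. Marginalising over $\rz$ and applying the tower property completes the proof. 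The main obstacle is the careful bookkeeping in showing that $|\widehat{\gC}_t|+1$ is constant on the relevant event and that $g$ depends only on the holdout slot $V_s$ (resp.\ $V_t$ after swap) and not on any other $I$-indexed quantity; this is where the symmetry of $\gA$ is used repeatedly, and once it is verified the final equality is a one-line symmetrisation.
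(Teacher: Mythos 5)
Your proposal is correct and follows essentially the same route the paper intends: the paper gives no separate proof of Lemma~\ref{lemma:zero_gap_st} but states it ``can be proved in similar arguments'' to Lemma~\ref{lemma:zero_gap_swap}, and you carry out exactly that adaptation — conditioning on the unordered pair $[Z_s,Z_t]$ and $\{Z_k\}_{k\neq s,t}$, checking that $S_tC_{t,s}/(|\widehat{\gC}_t|+1)$ and $Q_\alpha$ are fixed while $\rmS_t(T)+1$ and $\rmS_t^{(s\gets t)}(T)+1$ are the same function evaluated at $v_{I_s}$ and $v_{I_t}$ respectively, then symmetrizing via $\sP(I_s=1)=\sP(I_t=1)$. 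The bookkeeping you flag (constancy of $|\widehat{\gC}_t|+1$ on $\{S_tC_{t,s}=1\}$ and the single function $g$ for both decision sums) is indeed where the symmetry of $\gA$ enters, and your treatment of it is sound.
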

\begin{proof}[Proof of Theorem \ref{thm:FCR_cal_selection}]
For $s\leq t-1$, we denote $C_{t,s} = \Indicator{V_{s} > \gA\LRs{\{V_i\}_{i \neq s}, V_t}}$. Using the definition of quantile, it holds that
    \begin{align}\label{eq:alpha_quantile_virtual}
    \frac{1}{|\widehat{\gC}_t|+1}\sum_{s\in \widehat{\gC}_t\cup \{t\}} \Indicator{R_s > Q_{\alpha}(\{R_i\}_{i\in \widehat{\gC}_t \cup \{t\}})} \leq \alpha.
\end{align}
From the construction in \eqref{eq:cond_PI}, we also have
\begin{align}\label{eq:cover_equivalence}
    \Indicator{Y_t \not\in \gI_t^{\rm{CAP}}(X_t)} &= \Indicator{R_t > Q_{\alpha}(\{R_i\}_{i\in \widehat{\gC}_t \cup \{t\}})}.
\end{align}
By arranging \eqref{eq:alpha_quantile_virtual} and \eqref{eq:cover_equivalence}, we can upper bound the miscoverage indicator as
\begin{align}\label{eq:miscover_upper_bound}
    \Indicator{Y_t \not\in \gI_t^{\rm{CAP}}(X_t)} \leq \alpha + \frac{1}{|\widehat{\gC}_t|+1}\sum_{s\in \widehat{\gC}_t} \Indicator{R_t > Q_{\alpha}(\{R_i\}_{i\in \widehat{\gC}_t \cup \{t\}})} - \Indicator{R_s > Q_{\alpha}(\{R_i\}_{i\in \widehat{\gC}_t \cup \{t\}})}.
\end{align}
For each pair $(s, t)$ with $s \in \widehat{\gC}_t$, we introduce a sequence of virtual decision indicators:
\begin{align}
    S_j^{(s \gets t)} = \Indicator{V_j \leq \gA\LRs{\{V_i\}_{i\neq s}, V_t}},\quad \text{for }0\leq j \leq T, j\neq t.
\end{align}
Correspondingly, we denote $\rmS_t(T) = \sum_{j=0, j \neq t}^{T} S_j$ and $\rmS_t^{(s\gets t)}(T) = \sum_{j=0, j \neq t}^{T} S_j^{(s \gets t)}$. 
Plugging \eqref{eq:miscover_upper_bound} into the definition of FCR gives
\begin{align}
    &\FCR(T)\nonumber\\
    &= \E\LRm{\sum_{t=0}^T \frac{S_t}{\rmS_t(T) + 1} \Indicator{Y_t \not\in \gI_t^{\rm{CAP}}(X_t)}}\nonumber\\
    &\leq \alpha + \E\LRm{\sum_{t=0}^T \frac{S_t}{\rmS_t(T) + 1} \frac{1}{|\widehat{\gC}_t|+1} \sum_{s\in \widehat{\gC}_t} \LRs{\Indicator{R_t > Q_{\alpha}(\{R_i\}_{i\in \widehat{\gC}_t \cup \{t\}})} - \Indicator{R_s > Q_{\alpha}(\{R_i\}_{i\in \widehat{\gC}_t \cup \{t\}})}}} \nonumber\\
    &= \alpha + \sum_{t=0}^T \sum_{s=-n}^{-1} \E\LRm{\frac{1}{\rmS_t(T) + 1} \frac{S_t C_{t,s}}{|\widehat{\gC}_t|+1}  \LRs{\Indicator{R_t > Q_{\alpha}(\{R_i\}_{i\in \widehat{\gC}_t \cup \{t\}})} - \Indicator{R_s > Q_{\alpha}(\{R_i\}_{i\in \widehat{\gC}_t \cup \{t\}})}}}\nonumber\\
    &= \alpha + \sum_{t=0}^T\sum_{s=-n}^{-1} \E\LRm{\LRl{\frac{1}{\rmS_t^{(s\gets t)}(T) + 1} - \frac{1}{\rmS_t(T) + 1}}\frac{S_t C_{t,s}}{|\widehat{\gC}_t|+1} \Indicator{R_s > Q_{\alpha}(\{R_i\}_{i\in \widehat{\gC}_t \cup \{t\}})}}\nonumber\\
    &\leq \alpha + \sum_{t=0}^T\sum_{s=-n}^{-1} \E\LRm{\LRabs{\frac{1}{\sum_{j=0,j\neq t} S_j^{(s\gets t)} +1} - \frac{1}{\sum_{j=0,j\neq t} S_j +1}}\cdot \frac{S_t C_{t,s}}{|\widehat{\gC}_t|+1} },
    \label{eq:FCR_with_gap}
\end{align}
where the last equality follows from Lemma \ref{lemma:zero_gap_st}. Let $F_V(\cdot)$ be the c.d.f. of $\{V_i\}_{i\geq -n}$. 
Denote $\widehat{q}^{(s\gets t)} = F_V\{\gA\LRs{\{V_i\}_{i\neq s}, V_t}\}$, and $q = F_V\{\gA\LRs{\{V_i\}_{i\neq s}, V_s}\}$. 
Then given $Z_t, \{Z_i\}_{1\leq i \leq n}$, we know $\sum_{j=0,j\neq t} S_j \sim \operatorname{Binomial}(T, 1-\widehat{q})$ and $\sum_{j=0,j\neq t} S_j^{(s\gets t)} \sim \operatorname{Binomial}(T, 1-\widehat{q}^{(s\gets t)})$, which further yield
\begin{align}\label{eq:rej_num_diff}
    &\E\LRm{\frac{1}{\sum_{j=0,j\neq t} S_j^{(s\gets t)} +1} - \frac{1}{\sum_{j=0,j\neq t} S_j +1} \mid Z_t, \{Z_i\}_{1\leq i \leq n} }\nonumber\\
    &\qquad = \frac{1 - (\widehat{q}^{(s\gets t)})^{T+1}}{(T+1) (1 - \widehat{q}^{(s\gets t)})} - \frac{1 - \widehat{q}^{T+1}}{(T+1) (1 - \widehat{q})}\nonumber\\
    &\qquad = \frac{1 - \widehat{q}^{T+1}}{(T+1) (1 - \widehat{q})} \LRl{\frac{1- \widehat{q}}{1- \widehat{q}^{(s\gets t)}} \frac{1 - (\widehat{q}^{(s\gets t)})^{T+1}}{1 - \widehat{q}^{T+1}} - 1}\nonumber\\
    &\qquad = \E\LRm{\frac{1}{\sum_{j=0,j\neq t} S_j +1} \mid Z_t, \{Z_i\}_{1\leq i \leq n} } \cdot \LRl{\frac{1- \widehat{q}}{1- \widehat{q}^{(s\gets t)}} \frac{1 - (\widehat{q}^{(s\gets t)})^{T+1}}{1 - \widehat{q}^{T+1}} - 1}.
\end{align}
Notice that, if $\widehat{q}^{(s\gets t)} \geq \widehat{q}$, 
    \begin{align}\label{eq:ratio_one_diff}
        &\frac{1 - \widehat{q}}{1 - \widehat{q}^{(s\gets t)}} \frac{1 - (\widehat{q}^{(s\gets t)})^{T+1}}{1 - \widehat{q}^{T+1}} - 1\\
        &\qquad= \frac{1 - (\widehat{q}^{(s\gets t)})^{T+1}}{1 - \widehat{q}^{T+1}} \frac{ \widehat{q}^{(s\gets t)} - \widehat{q} }{1 - \widehat{q}^{(t)}} + \frac{(\widehat{q}^{(s\gets t)})^{T+1} - \widehat{q}^{T+1}}{1 - \widehat{q}^{T+1}}\nonumber\\
        &\qquad= \frac{1 - (\widehat{q}^{(s\gets t)})^{T+1}}{1 - \widehat{q}^{T+1}} \frac{ \widehat{q}^{(s\gets t)} - \widehat{q} }{1 - \widehat{q}^{(t)}} + \frac{\LRs{\widehat{q}^{(s\gets t)} - \widehat{q}}\sum_{k=0}^T (\widehat{q}^{(s\gets t)})^k q^{T-k} }{1 - \widehat{q}^{T+1}}\nonumber\\
        &\qquad\leq \frac{ \widehat{q}^{(s\gets t)} - \widehat{q} }{1 - \widehat{q}^{(t)}} + \frac{\widehat{q}^{(s\gets t)} - \widehat{q} }{1 - \widehat{q}}\nonumber\\
        &\qquad= \frac{2(\widehat{q}^{(s\gets t)} - \widehat{q}) }{1 - \widehat{q}^{(s\gets t)}}.
    \end{align}
Since $S_t$, $C_{t,s}$, $|\widehat{\gC}_t|$ and $\Indicator{R_s > Q_{\alpha}(\{R_i\}_{i\in \widehat{\gC}_t \cup \{t\}})}$ depend only on calibration set and $Z_t$, substituting \eqref{eq:rej_num_diff} and \eqref{eq:ratio_one_diff} into \eqref{eq:FCR_with_gap} results in the following upper bound
\begin{align}
    \FCR(T) 
    &\leq \alpha + \E\LRm{\sum_{t=0}^T\frac{S_t}{\sum_{j=0,j\neq t} S_j +1} \sum_{s=-n}^{-1} \frac{C_{t,s} \Indicator{\widehat{q}^{(s\gets t)} \geq \widehat{q}}}{|\widehat{\gC}_t|+1}  \frac{2(\widehat{q}^{(s\gets t)} - \widehat{q}) }{1 - \widehat{q}^{(s\gets t)}} }\nonumber\\
    &\leq \alpha + \E\LRm{\sum_{t=0}^T\frac{S_t}{\sum_{j=0,j\neq t} S_j +1} \sum_{s=-n}^{-1} \frac{C_{t,s} }{|\widehat{\gC}_t|+1}  \frac{2\LRabs{\widehat{q}^{(s\gets t)} - \widehat{q}} }{1 - \widehat{q}^{(s\gets t)}} }.\nonumber
\end{align}
\end{proof}

\section{Proofs of CAP under distribution shift}

Denote the selection time by $\{\tau_1,\ldots,\tau_M\}$, where $M = \sum_{j=0}^T S_j$ and
\begin{align*}
    \tau_m = \min\LRl{0\leq t \leq T: \sum_{j=1}^t S_j = m},\quad\text{for }1\leq m\leq M
\end{align*}
Then we know 
\begin{align*}
    \alpha_{\tau_{m+1}}^i \gets \alpha_{\tau_{m}}^i + \gamma_i (\alpha - \operatorname{err}_{\tau_{m}}^i),\quad\text{for }1\leq m\leq M-1.
\end{align*}
\begin{lemma}[Lemma 4.1 of \citet{gibbs2021adaptive}, modified.]\label{lemma:alpha_ti_range}
    With probability one we have that $\alpha_{\tau_m}^i \in [-\gamma_{\tau_m}^i, 1+\gamma_{\tau_m}^i]$ for $0\leq m \leq M$.
\end{lemma}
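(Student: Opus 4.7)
The plan is to prove the claim by induction on $m$, following closely the argument of Lemma~4.1 in \citet{gibbs2021adaptive} but keeping track only of the selected indices $\tau_1,\ldots,\tau_M$. For the base case $m=0$, the initialization $\alpha_{\tau_0}^i = \alpha_0^i$ is taken in $[0,1]$, hence trivially in $[-\gamma_i, 1+\gamma_i]$. Given the induction hypothesis that $\alpha_{\tau_m}^i \in [-\gamma_i, 1+\gamma_i]$, I will split into three cases according to where $\alpha_{\tau_m}^i$ lies.

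The key observation driving the inductive step concerns the behaviour of the CAP prediction interval at extreme miscoverage levels. By construction, $q_{\alpha}(\{R_i\}_{i\in\widehat{\gC}_t})$ is the $\lceil(1-\alpha)(|\widehat{\gC}_t|+1)\rceil$-th smallest residual, so the convention $q_{\alpha}=+\infty$ whenever $\alpha<0$ makes $\gI_{\tau_m}^{\rm CAP}(X_{\tau_m};\alpha_{\tau_m}^i)=\sR$, yielding $\operatorname{err}_{\tau_m}^i=0$; symmetrically, whenever $\alpha_{\tau_m}^i>1$ the interval is empty and $\operatorname{err}_{\tau_m}^i=1$. Hence in the first regime the update reads $\alpha_{\tau_{m+1}}^i=\alpha_{\tau_m}^i+\gamma_i\alpha$, which stays above $-\gamma_i$ and is bounded above by $\gamma_i\alpha\le 1+\gamma_i$; in the second regime it reads $\alpha_{\tau_{m+1}}^i=\alpha_{\tau_m}^i-\gamma_i(1-\alpha)$, which stays below $1+\gamma_i$ and is bounded below by $1-\gamma_i(1-\alpha)\ge -\gamma_i$.

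In the middle regime $\alpha_{\tau_m}^i\in[0,1]$, one has $\alpha - \operatorname{err}_{\tau_m}^i\in[-1,1]$ deterministically, so $\alpha_{\tau_{m+1}}^i\in[\alpha_{\tau_m}^i-\gamma_i,\alpha_{\tau_m}^i+\gamma_i]\subseteq[-\gamma_i,1+\gamma_i]$. Combining the three cases closes the induction.

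I do not anticipate a real obstacle: the only subtlety is the boundary convention at $\alpha<0$ and $\alpha>1$ that makes the feedback act as a one-sided restoring force, and this is a structural property of conformal quantiles rather than a computation. The fact that Algorithm~\ref{alg:cond_DtACI} only updates $\alpha^i$ at selection times $\tau_m$ (rather than at every $t$) does not affect the argument, since each update is still a single gradient step of the pinball-type form $\alpha\mapsto\alpha+\gamma_i(\alpha-\operatorname{err})$ on the newly observed selected point.
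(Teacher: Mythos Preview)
Your proposal is correct and follows exactly the inductive argument of Lemma~4.1 in \citet{gibbs2021adaptive}, which is all the paper invokes here (the paper states the lemma with citation and gives no separate proof). The only adaptation needed---updating along the subsequence $\tau_1,\ldots,\tau_M$ of selection times rather than at every $t$---is handled correctly by your observation that each update is still a single step of the same form, so the one-step restoring-force argument goes through unchanged.
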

\begin{proof}[Proof of Theorem \ref{thm:cond_DtACI}]
    The proof is adapted from the proof of Theorem 3.2 in \citet{gibbs2022conformal}, and here we provide it for completeness. We write $\E_{\rm{A}}[\cdot]$ as the expectation taken over the randomness from the algorithm. Let $\sum_{j=0}^T S_j = M$. Let $\widetilde{\alpha}_t = \sum_{i=1}^k \frac{p_t^i \alpha_t^i}{\gamma_i}$ with $p_t^i = w_{t}^i/(\sum_{j=1}^k w_t^j)$. From the update rule of Algorithm \ref{alg:cond_DtACI}, we know
    \begin{align*}
        \widetilde{\alpha}_{\tau_m} &= \sum_{i=1}^k \frac{p_{\tau_m}^i}{\gamma_i}\LRs{\alpha_{\tau_m+1}^i + \gamma_i (\err_{\tau_m}^i - \alpha)}\\
        &= \sum_{i=1}^k \frac{p_{\tau_m}^i \alpha_{\tau_{m+1}}^i}{\gamma_i} + \sum_{i=1}^k p_{\tau_m}^i (\err_{\tau_m}^i -\alpha)\\
        &=\widetilde{\alpha}_{\tau_{m+1}} + \sum_{i=1}^k \frac{(p_{\tau_m}^i-p_{\tau_{m+1}}^i) \alpha_{\tau_{m+1}}^i}{\gamma_i} + \sum_{i=1}^k p_{\tau_m}^i (\err_{\tau_m}^i - \alpha).
    \end{align*}
    Notice that $\alpha_{\tau_m} = \alpha_{\tau_m}^i$ with probability $p_{\tau_m}^i$, hence $\err_{\tau_m} = \err_{\tau_m}^i$ with probability $p_{\tau_m}^i$, which means $\E_{\ermA}[\err_{\tau_m}] = \sum_{i=1}^k p_{\tau_m}^i \err_{\tau_m}^i$. It follows that
    \begin{align}\label{eq:errt_recursion}
        \E_{\ermA}[\err_{\tau_m}] - \alpha = \widetilde{\alpha}_{\tau_m} - \widetilde{\alpha}_{\tau_{m+1}} + \sum_{i=1}^k \frac{(p_{\tau_{m+1}}^i - p_{\tau_m}^i) \alpha_{\tau_{m+1}}^i}{\gamma_i}.
    \end{align}
    Now, denote $W_{\tau_m} = \sum_{i=1}^k w_{\tau_m}^i$ and $\widetilde{p}_{\tau_{m+1}}^i = \frac{p_{\tau_m}^i \exp\LRs{-\eta_{\tau_m} \ell(\beta_{\tau_m}, \alpha_{\tau_m}^i)}}{\sum_{j=1}^k p_{\tau_m}^j \exp\LRs{-\eta_{\tau_m} \ell(\beta_{\tau_m}, \alpha_{\tau_m}^j)}}$. From the definition of $p_{\tau_{m+1}}^i$, we know
    \begin{align}\label{eq:p_and_p_tilde}
        p_{\tau_{m+1}}^i &= \frac{w_{\tau_{m+1}}^i/ W_{\tau_m}}{\sum_{j=1}^k w_{\tau_{m+1}}^j/ W_{\tau_m}}\nonumber\\
        &=  \frac{(1-\phi_{\tau_m}) \bar{w}_{\tau_m}^i/W_{\tau_m} + \phi_{\tau_m}\sum_{j=1}^k (\bar{w}_{\tau_m}^j/W_{\tau_m})/k }{\sum_{j=1}^k [(1-\phi_{\tau_m}) \bar{w}_{\tau_m}^j/W_{\tau_m} + \phi_{\tau_m}/k\sum_{l=1}^k \bar{w}_{\tau_m}^l/W_{\tau_m}]}\nonumber\\
        &= \frac{(1-\phi_{\tau_m}) \bar{w}_{\tau_m}^i/W_{\tau_m} + \phi_{\tau_m}\sum_{j=1}^k (\bar{w}_{\tau_m}^j/W_{\tau_m})/k }{(1-\phi_{\tau_m})\sum_{j=1}^k \bar{w}_{\tau_m}^j/W_{\tau_m} + \phi_{\tau_m}\sum_{l=1}^k \bar{w}_{\tau_m}^l/W_{\tau_m}}\nonumber\\
        &= \frac{(1-\phi_{\tau_m}) p_{\tau_m}^i \exp\LRs{-\eta_{\tau_m} \ell(\beta_{\tau_m}, \alpha_{\tau_m}^i)} + \phi_{\tau_m}\sum_{j=1}^k p_{\tau_m}^j \exp\LRs{-\eta_{\tau_m} \ell(\beta_{\tau_m}, \alpha_{\tau_m}^j)}/k }{\sum_{i=1}^k p_{\tau_m}^i \exp\LRs{-\eta_{\tau_m} \ell(\beta_{\tau_m}, \alpha_{\tau_m}^i)}}\nonumber\\
        &= (1 - \phi_{\tau_m})\widetilde{p}_{\tau_{m+1}}^i + \frac{\phi_{\tau_m}}{k}.
    \end{align}
    Further, we also have
    \begin{align}\label{eq:tilde_p_diff}
        \widetilde{p}_{\tau_{m+1}}^i - p_{\tau_m}^i &= \frac{p_{\tau_m}^i \exp\LRs{-\eta_{\tau_m} \ell(\beta_{\tau_m}, \alpha_{\tau_m}^i)}}{\sum_{j=1}^k p_{\tau_m}^j \exp\LRs{-\eta_{\tau_m} \ell(\beta_{\tau_m}, \alpha_{\tau_m}^j)}} - p_{\tau_m}^i\nonumber\\
        &=p_{\tau_m}^i \cdot \frac{\sum_{j=1}^k p_{\tau_m}^j \LRl{\exp\LRs{-\eta_{\tau_m} \ell(\beta_{\tau_m}, \alpha_{\tau_m}^i)}- \exp\LRs{-\eta_{\tau_m} \ell(\beta_{\tau_m}, \alpha_{\tau_m}^j)}}}{\sum_{j=1}^k p_{\tau_m}^j \exp\LRs{-\eta_{\tau_m} \ell(\beta_{\tau_m}, \alpha_{\tau_m}^j)}}\nonumber\\
        &=p_{\tau_m}^i \cdot \frac{\sum_{j=1}^k p_{\tau_m}^j \exp\LRs{-\eta_{\tau_m} \ell(\beta_{\tau_m}, \alpha_{\tau_m}^j)} \LRl{\exp\LRs{\eta_{\tau_m} \LRm{\ell(\beta_{\tau_m}, \alpha_{\tau_m}^j) - \ell(\beta_{\tau_m}, \alpha_{\tau_m}^i)}}- 1}}{\sum_{j=1}^k p_{\tau_m}^j \exp\LRs{-\eta_{\tau_m} \ell(\beta_{\tau_m}, \alpha_{\tau_m}^j)}}\nonumber\\
        &=p_{\tau_m}^i\cdot \sum_{j=1}^k \widetilde{p}_{\tau_m}^j \LRl{\exp\LRs{\eta_{\tau_m} \LRm{\ell(\beta_{\tau_m}, \alpha_{\tau_m}^j) - \ell(\beta_{\tau_m}, \alpha_{\tau_m}^i)}}- 1}.
    \end{align}
    By Lemma \ref{lemma:alpha_ti_range} we know $\alpha_{\tau_m}^i \in [-\gamma_{\tau_m}^i, 1+\gamma_{\tau_m}^i]$, which implies $\LRabs{\ell(\beta_{\tau_m}, \alpha_{\tau_m}^j) - \ell(\beta_{\tau_m}, \alpha_{\tau_m}^i)} \leq \max\{\alpha, 1-\alpha\}\LRabs{\alpha_{\tau_m}^j - \alpha_{\tau_m}^i} \leq 1+2\gamma_{\max}$. By the intermediate value theorem, we can have
    \begin{align*}
        \LRabs{\exp\LRs{\eta_{\tau_m} \LRm{\ell(\beta_{\tau_m}, \alpha_{\tau_m}^j) - \ell(\beta_{\tau_m}, \alpha_{\tau_m}^i)}}- 1} \leq \eta_{\tau_m}(1 + 2\gamma_{\max})\exp\LRl{\eta_{\tau_m} (1+2\gamma_{\max})}.
    \end{align*}
    Plugging it into \eqref{eq:tilde_p_diff} yields
    \begin{align}
        \LRabs{\widetilde{p}_{\tau_{m+1}}^i - p_{\tau_m}^i} \leq p_{\tau_m}^i \eta_{\tau_m}(1 + 2\gamma_{\max})\exp\LRl{\eta_{\tau_m} (1+2\gamma_{\max})}.\nonumber
    \end{align}
    Together with \eqref{eq:p_and_p_tilde}, we have
    \begin{align}
        \LRabs{\frac{(p_{\tau_{m+1}}^i - p_{\tau_m}^i) \alpha_{\tau_{m+1}}^i}{\gamma_i}} &\leq (1 - \phi_{\tau_m}) \LRabs{\frac{(\widetilde{p}_{\tau_{m+1}}^i - p_{\tau_m}^i) \alpha_{\tau_{m+1}}^i}{\gamma_i}} + \phi_{\tau_m} \LRabs{\frac{(k^{-1} - p_{\tau_m}^i) \alpha_{\tau_{m+1}}^i}{\gamma_i}}\nonumber\\
        &\leq \frac{\eta_{\tau_m} (1 + 2\gamma_{\max})^2}{\gamma_{\min}}\exp\LRl{\eta_{\tau_m} (1+2\gamma_{\max})} + 2\phi_{\tau_m} \frac{1+\gamma_{\max}}{\gamma_{\min}}\nonumber,
    \end{align}
    where we used Lemma \ref{lemma:alpha_ti_range}.
    Telescoping the recursion \eqref{eq:errt_recursion} from $m=1$ to $m=M$, we can get
    \begin{align}
        \LRabs{\sum_{m=1}^{M} \LRs{\E_{\ermA}[\err_{\tau_m}] - \alpha}} &\leq \LRabs{\widetilde{\alpha}_{\tau_1} - \widetilde{\alpha}_{\tau_{M+1}}} + \frac{ (1 + 2\gamma_{\max})^2}{\gamma_{\min}}\sum_{m=1}^{M} \eta_{\tau_m}\exp\LRl{\eta_{\tau_m} (1+2\gamma_{\max})}\nonumber\\
        &\qquad+ 2 \frac{1+\gamma_{\max}}{\gamma_{\min}}\sum_{m=1}^M \phi_{\tau_m}\nonumber\\
        &\leq \frac{1+2\gamma_{\max}}{\gamma_{\min}} + \frac{ (1 + 2\gamma_{\max})^2}{\gamma_{\min}}\sum_{m=1}^{M} \eta_{\tau_m}\exp\LRl{\eta_{\tau_m} (1+2\gamma_{\max})}\nonumber\\
        &\qquad + 2 \frac{1+\gamma_{\max}}{\gamma_{\min}}\sum_{m=1}^M \phi_{\tau_m}.\nonumber
    \end{align}
    According to the definition of $\tau_m$, we can rewrite the above relation as
    \begin{align}
        \LRabs{\sum_{t=0}^T S_t \LRs{\E_{\ermA}[\err_{t}] - \alpha}} & = \LRabs{\sum_{m=1}^{M} \LRs{\E_{\ermA}[\err_{\tau_m}] - \alpha}}\nonumber\\
        &\leq \frac{1+2\gamma_{\max}}{\gamma_{\min}} + \frac{ (1 + 2\gamma_{\max})^2}{\gamma_{\min}}\sum_{t=0}^{T}S_t \eta_{t} \exp\LRl{\eta_{t} (1+2\gamma_{\max})}\nonumber\\
        &\qquad + 2 \frac{1+\gamma_{\max}}{\gamma_{\min}}\sum_{t=0}^{T}S_t \phi_{t}.\nonumber
    \end{align}
    Since the randomness of Algorithm \ref{alg:cond_DtACI} is independent of the decisions $\{S_i\}_{i=0}^T$ and the data $\{Z_i\}_{i=-n}^T$, we have
    \begin{align}
        \E\LRm{\frac{\sum_{t=0}^T S_t\cdot \err_t}{\sum_{j=0}^T S_j}} - \alpha = \E\LRm{\frac{\sum_{t=0}^T S_t\cdot \LRs{\E_{\ermA}[\err_{t}] - \alpha}}{\sum_{j=0}^T S_j} }.\nonumber
    \end{align}
    The conclusion follows from the definition of $\varrho_t$ immediately.
\end{proof}

\section{Additional simulation details}
\setcounter{figure}{0}
\def\thefigure{F.\arabic{figure}}

\subsection{Details of e-LOND-CI}\label{appen:e-lord-ci}
The e-LOND-CI is similar to  LORD-CI, except for using e-values and LOND procedure instead. At each time $t$, the prediction interval is constructed as $\{y: e_t(X_t,y)<\alpha_t^{-1}\}$, where $e_t(X_t,y)$ is the e-value at time $t$ associated with $X_t$ and $y$ and $\alpha_t$ is the target level at time $t$ computed by $\alpha_t=\alpha\gamma^{\rm LOND}_t(\mathbf{S}_{t-1}+1)$, where $\gamma^{\rm LOND}_t$ is discount sequence. We choose $\gamma^{\rm LOND}_t=1/\{t(t-1)\}$ as \citet{xu2023online} suggested.

\begin{figure}[tb]
    \centering
    \includegraphics[width=\textwidth]{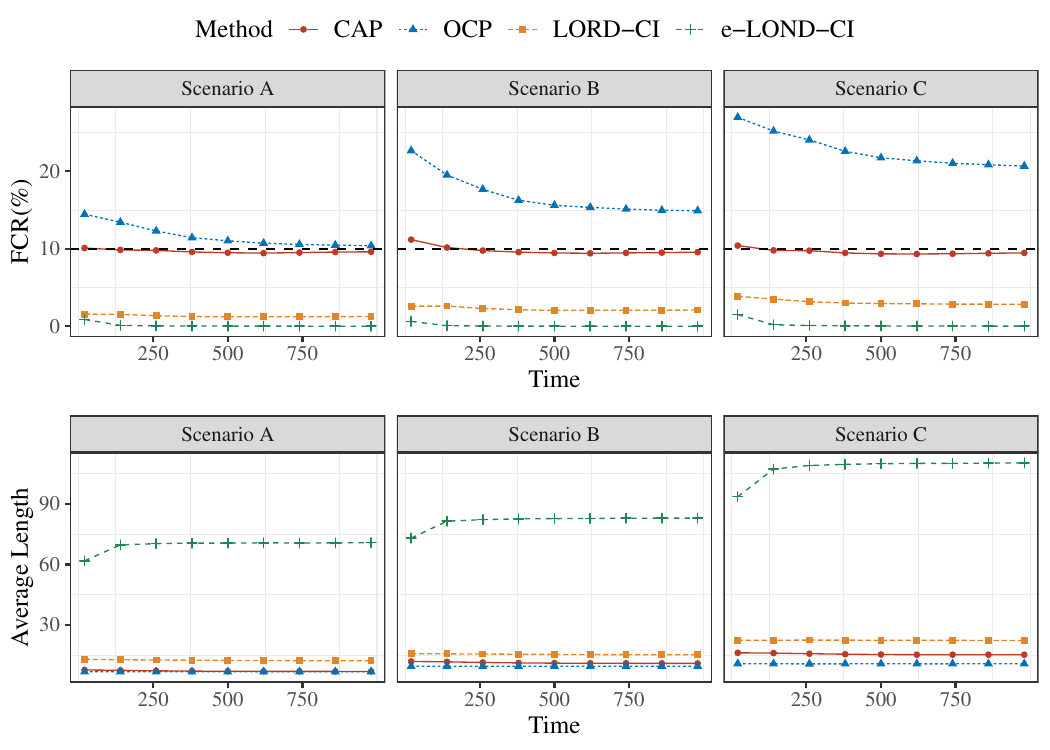}
    \caption{\small\it Real-time FCR plot and average length  plot from time $20$ to $2,000$ for e-LOND-CI. The selection rule is \textbf{Dec-driven} and the incremental holdout set with window size $200$ is considered. The black dashed line represents the target FCR level $10\%$.}
    \label{fig:eLONDCI}
\end{figure}

The e-value for constructing prediction intervals is transformed by $p$-values. By the duality of confidence interval and hypothesis testing, we can invert the task of constructing prediction intervals as testing. Let $H_{0t}: Y_t=y$, then the $p$-values are defined as
$$p_t(X_t,y)=\frac{\sum_{i\in\gC_t}\Indicator{|y-\widehat{\mu}(X_t)|\leq R_i}+1}{|\gC_t|+1}.$$
Following \citet{xu2023online}, we can directly convert this $p$-value into
$$e_t(X_t,y)=\frac{\Indicator{p_t(X_t,y)\leq \alpha_t}}{\alpha_t}.$$
By a same discussion as Proposition 2 in \citet{xu2023online} , we can verify that $\E[e_t(X_t,Y_t)\Indicator{Y_t=y}]\leq 1$, hence $e_t(X_t,Y_t)$ is a valid e-value.

We provide additional simulations for e-LOND-CI. Figure \ref{fig:eLONDCI} illustrates the FCR and average length under different scenarios using decision-driven selection for e-LOND-CI. As it is shown, the prediction intervals produced by e-LOND-CI are considerably wide, limiting the e-LOND-CI to provide non-trivial uncertainty quantification.

\subsection{Details of online multiple testing procedure using conformal $p$-values}\label{appen:conformal_p}

In our procedure, the conformal p-values \citep{jin2022selection} are constructed using an additional labeled data set $\gD_{\rm Add}=\{X_i,Y_i\}_{i=-(n+m)}^{-(n+1)}$, instead of the current holdout set. By doing this, these conformal p-values are independent given $\gD_{\rm Add}$, making the online multiple testing procedure decision-driven. The specific construction of the conformal p-values is outlined as follows:

Recall that the selection problem can be viewed as the following multiple hypothesis tests: for time $t$ and some constant $c_0 \in \sR$,
\begin{align*}
    H_{0,t}: Y_i \leq c_0\quad \text{v.s.}\quad H_{1,t}: Y_t > c_0.
\end{align*}
Defined the null data set in $\gD_{\rm Add}$ as $\gD^0_{\rm Add}=\{(X_i,Y_i)\in \gD_{\rm Add}:Y_i\leq c_0\}$. 
For each test data point, the (marginal) conformal $p$-value $p^{\rm marg}_t$ based on same-class calibration \citep{bates2023aos} can be calculated by
\begin{align*}\label{eq:conformal_p_values}
    p^{\rm marg}_t = \frac{1 + |\{(X_i,Y_i)\in \gD^0_{\rm Add}: g(X_i) \leq g(X_t)\}|}{|\gD^0_{\rm Add}|+1}, 
\end{align*}
where $g(x) =c_0- \widehat{\mu}(x)$ is the nonconformity score function for constructing $p$-values. If each test corresponds to different constant $c_t$ for determining null and non-null, we can use the conformal p-value in \citet{jin2022selection} which uses the whole additional data $\gD_{\rm Add}$ and specific nonconformity score functions for construction.

To control the online $\FDR$ at the level $\beta \in (0,1)$ throughout the procedure, we deploy the SAFFRON \citep{ramdas2018saffron} procedure. The main idea of SAFFRON is to make a more precise estimation of current FDP by incorporating the null proportion information. Given $\beta\in(0,1)$, the user starts to pick a constant $\lambda\in (0,1)$ used for estimating the null proportion, an initial wealth $W_0\leq \beta$ and a positive non-increasing sequence $\{\gamma_j\}_{j=1}^\infty$ of summing to one. The SAFFRON begins by allocation the rejection threshold $\beta_1=\min \{(1-\lambda)\gamma_1W_0,\lambda\}$ and for $t\leq 2$ it sets:
$$\beta_t=\min\Big\{\lambda, (1-\lambda)\Big(W_0\gamma_{t-C_{0+}}+(\alpha-W_0)\gamma_{t-\tau_1-C_{1+}}+\sum_{j\geq 2}\beta \gamma_{t-\tau_j-C_{j+}}\Big)\Big\},$$
where $\tau_j$ is the time of the $j$-th rejection (define $\tau_0=0$), and $C_{j+}=\sum_{i=\tau_j+1}^{t-1}\Indicator{p_i\leq\lambda}$. Thus for each time $t$, we reject the hypothesis if $p_t\leq\beta_t$. In our experiment, we set defaulted parameters, where $W_0=\beta/2$, $\lambda=0.5$ and $\gamma_j\propto 1/j^{1.6}$.

At last, we discuss the potential issue of online multiple testing procedure using conformal p-values. The conformal p-value conditional on $\gD_{\rm Add}$ is no longer super-uniform, which may hinder the validity of online FDR control. But it does not affect the performance of our CAP procedure, as we focus on interval construction and FCR control. For practitioners requiring rigorous online FDR control, we provide to use calibration-conditional $p$-values \citet{bates2023aos} to guarantee this.

The calibration-conditional $p$-value $p^{\rm ccv}$ proposed by \citet{bates2023aos} is valid conditional on the additional null labeled set $\gD^0_{\rm Add}$ for at least probability $1-\delta$, i.e.
$$\sP\left\{\sP(p^{\rm ccv}_t\leq x \mid \gD^0_{\rm Add})\leq x \text{ for all }x\in(0,1] \right\}\geq 1-\delta.$$
Let $\gD^0_{\rm Add} = |m_0|$, \citet{bates2023aos} used an adjustment function $h(x)=b_{\lceil (m_0+1)x\rceil}$ to map the marginal conformal $p$-value to the calibration-conditional p-value, where $b=\{b_i\}_{i=1}^{m_0}$ ($b_1\leq b_2\leq\cdots\leq b_{m_0}$) satisfying
$\sP(U_{(1)}\leq b_1,\cdots,U_{(m_0)}\leq b_{n_0})\geq 1-\delta$, 
and $U_{(i)}$ is the $i$-th largest from $m_0$ i.i.d. uniform random variables. Then the calibration-conditional $p$-value can be computed by 
$$p_t^{\rm ccv}=h(p_t^{\rm marg}),\quad \forall t\geq 0.$$
The determination of $b_i$ can be through Simes inequality or Monte Carlo approach, see \cite{bates2023aos} for detailed discussion. 

Conditional on $\gD^0_{\rm Add}$, $\{p^{\rm ccv}_{t}\}_{t\geq 0}$ are all independent and super-uniform (with a probability $\delta$), hence online multiple testing procedures such as AI \citep{foster2008alpha}, GAI \citep{aharoni2014generalized} with $p$-values $\{p^{\rm ccv}_{t}\}_{t\geq 0}$ can control the FDR below $\beta$ with probability $1-\delta$, where $\beta$ is the nominal level.

\subsection{Experiments on fixed calibration set}

\begin{figure}[htbp!]
    \centering
    \includegraphics[width=\textwidth]{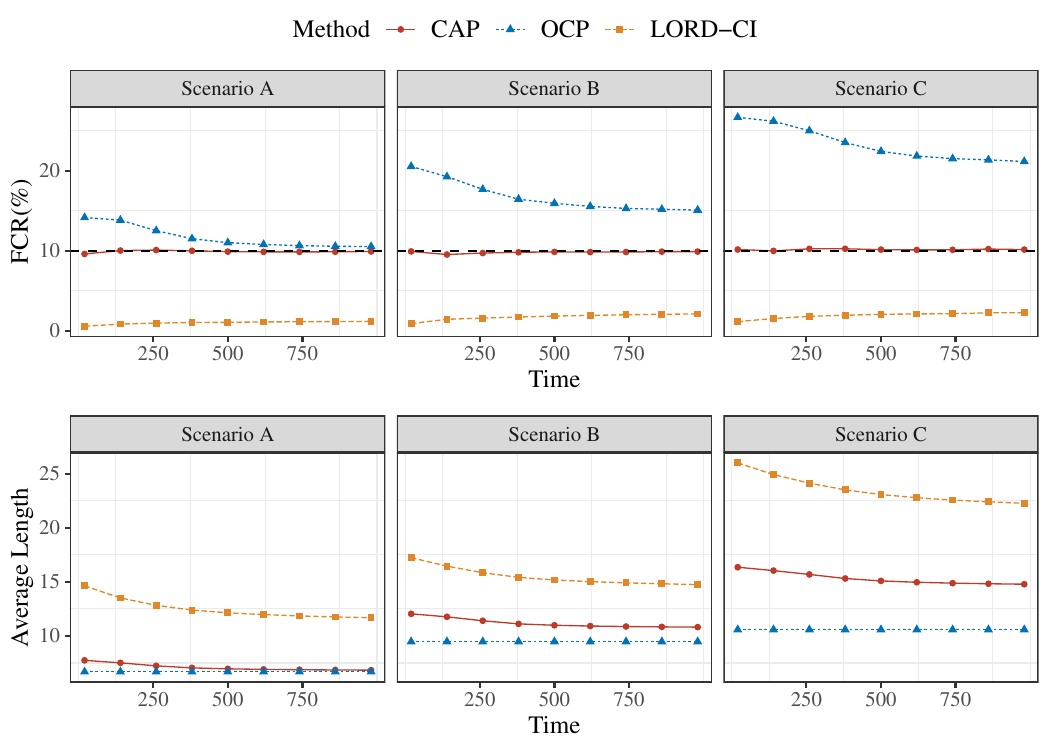}
    \caption{\small\it Real-time FCR plot and average length plot from time $20$ to $1,000$ for fixed calibration set after $500$ replications. The black dashed line represents the target FCR level $10\%$.}
    \label{fig:fixed}
\end{figure}

We verify the validity of our algorithms with respect to a fixed calibration set. The size of the fixed calibration set is set as $50$, and the procedure stops at time $1,000$. We design a decision-driven selection strategy. At each time $t$, the selection indicator is $S_t=\Indicator{V_t>\tau(\sum_{j=0}^{t-1}S_j)}$, where $V_t=\widehat{\mu}(X_t)$ and $\tau(s)=\tau_0-\min\{s/50,2\}$.
The parameter $\tau_0$ is pre-fixed for each scenario. Three different initial thresholds for different scenarios due to the change of the scale of the data. The thresholds $\tau_0$ are set as $1$, $4$ and $3$ for Scenarios A, B and C respectively. This selection rule is more aggressive when the number of selected samples is small.

We choose the target FCR level as $\alpha=10\%$. The real-time results are demonstrated in Figure \ref{fig:fixed} based on $500$ repetitions. Across all the settings, it is
evident that the CAP can deliver quite accurate FCR control and outputs narrower
PIs.

\subsection{Impacts of initial holdout set size}

\begin{figure}[tb]
    \centering
    \includegraphics[width=\textwidth]{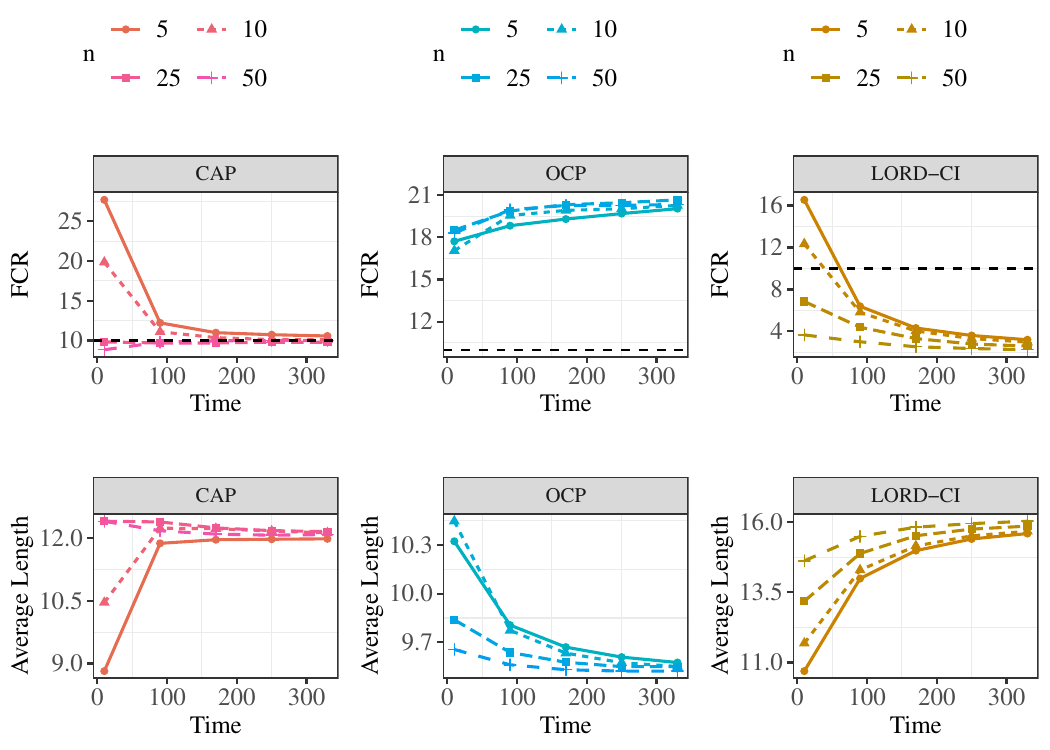}
    \caption{\small\it Real-time FCR and average length from time $20$ to $400$ using different sizes of the initial holdout set for CAP, OCP and LORD-CI. The basic setting is Scenario B and the quantile selection rule is used. The black dashed line denotes the target FCR level $10\%$.}
    \label{fig:n_change}
\end{figure}

Next we assess the impact of the initial holdout set size $n$. For simplicity, we focus on Scenario B and employ the quantile selection rule. We vary the initial size $n$ within the set $\{5,10,25,50\}$, and summarize the results among $500$ repetitions in Figure \ref{fig:n_change}. When the initial size is small, the CAP tends to exhibit overconfidence at the start of the stage. However, as time progresses, the FCR level approaches the target of $10\%$. Conversely, with a moderate initial size such as $25$, the CAP achieves tight FCR control throughout the procedure, thereby confirming our theoretical guarantee. A similar phenomenon is also observed with OCP and LORD-CI, wherein the FCR at the initial stage significantly diverges from the FCR at the end stage when a small value of $n$ is utilized. To ensure a stabilized FCR control throughout the entire procedure, we recommend employing a moderate size of for the initial holdout set.

\subsection{Comparisons of CAP and EXPRESS}\label{appen:compare_express}

Recall that our CAP-ada picks the calibration set by
\begin{align*}
    \hat{\gC}_t^{\rm CAP}= 
    \LRl{-n\leq s \leq t-1: \Pi_t(X_s) \prod_{i\in \gN_t^{\rm on}}\mathbbm{1}\{\Pi_i(X_s) = \Pi_i(X_t)\} = 1}.
\end{align*}
where $\gN_t^{\rm on} = \{0\leq j \leq t-1: \Pi_t(X_j) = 1\}$. While the EXPRESS proposed by \citet{sale2025online} outputs a calibration set indexed by
\begin{align}\nonumber
    \hat{\gC}_t^{\rm EXPRESS} = \LRl{-n\leq s \leq t-1: \Pi_t(X_s) \prod_{i=0}^{t-1}\mathbbm{1}\{\Pi_i(X_s) = \Pi_i(X_t)\} = 1},
\end{align}
which is more conservative compared to ours because $\gN_t^{\rm on} \subseteq \{0,\ldots,t-1\}$.

We conduct several simulations to verify the empirical performance of our proposed CAP and EXPRESS. If the picked calibration set is empty, we will report an interval with infinite length, which contributes to a correct selection when computing FCP. Therefore, we also compare the size of picked calibration points, the frequency of infinite length intervals, and the median length of interval instead of the mean length. In our simulations, we do not employ randomization to achieve exact coverage, which differs slightly from the procedure in \citet{sale2025online}. Specifically, we also compare the variants of CAP and EXPRESS by using a window scheme. It means we check the picking rule for online data within a windowed range, which can reduce the frequency that the picked set is empty. Denote our approach with window size $k$ as K-CAP
\begin{align*}
    \hat{\gC}_t^{{\rm K-CAP}}= 
    \LRl{-n\leq s \leq t-1: \Pi_t(X_s) \prod_{i\in \gN_t^{\rm on}\cap\{t-w,\cdots,t-1\}}\mathbbm{1}\{\Pi_i(X_s) = \Pi_i(X_t)\} = 1}
\end{align*}
and EXPRESS with window size $k$ as K-EXPRESS
\begin{align}\nonumber
    \hat{\gC}_t^{{\rm K-EXPRESS}} = \LRl{-n\leq s \leq t-1: \Pi_t(X_s) \prod_{i=t-k}^{t-1}\mathbbm{1}\{\Pi_i(X_s) = \Pi_i(X_t)\} = 1}.
\end{align}

\paragraph{Comparison Case 1}:
The first setting is from our Scenario A. Here $Y=\mu(X)+\epsilon$, $X\sim {\rm Unif}[-2,2]^{10}$ and $\mu(X)=X^\top\beta$ where $\beta=(\mathbf{1}_5^{\top},-\mathbf{1}_5^{\top})^\top$ and $\mathbf{1}_5$ is a $5$-dimensional vector with all elements $1$. The noise is heterogeneous and follows the conditional distribution $\epsilon\mid X\sim N(0,\{1+|\mu(X)|\}^2)$. We employ ordinary least squares (OLS) to obtain $\widehat{\mu}(\cdot)$. And the decision rule is $S_t=\mathds{1}\{\widehat{\mu}(X_t)>\tau(\sum_{i=0}^{t-1} S_i)\}$ where
$\tau(s)=2-\min\{s/20,2\}$. The initial holdout data size is $n=50$. Fixing the target FCR level at $40\%$ and the window size $K=20$ for K-CAP and K-EXPRESS, the results are depicted in Figure \ref{fig:comp_EXPRESS_oursetting}. It is clearly that our approach can produce a significantly smaller prediction interval, as long as a lower frequency of infinite interval. A similar phenomenon also happens to K-CAP and K-EXPRESS. In Table \ref{tab:comp_EXPRESS_oursetting}, we show the detailed values these methods take at $t=100$ and $t=200$.

\begin{figure}[htbp!]
    \centering
    \includegraphics[width=0.9\textwidth]{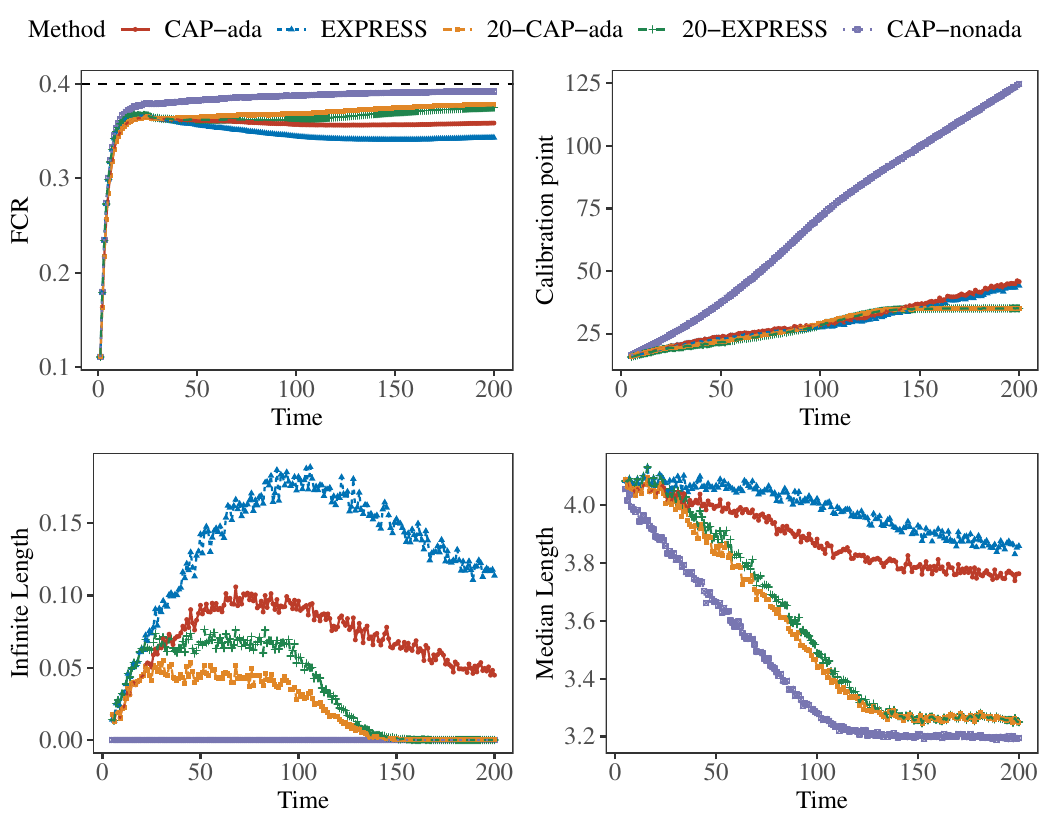}
    \caption{\small\it Comparison for our method with EXPRESS by real-time plots of FCR, calibration point number, frequency of infinite interval and median interval length from time $5$ to $200$ after $10,000$ replications under Comparison Case 1. The black dashed line represents the target FCR level $40\%$.}
    \label{fig:comp_EXPRESS_oursetting}
\end{figure}

\begin{table}[htbp]
\centering
\caption{Comparison of FCR, calibration point number (CP), frequency of infinite length interval (IL) and median interval length (ML) at $t=100$ and $t=200$ for different methods under Comparison Case 1. The target FCR level is $40\%$.}
\label{tab:comp_EXPRESS_oursetting}
\begin{tabular}{lcccccccc}
\toprule
\multirow{2}{*}{Method} & \multicolumn{4}{c}{$t=100$} & \multicolumn{4}{c}{$t=200$} \\
\cmidrule(lr){2-5} \cmidrule(lr){6-9}
& FCR & CP & IL & ML & FCR & CP & IL & ML \\
\midrule
CAP-ada     & 0.36 & 28.69 & 0.10 & 3.88  & 0.36 & 45.76  & 0.04 & 3.76 \\
EXPRESS     & 0.34 & 27.73 & 0.18 & 4.02  & 0.34 & 44.12  & 0.11 & 3.86 \\
20-CAP-ada  & 0.37 & 29.14 & 0.03 & 3.44  & 0.38 & 35.07  & 0.00 & 3.25 \\
20-EXPRESS  & 0.36 & 28.49 & 0.06 & 3.49  & 0.38 & 35.07  & 0.00 & 3.25 \\
CAP-nonada  & 0.39 & 71.83 & 0.00 & 3.28  & 0.39 & 124.77 & 0.00 & 3.19 \\
\bottomrule
\end{tabular}
\end{table}

\paragraph{Comparison Case 2}

The next setting is from \citet{sale2025online}. Let $X\sim{\rm Unif}[0,2]$ and $Y=X+\epsilon$, where $\epsilon\sim \mathcal{N}(0,X/2)$. The prediction model is defined as $\hat{\mu}(X)=X$. The selection rule is $S_t=\mathds{1}\{X_t<1+\sum_{i=0}^{t-1}S_i/200\}$. The results are summarized in Figure \ref{fig:comp_EXPRESS_SaleB} and Table \ref{tab:comp_EXPRESS_SaleB} with window size $K=10$. It shows that our method is at least as good as EXPRESS.

\begin{figure}[htbp!]
    \centering
    \includegraphics[width=0.9\textwidth]{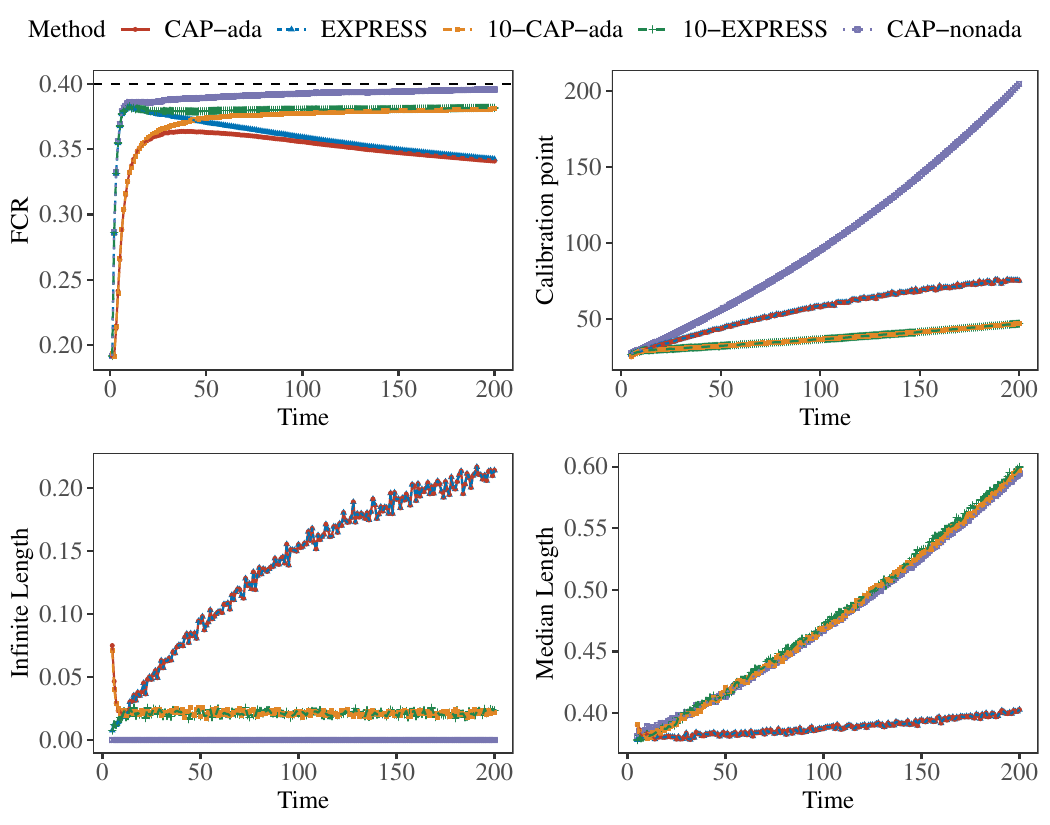}
    \caption{\small\it Comparison for our method with EXPRESS by real-time plots of FCR , calibration point number, frequency of infinite interval and median interval length from time $5$ to $200$ after $10,000$ replications under Comparison Case 2. The black dashed line represents the target FCR level $40\%$.}
    \label{fig:comp_EXPRESS_SaleB}
\end{figure}

\begin{table}[htbp]
\centering
\caption{Comparison of FCR , calibration point number (CP), frequency of infinite length interval (IL) and median interval length (ML) at $t=100$ and $t=200$ for different methods under Comparison Case 2. The target FCR level is $40\%$.}
\label{tab:comp_EXPRESS_SaleB}
\begin{tabular}{lcccccccc}
\toprule
\multirow{2}{*}{Method} & \multicolumn{4}{c}{$t=100$} & \multicolumn{4}{c}{$t=200$} \\
\cmidrule(lr){2-5} \cmidrule(lr){6-9}
& FCR & CP & IL & ML & FCR & CP & IL & ML \\
\midrule
CAP-ada     & 0.36 & 57.74 & 0.16 & 0.39 & 0.34 & 75.14 & 0.21 & 0.40 \\
EXPRESS     & 0.36 & 57.74 & 0.16 & 0.39 & 0.34 & 75.14 & 0.21 & 0.40 \\
10-CAP-ada  & 0.38 & 36.53 & 0.02 & 0.47 & 0.38 & 46.98 & 0.02 & 0.60 \\
10-EXPRESS  & 0.38 & 36.63 & 0.02 & 0.47 & 0.38 & 46.85 & 0.02 & 0.60 \\
CAP-nonada  & 0.39 & 95.40 & 0.00 & 0.47 & 0.40 & 204.53 & 0.00 & 0.59 \\
\bottomrule
\end{tabular}
\end{table}

\paragraph{Comparison Case 3} The final setting is also from \citet{sale2025online} to evaluate the performance of selection-conditional coverage. Here the data generating scenario is the same as Comparison Case 2. The decision rule is \[
S_t=\left\{
\begin{array}{ll}
    \mathds{1}\{X_t<1+\sum_{i=0}^{t-1}S_i/20\} & \text{if } t < 20 \\
    \mathds{1}\{\sum_{i=0}^{t}S_i>16\} & \text{if } t= 20.
\end{array}
\right.
\]
We stop our online procedure at $t=20$ and access the selection-conditional miscoverage by replicating $1\times 10^6$ times. 
The results are summarized in Table \ref{tab:comp_EXPRESS_SaleA} with window size $K=5$ and selection-conditional miscoverage $\alpha=40\%$. Our procedure yields identical results as EXPRESS in this setting.

\begin{table}[htbp]
\centering
 \caption{Comparison of miscoverage,  calibration point number (CP), frequency of infinite length interval (IL) and median interval length (ML) at $t=20$ for different methods under Comparison Case 3.}\label{tab:comp_EXPRESS_SaleA}
 {
\begin{tabular}{lcccc}
\toprule
Method  & Miscoverage & CP & IL & ML \\
\midrule
CAP-ada     & 0.308 & 9.29 & 0.234  & 0.567 \\
EXPRESS     & 0.308 & 9.29 & 0.234  & 0.567 \\
5-CAP-ada    & 0.346 & 10.5 & 0.0980 & 0.716 \\
5-EXPRESS   & 0.346 & 10.5 & 0.0980 & 0.716 \\
CAP-nonada   & 0.437 & 30   & 0      & 0.645 \\
\bottomrule
\end{tabular}}
\end{table}

\subsection{Additional simulation results for nonincreasing decision-driven selection rule}
We provide additional simulation results for nonincreasing decision-driven selection rule. The selection rule is $S_t=\mathds{1}\{\widehat{\mu}(X_t)>\tau(\sum_{i=0}^{t-1} S_i)\}$ where
$\tau(s)=\tau_0+\min\{s/50,2\}$ with $\tau_0$ is fixed as $1$, $4$ and $3$ for Scenario A, B and C respectively. The CAP is implemented with nonadaptive pick rule. The results are summarized in Figure \ref{fig:non_incre}. It is shown that the nonadaptive CAP controls FCR level precisely, which verifies our theory.

\begin{figure}[htbp!]
    \centering
    \includegraphics[width=0.9\textwidth]{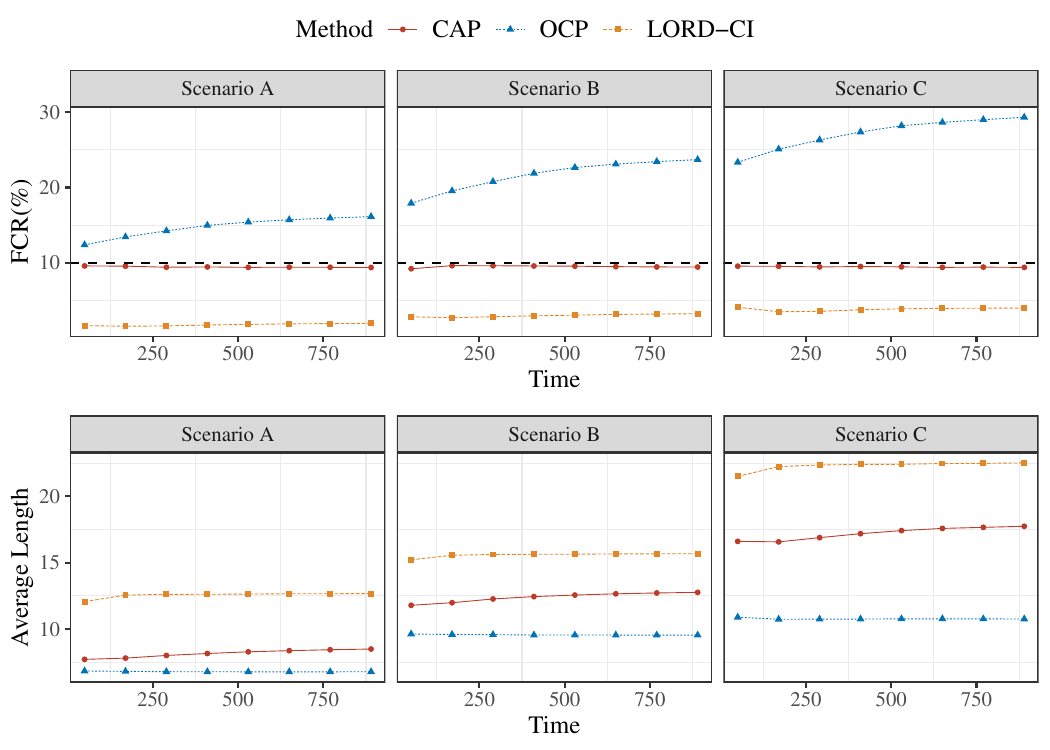}
    \caption{\small\it Comparison for three methods by real-time FCR plot and average length plot from time $50$ to $1,000$ for full calibration set after $500$ replications under nonincreasing decision-driven selection rule. The black dashed line represents the target FCR level $10\%$.}
    \label{fig:non_incre}
\end{figure}

\subsection{Comparisons of adaptive and nonadaptive pick rules for decision-driven selection}\label{appen:compare_inter}
Under \textbf{Dec-driven} rule,
we make empirical comparisons of calibration set picked by adaptive rule $\widehat{\gC}_t^{\rm{ada}} =  \{s\in\gH_t: \Pi_t(X_s) \prod_{i\in \gN_t^{\rm on}}\Indicator{\Pi_i(X_s) = \Pi_i(X_t)}\}$ and calibration set picked by nonadaptive rule $\widehat{\gC}_t^{\rm{nonada}} = \{s\in \gH_t: \Pi_t(X_s) = 1\}$. The setting is the same as Scenario A except that we change the target FCR level at $\alpha=40\%$ and stop the procedure at $t=200$.

\begin{figure}[htbp!]
    \centering
    \includegraphics[width=0.9\textwidth]{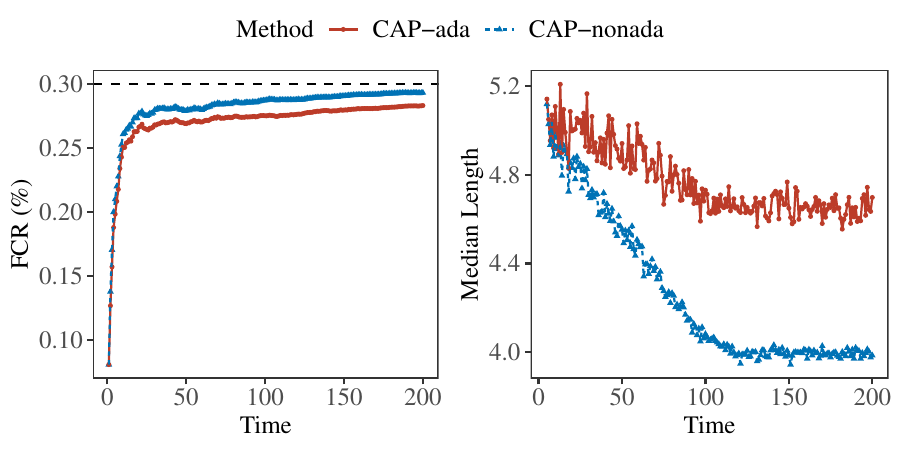}
    \caption{\small\it Comparison for CAP-ada and  CAP-nonada by real-time FCR plot and average length plot from time $1$ to $200$  after $1,000$ replications. The black dashed line represents the target FCR level $40\%$.}
    \label{fig:comp_inter}
\end{figure}

The results are demonstrated in Figure \ref{fig:comp_inter}. As CAP-ada usually picks none of the calibration data, leading to intervals with infinite length, we report the median length of prediction intervals instead of the average length among $1,000$ replications. Both methods can control the FCR. But CAP-ada ($\widehat{\gC}_t^{\rm{ada}}$) has a much wider interval compared to CAP-nonada ($\widehat{\gC}_t^{\rm{nonada}}$). And we find that at time $t=200$, CAP-ada picks no calibration data at a proportion of $5.2\%$, while CAP-nonada always provides sufficient calibration points.

\subsection{Comparisons of adaptive and nonadaptive pick rules for symmetric selection}\label{appen:compare_swap}

\begin{figure}[htbp!]
    \centering
    \includegraphics[width=0.8\textwidth]{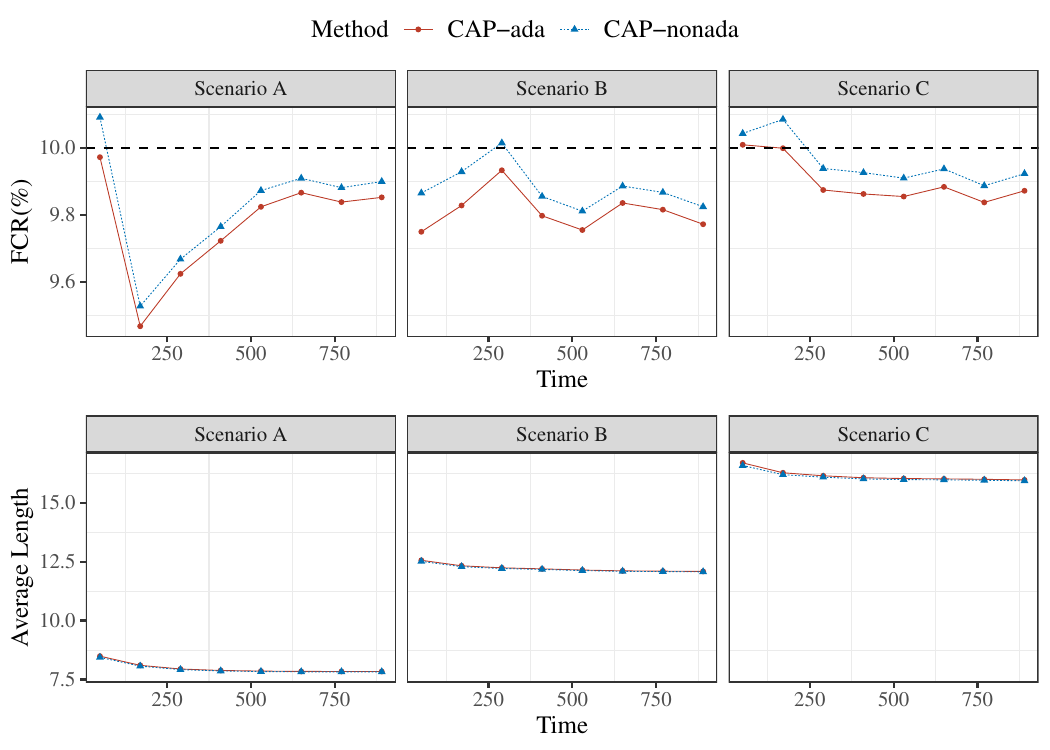}
    \caption{\small\it Comparison for  CAP-ada and CAP-nonada by real-time FCR plot and average length plot from time $50$ to $1,000$ for full calibration set after $500$ replications. The black dashed line represents the target FCR level $10\%$.}
    \label{fig:comp_swap}
\end{figure}

We study the difference of $\widehat{\gC}_t^{\rm{ada}}$ and $\widehat{\gC}_t^{\rm{nonada}}$ for quantile selection rule. Figure \ref{fig:comp_swap} displays the results for both methods under three scenarios using $70\%$-quantile selection rule. The CAP-ada (using $\widehat{\gC}_t^{\rm{ada}}$) and  CAP-nonada (using $\widehat{\gC}_t^{\rm{nonada}}$) perform almost identically.

\subsection{Discussions of DtACI}\label{subsec:discuss_dtaci}
We compare our method with Algorithm \ref{alg:cond_DtACI} using vanilla conformal prediction, which is denoted as DtACI-sel. It is aware of the selection effect, renewing the parameter based on the performance of selected prediction intervals. But DtACI-sel constructs the prediction interval using the whole observed labeled data instead of data in the picked calibration set.
Under the same experimental setup as in our paper, we conducted empirical investigations.

We first examined the i.i.d. data setting, where the distribution shift arises solely due to selection. The results, presented in Figure \ref{fig:DtACI_iid}, clearly show that DtACI-sel requires a longer time to adapt to the shifted distribution caused by selection, whereas our method maintains precise control of the online FCR at all times.

\begin{figure}[ht!]
    \centering
    \includegraphics[width=\textwidth]{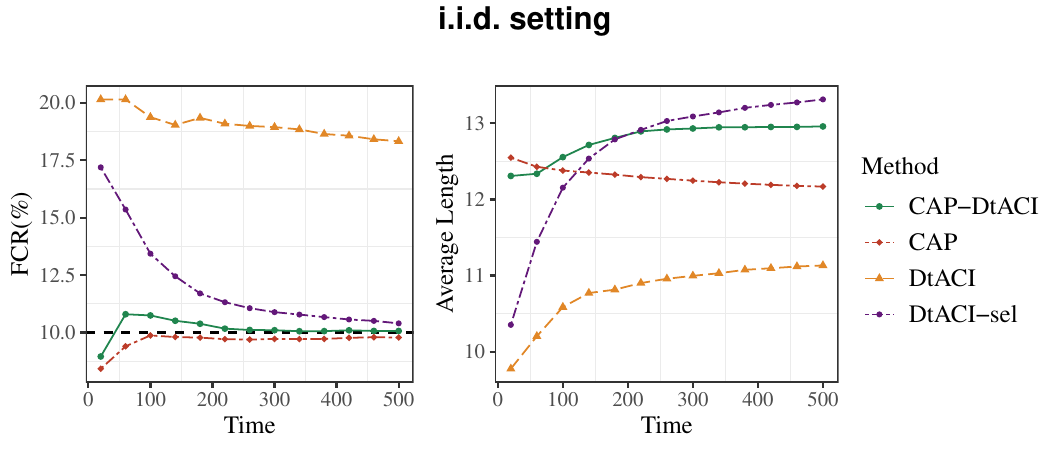}
    \caption{\small\it Comparison for CAP-DtACI, CAP, DtACI and DtACI-sel (Algorithm 2 but using vanilla conformal prediction) by real-time FCR and average length from time $20$ to $500$ for quantile selection rule under i.i.d. setting. The black dashed line represents the target FCR level $10\%$.}
    \label{fig:DtACI_iid}
\end{figure}

This phenomenon can also be observed in settings where the data gradually shift over time. Figure \ref{fig:DtACI_ss} illustrates the performance of DtACI-sel during the initial stage under a scenario of slow distribution shift. Our method outperforms DtACI-sel, as it does not need to first adapt to the selection-induced shift, allowing for quicker adjustment to the distributional changes in the data itself.

\begin{figure}[ht!]
    \centering
    \includegraphics[width=\textwidth]{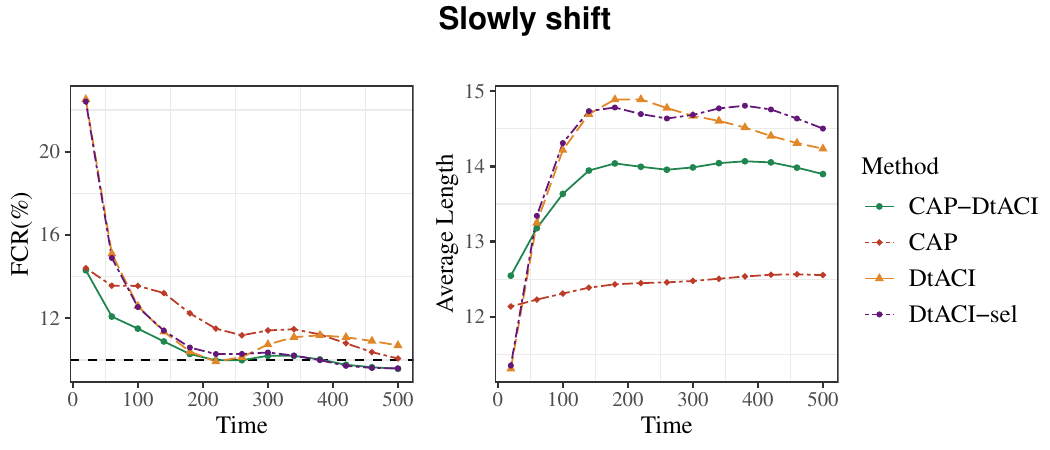}
    \caption{\small\it Comparison for CAP-DtACI, CAP, DtACI and DtACI-sel (Algorithm 2 but using vanilla conformal prediction) by real-time FCR and average length from time $20$ to $500$ for quantile selection rule under slowly shift setting. The black dashed line represents the target FCR level $10\%$.}
    \label{fig:DtACI_ss}
\end{figure}

In conclusion, while both CAP-DtACI and DtACI-sel can guarantee long-term FCR control, CAP-DtACI is more efficient in practice for addressing the selective problem.

\subsection{Illustrative plot for drug discovery}
\begin{figure}[htb!]
    \centering
    \includegraphics[width=\textwidth]{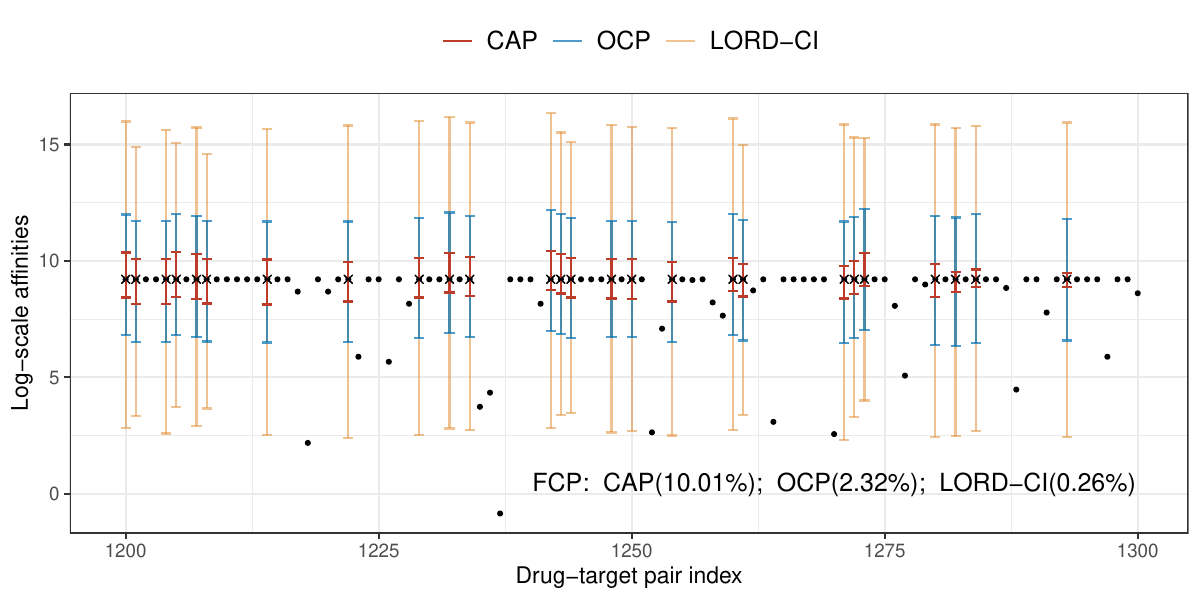}
    \caption{\small\it Plot for the real-time log-scale affinities and PIs for selected points from index 1,200 to 1,300. The selected points are marked by the cross. The PIs are constructed by three methods with a target FCR level $10\%$. Red interval: CAP (FCP at index 1,300 is $10.01\%$); Blue interval: ordinary online conformal prediction which provides marginal interval (FCP is $2.32\%$); Orange interval: LORD-CI with defaulted parameters (FCP is $0.26\%$). }
    \label{fig:Drug_plot}
\end{figure}

Here, we provide an illustrative plot in drug discovery to show the effectiveness of our method. Figure \ref{fig:Drug_plot} visualizes the real-time PIs with target FCR level $10\%$ constructed by different methods. The simulation setups are the same as Section \ref{sec:real_data}, except that we use quantile selection rule only. Our proposed method CAP (red ones) constructs the shortest intervals with FCP at 10.01\%. 
In contrast, both the OCP (blue ones) and the LORD-CI (orange ones) produce excessively wide intervals and yield conservative FCP levels, 2.32\% and 0.26\%, respectively. Therefore, the CAP emerges as a valid approach to accurately quantifying uncertainty while simultaneously achieving effective interval sizes.

\subsection{Additional real-data application to airfoil self-noise}

Airflow-induced noise prediction and reduction is one of the priorities for both the energy and aviation industries \citep{brooks1989airfoil}. We consider applying our method to the airfoil data set from the UCI Machine Learning Repository \citep{Dua:2019}, which involves $1,503$ observations of a response $Y$ (scaled sound pressure level of NASA airfoils), and a five-dimensional
feature including log frequency, angle of attack, chord length, free-stream velocity,
and suction side log displacement thickness. The data is obtained via a series of aerodynamic and acoustic tests, and the distributions of the data are in different patterns at different times. This dataset can be regarded as having distribution shifting over time, and we aim to implement the CAP-DtACI with the same parameters in Section \ref{subsec:simu_shift} to solve this problem.

\begin{figure}[htb]
    \centering
    \includegraphics[width=\textwidth]{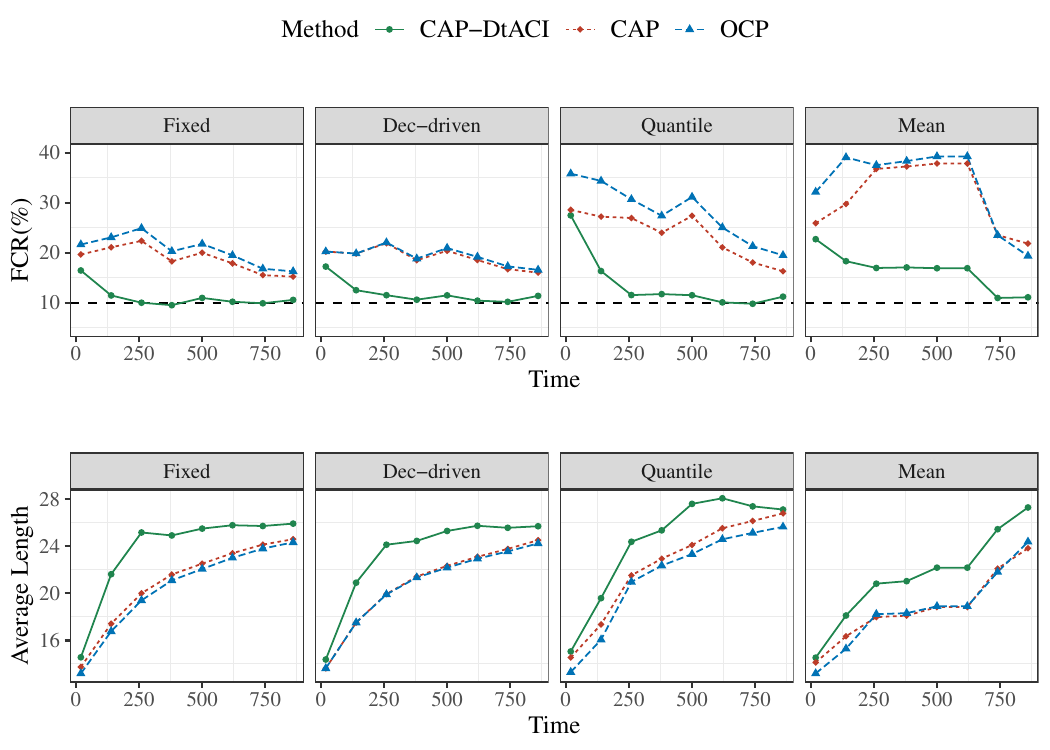}
    \caption{\small\it Real-time FCR and average length from time $20$ to $900$ by $20$ replications for four selection rules in airfoil noise task. The black dashed line denotes the target FCR level $10\%$. }
    \label{fig:Airfoil-FCR}
\end{figure}

We reserve the first $480$ samples as a training set to train an SVM model with defaulted parameters, and then we use the following $23$ samples as the initial holdout set. Since the data is in time order, we take an integrated period of size 900 from the remaining samples as the online data set. We treat each choice of the periods (starting at different times) as a repetition to compute the FCR and average length. 
Four selection rules are considered here: fixed selection rule with $S_t=\Indicator{\widehat{\mu}(X_t)>115}$; decision-driven selection rule with $S_t=\Indicator{\widehat{\mu}(X_t)>110+\min\{\sum_{j=0}^{t-1} S_j/30,10\}}$; quantile selection rule with $S_t=\Indicator{\widehat{\mu}(X_t)>\mathcal{A}(\{\widehat{\mu}(X_i)\}_{i=t-500}^{t-1})}$ where $\mathcal{A}(\{\widehat{\mu}(X_i)\}_{i=t-500}^{t-1})$ is the $35\%$-quantile of $\{\widehat{\mu}(X_i)\}_{i=t-500}^{t-1} $; mean selection rule with $S_t=\Indicator{\widehat{\mu}(X_t)>\sum_{i=t-500}^{t-1} \widehat{\mu}(X_i)/500}$. 
We adopt a windowed scheme with window size $500$, and set the target FCR level $\alpha=10\%$. 

Figure \ref{fig:Airfoil-FCR} summarizes the FCR and average lengths of CAP-DtACI, CAP and OCP among 20 replications.
As illustrated, the CAP-DtACI performs well in delivering FCR close to the target level as time grows across almost every setting. In contrast, CAP and OCP cannot obtain the desired FCR control due to a lack of consideration of distribution shifts.


\end{document}